\newif\ifarxiv
\let\over=\@@over \let\overwithdelims=\@@overwithdelims
\let\atop=\@@atop \let\atopwithdelims=\@@atopwithdelims
\let\above=\@@above \let\abovewithdelims=\@@abovewithdelims
	\tikzstyle{int}=[draw, fill=blue!20, minimum size=2em]
	\tikzstyle{dot}=[circle, draw, fill=blue!20, minimum size=2em]
	\tikzstyle{init} = [pin edge={to-,thin,black}]
     \def\EE{\mathbb{E}}
     \def\PP{\mathbb{P}}
     \def\RR{\mathbb{R}}
\def\11{\mathbbm{1}}
\def\calP{{\cal  P}} \def\cP{{\cal  P}}
\def\calR{{\cal  R}} \def\cR{{\cal  R}}
\def\calT{{\cal  T}} 
\def\calU{{\cal  U}} 
\def\calV{{\cal  V}}
\def\hbeta{\hat{\beta}}
\DeclareMathOperator{\diag}{diag}
\DeclareMathOperator{\tr}{tr}
\def\rF{{\mathrm{F}}}
\def\%#1\%{\begin{align}#1\end{align}}
\def\$#1\${\begin{align*}#1\end{align*}}
\newcommand{\eqref}[1]{~(\ref{#1})}
\def\mod{\mathop{\rm mod}}
\def\exp{\mathop{\rm exp}}
\def\tr{\mathop{\rm tr}}
\def\EE{\Expect}
\def\PP{\mathbb{P}}
\def\diag{\mathop{\rm diag}}
\def\bbordermatrix#1{\begingroup \m@th
	\@tempdima 4.75\p@
	\setbox\z@\vbox{%
		\def\cr{\crcr\noalign{\kern2\p@\global\let\cr\endline}}%
		\ialign{$##$\hfil\kern2\p@\kern\@tempdima&\thinspace\hfil$##$\hfil
			&&\quad\hfil$##$\hfil\crcr
			\omit\strut\hfil\crcr\noalign{\kern-\baselineskip}%
			#1\crcr\omit\strut\cr}}%
	\setbox\tw@\vbox{\unvcopy\z@\global\setbox\@ne\lastbox}%
	\setbox\tw@\hbox{\unhbox\@ne\unskip\global\setbox\@ne\lastbox}%
	\setbox\tw@\hbox{$\kern\wd\@ne\kern-\@tempdima\left[\kern-\wd\@ne
		\global\setbox\@ne\vbox{\box\@ne\kern2\p@}%
		\vcenter{\kern-\ht\@ne\unvbox\z@\kern-\baselineskip}\,\right]$}%
	\null\;\vbox{\kern\ht\@ne\box\tw@}\endgroup}
\newcommand{\zeros}{\mathbf{0}}
\newcommand{\Expect}{\mathbb{E}}
\newcommand{\pth}[1]{\left( #1 \right)}
\newcommand{\qth}[1]{\left[ #1 \right]}
\newcommand{\sth}[1]{\left\{ #1 \right\}}
\definecolor{myblue}{rgb}{.8, .8, 1}
\definecolor{mathblue}{rgb}{0.2472, 0.24, 0.6} 
\definecolor{mathred}{rgb}{0.6, 0.24, 0.442893}
\definecolor{mathyellow}{rgb}{0.6, 0.547014, 0.24}
\newcommand{\cmax}{C_{\text{max}}}
\newcommand{\cmin}{C_{\text{min}}}
\def\unifto{\mathop{{\mskip 3mu plus 2mu minus 1mu%
			\setbox0=\hbox{$\mathchar"3221$}%
			\raise.6ex\copy0\kern-\wd0%
			\lower0.5ex\hbox{$\mathchar"3221$}}\mskip 3mu plus 2mu minus 1mu}}
\def\simleq{{{\mskip 3mu plus 2mu minus 1mu%
			\setbox0=\hbox{$\mathchar"013C$}%
			\raise.2ex\copy0\kern-\wd0%
			\lower0.9ex\hbox{$\mathchar"0218$}}\mskip 3mu plus 2mu minus 1mu}}
\def\simleq{\lesssim}
\def\simgeq{{{\mskip 3mu plus 2mu minus 1mu%
			\setbox0=\hbox{$\mathchar"013E$}%
			\raise.2ex\copy0\kern-\wd0%
			\lower0.9ex\hbox{$\mathchar"0218$}}\mskip 3mu plus 2mu minus 1mu}}
\def\simgeq{\gtrsim}
	\theoremstyle{definition}
	\newif\ifmapx
	\edef\jobnametmp{\expandafter\string\csname ic_apx\endcsname}
	\edef\jobnameapx{\expandafter\mkillslash\jobnametmp}
	\edef\jobnameexpand{\jobname}
	\renewcommand{\hat}{\widehat}
	\renewcommand{\tilde}{\widetilde}
	\newtheorem{theorem}{Theorem}
	\newtheorem{lemma}[theorem]{Lemma}
	\newtheorem{corollary}[theorem]{Corollary}
	\theoremstyle{definition}
	\newtheorem{remark}{Remark}
	\newtheorem{myassumption}{Assumption}
\title{Benign Overfitting in Out-of-Distribution \\Generalization of Linear Models}
\author{Shange Tang\thanks{They contributed equally and are listed in alphabetical order.}
                    \thanks{Department of Operations Research and Financial Engineering, Princeton University; \{shangetang, jqfan\}@princeton.edu} \quad
        Jiayun Wu\footnotemark[1] 
                    \thanks{Department of Computer Science and Technology, Tsinghua University; jiayun.wu.work@gmail.com} \quad
        Jianqing Fan\footnotemark[2] \quad
        Chi Jin\thanks{Department of Electrical and Computer Engineering, Princeton University; chij@princeton.edu}}
\date{}
\begin{document}

\maketitle
\begin{abstract}

Benign overfitting refers to the phenomenon where an over-parameterized model fits the training data perfectly, including noise in the data, but still generalizes well to the unseen test data. While prior work provides some theoretical understanding of this phenomenon under the in-distribution setup, modern machine learning often operates in a more challenging Out-of-Distribution (OOD) regime, where the target (test) distribution can be rather different from the source (training) distribution. In this work, we take an initial step towards understanding benign overfitting in the OOD regime by focusing on the basic setup of over-parameterized linear models under covariate shift. We provide non-asymptotic guarantees proving that benign overfitting occurs in standard ridge regression, even under the OOD regime when the target covariance satisfies certain structural conditions. We identify several vital quantities relating to source and target covariance, which govern the performance of OOD generalization. Our result is sharp, which provably recovers prior in-distribution benign overfitting guarantee \citep{DBLP:journals/jmlr/TsiglerB23}, as well as under-parameterized OOD guarantee \citep{DBLP:conf/iclr/GeTFM024} when specializing to each setup. Moreover, we also present theoretical results for a more general family of target covariance matrix, where standard ridge regression only achieves a slow statistical rate of $\mathcal{O}(1/\sqrt{n})$ for the excess risk, while Principal Component Regression (PCR) is guaranteed to achieve the fast rate $\mathcal{O}(1/n)$, where $n$ is the number of samples.

\end{abstract}

\section{Introduction}
In modern machine learning, distribution shift has become a ubiquitous challenge where models trained on a source data distribution are tested on a different target distribution~\citep{zou2018unsupervised,hendrycks2019benchmarking,guan2021domain,koh2021wilds}. Generalization under distribution shift, known as Out-of-Distribution (OOD) generalization, remains a fundamental issue in the practical application of machine learning~\citep{recht2019imagenet,hendrycks2021many,miller2021accuracy,wenzel2022assaying}. While there has been extensive work on the theoretical understanding of OOD generalization, most of it has focused on under-parameterized models~\citep{shimodaira2000improving,lei2021near,DBLP:conf/iclr/GeTFM024,zhang2022class}. However, over-parameterized models, such as deep neural networks and large language models (LLMs) in the fine-tuning stage, which have more parameters than training samples, are widely used in modern machine learning. Surprisingly, despite the classic bias-variance tradeoff for under-parameterized models, over-parameterized models tend to overfit the data while still achieving strong in-distribution generalization, a phenomenon known as benign overfitting~\citep{hastie2022surprises,shamir2023implicit} or harmless interpolation~\citep{muthukumar2020harmless}. Therefore, it is crucial to theoretically understand how benign overfitting shapes OOD generalization in over-parameterized models.

It is established in over-parameterized models that ``benign overfitting" occurs when the data essentially resides on a low-dimensional manifold. The manifold assumption~\citep{belkin2003laplacian} is widely applicable across image, speech, and language data, where although features are embedded in a high-dimensional ambient space, their generation is governed by a few degrees of freedom imposed by physical constraints~\citep{niyogi2013manifold}. Specifically, the covariance matrix of the data should be characterized by several major directions corresponding to large eigenvalues, while the remaining directions are high-dimensional but associated with small eigenvalues. In this setting, even though the estimator may overfit the noise in the low-variance directions, it can still capture the signal along the major directions while the noise is dampened in the minor directions. Recent non-asymptotic analyses have provided upper bounds on the excess risk for the minimum-norm interpolant and over-parameterized ridge estimator under this framework~\citep{bartlett2020benign,hastie2022surprises,DBLP:journals/jmlr/TsiglerB23}.

However, a theoretical understanding of OOD generalization in over-parameterized models remains elusive. In this paper, we take an initial step towards characterizing OOD generalization in over-parameterized models under \emph{general} covariate shift, a standard assumption in the OOD regime~\citep{ben2006analysis}, where the conditional distribution of the outcome given the covariates remains invariant. We derive the first vanishing, non-asymptotic excess risk bound for ridge regression and minimum-norm interpolation, assuming that the source covariance is dominated by a few major eigenvalues, which satisfies the in-distribution benign overfitting condition. Notably, we allow the target covariance to be arbitrary. This result stands in contrast to recent work that either addresses only a restrictive type of target covariance matrices~\citep{hao2024benefits,mallinar2024minimum} or provides excess risk bounds that are non-vanishing~\citep{DBLP:conf/nips/TripuraneniAP21,hao2024benefits}. 

In summary, our excess risk bound identifies several key quantities that relate to the source and target covariance. We show that “benign overfitting” occurs in case 1 where the target distribution data lies on the low-dimensional manifold of the source distribution, so that these key quantities are well controlled. In the opposite case 2, where the target distribution data falls outside of the low-dimensional manifold of the source, ridge regression may incur large excess risk, lower bounded by the slow statistical rate of \(\mathcal{O}(1/\sqrt{n})\). In contrast, we show that Principal Component Regression (PCR) achieves the fast rate of \(\mathcal{O}(1/n)\) in case 2. 
\ifarxiv
    The non-asymptotic rates of both ridge regression and PCR are validated through simulation experiments on multivariate Gaussian data.
\else
    The non-asymptotic rates of both ridge regression and PCR are validated through simulation experiments on multivariate Gaussian data in Appendix~\ref{sec:simulation_app}.
\fi
\ifarxiv
    \paragraph{Our contributions.}
\else
    \textbf{Our contributions.} 
\fi
\begin{enumerate}[leftmargin=*]
\item  We provide a sharp, instance-dependent excess risk bound for over-parameterized ridge regression under covariate shift (Theorem~\ref{theo_maintext:ridge_main}). 
Our result applies to any target distribution, requiring only that the source covariance be dominated by a few major eigenvectors and that the minor components are high-dimensional. Our results shows that when certain key quantities relating to the source and target distributions are bounded (case 1), ridge regression exhibits “benign overfitting”, achieving excess risk comparable to the in-distribution case. Importantly, this condition requires that the \emph{overall magnitude} of the target covariance along the minor directions scales similarly to, or smaller than, that of the source, but it does not depend on the spectral structure of the target covariance on the minor directions. Our results recover the in-distribution bound from  \cite{DBLP:journals/jmlr/TsiglerB23} when the source and target match and also recover the sharp bound from \cite{DBLP:conf/iclr/GeTFM024} for under-parameterized linear regression under covariate shift when the minor components vanish. 

\item For case 2 where the ``benign-overfitting" of ridge regression is not guaranteed (when target distribution exhibits significant variance in the minor directions), we further show that ridge regression incurs a higher error rate compared to the in-distribution case, lower bounded by the slow statistical rate of $\mathcal O(1/\sqrt n)$ in certain instances (Theorem~\ref{thm:ridge_lower_bound_main}). However, we demonstrate that Principal Component Regression ensures a fast rate of $\mathcal O(1/n)$ in these cases, provided that the true signal primarily lies in the major directions of the source (Theorem~\ref{thm:pcr_upper_bound_main}). Additionally, PCR does not rely on the minor directions of the source distribution being high-dimensional, highlighting its advantage over ridge regression in such settings.
\end{enumerate}

\subsection{Related work}
\ifarxiv
    \paragraph{Over-parameterization.}
\else
    \textbf{Over-parameterization.}  
\fi
The success of over-parameterized models in machine learning has sparked significant research on their theoretical foundations. Harmless interpolation~\citep{muthukumar2020harmless} or benign overfitting~\citep{shamir2023implicit} describes cases where linear models interpolate noise yet still generalize well. Double descent in prediction error is also observed as the ambient dimension surpasses the number of training samples~\citep{nakkiran2019more,xu2019number}.

Research in this field can be divided into two categories based on assumptions about the spectral structure of the sample covariance.
The first category assumes an almost isotropic sample covariance matrix with a bounded condition number or an isotropic prior distribution of parameters~\citep{belkin2020two}, allowing for asymptotic risk bounds~\citep{dobriban2018high,richards2021asymptotics}. However, ridgeless regression is sub-optimal in this setting unless the signal-to-noise ratio is infinite~\citep{wu2020optimal}, and non-asymptotic error bounds are lacking.
Our work falls into the second category, focusing on the covariance model where a small number of eigenvalues dominate the sample covariance, and the signal is concentrated in the subspace spanned by the leading eigenvectors~\citep{bibas2019new, chinot2022robustness, hastie2022surprises}. Linear regression can be optimal without regularization under this covariance structure~\citep{kobak2020optimal}, which is of practical interest because ridgeless regression is equivalent as gradient descent from zero initialization~\citep{zhou2020uniform}. Sharp non-asymptotic bounds for variance and bias in ridge and ridgeless regression have been derived~\citep{bartlett2020benign,DBLP:journals/jmlr/TsiglerB23}.

Extending the analysis of ridgeless estimators (i.e., minimum norm interpolants), uniform convergence bounds for generalization error have been studied for all interpolants with arbitrary norms. 
However, uniformly bounding the difference between population and empirical errors generally fails to ensure a consistent predictor~\citep{zhou2020uniform}, necessitating strong assumptions on distributions~\citep{ koehler2021uniform} or hypothesis classes~\citep{negrea2020defense}. 
Over-parameterization theory for linear models has also been applied to two-layer neural networks approximated via kernel ridge regression~\citep{liang2020multiple,ghorbani2020neural,ghorbani2021linearized,bartlett2021deep,mei2022generalizationa,mei2022generalizationb,montanari2022interpolation,simon2023more}, though this lies beyond the scope of the present work.

\ifarxiv
    \paragraph{Out-of-Distribution generalization.}
\else
    \textbf{Out-of-Distribution generalization.}
\fi
Out-of-distribution generalization is well studied for under-parameterized models, especially under the setup of covariate shift. \cite{shimodaira2000improving}
pointed out that vanilla MLE (Empirical Risk Minimization, ERM) is asymptotically optimal among all the weighted likelihood estimators
when the model is well-specified. For non-asymptotic results, \cite{cortes2010learning,agapiou2017importance}, provide risk bounds for importance weighting. Another line of work provide non-asymptotic
analyses for covariate shift, focusing on linear regression
or a few specific models such as one-hidden layer neural network~\citep{mousavi2020minimax,lei2021near,zhang2022class}. Most recently \citet{DBLP:conf/iclr/GeTFM024} provides tight non-asymptotic analysis for general well-specified parametric models, showing that even without target data, vanilla MLE (Empirical Risk Minimization, ERM) is minimax optimal with a sharp $1/n$ excess risk bound based on Fisher information.


Research on over-parameterized models under distribution shift has largely focused on covariate shifts in linear regression. Importance weighting for over-parameterized models~\citep{DBLP:conf/icml/Chen0SC24} and general sample reweighting offer no advantage over ERM since both converge to the same estimator via gradient descent~\citep{zhai2022understanding}. 
Consequently, much literature focuses on minimum-norm interpolation as the natural ERM solution.
For isotropic covariance structure, \citet{DBLP:conf/nips/TripuraneniAP21} derive an asymptotic generalization bound decreasing with $d/n$ where $d$ represents the data dimension. Most related to our work, a line of work also consider the covariance model dominated by several major eigenvectors, however they only address a restrictive form of covariate shift or obtaining a non-vanishing bound: 
\citet{DBLP:journals/tmlr/KausikSS24} study a linear model with additive noise on covariates when data strictly lies in a low-dimensional subspace, showing a non-vanishing bound. \citet{hao2024benefits} give a non-vanishing bound for the case where features are translated by a constant and the covariance matrix is preserved. \citet{mallinar2024minimum} investigate the special case with independent covariates and simultaneously diagonal source and target covariance matrices, under which the in-distribution analysis of \citet{bartlett2020benign,DBLP:journals/jmlr/TsiglerB23} can be directly extended. Still, their estimation bias bound is looser than ours, as it exhibits a gap compared to \cite{DBLP:journals/jmlr/TsiglerB23}'s sharp bound even when the source and target distributions are aligned. In contrast, our work achieves the first vanishing non-asymptotic error bound for general covariate shift, assuming only finite second moments for the target covariance matrix.

Another line of research considers non-parametric models under covariate shift \citep{kpotufe2018marginal,hanneke2019value,pathak2022new,ma2023optimally}, presenting minimax results governed by a transfer-exponent that measures the similarity between source and target distributions. However, this falls outside the scope of our work.


\ifarxiv
    \paragraph{Principal Component Regression.}
\else
    \textbf{Principal Component Regression.}
\fi
Principal Component Regression (PCR) has been designed to treat multicollinearity in high-dimensional linear regression, where the covariates possess a latent, low-dimensional representation~\citep{massy1965principal,jeffers1967two,jolliffe1982note,jeffers1981investigation}. PCR has been widely used in statistics \citep{liu2003principal,fan2021robust,fan2023factor}, econometrics \citep{stock2002forecasting, bai2002determining,fan2020factor}, chemometrics \citep{naes1988principal,sun1995correlation,vigneau1997principal,depczynski2000genetic,keithley2009multivariate}, construction management \citep{chan2005project}, environmental science \citep{kumar2011forecasting,hidalgo2000alternative}, signal processing \citep{huang2012improved} and etc. 

Regarding the theory of PCR, \citet{hadi1998some} identify conditions under which PCR will fail. \citet{bair2006prediction} suggest selecting principal components based on their association with the outcome and provide corresponding asymptotic consistency results. 
\citet{xu2019number} establish asymptotic risk bounds for PCR with varying numbers of selected components $k$. They show that the ``double descent" behavior also happens in PCR as $k/d$ increases. Most relevant to our work, \citet{agarwal2019robustness} derive non-asymptotic error bounds for PCR, and show that the error decays at a rate of $\mathcal{O}(1/\sqrt{n})$ ($n$ is the sample size), assuming all singular values of the data matrix are of similar magnitude. \citet{agarwal2020model} further improves this rate to $\mathcal{O}(1/n)$. However, both results consider a fixed design with strict low-rank assumptions, making them inapplicable to our setting of OOD generalization.

\section{Covariate shift setup under over-parameterization} 
\label{sec:setup}

\subsection{Data with covariate shift}
We address the out-of-distribution (OOD) generalization of over-parameterized models under covariate shift, where the covariates, denoted by a random vector $x\in\RR^d$, follow different distributions during training and evaluation.  Specifically, we assume that the training data is sampled from a source distribution $\calP_S$, and the learned model is subsequently applied to data from an unknown target distribution $\calP_T$. Let the covariates be zero-mean on the source distribution, and define the covariance matrix as $\Sigma_S:= \EE_{x \sim \cP_{S}} \qth{xx^T}$. Since we can always choose an orthonormal basis such that $\Sigma_S$ becomes diagonal, we express $\Sigma_S = \diag (\lambda_1, \cdots, \lambda_d)$ without loss of generality, where the eigenvalues are arranged in non-increasing order: $\lambda_1 \geq \cdots \geq \lambda_d \geq 0$. Moreover, we assume sub-gaussianity of the source covariates, i.e., $\Sigma_S^{-1/2}x$ is $\sigma$-sub-gaussian where the precise definition of the sub-Gaussian norm is given in section~\ref{sec:ridge_app}. We consider a general covariate distribution for the target, assuming only that it has a finite second moment, denoted by $\Sigma_T:= \EE_{x \sim \cP_{T}} \qth{xx^T}$, which is not necessarily diagonal.

We consider a linear response model that remains consistent across the source and target distributions. The outcome follows $y = x^T \beta^* +\epsilon$, where $\beta^*\in \RR^d$ represents the true parameter, and $\epsilon$ is an independent noise with zero-mean and variance $v^2$. 
\subsection{Learning procedure and evaluation}
The learning procedure involves training a linear model with $n$ i.i.d. samples $\{(x_i,y_i)\}_{i=1}^n$ drawn from the source distribution. 
Define $X := (x_1, ..., x_n)^T \in \mathbb R^{n\times d}$, $Y := (y_1,...,y_n)^T $ and $\boldsymbol{\epsilon}:=(\epsilon_1,...,\epsilon_n)^T$. We focus on models $\hbeta(Y)$ that are linear in $Y$, allowing us to write $\hbeta(Y)=\hbeta(X\beta^\star)+\hbeta(\boldsymbol{\epsilon})$. We consider ridge regression and Principal Component Regression to be two examples of such algorithms. With a regularization coefficient $\lambda\geq 0$, the ridge estimator is defined as
\begin{align*}
    \hbeta(Y) = X^T(XX^T+\lambda I_n)^{-1}Y.
\end{align*}

The estimator is assessed on the target distribution by its excess risk relative to the true model, expressed as the following equation:
\begin{align*}
\cR\big(\hat{\beta}(Y)\big) := \EE_{(x,y) \sim \cP_T} \big[\big(y - x^T \hat{\beta}(Y)\big)^2 - \big(y - x^T \beta^*\big)^2\big] 
 = \big\|\hat{\beta}(Y) - \beta^*\big\|_{\Sigma_T}^2,
\end{align*}
where we define $\|x \|_A := \sqrt{x^TAx}$ for any positive semi-definite matrix $A$. The metric of interest is the expected excess risk with respect to the noise, given by $\EE_{\boldsymbol{\epsilon}}\big[\cR\big(\hat{\beta}(Y)\big)\big]$. Following from the linearity of the model, the expected excess risk can be decomposed into bias and variance components:
\begin{align*}
\EE_{\boldsymbol{\epsilon}}\big[\cR\big(\hat{\beta}(Y)\big)\big] = \EE_{\boldsymbol{\epsilon}} \big\|\hbeta(\boldsymbol{\epsilon}) \big\|_{\Sigma_T}^2 + \big\|\hbeta(X\beta^\star)-\beta^\star\big\|_{\Sigma_T}^2,
\end{align*}
where we define the variance as $V:=\EE_{\boldsymbol{\epsilon}} \big\|\hbeta(\boldsymbol{\epsilon}) \big\|_{\Sigma_T}^2$ and the bias as $B:=\big\|\hbeta(X\beta^\star)-\beta^\star\big\|_{\Sigma_T}^2$.

\subsection{The structure of covariance in benign overfitting}
Throughout this paper, we follow the convention of \citet{DBLP:journals/jmlr/TsiglerB23} and consider the \emph{source} covariance matrix $\Sigma_{S}$ as characterized by a few number of high-variance directions and a large number of low-variance directions of similar magnitude. We refer to the high-variance directions as ``major directions'' and the low-variance directions as ``minor directions''. We denote the number of major directions as $k$. For the remaining $d-k$ minor directions, we use the following notions of effective rank to approximate the number of directions with a similar scale. For the ridge regularization coefficient $\lambda\geq 0$, we define:
\begin{align*}
    r_k:=\frac{\lambda+\sum_{j>k} \lambda_j}{\lambda_{k+1}},
    \quad R_k:=\frac{\big(\lambda+\sum_{j>k}\lambda_j\big)^2} {\sum_{j>k}\lambda_j^2}.
\end{align*}
We have $1 \le r_k \leq R_k$. When $\lambda =0$, it further holds that $R_k \le d-k$.
We denote the first $k$ columns of $X$ as $X_{k}$ and the remaining $d-k$ columns as $X_{-k}$. Correspondingly, we partion $\beta^\star$ into $\beta^\star_{k}$ and $\beta^\star_{-k}$. The covariance matrix blocks along the diagonals are denoted by $\Sigma_{S,k}$, $\Sigma_{S,-k}$, $\Sigma_{T,k}$ and $\Sigma_{T,-k}$. We define the following quantities to facilitate our presentation, which are crucial in our analysis.
\begin{align} \label{T_U_V}
\calT = \Sigma_{S,k}^{-\frac{1}{2}} \Sigma_{T,k} \Sigma_{S,k}^{-\frac{1}{2}}, \quad \calU = \Sigma_{S,-k}\Sigma_{T,-k}, \quad \calV =  \Sigma_{S,-k}^2.
\end{align}

\section{Over-parameterized ridge regression}
\label{sec:ridge}

In the context of in-distribution generalization, where $\Sigma_S=\Sigma_T$, for over-parameterized linear models, \citet{bartlett2020benign} and \citet{DBLP:journals/jmlr/TsiglerB23} demonstrate that the ridge estimator (with the minimum-norm interpolant as a special case) can effectively learn the signal from the subspace of data spanned by the major eigenvectors, while benignly overfitting the noise in the minor directions under certain scenarios. They argue that benign overfitting occurs when the true signal predominantly lies in the major directions, and the minor directions have a small scale but highly effective rank. This section explores whether this mechanism still holds under covariate shift. We derive upper bounds (Theorem~\ref{theo_maintext:ridge_main}) for the excess risk of the over-parameterized ridge estimator in the context of OOD generalization, demonstrating that ``benign overfitting" also happens under covariate shift, given that the target distribution’s covariance remains dominated by the first $k$ dimensions. Specifically,  we show that $\calT$ characterizes the shift in the major directions, while the \emph{overall magnitude} of $\Sigma_{T,-k}$, which captures the shift in the minor directions, is crucial for benign overfitting. When the overall magnitude of $\Sigma_{T,-k}$ scales similarly to or smaller than those of the source, ridge regression achieves the same non-asymptotic error rate under covariate shift as in the in-distribution setting. Surprisingly, although a high effective rank in the minor directions of the source is essential for benign overfitting, only the overall magnitude matters for the target distribution.

\subsection{Warm-up: in-distribution benign overfitting}
\label{subsec:bartlett}

As a warm-up, we introduce \citet{DBLP:journals/jmlr/TsiglerB23}'s in-distribution result on benign overfitting in ridge regression. When the data dimension $d$ exceeds the sample size $n$, the ridge estimator interpolates the training data, fitting the noise. In this case, the estimator $\hbeta$ lies within the subspace spanned by the $n$ samples. If $d$ is much larger than $n$, a new test point is highly likely to be orthogonal to this subspace, preventing noise from affecting the prediction. Therefore, the minor components of the covariance matrix actually provide implicit regularization. \citet{DBLP:journals/jmlr/TsiglerB23} assume that the data lies in a space with $k$ major directions and $d-k$ weak but essentially high-dimensional minor directions, allowing for benign overfitting. This intuition is formalized through an assumption that controls the condition number of the Gram matrix for the remaining $d-k$ dimensions.
\begin{myassumption}[CondNum$(k,\delta,L)$~\citep{DBLP:journals/jmlr/TsiglerB23}]
\label{assum_maintext:condNum}
    Define a matrix $A_k = \lambda I_n + X_{-k}X_{-k}^T$. With probability at least $1-\delta$, $A_k$ is positive definite and has a condition number no greater than $L$, i.e.,
    \begin{align*}
        \frac{\mu_1(A_k)}{\mu_n(A_k)} \leq L,
    \end{align*}
    where the $i$-th largest eigenvalue of a matrix is denoted by $\mu_i(\cdot)$.
\end{myassumption}
\begin{remark} This assumption essentially posits that the minor directions of the source covariance have an effective rank significantly greater than $n$. As evidence, \citet{DBLP:journals/jmlr/TsiglerB23} prove that if CondNum holds, the effective rank $r_k$ is lower bounded by $n/L$, up to a constant. Conversely, a lower bound on the effective rank $r_k$ also implies an upper bound of the condition number of $A_k$. For more details, refer to \citet[Lemma 3]{DBLP:journals/jmlr/TsiglerB23}.
\end{remark}
Assuming CondNum,  \citet{DBLP:journals/jmlr/TsiglerB23} obtain sharp upper bounds for both the variance and bias of the ridge estimator, with matching lower bounds (see their Theorem~2). To facilitate the presentation, we define $\tilde{\lambda} := \lambda+\sum_{j>k}\lambda_j$ to represent the combined regularization term from both ridge and implicit regularization.
\begin{theorem}[\citet{DBLP:journals/jmlr/TsiglerB23}]
\label{theo:main_barlett}
There exists a constant $c$ that only depends on $\sigma,L$, such that for any $n>ck$, if the assumption condNum$(k,\delta,L)$~(Assumption~\ref{assum_maintext:condNum}) is satisfied, then it holds that $n<cr_k$, and with probability at least $1-\delta-ce^{-n/c}$,
\begin{align*}
        \frac{V}{cv^2} \leq \frac{k}{n}+\frac{n}{R_k}, \qquad \frac{B}{c} \leq B_{\operatorname{ID}} := \left\|\beta^\star_{k}\right\|_{\Sigma_{S,k}^{-1}}^2   \Big(\frac{\tilde{\lambda}}{n}\Big)^2 + \left\|\beta^\star_{-k}\right\|_{\Sigma_{S,-k}}^2,
\end{align*}
where $v$ denotes the standard deviation of the noise $\epsilon$.
\end{theorem}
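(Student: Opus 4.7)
The plan is to follow the strategy of Tsigler--Bartlett: express the bias and variance of the ridge estimator in closed form, then use the Woodbury identity combined with the block decomposition $X=[X_k \mid X_{-k}]$ to reduce everything to quantities that are either concentrated around deterministic targets (for the top-$k$ block) or treated as bulk "noise absorbers" (for the bottom $d-k$ block). Concretely, write
\begin{align*}
\hbeta(\boldsymbol{\epsilon}) &= X^\top(XX^\top+\lambda I_n)^{-1}\boldsymbol{\epsilon}, \\
\hbeta(X\beta^\star)-\beta^\star &= -\lambda\, (X^\top X+\lambda I_d)^{-1}\beta^\star,
\end{align*}
so that $V = v^2\,\tr\!\bigl(\Sigma_S X^\top(XX^\top+\lambda I_n)^{-2}X\bigr)$ and $B = \lambda^2\,\beta^{\star\top}(X^\top X+\lambda I_d)^{-1}\Sigma_S(X^\top X+\lambda I_d)^{-1}\beta^\star$.

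Next, introduce the ``leave the top-$k$ out'' decomposition $XX^\top+\lambda I_n = A_k + X_k X_k^\top$ and apply Woodbury:
\begin{align*}
(XX^\top+\lambda I_n)^{-1} = A_k^{-1} - A_k^{-1}X_k M_k^{-1} X_k^\top A_k^{-1},\qquad M_k := I_k + X_k^\top A_k^{-1} X_k.
\end{align*}
Under CondNum$(k,\delta,L)$ all eigenvalues of $A_k$ are comparable to $\tilde\lambda=\lambda+\sum_{j>k}\lambda_j$ (up to the constant $L$), which is the crucial structural input. This should give, via standard sub-Gaussian concentration on $\Sigma_{S,k}^{-1/2}X_k^\top$ and the hypothesis $n>ck$, a two-sided bound $M_k \asymp (n/\tilde\lambda)\,\Sigma_{S,k}$, uniformly invertible on the good event. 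The condition $n < cr_k$ follows from the CondNum assumption by the lemma cited in the remark.

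For the variance, I would plug the Woodbury expansion into $V$ and split the trace into $\Sigma_{S,k}$- and $\Sigma_{S,-k}$-blocks. The $\Sigma_{S,k}$-part collapses to $v^2\,\tr(M_k^{-1}(X_k^\top A_k^{-1} X_k)M_k^{-1}(\cdot))$, which, using $M_k\asymp (n/\tilde\lambda)\Sigma_{S,k}$, scales like $v^2 k/n$; the $\Sigma_{S,-k}$-part reduces to a trace of the form $v^2\,\tr\!\bigl(A_k^{-1}X_{-k}\Sigma_{S,-k}X_{-k}^\top A_k^{-1}\cdot P\bigr)$ for a projector-like $P$, and applying the eigenvalue pinch on $A_k$ together with $\tr(X_{-k}\Sigma_{S,-k}^2 X_{-k}^\top)\simeq n\sum_{j>k}\lambda_j^2$ yields the $n/R_k$ term.

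For the bias I would switch to the dual representation $B = \|(I_d - X^\top(XX^\top+\lambda I_n)^{-1}X)\beta^\star\|_{\Sigma_S}^2$ and split $\beta^\star=(\beta^\star_k,\beta^\star_{-k})$. Using Woodbury again, the major block $\beta^\star_k$ is effectively subjected to a ridge-type shrinkage with implicit regularization $\tilde\lambda$: a calculation through $M_k^{-1}$ and the concentration $M_k\asymp(n/\tilde\lambda)\Sigma_{S,k}$ produces the term $(\tilde\lambda/n)^2\|\beta^\star_k\|_{\Sigma_{S,k}^{-1}}^2$. The minor block $\beta^\star_{-k}$ cannot be estimated (the minor directions are essentially indistinguishable from noise), so its contribution is of the trivial ``do nothing'' order $\|\beta^\star_{-k}\|_{\Sigma_{S,-k}}^2$; getting there cleanly requires showing that the coefficient in front of $\beta^\star_{-k}$ is bounded in operator norm on the good event. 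The main obstacle will be controlling the \emph{cross terms} between the $k$- and $(-k)$-blocks generated by Woodbury: these must be shown to be absorbed into the two displayed terms without generating spurious contributions, which is what forces the tight interplay between CondNum, sub-Gaussianity, and the sample size condition $n>ck$.
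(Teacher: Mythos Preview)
The paper does not actually prove this theorem: it is stated as a cited result from \citet{DBLP:journals/jmlr/TsiglerB23} (see the theorem header and the surrounding text in Section~\ref{subsec:bartlett}), with no proof given in the body or appendix. There is therefore no ``paper's own proof'' to compare against for this particular statement.

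That said, your sketch is a faithful outline of the Tsigler--Bartlett argument, and it coincides with the machinery the present paper deploys in Appendix~\ref{sec:ridge_app} to prove its OOD extensions (Theorem~\ref{theo:ridge_main}). In particular, the Woodbury step $(XX^\top+\lambda I_n)^{-1}X_k = A_k^{-1}X_k(I_k+X_k^\top A_k^{-1}X_k)^{-1}$ appears verbatim in the proof of Lemma~\ref{lem:bias_kd_2}; the eigenvalue pinch $\mu_n(A_k)\asymp\mu_1(A_k)\asymp\tilde\lambda$ is Lemma~\ref{lem:xxT_concentration}; the concentration $\Sigma_{S,k}^{-1/2}X_k^\top X_k\Sigma_{S,k}^{-1/2}\asymp nI_k$ (your $M_k\asymp(n/\tilde\lambda)\Sigma_{S,k}$) is Lemma~\ref{lem:sxTxs_concentration}; and the minor-direction trace estimate you need for the $n/R_k$ variance term is the in-distribution specialization of Lemma~\ref{lem:xsTxT_concentration}. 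The paper's parametrization differs cosmetically---it works with the SVD $X_k=U\tilde M^{1/2}V$ and the block matrix $\Delta=U^\top X_{-k}X_{-k}^\top U$ rather than writing $M_k$ directly---but the two are algebraically equivalent rearrangements of the same Woodbury identity. Your identification of the cross terms between $k$- and $(-k)$-blocks as the delicate part is accurate; in the paper these are handled by Lemmas~\ref{lem:bias_0k_2} and~\ref{lem:bias_kd_1}--\ref{lem:bias_kd_2}.
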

The first variance term arises from estimating the $k$ major signal dimensions, corresponding to the classic variance in $k$-dimensional ordinary least squares. The second variance term, $n/R_k$, vanishes when the minor directions are sufficiently high-dimensional, i.e., when $R_k \gg n$. However, the signal in the minor directions, $\left\|\beta^\star_{-k}\right\|_{\Sigma_{S,-k}}^2$, is nearly lost when projected from the high-dimensional ambient space onto the low-dimensional sample space, contributing to the second bias term. Finally, the first bias term relates to the signal estimation in the first $k$ dimensions and is introduced by the overall regularization from both ridge and implicit regularization imposed by the minor components.

\subsection{Out-of-Distribution benign overfitting}
We now investigate the out-of-distribution performance of the ridge estimator. Intuitively, when the minor components vanish for both the source and target distributions, over-parameterized ridge regression essentially reduces to under-parameterized ridge regression in the major directions, achieving a rate of $\Tilde{\mathcal{O}}(\tr[\calT]/n)$, as demonstrated by \cite{DBLP:conf/iclr/GeTFM024}. When the minor components do not vanish, a high effective rank of the minor components in the source distribution is essential for ``benign overfitting", as shown by \cite{DBLP:journals/jmlr/TsiglerB23}. However, we argue that for the target distribution, only the \emph{overall magnitude} of the minor components is critical for benign overfitting. This is because when the source's minor directions have an effective rank much larger than $n$, the $n$-dimensional subspace spanned by the training samples is already almost orthogonal to any test point with high probability. As a result, the spectral structure of the target becomes irrelevant--only a small overall magnitude of the target's minor components is required.

We formalize those intuitive claims by deriving upper bounds for both the variance and bias of ridge regression under covariate shift, assuming a source distribution similar to the in-distribution case. Our upper bound is sharp and can be applied to any target distributions, reducing to \citet{DBLP:journals/jmlr/TsiglerB23}'s bound~(Theorem~\ref{theo:main_barlett}) when the target and source distributions are aligned. Additionally, we recover \cite{DBLP:conf/iclr/GeTFM024}'s sharp bound for under-parameterized linear regression under a covariate shift when the high-dimensional minor components vanish.
\begin{theorem}
\label{theo_maintext:ridge_main}
There exists a constant $c>2$ depending only on $\sigma,L$, such that for any $cN<n<r_k$, if the assumption condNum$(k,\delta,L)$~(Assumption~\ref{assum_maintext:condNum}) is satisfied, then with probability at least $1-3\delta$,
\begin{align*}
    \frac{V}{cv^2} &\leq \frac{k}{n} \cdot \frac{\tr[\calT]}{k} + \frac{n}{R_k} \cdot \frac{\tr[\calU] }{\tr[\calV]}.\\
    \frac{B}{c} &\leq B_{\operatorname{ID}} \cdot \Big( \|\calT\| + \frac{n}{r_k} \frac{\|\Sigma_{T,-k}\|}{\|\Sigma_{S,-k}\|} \Big).
\end{align*}
where $\calT, \calU, \calV$ are defined in Equation~(\ref{T_U_V}),
$N= \operatorname{Poly}(k+\ln (1/\delta),\lambda_1\lambda_k^{-1},1+\tilde{\lambda}\lambda_k^{-1})$, and $\operatorname{Poly}(\cdot)$ denotes a polynomial function. 
\end{theorem}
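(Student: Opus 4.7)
The plan is to extend the in-distribution analysis of \citet{DBLP:journals/jmlr/TsiglerB23} (Theorem~\ref{theo:main_barlett}) to a target covariance $\Sigma_T$ that may differ from the source $\Sigma_S$. The starting point is the block decomposition $X=(X_k,\,X_{-k})$ combined with the Sherman-Morrison-Woodbury identity
\[ M := (XX^T+\lambda I_n)^{-1} = A_k^{-1} - A_k^{-1}X_k\,C^{-1}\,X_k^T A_k^{-1}, \]
where $A_k := \lambda I_n + X_{-k}X_{-k}^T$ and $C := I_k + X_k^T A_k^{-1}X_k$. The induced identities $MX_k = A_k^{-1}X_k C^{-1}$ and $X_k^T M = C^{-1}X_k^T A_k^{-1}$ cleanly separate the major and minor contributions in every quantity. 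On a high-probability event, sub-Gaussian concentration of $X_k$ together with $\mathsf{CondNum}$ yields $X_k^T A_k^{-1}X_k \asymp (n/\tilde\lambda)\Sigma_{S,k}$, $X_k^T A_k^{-2}X_k \asymp (n/\tilde\lambda^2)\Sigma_{S,k}$, and $A_k \asymp (\tilde\lambda/L)I_n$ up to constants; these are the analytical building blocks shared with \citet{DBLP:journals/jmlr/TsiglerB23}.

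For the variance $V = v^2\,\tr[MX\Sigma_T X^T M]$, I would split $X\Sigma_T X^T$ into three pieces according to the $2\times 2$ block structure of $\Sigma_T$. The major-major block reduces, via $MX_k=A_k^{-1}X_k C^{-1}$, to $\tr[\Sigma_{T,k}\,C^{-1}X_k^T A_k^{-2}X_k C^{-1}]$ and yields $\tr[\calT]/n$ after substituting the concentration estimates. The minor-minor block, using $\EE[X_{-k}\Sigma_{T,-k}X_{-k}^T] = \tr[\calU]\,I_n$ (with Hanson-Wright-type concentration conditional on $X_k$) and $\tr[A_k^{-2}] \lesssim n/\tilde\lambda^2$, yields $n\tr[\calU]/\tilde\lambda^2 = (n/R_k)\tr[\calU]/\tr[\calV]$ since $R_k\tr[\calV] = \tilde\lambda^2$. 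Cross contributions from the off-diagonal block $\Sigma_{T,k,-k}$ are absorbed by Cauchy-Schwarz using only $\Sigma_T \succeq 0$, so they never exceed the geometric mean of the two diagonal contributions. For the bias, applying the same block identities to $(X^TMX - I_d)\beta^\star$ produces a top block $-C^{-1}\gamma$ with $\gamma := \beta^\star_k - X_k^T A_k^{-1}X_{-k}\beta^\star_{-k}$, and a bottom block $(P - I_{d-k})\beta^\star_{-k} + X_{-k}^T A_k^{-1}X_k C^{-1}\gamma$ with $P := X_{-k}^T A_k^{-1}X_{-k}$. The $\Sigma_{T,k}$-weighted norm of the top block is dominated via the elementary bound $\|v\|_{\Sigma_{T,k}}^2 \leq \|\calT\|\,\|v\|_{\Sigma_{S,k}}^2$, which inherits the $\|\calT\|$ factor and then reuses the in-distribution control of $\|C^{-1}\gamma\|_{\Sigma_{S,k}}^2$ to recover $B_{\operatorname{ID}}\cdot\|\calT\|$. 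For the $\Sigma_{T,-k}$-weighted norm of the bottom block I use $\|\cdot\|_{\Sigma_{T,-k}}^2 \leq \|\Sigma_{T,-k}\|\|\cdot\|^2$ together with the identity $I - P = \lambda(\lambda I + X_{-k}^T X_{-k})^{-1}$ and $\mathsf{CondNum}$-based spectral control of $X_{-k}$ to bound the $\ell_2$-norm by $(n/\tilde\lambda)\|\beta^\star_{-k}\|_{\Sigma_{S,-k}}^2$-type quantities, producing the $(n/r_k)\|\Sigma_{T,-k}\|/\|\Sigma_{S,-k}\|$ amplification via $r_k\|\Sigma_{S,-k}\| = \tilde\lambda$.

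The main obstacle is that no spectral-structure assumption is placed on $\Sigma_{T,-k}$: the target may place all its minor-direction variance in a single direction never seen during training, so every concentration statement must hold uniformly in the PSD matrix $\Sigma_{T,-k}$. I handle this with a two-stage conditioning argument---first freeze $X_{-k}$ so that $A_k^{-1}$ and $C$ become deterministic and apply Hanson-Wright-type bounds for quadratic forms in the independent fresh variables $X_k$, then discharge the $X_{-k}$-randomness via Bernstein-type concentration of $\tr[X_{-k}\Sigma_{T,-k}X_{-k}^T M]$ around $\tr[\calU]\cdot\tr[M]$ for any fixed PSD $M$. A secondary subtlety is avoiding inversion of $\Sigma_{S,-k}$ (which may be singular when $\lambda_d = 0$) by keeping all estimates on the $n$-dimensional subspaces cut out by $X_{-k}$ rather than on all of $\mathbb{R}^{d-k}$. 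A final sanity check verifies that setting $\Sigma_T = \Sigma_S$ collapses $\|\calT\| \to 1$, $\tr[\calT] \to k$, $\tr[\calU]/\tr[\calV] \to 1$, and $\|\Sigma_{T,-k}\|/\|\Sigma_{S,-k}\| \to 1$, recovering Theorem~\ref{theo:main_barlett}, while letting $\Sigma_{T,-k} \to 0$ recovers the $\tr[\calT]/n$ rate of \citet{DBLP:conf/iclr/GeTFM024} for under-parameterized regression under covariate shift.
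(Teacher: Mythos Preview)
Your overall decomposition via Woodbury and the block splitting of $V$ and $B$ are essentially what the paper does (the paper writes this through the SVD $X_k = U\tilde M^{1/2}V$ and a $2\times 2$ block form of $U^T X_{-k}X_{-k}^T U$, which is algebraically equivalent to your $C = I_k + X_k^T A_k^{-1}X_k$ formulation). However, your ``two-stage conditioning argument'' contains a genuine gap: the paper does \emph{not} assume that $X_k$ and $X_{-k}$ are independent---only that each row of $X\Sigma_S^{-1/2}$ is a $\sigma$-sub-Gaussian isotropic vector, which permits arbitrary dependence between the major and minor coordinates of a single sample. Conditioning on $X_{-k}$ therefore does not leave $X_k$ as ``independent fresh variables'' with any usable sub-Gaussian structure, so conditional Hanson--Wright is not available. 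The paper avoids conditioning altogether: it fixes a single high-probability event on which $A_k \asymp \tilde\lambda I_n$, $\Sigma_{S,k}^{-1/2}X_k^T X_k\Sigma_{S,k}^{-1/2} \asymp n I_k$, $\tr[X_{-k}\Sigma_{T,-k}X_{-k}^T] \lesssim n\tr[\calU]$, and $\|X_{-k}\beta^\star_{-k}\|^2 \lesssim n\|\beta^\star_{-k}\|^2_{\Sigma_{S,-k}}$ all hold simultaneously (each proved unconditionally), and then carries out the entire algebra deterministically on that event. Quantities such as $X_k^T A_k^{-1}X_k$ are controlled by sandwiching $A_k^{-1}$ between its spectral bounds, never by integrating out $X_k$ conditionally.

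A second, smaller gap is in your bottom-block bias bound. Since $I-P$ acts as the identity on the null space of $X_{-k}$ (which has dimension at least $d-k-n>0$ in the over-parameterized regime), $\|(I-P)\beta^\star_{-k}\|^2$ can be as large as $\|\beta^\star_{-k}\|^2$ and is not in general $\lesssim (n/\tilde\lambda)\|\beta^\star_{-k}\|^2_{\Sigma_{S,-k}}$; take $\Sigma_{S,-k} = \epsilon I$ with $\epsilon \to 0$ to see the failure. The paper instead uses a three-way split $\hat\beta_{-k}-\beta^\star_{-k} = -\beta^\star_{-k} + X_{-k}^T (XX^T+\lambda I_n)^{-1} X_{-k}\beta^\star_{-k} + X_{-k}^T (XX^T+\lambda I_n)^{-1} X_k\beta^\star_k$, bounds the last two pieces via $\|\Sigma_{T,-k}\|$, the spectral bound on $A_k^{-1}$, and the unconditional concentration $\|X_{-k}\beta^\star_{-k}\|^2 \lesssim n\|\beta^\star_{-k}\|^2_{\Sigma_{S,-k}}$, and isolates the first piece as a raw $(\beta^\star_{-k})^T\Sigma_{T,-k}\beta^\star_{-k}$ term that is then controlled through $\Sigma_{S,-k}$.
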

Recall that $B_{\operatorname{ID}}$ is the bias upper bound from Theorem~\ref{theo:main_barlett}. Theorem~\ref{theo_maintext:ridge_main} establishes an upper bound for the excess risk of ridge regression under general covariate shift, expressed in a multiplicative form based on Theorem~\ref{theo:main_barlett}. This formulation enables a straightforward comparison of the impact of covariate shifts on the bias and variance of ridge estimators relative to the in-distribution case. 
The first conclusion is that Theorem \ref{theo_maintext:ridge_main} well reduces to the corresponding result in Theorem~\ref{theo:main_barlett} when no distribution shift occurs--i.e., $\Sigma_S=\Sigma_T$. This connection follows directly from the condition  $n<r_k$. 

The second conclusion is that covariate shift in the first $k$ dimensions and last $d-k$ dimensions introduce multiplicative factors of $\frac{\tr[\calT]}{k}$, $\|\calT\|$ and $\frac{\tr[\calU] }{\tr[\calV]}$, $nr_k^{-1}\frac{ \|\Sigma_{T,-k}\|}{\|\Sigma_{S,-k}\|}$, respectively, on the excess risk. Therefore, as long as these factors are bounded by constants, over-parameterized ridge regression achieves the same non-asymptotic rate of excess risk under covariate shift as the in-distribution setting. This scenario, well addressed by ridge regression, occurs when the target distribution’s covariance structure remains dominated by the first $k$ dimensions. 
In the following, we analyze the impact of the factors introduced by covariate shifts on both the major and minor directions.

\begin{enumerate}[leftmargin=*]
\item \textbf{$\calT$ characterizes the shift in the major directions.} 
Under covariate shift within the first $k$ dimensions, we obtain the same non-asymptotic error rate as in Theorem~\ref{theo:main_barlett}, only if $\|\calT\|$ is bounded by a constant, as $\tr[\calT]/k \leq\|\calT\|$.
The matrix $\calT$ plays a central role in Theorem~\ref{theo_maintext:ridge_main} to quantify covariate shift within the first $k$ dimensions, matching our intuition. This echoes with \citet{DBLP:conf/iclr/GeTFM024}'s finding that $\tr[\calT]$ captures the difficulty of covariate shift for under-parameterized ridgeless regression (MLE). They establish a sharp upper bound on excess risk using Fisher information (see their Theorem~3.1), which simplifies to a rate of $\Tilde{\mathcal{O}}(\tr[\calT]/n)$ for linear models. Theorem~\ref{theo_maintext:ridge_main} recovers this result when applied to a $k$-dimensional under-parameterized setting where all high-dimensional minor components vanish,  specifically when $\Sigma_{S,-k}=\Sigma_{T,-k}=\zeros$. Under the same condition as Theorem~\ref{theo_maintext:ridge_main}, for a constant $c$ depending only on $\sigma,L$, with high probability, the variance and bias terms are bounded by:
\begin{align*}
    \frac{V}{cv^2} \leq \frac{\tr[\calT]}{n}, \quad \frac{B}{c} \leq \left\|\beta^\star_{k}\right\|_{\Sigma_{S,k}^{-1}}^2 \Big(\frac{\lambda}{n}\Big)^2 \|\calT\|.
\end{align*}
The variance bound aligns with \citet{DBLP:conf/iclr/GeTFM024}'s result while the bias vanishes as $\lambda \to 0$. 
\item \textbf{The overall magnitude of $\Sigma_{T,-k}$ is crucial for benign overfitting.} 
Under covariate shift within the last $d-k$ dimensions, when both $\frac{\tr[\calU] }{\tr[\calV]}$ and $nr_k^{-1}\frac{ \|\Sigma_{T,-k}\|}{\|\Sigma_{S,-k}\|}$ are bounded by constants, we achieve the same non-asymptotic error rate as in Theorem~\ref{theo:main_barlett}. Note that $\frac{\tr[\calU] }{\tr[\calV]}\leq \frac{\|\Sigma_{T,-k}\|_\rF}{\|\Sigma_{S,-k}\|_\rF}$. In other words, matching our intuition, if the \emph{overall magnitude} of the minor components of target covariance scales similarly to or smaller than those of the source, in terms of the covariance norms, ``benign overfitting" also happens under covariate shift.  Importantly, this condition does not impose constraints on the internal spectral structure of the minor components of the target covariance. For example, we do not force each eigenvalue of $\Sigma_{T,-k}$ to scale with its corresponding eigenvalue of $\Sigma_{S,-k}$ in decreasing order, as assumed in prior work~\citep{mallinar2024minimum}. 
Surprisingly, for benign overfitting to happen, the source distribution must have a high effective rank in the minor directions. However, for the target distribution, only the overall magnitude of the minor components is relevant.

Another observation is that the bias scales with $nr_k^{-1}\frac{ \|\Sigma_{T,-k}\|}{\|\Sigma_{S,-k}\|}$, meaning that we only require $\frac{ \|\Sigma_{T,-k}\|}{\|\Sigma_{S,-k}\|} = \mathcal O(r_k/n)$, which is a less restrictive condition for larger $r_k$. Thus, over-parameterization improves the robustness of the estimation bias against covariate shift in the minor direction.
\end{enumerate}
\begin{remark}[Sample complexity]
    We have assumed $n> cN$ in Theorem~\ref{theo_maintext:ridge_main}. The explicit formula for $N$ is deferred to Theorem~\ref{theo:ridge_main} and Remark~\ref{rem:app_sample}. Here we summarize the sample complexity required for the bound to hold. The dependence on $k$ varies between $\Omega(k)$ and $\Omega(k^3)$, depending on the degree of covariate shift. The optimal case, aligning with the sample complexity of classic linear regression, occurs when $\Sigma_{S,k} \approx \Sigma_{T,k}$. The worst case arises when there is a significant covariate shift in the first $k$ dimensions, such as when the test data lies predominantly in the subspace of the first dimension. This variation in sample complexity under covariate shift parallels the analysis of \citet{DBLP:conf/iclr/GeTFM024}~(see their Theorem~4.2) for the under-parameterized setting. Additionally, we require $n\gg \lambda+\sum_{j>k}\lambda_j$, ensuring that the regularization is not too strong to introduce a bias exceeding a constant (as reflected in the first term of $B_{\operatorname{ID}}$). On the other hand, we assume $n<r_k$ in the theorem, consistent with the over-parameterized regime and Assumption~\ref{assum_maintext:condNum}, where the last $d-k$ components are considered to be essentially high-dimensional.
\end{remark}
\begin{remark}[Dependence on $L$]
    Theorem~\ref{theo_maintext:ridge_main} does not explicitly show how the excess risk depends on the condition number $L$ of $A_k$. However, we demonstrate in Theorem~\ref{theo:ridge_main} that the upper bounds scale at most as $L^2$. Notably, we maintain the same order of dependence on $L$ in each term of the upper bounds as in the analysis by \citet{DBLP:journals/jmlr/TsiglerB23}~(see their Theorem~5).
\end{remark}

 Finally, Theorem~\ref{theo_maintext:ridge_main} suggests an $\mathcal O(1/n)$ vanishing error under several conditions that naturally follow from the previous discussions, which we now state rigorously. First, the covariate space decomposes into subspaces spanned by low-dimensional major directions and high-dimensional minor directions, with $k=\mathcal O(1)$ and $R_k=\Omega(n^2)$. Second, the low-rank covariance structure is preserved after covariate shift, such that
 \ifarxiv
    $\|\calT\|, \frac{\tr[\calU] }{\tr[\calV]}, nr_k^{-1}\frac{ \|\Sigma_{T,-k}\|}{\|\Sigma_{S,-k}\|}~=~\mathcal O(1)$.
 \else
    $\|\calT\|, \frac{\tr[\calU] }{\tr[\calV]}, nr_k^{-1}\frac{ \|\Sigma_{T,-k}\|}{\|\Sigma_{S,-k}\|}=\mathcal O(1)$.
\fi
 Third, the signal lies predominantly in the major directions, with $\left\|\beta^\star_{k}\right\|_{\Sigma_{S,k}^{-1}} =\mathcal O(1)$ and $\left\|\beta^\star_{-k}\right\|_{\Sigma_{S,-k}}=\mathcal O(1/\sqrt{n})$. Lastly, the regularization is not excessively strong to introduce a significant bias, with $\tilde{\lambda} = \lambda+\sum_{j>k}\lambda_j = \mathcal O(\sqrt {n})$. 


\section{Large shift in minor directions}
In the previous section, we established an upper bound for over-parameterized ridge regression under covariate shift. We showed that when the shift in the minor directions is controlled—specifically, when the overall magnitude of $\Sigma_{T,-k}$ is small—“benign overfitting” also occurs under covariate shift. However, when the shift in minor directions is significant, meaning the target covariance matrix has many large eigenvalues with corresponding eigenvectors outside the major directions, the excess risk for ridge regression deteriorates. In this section, we further illustrate the limitations of ridge regression in such cases by providing a lower bound for its performance for large distribution shifts in the minor directions. We show that, in certain instances, ridge regression can only achieve the slow statistical rate of $\mathcal{O}(1/\sqrt{n})$ for the excess risk. On the other hand, it is natural to consider alternative algorithms to ridge regression. We demonstrate that even with a large shift in the minor directions, Principal Component Regression (PCR) is guaranteed to achieve the fast statistical rate $\mathcal{O}(1/n)$ in the same instances, provided that the signal $\beta^{\star}$ lies primarily within the subspace spanned by the major directions. Moreover, PCR does not require the minor directions to have a high effective rank in the source distribution, highlighting its advantage over ridge regression in such cases. Throughout this section, we maintain the setup and source covariance structure described in Section~\ref{sec:setup}. However, Assumption~\ref{assum_maintext:condNum} is no longer required.

\subsection{Slow rate for ridge regression}
In this subsection, we demonstrate the limitations of ridge regression when the overall magnitude of $\Sigma_{T, -k}$ is large. Consider an instance where $\Sigma_S$ has its first $k$ components as $\Theta(1)$, while the minor directions have eigenvalues of $o(1)$. If we set $\Sigma_T=I_d$, in contrast to the ``benign overfitting” regime described in Theorem~\ref{theo_maintext:ridge_main}, ridge regression will have a large excess risk for this instance. Although the signal from the major directions is effectively captured, the signal in the minor directions is nearly lost.
Unlike the case in Section~\ref{sec:ridge}, here, the estimation error in the minor directions is crucial because the target distribution has significant components in these directions. We formalize this intuitive example through the following theorems:
\begin{corollary}
\label{cor:ridge_upper_bound}
For some absolute constants $C_1,C_2$, consider the following instance of $\Sigma_S$: 
\begin{align*}
    \lambda_1 = \dots = \lambda_k = 1,\quad \lambda_{k+1} = \dots = \lambda_{k + \lfloor \frac{\sqrt{n}}{C_2} \rfloor} = \frac{C_1}{\sqrt{n}}, \quad \lambda_{k + \lfloor \frac{\sqrt{n}}{C_2} \rfloor + 1} = \dots = \lambda_d = 0.
\end{align*}
Assume $\Sigma_{T,-k}=\zeros, \Sigma_{T,k}=I_k$, and $\beta^{\star}_{-k}= 0$.  By choosing $\lambda = \sqrt{n}$, under the same conditions of Theorem \ref{theo_maintext:ridge_main}, we can bound the excess risk of the ridge estimator with probability at least $1-3\delta$:
\begin{align*}
\EE_{\boldsymbol{\epsilon}}\big[\cR\big(\hat{\beta}(Y)\big)\big] \leq \mathcal{O} \Big(\frac{v^2 k+\|\beta^\star\|^2}{n}\Big).
\end{align*}
\end{corollary}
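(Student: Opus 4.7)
}
The plan is to simply specialize Theorem~\ref{theo_maintext:ridge_main} to the instance described in the statement and compute every quantity appearing in the bound. Since $\Sigma_{S,k}=I_k$, $\Sigma_{T,k}=I_k$, $\Sigma_{T,-k}=\zeros$, the definitions in~\eqref{T_U_V} give immediately
\[
\calT=I_k,\qquad \calU=\Sigma_{S,-k}\,\Sigma_{T,-k}=\zeros,\qquad \calV=\Sigma_{S,-k}^2,
\]
so $\tr[\calT]=k$, $\|\calT\|=1$, $\tr[\calU]=0$, and $\|\Sigma_{T,-k}\|=0$.

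Next I would compute the effective ranks and the combined regularization. With $\lambda=\sqrt n$ and the given eigenvalues of $\Sigma_S$,
\[
\tilde\lambda \;=\; \sqrt n + \Big\lfloor \tfrac{\sqrt n}{C_2}\Big\rfloor\!\cdot\!\tfrac{C_1}{\sqrt n} \;=\; \Theta(\sqrt n),
\qquad
r_k \;=\; \tfrac{\tilde\lambda}{\lambda_{k+1}} \;\geq\; \tfrac{\sqrt n}{C_1/\sqrt n} \;=\; \tfrac{n}{C_1},
\qquad
R_k \;=\; \tfrac{\tilde\lambda^2}{\sum_{j>k}\lambda_j^2} \;=\; \Theta(n^{3/2}).
\]
Choosing $C_1$ small enough (so that $r_k\geq n$) ensures the hypothesis $n<r_k$ of Theorem~\ref{theo_maintext:ridge_main} is met, and the sample complexity condition $n>cN$ is absorbed into the implicit absolute constants since $k$ is treated as fixed and $\tilde\lambda/\lambda_k = \Theta(\sqrt n)$.

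I would then verify Assumption~\ref{assum_maintext:condNum} for this instance: $X_{-k}$ has only $\lfloor\sqrt n/C_2\rfloor$ nonzero columns with variance $C_1/\sqrt n$, so a standard sub-Gaussian operator-norm bound gives $\|X_{-k}X_{-k}^T\|_{\mathrm{op}} = O(\sqrt n)$ with probability $\geq 1-\delta$, whence $\mu_1(A_k)/\mu_n(A_k)\leq(\sqrt n + O(\sqrt n))/\sqrt n = O(1)$, so the condition number of $A_k=\lambda I_n+X_{-k}X_{-k}^T$ is bounded by an absolute constant $L$.

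Finally, plugging everything into Theorem~\ref{theo_maintext:ridge_main}, the variance bound collapses to
\[
V \;\leq\; c v^2\Big(\tfrac{k}{n}\cdot 1 + \tfrac{n}{R_k}\cdot 0\Big) \;=\; O\!\Big(\tfrac{v^2 k}{n}\Big),
\]
while for the bias, since $\beta^\star_{-k}=0$ and $\Sigma_{S,k}^{-1}=I_k$,
\[
B_{\operatorname{ID}} \;=\; \|\beta^\star\|^2\Big(\tfrac{\tilde\lambda}{n}\Big)^2 \;=\; \Theta\!\Big(\tfrac{\|\beta^\star\|^2}{n}\Big),
\qquad
B \;\leq\; c\,B_{\operatorname{ID}}\Big(\|\calT\| + \tfrac{n}{r_k}\cdot 0\Big) \;=\; O\!\Big(\tfrac{\|\beta^\star\|^2}{n}\Big).
\]
Adding $V$ and $B$ yields the claimed bound $O((v^2 k+\|\beta^\star\|^2)/n)$, and the probability $1-3\delta$ is inherited directly from Theorem~\ref{theo_maintext:ridge_main}. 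The only nonroutine step is checking Assumption~\ref{assum_maintext:condNum}; everything else is algebraic substitution.
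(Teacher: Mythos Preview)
Your proposal is correct and follows exactly the paper's approach: the paper's entire proof is the remark ``Corollary~\ref{cor:ridge_upper_bound} is a direct application of Theorem~\ref{theo_maintext:ridge_main},'' and you have simply filled in the algebraic substitutions. Your verification of Assumption~\ref{assum_maintext:condNum} and the sample-complexity condition $n>cN$ is extra work beyond what is strictly needed, since the corollary already hypothesizes ``under the same conditions of Theorem~\ref{theo_maintext:ridge_main}''; but the computations you sketch there are also sound.
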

\begin{remark}
Corollary \ref{cor:ridge_upper_bound} is a direct application of Theorem \ref{theo_maintext:ridge_main}.
\end{remark}

\begin{theorem}\label{thm:ridge_lower_bound_main} Consider the same instance of $\Sigma_S$ as in Corollary \ref{cor:ridge_upper_bound}. Assume $\Sigma_T = I_d$ and $\lambda= \sqrt{n}$. There exists an absolute constant $C>0$, such that for some $0<\delta<1$, $N_2>0$ and for any $n>N_2$, with probability at least $1-\delta$, we have
$V \geq C v^2$.

Furthermore, for any $\lambda > 0$, we can bound the excess risk of the ridge estimator with probability at least $1-\delta$:
\begin{align*}
\EE_{\boldsymbol{\epsilon}}\big[\cR\big(\hat{\beta}(Y)\big)\big] \geq C\frac{\|\beta^{\star}\|^2 \wedge v^2}{\sqrt{n}}.
\end{align*}

\end{theorem}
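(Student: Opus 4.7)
The plan is to prove both parts by first pinning down the eigenstructure of $XX^T$ on a high-probability event, and then (a) evaluating $V$ directly at $\lambda=\sqrt{n}$ and (b) combining matching variance and bias lower bounds with an AM--GM argument uniformly over $\lambda>0$. Under this instance, decompose $X = [X_k,\,X_m,\,\mathbf{0}]$, where $X_k\in\R^{n\times k}$ has i.i.d.\ unit-variance sub-Gaussian entries and $X_m\in\R^{n\times m}$, with $m=\lfloor \sqrt{n}/C_2\rfloor$, has i.i.d.\ sub-Gaussian entries of variance $C_1/\sqrt{n}$.

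First I would use standard non-asymptotic Wishart concentration (Bai--Yin type bounds for sub-Gaussian random matrices) to show that, on a high-probability event, the nonzero eigenvalues of $X_k X_k^T$ all lie in $[c_1 n, c_2 n]$ and those of $X_m X_m^T$ all lie in $[c_3\sqrt{n}, c_4\sqrt{n}]$ for absolute constants $0<c_1<c_2$ and $0<c_3<c_4$. Weyl's inequality applied to $XX^T = X_k X_k^T + X_m X_m^T$ immediately gives $\mu_i(XX^T)\asymp n$ for $i=1,\dots,k$. To control the $m$ medium eigenvalues, I would project onto $\mathrm{range}(X_k)^\perp$: letting $P_k$ denote the orthogonal projector onto $\mathrm{range}(X_k)$, the matrix $(I-P_k)X_m X_m^T(I-P_k)$ has the same nonzero spectrum as the $m\times m$ matrix $X_m^T(I-P_k)X_m = X_m^T X_m - X_m^T P_k X_m$. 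Since $X_k$ and $X_m$ are independent, $X_m^T P_k X_m$ is a quadratic form whose expected trace is $\asymp k\sqrt{n}\ll m\sqrt{n}$, so $X_m^T(I-P_k)X_m$ concentrates at scale $\sqrt{n}\,I_m$. Courant--Fischer then yields $\mu_{k+j}(XX^T)\asymp\sqrt{n}$ for $j=1,\dots,m$, with the remaining $n-k-m$ eigenvalues equal to zero.

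Next I would derive the excess-risk bounds. Since $\Sigma_T=I_d$,
\begin{align*}
V = v^2\tr\!\big[(XX^T+\lambda I)^{-2}XX^T\big] = v^2\sum_i\frac{\mu_i}{(\mu_i+\lambda)^2} \geq v^2\cdot m\cdot\frac{c_3\sqrt{n}}{(c_4\sqrt{n}+\lambda)^2} \geq \frac{C' v^2 n}{(c_4\sqrt{n}+\lambda)^2},
\end{align*}
which at $\lambda=\sqrt{n}$ gives $V\geq Cv^2$, proving the first claim. For the bias, using $\hat\beta(X\beta^\star)=(X^T X+\lambda I)^{-1}X^T X\beta^\star$,
\begin{align*}
B = \lambda^2 \|(X^T X+\lambda I)^{-1}\beta^\star\|^2 \geq \frac{\lambda^2\|\beta^\star\|^2}{(\|XX^T\|+\lambda)^2} \geq \frac{\lambda^2\|\beta^\star\|^2}{(c_2 n+\lambda)^2}.
\end{align*}
I would then split into three $\lambda$-regimes. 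If $\lambda\leq c_4\sqrt{n}$, variance alone gives $V\geq C' v^2/(4c_4^2) \geq C(\|\beta^\star\|^2\wedge v^2)/\sqrt{n}$. If $c_4\sqrt{n}<\lambda\leq c_2 n$, then $(c_4\sqrt{n}+\lambda)^2\leq 4\lambda^2$ and $(c_2 n+\lambda)^2\leq 4c_2^2 n^2$, so by AM--GM,
\begin{align*}
V+B \geq 2\sqrt{VB} \geq \frac{C'' v\|\beta^\star\|}{\sqrt{n}} \geq \frac{C''(\|\beta^\star\|^2\wedge v^2)}{\sqrt{n}},
\end{align*}
where the last inequality uses $v\|\beta^\star\|\geq v^2\wedge\|\beta^\star\|^2$. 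Finally, if $\lambda>c_2 n$, then $B\geq \|\beta^\star\|^2/4 \geq (\|\beta^\star\|^2\wedge v^2)/\sqrt{n}$ for $n\geq 16$.

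The hard part will be the eigenstructure analysis in the first step. While each of $X_k X_k^T$ and $X_m X_m^T$ has a classical Wishart spectrum, a naive Weyl lower bound on $\mu_{k+j}(XX^T)$ degrades once $k+j$ exceeds the rank of $X_m X_m^T$, so I would need to argue on $\mathrm{range}(X_k)^\perp$ and exploit independence of $X_k$ and $X_m$ to show $X_m^T(I-P_k)X_m$ remains well-conditioned at scale $\sqrt{n}$. Once the clean spectral split ($k$ eigenvalues of order $n$, $m$ of order $\sqrt{n}$, and the rest zero) is in hand, the rest is elementary calculus plus AM--GM.
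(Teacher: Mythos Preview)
Your proposal is correct and reaches the same conclusions, but the paper's route is considerably shorter at both steps.

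For the spectral control, the paper does not work on $XX^T$ at all. Instead it exploits that the effective rank of $\Sigma_S$ is $r=\tr(\Sigma_S)/\lambda_1=k+C_1/C_2=O(1)$, applies sub-Gaussian covariance concentration to the $d\times d$ sample covariance to get $\|\hat\Sigma_S-\Sigma_S\|\lesssim\sqrt{r/n}\asymp 1/\sqrt n$, and then invokes Weyl's inequality once to obtain $|\hat\lambda_i-\lambda_i|\leq C_1/(2\sqrt n)$ for every $i$. This yields the two-sided bounds $\hat\lambda_i\in[\tfrac{C_1}{2\sqrt n},\tfrac{3C_1}{2\sqrt n}]$ for the medium block in one line, bypassing the projection argument entirely. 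Your route works, but note that the Courant--Fischer direction you invoke is the delicate one: Cauchy interlacing from the compression onto $\mathrm{range}(X_k)^\perp$ gives only the \emph{upper} bound $\mu_{k+j}(XX^T)\leq\mu_j(X_m^T(I-P_k)X_m)$. For the lower bound you must either do a separate case-split on $\|v_1\|$ versus $\|v_2\|$, or simply use $XX^T\succeq X_mX_m^T$ to get $\mu_i(XX^T)\geq\mu_i(X_mX_m^T)\gtrsim\sqrt n$ for $i\leq m$ (losing $k$ of the $m$ medium eigenvalues, which is harmless). Also, your expected-trace computation is off: $\EE\tr(X_m^TP_kX_m)=km\cdot C_1/\sqrt n=kC_1/C_2$, not $k\sqrt n$; the conclusion is unaffected.

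For the uniform-in-$\lambda$ bound, the paper avoids AM--GM by choosing the single threshold $\lambda=n^{3/4}$: for $\lambda\leq n^{3/4}$ the variance alone already gives $V\gtrsim v^2/\sqrt n$ (from the $m\asymp\sqrt n$ medium eigenvalues), while for $\lambda\geq n^{3/4}$ the bias alone gives $B\gtrsim\|\beta^\star\|^2/\sqrt n$. Your three-regime split with AM--GM in the middle is equally valid and has the advantage of not requiring one to guess the optimal crossover $n^{3/4}$ in advance.
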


From Theorem \ref{thm:ridge_lower_bound_main}, we observe that when $\Sigma_T = I_d$, the performance of ridge regression deteriorates compared to the case where $\Sigma_{T,-k} = \zeros$. If we set $\lambda = \sqrt{n}$ as in Corollary~\ref{cor:ridge_upper_bound}, ridge regression incurs a constant excess risk under covariate shift while achieving an in-distribution error rate of $\mathcal O(1/n)$. Furthermore, Theorem \ref{thm:ridge_lower_bound_main} shows no matter how we choose the regularization parameter $\lambda$, the excess risk is always lower bounded by the slow statistical rate $\mathcal O(1/\sqrt{n})$, which is worse than the fast rate of $\mathcal O(1/n)$. However, as we will prove in the next subsection, Principal Component Regression (PCR) can achieve an excess risk of $\mathcal{O}(1/n)$ under this instance, even with $\Sigma_T = I_d$.



\subsection{Fast rate for Principal Component Regression}
As discussed earlier, ridge regression faces significant limitations when there is a large shift in the minor directions. In Section \ref{subsec:bartlett}, it was shown that the signal in the minor directions, $\beta^\star_{-k}$, is nearly lost when projected from the high-dimensional ambient space onto the low-dimensional sample space. In other words, learning the true signal from the minor directions is essentially impossible. Therefore, in this subsection, we continue to focus on the scenario where the true signal $\beta^{\star}$ primarily resides in the major directions. In this case, Principal Component Regression (PCR) emerges as a natural algorithm that estimates the space spanned by the major directions and performs regression on that subspace.



\ifarxiv
    \paragraph{Principal Component Regression (PCR).}
\else
    \textbf{Principal Component Regression (PCR).}
\fi
\begin{itemize}[leftmargin=*] 
    \item \textbf{Step 1: Obtain an estimator $\hat{U}$ of the top-$k$ subspace of $\Sigma_S$.} For simplicity, we assume a sample size of $2n$ and use the first half of the data to compute $\hat{U}$ by principal component analysis (PCA) on the sample covariance matrix $\hat\Sigma_S := \frac1n X^TX$. 
    Specifically, $\hat{U} = (\hat{u}_1, \cdots, \hat{u}_k)$ where $\hat{u}_i$ is the $i$-th eigenvector of $\hat\Sigma_S$.
    \item \textbf{Step 2: Use the data projected on $\hat{U}$ to conduct linear regression.}  
    With a little abuse of notation, we use $X\in \mathbb{R}^{n \times d}$ to denote the data matrix $(x_{n+1}, \cdots, x_{2n})^{T}$, and $Y \in \mathbb{R}^{n}$ to denote $(y_{n+1}, \cdots, y_{2n})^{T}$. If we let $Z:= X\hat{U} \in \mathbb{R}^{n \times k}$ be the projected data matrix, the estimator $\hat{\beta}$ we obtained is given by
\begin{align*}
\hat{\beta} = \hat{U} (Z^T Z)^{-1}Z^T Y 
= \hat{U} (\hat{U}^TX^{T}X\hat{U})^{-1}\hat{U}^TX^{T}Y.
\end{align*}
\end{itemize}


Consider the scenario where the last $d-k$ components of the true signal $\beta^*$ is exactly zero, i.e., $\beta^{\star}_{-k} = 0$. In this case, if the subspace represented by $\hat{U}$ perfectly matches the subspace represented by $U = 
\begin{pmatrix}
I_k  \\
0 
\end{pmatrix} \in \mathbb{R}^{d \times k}
$, corresponding to the first $k$ components, then PCR performs linear regression using only the first $k$ components of the covariates. As a result, the excess risk would just be the usual variance of linear regression in the major directions. In this scenario, regardless of the norm $\|\Sigma_{T,-k}\|$, the PCR estimator assigns coefficients of zero to the last $d-k$ components, thus avoiding any large excess risk. Furthermore, if the distance between $\hat{U}$ and $U$ is nonzero, an additional term in the excess risk will arise due to the estimation error of $\hat{U}$. We formalize this intuition with the following upper bound for the excess risk of PCR. To facilitate the presentation, we introduce a measure of the estimation accuracy of $\hat{U}$. We define $\Delta = \text{dist}(\hat{U}, U) := \|UU^T - \hat{U}\hat{U}^T\|$, which represents the distance between the subspaces spanned by the columns of $\hat{U}$ and $U$. Then we present the following theorem.
\begin{theorem} \label{thm:pcr_upper_bound_main} 
Assume $\beta^{\star}_{-k} = 0$. If $\Delta \leq \Theta$, for any $0<\delta<1$ and any $n \geq N_1$, we can bound the excess risk of PCR estimator $\hat{\beta}$ with probability $1-\delta$:
\begin{align*}
\EE_{\boldsymbol{\epsilon}}\big[\cR\big(\hat{\beta}(Y)\big)\big] & \leq \mathcal{O} \left(v^2\frac{\tr (\mathcal{T})}{n} + \|\beta^{\star}\|^2 \Big(\frac{\lambda_1}{\lambda_k}\Big)^2\|\Sigma_T\| \Delta^2
\right),
\end{align*}
where 
$\Theta^{-1} = \operatorname{Poly}(\lambda_1\lambda_k^{-1}, \|\Sigma_T\|\lambda_k^{-1}, k\tr (\mathcal{T})^{-1})$
and \\
$N_1 = \operatorname{Poly}(\sigma, \lambda_1\lambda_k^{-1}, \|\Sigma_T\|\lambda_k^{-1}, k\ln (1/\delta), k\tr (\mathcal{T})^{-1})$.
\end{theorem}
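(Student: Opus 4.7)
The plan is to proceed via the standard bias-variance decomposition $\EE_{\boldsymbol\epsilon}\|\hat\beta-\beta^\star\|_{\Sigma_T}^2 = B+V$ with $V := \EE_{\boldsymbol\epsilon}\|\hat\beta(\boldsymbol\epsilon)\|_{\Sigma_T}^2$ and $B := \|\hat\beta(X\beta^\star)-\beta^\star\|_{\Sigma_T}^2$, crucially exploiting the sample-splitting setup: $\hat U$ is computed from the first batch, so conditional on it, the second-batch design $X$ and noise $\boldsymbol\epsilon$ are fresh i.i.d.\ sub-Gaussian samples. All arguments are conducted conditional on $\hat U$, with $\Delta = \|\hat U\hat U^T-UU^T\|$ treated as a fixed (small) perturbation parameter, and I will use freely the projector identity $\|(I-\hat U\hat U^T)U\| = \|\hat U\hat U^T-UU^T\| = \Delta$.

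\textbf{Variance.} A direct computation and the identity $\hat U^T\hat U=I_k$ give $V = v^2\tr\bigl((\hat U^T X^T X\hat U)^{-1}\hat U^T\Sigma_T\hat U\bigr)$. I would apply a sub-Gaussian covariance concentration bound to show $\hat U^T X^T X\hat U = n\hat U^T\Sigma_S\hat U + E$ with $\|E\|\ll n\lambda_k$ once $n \gtrsim k\ln(1/\delta)$. Combined with the perturbation estimates $\|\hat U^T\Sigma_S\hat U - \Sigma_{S,k}\| = \mathcal O(\Delta\lambda_1)$ and $\|\hat U^T\Sigma_T\hat U - \Sigma_{T,k}\| = \mathcal O(\Delta\|\Sigma_T\|)$ (both from $\|(I-\hat U\hat U^T)U\|\le\Delta$ and a Weyl-type argument), this yields $V = v^2\tr(\Sigma_{S,k}^{-1}\Sigma_{T,k})/n + \text{lower order} = v^2\tr(\mathcal T)/n + \mathcal O\bigl(v^2 k\|\Sigma_T\|\Delta/(n\lambda_k)\bigr)$, the remainder being absorbed by the leading term once $\Delta\le\Theta$.

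\textbf{Bias.} Since $\beta^\star_{-k}=0$, we have $\beta^\star = U\beta^\star_k$. Writing $U=\hat U\hat U^T U+(I-\hat U\hat U^T)U$ and substituting into $\hat\beta(X\beta^\star)=\hat U(\hat U^T X^T X\hat U)^{-1}\hat U^T X^T X U\beta^\star_k$ produces the clean decomposition
\[
\hat\beta(X\beta^\star)-\beta^\star \;=\; -(I-\hat U\hat U^T)\beta^\star \;+\; \hat U(\hat U^T X^T X\hat U)^{-1}\hat U^T X^T X(I-\hat U\hat U^T)\beta^\star.
\]
The first term obeys $\|(I-\hat U\hat U^T)\beta^\star\|_{\Sigma_T}^2 \le \|\Sigma_T\|\Delta^2\|\beta^\star\|^2$ directly from the projector identity. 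For the second, set $w=X(I-\hat U\hat U^T)\beta^\star$ and factor through the orthogonal projector $P_{X\hat U}$ to obtain the bound $\|\Sigma_T\|\cdot\|w\|^2/\mu_k(\hat U^T X^T X\hat U)$. Concentration on the rank-$k$ image of $(I-\hat U\hat U^T)U$ gives $\|w\|^2\lesssim n\lambda_1\Delta^2\|\beta^\star\|^2$ (using $\|W^T\Sigma_S W\|\le\lambda_1\Delta^2$), while the variance-step inequality $\mu_k(\hat U^T X^T X\hat U)\gtrsim n\lambda_k$ controls the denominator. The two pieces combine to $B \lesssim \|\Sigma_T\|(\lambda_1/\lambda_k)\Delta^2\|\beta^\star\|^2$, which fits inside the claimed $(\lambda_1/\lambda_k)^2$ factor.

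\textbf{Main obstacle.} The delicate part is tracking concentration uniformly in $\hat U$ with the right polynomial dependence on $k$, $\lambda_1/\lambda_k$, and $\|\Sigma_T\|/\lambda_k$ so that the final sample complexity $N_1$ and admissible perturbation $\Theta$ match the polynomial forms stated. Concretely, one must simultaneously ensure (i) the covariance concentration error on the $k$-dimensional subspace $\hat U$ is dominated by $\lambda_k$, which fixes $N_1 = \mathrm{Poly}(\sigma,\lambda_1/\lambda_k,\|\Sigma_T\|/\lambda_k,k\ln(1/\delta),k/\tr(\mathcal T))$, and (ii) the perturbation of the eigen-subspace is small enough that $\mu_k(\hat U^T\Sigma_S\hat U)\ge \lambda_k/2$, which forces $\Theta^{-1} = \mathrm{Poly}(\lambda_1/\lambda_k,\|\Sigma_T\|/\lambda_k,k/\tr(\mathcal T))$. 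Once these are set up carefully, the two estimates above combine immediately to give the stated excess-risk bound.
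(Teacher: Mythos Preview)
Your approach is correct and follows essentially the same route as the paper, which proves the theorem via Lemma~\ref{step2_lemma} (specialized to $\beta^\star_{-k}=0$) using the identical bias decomposition $-(I-\hat U\hat U^T)\beta^\star + \hat U(\hat U^T X^T X\hat U)^{-1}\hat U^T X^T X(I-\hat U\hat U^T)\beta^\star$ (written there via $\hat U_\perp\hat U_\perp^T$) and the same trace computation for the variance (Lemmas~\ref{bias_term} and~\ref{variance_term}). The only notable variation is in the second bias piece: the paper expands $\hat U^T(X^TX/n)\hat U_\perp\hat U_\perp^T U$ around its population value and carries a concentration remainder $E_1$ that produces the $(\lambda_1/\lambda_k)^2$ factor, whereas your projector-plus-direct-norm bound on $\|X(I-\hat U\hat U^T)\beta^\star\|^2$ is slightly cleaner and in fact yields only $\lambda_1/\lambda_k$, comfortably inside the stated bound.
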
 

\begin{remark}
Theorem \ref{thm:pcr_upper_bound_main} is a special case of Lemma \ref{step2_lemma}. For explicit formulas of $\Theta$ and $N_1$, as well as an upper bound for cases where $\beta^{\star}_{-k} \neq 0$, refer to Lemma \ref{step2_lemma} for details.
\end{remark}
The excess risk upper bound provided by Theorem $\ref{thm:pcr_upper_bound_main}$ consists of two terms. The variance term $\tr (\mathcal{T})/n$ is incurred by the nature of linear regression on the major directions and remains unavoidable even when the subspace estimation is exact (i.e., $\Delta = 0$). This term also appears as the first variance term in Theorem \ref{theo_maintext:ridge_main}, and exactly matches the sharp rate $\tr[\Sigma_S^{-1}\Sigma_T]/n$ for under-parameterized linear regression under covariate shift~\citep{DBLP:conf/iclr/GeTFM024}. The second term $\|\beta^{\star}\|^2 (\frac{\lambda_1}{\lambda_k})^2\|\Sigma_T\| \Delta^2$ represents the bias induced by the subspace estimation error in Step 1, which exhibits a quadratic dependence on $\Delta$. By combining Theorem \ref{thm:pcr_upper_bound_main} with a bound on $\Delta$, we can derive an end-to-end excess risk upper bound of PCR. We present the following lemma to control $\Delta$. 
\begin{lemma} \label{Delta_lemma_main}
With probability at least $1-\delta$, if $n \geq r + \ln (1/\delta)$, we have
\begin{align*}
    \Delta \leq  \mathcal{O} \left( \sigma^4 \frac{\lambda_1}{\lambda_k - \lambda_{k+1}}\sqrt{\frac{r + \ln{\frac{1}{\delta}}}{n}} \right),
\end{align*}
where $r=\lambda_1^{-1}\sum_{i=1}^{d} \lambda_i$ is the effective rank of the entire covariance matrix $\Sigma_S$.
\end{lemma}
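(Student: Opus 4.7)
The plan is to combine a Davis–Kahan type $\sin\Theta$ inequality with a concentration bound for the operator norm deviation of the sample covariance. Since $U$ and $\hat U$ collect the top-$k$ eigenvectors of $\Sigma_S$ and $\hat\Sigma_S$ respectively, and the spectral gap at the $k$-th eigenvalue is $\lambda_k - \lambda_{k+1}$, the Davis–Kahan theorem (in the projector form of Yu, Wang and Samworth) yields
\begin{align*}
\Delta \;=\; \bigl\| UU^{T} - \hat U\hat U^{T} \bigr\| \;\leq\; \frac{2\,\bigl\| \hat\Sigma_S - \Sigma_S \bigr\|}{\lambda_k - \lambda_{k+1}}.
\end{align*}
This reduces the lemma to controlling the operator-norm deviation of the sub-Gaussian sample covariance, and the spectral-gap denominator matches the one advertised in the statement.

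Next, I would invoke a Koltchinskii–Lounici style bound for sub-Gaussian sample covariance matrices. Recall that $x = \Sigma_S^{1/2} z$ with $z$ being $\sigma$-sub-Gaussian isotropic (by the assumption in Section~\ref{sec:setup}), so results on sample covariance of sub-Gaussian vectors apply directly. Standard arguments (via generic chaining, or via a net plus Bernstein for sub-exponential variables) give, for some absolute constant $C$, with probability at least $1-\delta$,
\begin{align*}
\bigl\| \hat\Sigma_S - \Sigma_S \bigr\| \;\leq\; C\,\sigma^{2}\,\lambda_1 \left( \sqrt{\frac{r + \ln(1/\delta)}{n}} \;+\; \frac{r + \ln(1/\delta)}{n} \right),
\end{align*}
where $r = \lambda_1^{-1}\sum_i \lambda_i$ is the effective rank. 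Under the assumed regime $n \geq r + \ln(1/\delta)$, the linear term dominates the quadratic term, so the bracketed expression is $\mathcal{O}\bigl(\sqrt{(r+\ln(1/\delta))/n}\bigr)$.

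Plugging this into the Davis–Kahan inequality gives
\begin{align*}
\Delta \;\leq\; \mathcal{O}\!\left( \sigma^{2}\, \frac{\lambda_1}{\lambda_k - \lambda_{k+1}} \sqrt{\frac{r + \ln(1/\delta)}{n}} \right),
\end{align*}
which matches the claim up to the stated $\sigma$-dependence (the statement writes $\sigma^4$, which comes from using a looser sub-Gaussian/sub-exponential conversion that picks up an extra factor of $\sigma^2$ in the Bernstein step; either exponent is acceptable at this level of precision). The main obstacle is obtaining the dependence on the \emph{effective rank} $r$ rather than the ambient dimension $d$; this is crucial because in the over-parameterized regime $d$ may be arbitrarily large while $r$ is moderate. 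The effective-rank sharpening requires the sub-Gaussian sample covariance bound of Koltchinskii–Lounici (or a generic-chaining computation on the quadratic form $\sup_{\|v\|=1} v^{T}(\hat\Sigma_S - \Sigma_S) v$) rather than a naive $\varepsilon$-net argument in $\mathbb{R}^d$; everything else is routine.
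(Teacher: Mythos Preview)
Your proposal is correct and follows essentially the same route as the paper: Davis--Kahan to reduce $\Delta$ to $\|\hat\Sigma_S-\Sigma_S\|/(\lambda_k-\lambda_{k+1})$, then an effective-rank sub-Gaussian covariance concentration bound (the paper cites Vershynin's exercise, you cite Koltchinskii--Lounici, but these are the same result), and finally absorbing the quadratic term under $n\ge r+\ln(1/\delta)$. Your remark on the $\sigma^2$ versus $\sigma^4$ exponent is also accurate; the paper's Lemma~\ref{stable_rank_concentration} states $\sigma^4$, which is the looser bound.
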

\begin{remark}
Lemma \ref{Delta_lemma_main} shows that $\Delta$ depends on several quantities: the eigenvalue gap between the major and minor directions, i.e., $\lambda_k  - \lambda_{k+1}$, and the effective rank $r$. We observe that $\Delta$ will be small if the major and minor directions are well separated, meaning $\lambda_k  - \lambda_{k+1}$ is large, and the minor directions are relatively small compared to $\lambda_1$.
\end{remark}
Combining Theorem \ref{thm:pcr_upper_bound_main} with Lemma \ref{Delta_lemma_main}, an end-to-end error bound for PCR directly follows (for a detailed statement, refer to Theorem \ref{thm:pcr_upper_bound}), suggesting that PCR will achieve a small excess risk as long as the major and minor directions are well separated, and the effective rank of the entire source covariance matrix is small. In contrast to ridge regression, PCR does not rely on the minor components having a high effective rank. This highlights the superiority of PCR over ridge regression in certain scenarios. 

As an example, consider the instance in Theorem \ref{thm:ridge_lower_bound_main}, where $k, \|\Sigma_T\|, \lambda_1, \lambda_k$ are all $\Theta(1)$. In this case, the variance term scales as $1/n$, and the bias term scales as $\mathcal{O}(\Delta^2)$. Since $r = \Theta(1)$ in this instance, we have $\Delta \leq \mathcal{O}(1/\sqrt{n})$. Consequently, we conclude that PCR achieves a $\mathcal{O}(1/n)$ rate in this instance, even when $\Sigma_T = I_d$. Compared with the excess risk for ridge regression, which is at least $1/\sqrt{n}$, PCR shows its superiority against ridge regression when there is a large shift in the minor directions.

\section{Simulation}
\label{sec:simulation_app}
\ifarxiv
    \begin{figure}[ht]
\else
    \begin{figure}[h!]
\fi
    \centering
    \begin{subfigure}{0.3\textwidth}
        \centering
        \includegraphics[width=\linewidth]{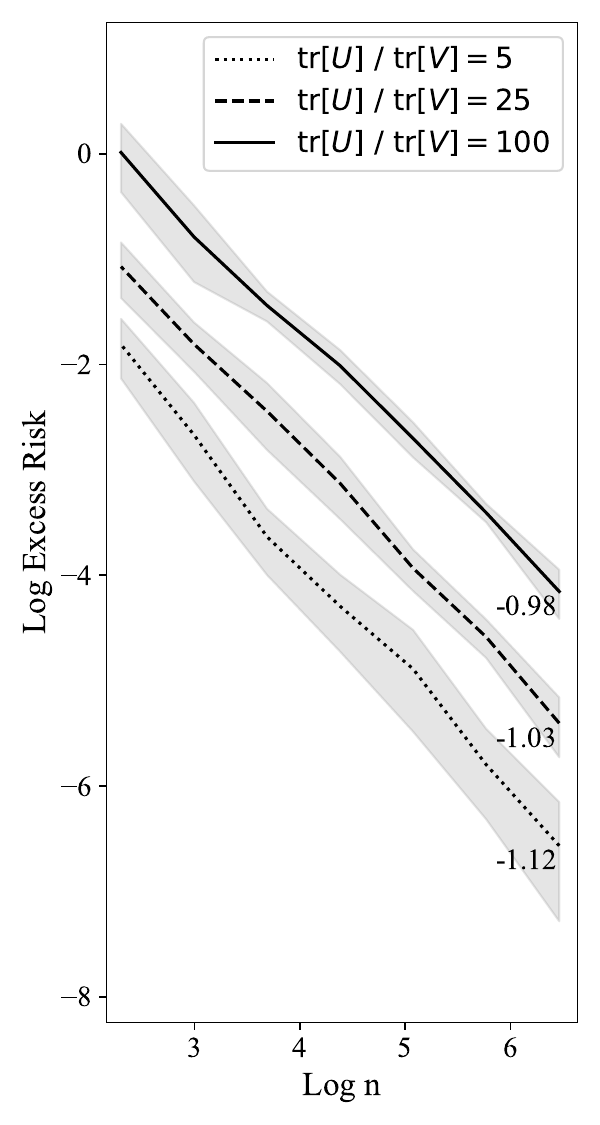}
        \caption{}
        \label{fig:simulation_U}
    \end{subfigure}
    \begin{subfigure}{0.3\textwidth}
        \centering
        \includegraphics[width=\linewidth]{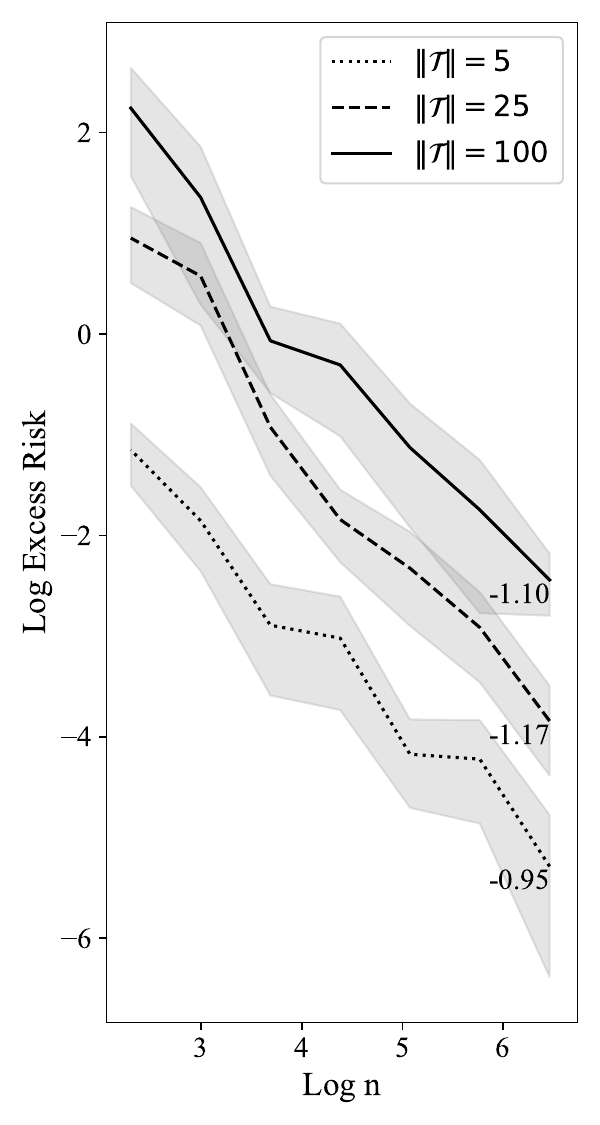}
        \caption{}
        \label{fig:simulation_T}
    \end{subfigure}
    \begin{subfigure}{0.3\textwidth}
        \centering
        \includegraphics[width=\linewidth]{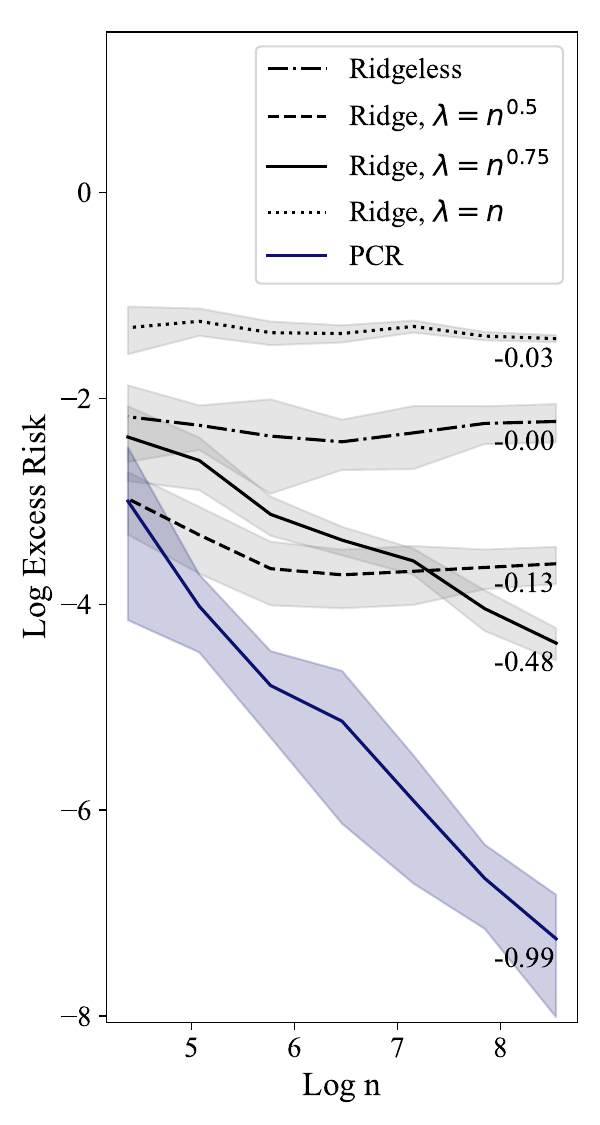}
        \caption{}
        \label{fig:simulation_m}
    \end{subfigure}
    \caption{Simulation results for excess risks across varying training sample sizes. The shaded regions represent standard errors of 10 runs, using different samples of training and test sets. The slope of the fitted OLS model is marked along each curve. (a)(b) Minimum norm interpolation under distinct target covariance matrices with small shifts in minor directions. The source covariance matrix remains constant. (a) Various magnitudes of shifts in minor directions, with $\|\calT \| = 1$. (b) Various magnitudes of shifts in major directions, with $\operatorname{tr[\calU]}/\operatorname{tr[\calV]}=1$. (c) Ridge and PCR under large shifts in minor directions, following the setting of Theorem~\ref{thm:ridge_lower_bound_main}.}
    \label{fig:simulation}
\end{figure}
\subsection{Benign-overfitting: small shift in minor directions}
We simulate the covariate shift discussed in section~\ref{sec:ridge}, where the overall magnitude of the target covariance matrix's minor directions is comparable to that of the source. Theorem~\ref{theo_maintext:ridge_main} establishes an $\mathcal O(1/n)$ excess risk rate for ridge regression under certain benign-overfitting conditions. Specifically, for the training data, we assume $k=\mathcal O(1)$, $R_k=\Omega(n^2)$, $\left\|\beta^\star_{k}\right\|_{\Sigma_{S,k}^{-1}} =\mathcal O(1)$,  $\left\|\beta^\star_{-k}\right\|_{\Sigma_{S,-k}}=\mathcal O(1/\sqrt{n})$, and $\tilde{\lambda} = \mathcal O(\sqrt {n})$. For the test data, we assume $\|\calT\|, \frac{\tr[\calU] }{\tr[\calV]}, nr_k^{-1}\frac{ \|\Sigma_{T,-k}\|}{\|\Sigma_{S,-k}\|} = \mathcal O(1)$. In the experiment, data is generated according to these conditions.
\begin{align*}
     y = x^T\beta^\star + \epsilon,
\end{align*}
where $\beta^\star\in\RR^{k+n^2}$, with $\beta^\star_k = (\frac{1}{\sqrt{k}}, ..., \frac{1}{\sqrt{k}})^T$, $\beta^\star_{-k}=\mathbf{0}$ and $k=10$. The noise $\epsilon$ follows a centered gaussian distribution with variance 0.1, and $x$ is drawn from a multivariate normal distribution with zero mean and a source covariance matrix $\Sigma_S = \operatorname{diag}(I_k, n^{-1.5}I_{n^2})$. The target covariance matrix is $\Sigma_T=\operatorname{diag}(\Sigma_{T,k}, \Sigma_{T,-k})$ where $\Sigma_{T,k}$ and $\Sigma_{T,-k}$ are randomly generated and scaled to achieve specific values for $\|\calT\|$ and $\frac{\tr[\calU] }{\tr[\calV]}$, respectively. At the same time, we do not explicitly control $nr_k^{-1}\frac{ \|\Sigma_{T,-k}\|}{\|\Sigma_{S,-k}\|}$, because it equals $\frac{1}{n}\frac{ \|\Sigma_{T,-k}\|}{\|\Sigma_{S,-k}\|}$ in this setting and is typically bounded for a randomly generated $\Sigma_{T,-k}$. We run minimum norm interpolation (ridgeless regression) with $\lambda=0$. 

The source covariance matrix $\Sigma_S$ is fixed for all experiments while we vary the target covariance matrices. To study covariate shifts in major directions, we vary $\|\calT\|$ among 5, 25, 100 and keep $\frac{\tr[\calU] }{\tr[\calV]}=1$.  To study covariate shifts in minor directions, we vary $\frac{\tr[\calU] }{\tr[\calV]}=1$ among 5, 25, 100 and keep $\|\calT\|=1$. For each pair of $\|\calT\|$ and $\frac{\tr[\calU] }{\tr[\calV]}$, we generate training samples of various sizes $n$ and 1000 test samples. For each $n$, a target covariance matrix $\Sigma_T$ is randomly generated to satisfy the specified $\|\calT\|$ and $\frac{\tr[\calU] }{\tr[\calV]}$. We run 10 experiments for each set of $(\Sigma_S, \Sigma_T, n)$, using independently sampled training sets and test sets, and the mean and standard error of the excess risks are reported. 

The results are shown in Figure~\ref{fig:simulation_U},~\ref{fig:simulation_T}. The fast rate $\mathcal O(1)$ of minimum norm interpolation is confirmed, as the log-log plot of excess risk versus $n$ has a slope near -1 across all combinations of $\|\calT\|$ and $\frac{\tr[\calU] }{\tr[\calV]}$. The excess risk increases with larger $\frac{\tr[\calU] }{\tr[\calV]}$, indicating a greater shift in minor directions. Similarly, the excess risk increases with larger $\|\calT\|$, indicating a greater shift in major directions.

\subsection{Ridge v.s. PCR: large shift in minor directions}
Theorem~\ref{thm:ridge_lower_bound_main} identifies a setting where large covariate shifts occur in minor directions of the covariance matrix, leading to a lower bound of $\mathcal O(1/\sqrt{n})$ on the excess risk for ridge regression, while Principal Component Regression (PCR) achieves the fast rate of $\mathcal O(1)$. We design a simulation experiment under the same instance of the source and target covariance matrices. Specifically, data is generated as:
\begin{align*}
     y = x^T\beta^\star + \epsilon,
\end{align*}
where $\beta^\star\in\RR^{k+\lfloor \sqrt n \rfloor}$, with $\beta^\star_k = (\frac{1}{\sqrt{k}}, ..., \frac{1}{\sqrt{k}})^T$, $\beta^\star_{-k}=\mathbf{0}$, and $k=10$. The noise $\epsilon$ is drawn from a centered gaussian distribution with variance 0.1, and $x$ follows a multivariate normal distribution with zero mean. The source covariance matrix is $\Sigma_S = \operatorname{diag}(I_k, n^{-0.5}I_{\lfloor \sqrt n \rfloor})$, and the target covariance matrix is $\Sigma_{T} = I_{k+\lfloor \sqrt n \rfloor}$.

We evaluate ridge regression with various regularization strengths: $\lambda = 10^{-8}, n^{0.5}, n^{0.75}, n$. Here, we use $\lambda=10^{-8}$ to approximate ridgeless regression, which has a singular solution under this setup. We compare PCR to ridge regression for different training sample sizes $n$. The test set contains 1000 samples. For each $n$, 10 experiments are conducted with independently sampled training and test sets. We report the mean and standard error of the excess risks.

Figure~\ref{fig:simulation_m} present the results. As expected, PCR nearly achieves the fast rate of $\mathcal O(1/n)$, with the log-log slope of excess risk versus $n$ being -0.99. In contrast, the optimal rate of ridge regression is $\mathcal O(n^{-0.48})$, achieved with $\lambda = n^{0.75}$. This aligns with the lower bound of $\mathcal O(1/\sqrt n)$ from Theorem~\ref{thm:ridge_lower_bound_main}, and its proof also suggests $\lambda = n^{0.75}$ as the optimal regularization strength. Additionally, ridge regression exhibits excess risks above a constant for certain choices of $\lambda$.

\section{Conclusion and discussion}


In conclusion, we provide an instance-dependent upper bound on the excess risk for ridge regression under general covariate shift. Our findings demonstrate that “benign overfitting” also occurs in OOD generalization when the shift in the minor directions is well controlled. We also investigate the regime with a large shift in the minor directions, where ridge regression may incur a large excess risk, whereas Principal Component Regression (PCR) exhibits superior performance.


Our work opens several directions for future research. First, while we have established a lower bound for ridge regression in certain instances, a key challenge remains in deriving a general lower bound that matches our upper bounds, offering a more precise characterization of the excess risk under covariate shift. Second, our analysis has focused on linear models as an initial step in understanding over-parameterized OOD problems. Extending this investigation to more complex, non-linear models would be a intriguing direction for future exploration.

\bibliographystyle{plainnat}
\bibliography{ref}

\begin{thebibliography}{83}
\providecommand{\natexlab}[1]{#1}
\providecommand{\url}[1]{\texttt{#1}}
\expandafter\ifx\csname urlstyle\endcsname\relax
  \providecommand{\doi}[1]{doi: #1}\else
  \providecommand{\doi}{doi: \begingroup \urlstyle{rm}\Url}\fi

\bibitem[Agapiou et~al.(2017)Agapiou, Papaspiliopoulos, Sanz-Alonso, and Stuart]{agapiou2017importance}
Sergios Agapiou, Omiros Papaspiliopoulos, Daniel Sanz-Alonso, and Andrew~M Stuart.
\newblock Importance sampling: Intrinsic dimension and computational cost.
\newblock \emph{Statistical Science}, pages 405--431, 2017.

\bibitem[Agarwal et~al.(2019)Agarwal, Shah, Shen, and Song]{agarwal2019robustness}
Anish Agarwal, Devavrat Shah, Dennis Shen, and Dogyoon Song.
\newblock On robustness of principal component regression.
\newblock \emph{Advances in Neural Information Processing Systems}, 32, 2019.

\bibitem[Agarwal et~al.(2020)Agarwal, Shah, and Shen]{agarwal2020model}
Anish Agarwal, Devavrat Shah, and Dennis Shen.
\newblock On model identification and out-of-sample prediction of principal component regression: Applications to synthetic controls.
\newblock \emph{arXiv preprint arXiv:2010.14449}, 2020.

\bibitem[Bai and Ng(2002)]{bai2002determining}
Jushan Bai and Serena Ng.
\newblock Determining the number of factors in approximate factor models.
\newblock \emph{Econometrica}, 70\penalty0 (1):\penalty0 191--221, 2002.

\bibitem[Bair et~al.(2006)Bair, Hastie, Paul, and Tibshirani]{bair2006prediction}
Eric Bair, Trevor Hastie, Debashis Paul, and Robert Tibshirani.
\newblock Prediction by supervised principal components.
\newblock \emph{Journal of the American Statistical Association}, 101\penalty0 (473):\penalty0 119--137, 2006.

\bibitem[Bartlett et~al.(2020)Bartlett, Long, Lugosi, and Tsigler]{bartlett2020benign}
Peter~L Bartlett, Philip~M Long, G{\'a}bor Lugosi, and Alexander Tsigler.
\newblock Benign overfitting in linear regression.
\newblock \emph{Proceedings of the National Academy of Sciences}, 117\penalty0 (48):\penalty0 30063--30070, 2020.

\bibitem[Bartlett et~al.(2021)Bartlett, Montanari, and Rakhlin]{bartlett2021deep}
Peter~L Bartlett, Andrea Montanari, and Alexander Rakhlin.
\newblock Deep learning: a statistical viewpoint.
\newblock \emph{Acta numerica}, 30:\penalty0 87--201, 2021.

\bibitem[Belkin and Niyogi(2003)]{belkin2003laplacian}
Mikhail Belkin and Partha Niyogi.
\newblock Laplacian eigenmaps for dimensionality reduction and data representation.
\newblock \emph{Neural computation}, 15\penalty0 (6):\penalty0 1373--1396, 2003.

\bibitem[Belkin et~al.(2020)Belkin, Hsu, and Xu]{belkin2020two}
Mikhail Belkin, Daniel Hsu, and Ji~Xu.
\newblock Two models of double descent for weak features.
\newblock \emph{SIAM Journal on Mathematics of Data Science}, 2\penalty0 (4):\penalty0 1167--1180, 2020.

\bibitem[Ben-David et~al.(2006)Ben-David, Blitzer, Crammer, and Pereira]{ben2006analysis}
Shai Ben-David, John Blitzer, Koby Crammer, and Fernando Pereira.
\newblock Analysis of representations for domain adaptation.
\newblock \emph{Advances in neural information processing systems}, 19, 2006.

\bibitem[Bibas et~al.(2019)Bibas, Fogel, and Feder]{bibas2019new}
Koby Bibas, Yaniv Fogel, and Meir Feder.
\newblock A new look at an old problem: A universal learning approach to linear regression.
\newblock In \emph{2019 IEEE International Symposium on Information Theory (ISIT)}, pages 2304--2308. IEEE, 2019.

\bibitem[Chan and Park(2005)]{chan2005project}
Swee~Lean Chan and Moonseo Park.
\newblock Project cost estimation using principal component regression.
\newblock \emph{Construction Management and Economics}, 23\penalty0 (3):\penalty0 295--304, 2005.

\bibitem[Chen et~al.(2024)Chen, Liu, Suzuki, and Cevher]{DBLP:conf/icml/Chen0SC24}
Yihang Chen, Fanghui Liu, Taiji Suzuki, and Volkan Cevher.
\newblock High-dimensional kernel methods under covariate shift: Data-dependent implicit regularization.
\newblock In \emph{Forty-first International Conference on Machine Learning, {ICML} 2024, Vienna, Austria, July 21-27, 2024}. OpenReview.net, 2024.
\newblock URL \url{https://openreview.net/forum?id=bBzlapzeR1}.

\bibitem[Chen et~al.(2021)Chen, Chi, Fan, Ma, et~al.]{chen2021spectral}
Yuxin Chen, Yuejie Chi, Jianqing Fan, Cong Ma, et~al.
\newblock Spectral methods for data science: A statistical perspective.
\newblock \emph{Foundations and Trends{\textregistered} in Machine Learning}, 14\penalty0 (5):\penalty0 566--806, 2021.

\bibitem[Chinot and Lerasle(2022)]{chinot2022robustness}
Geoffrey Chinot and Matthieu Lerasle.
\newblock On the robustness of the minimim l2 interpolator.
\newblock \emph{Bernoulli}, 2022.

\bibitem[Cortes et~al.(2010)Cortes, Mansour, and Mohri]{cortes2010learning}
Corinna Cortes, Yishay Mansour, and Mehryar Mohri.
\newblock Learning bounds for importance weighting.
\newblock \emph{Advances in neural information processing systems}, 23, 2010.

\bibitem[Depczynski et~al.(2000)Depczynski, Frost, and Molt]{depczynski2000genetic}
Uwe Depczynski, VJ~Frost, and K~Molt.
\newblock Genetic algorithms applied to the selection of factors in principal component regression.
\newblock \emph{Analytica Chimica Acta}, 420\penalty0 (2):\penalty0 217--227, 2000.

\bibitem[Dobriban and Wager(2018)]{dobriban2018high}
Edgar Dobriban and Stefan Wager.
\newblock High-dimensional asymptotics of prediction: Ridge regression and classification.
\newblock \emph{The Annals of Statistics}, 46\penalty0 (1):\penalty0 247--279, 2018.

\bibitem[Fan and Gu(2023)]{fan2023factor}
Jianqing Fan and Yihong Gu.
\newblock Factor augmented sparse throughput deep relu neural networks for high dimensional regression.
\newblock \emph{Journal of the American Statistical Association}, pages 1--15, 2023.

\bibitem[Fan et~al.(2020)Fan, Ke, and Wang]{fan2020factor}
Jianqing Fan, Yuan Ke, and Kaizheng Wang.
\newblock Factor-adjusted regularized model selection.
\newblock \emph{Journal of Econometrics}, 216\penalty0 (1):\penalty0 71--85, 2020.

\bibitem[Fan et~al.(2021)Fan, Wang, Zhong, and Zhu]{fan2021robust}
Jianqing Fan, Kaizheng Wang, Yiqiao Zhong, and Ziwei Zhu.
\newblock Robust high dimensional factor models with applications to statistical machine learning.
\newblock \emph{Statistical science: a review journal of the Institute of Mathematical Statistics}, 36\penalty0 (2):\penalty0 303, 2021.

\bibitem[Ge et~al.(2023)Ge, Tang, Fan, and Jin]{ge2023provable}
Jiawei Ge, Shange Tang, Jianqing Fan, and Chi Jin.
\newblock On the provable advantage of unsupervised pretraining.
\newblock \emph{arXiv preprint arXiv:2303.01566}, 2023.

\bibitem[Ge et~al.(2024)Ge, Tang, Fan, Ma, and Jin]{DBLP:conf/iclr/GeTFM024}
Jiawei Ge, Shange Tang, Jianqing Fan, Cong Ma, and Chi Jin.
\newblock Maximum likelihood estimation is all you need for well-specified covariate shift.
\newblock In \emph{The Twelfth International Conference on Learning Representations, {ICLR} 2024, Vienna, Austria, May 7-11, 2024}. OpenReview.net, 2024.
\newblock URL \url{https://openreview.net/forum?id=eoTCKKOgIs}.

\bibitem[Ghorbani et~al.(2020)Ghorbani, Mei, Misiakiewicz, and Montanari]{ghorbani2020neural}
Behrooz Ghorbani, Song Mei, Theodor Misiakiewicz, and Andrea Montanari.
\newblock When do neural networks outperform kernel methods?
\newblock \emph{Advances in Neural Information Processing Systems}, 33:\penalty0 14820--14830, 2020.

\bibitem[Ghorbani et~al.(2021)Ghorbani, Mei, Misiakiewicz, and Montanari]{ghorbani2021linearized}
Behrooz Ghorbani, Song Mei, Theodor Misiakiewicz, and Andrea Montanari.
\newblock Linearized two-layers neural networks in high dimension.
\newblock \emph{The Annals of Statistics}, 49\penalty0 (2), 2021.

\bibitem[Guan and Liu(2021)]{guan2021domain}
Hao Guan and Mingxia Liu.
\newblock Domain adaptation for medical image analysis: a survey.
\newblock \emph{IEEE Transactions on Biomedical Engineering}, 69\penalty0 (3):\penalty0 1173--1185, 2021.

\bibitem[Hadi and Ling(1998)]{hadi1998some}
Ali~S Hadi and Robert~F Ling.
\newblock Some cautionary notes on the use of principal components regression.
\newblock \emph{The American Statistician}, 52\penalty0 (1):\penalty0 15--19, 1998.

\bibitem[Hanneke and Kpotufe(2019)]{hanneke2019value}
Steve Hanneke and Samory Kpotufe.
\newblock On the value of target data in transfer learning.
\newblock \emph{Advances in Neural Information Processing Systems}, 32, 2019.

\bibitem[Hao et~al.(2024)Hao, Lin, Zou, and Zhang]{hao2024benefits}
Yifan Hao, Yong Lin, Difan Zou, and Tong Zhang.
\newblock On the benefits of over-parameterization for out-of-distribution generalization.
\newblock \emph{CoRR}, abs/2403.17592, 2024.
\newblock \doi{10.48550/ARXIV.2403.17592}.
\newblock URL \url{https://doi.org/10.48550/arXiv.2403.17592}.

\bibitem[Hastie et~al.(2022)Hastie, Montanari, Rosset, and Tibshirani]{hastie2022surprises}
Trevor Hastie, Andrea Montanari, Saharon Rosset, and Ryan~J Tibshirani.
\newblock Surprises in high-dimensional ridgeless least squares interpolation.
\newblock \emph{Annals of statistics}, 50\penalty0 (2):\penalty0 949, 2022.

\bibitem[Hendrycks and Dietterich(2019)]{hendrycks2019benchmarking}
Dan Hendrycks and Thomas Dietterich.
\newblock Benchmarking neural network robustness to common corruptions and perturbations.
\newblock \emph{arXiv preprint arXiv:1903.12261}, 2019.

\bibitem[Hendrycks et~al.(2021)Hendrycks, Basart, Mu, Kadavath, Wang, Dorundo, Desai, Zhu, Parajuli, Guo, et~al.]{hendrycks2021many}
Dan Hendrycks, Steven Basart, Norman Mu, Saurav Kadavath, Frank Wang, Evan Dorundo, Rahul Desai, Tyler Zhu, Samyak Parajuli, Mike Guo, et~al.
\newblock The many faces of robustness: A critical analysis of out-of-distribution generalization.
\newblock In \emph{Proceedings of the IEEE/CVF international conference on computer vision}, pages 8340--8349, 2021.

\bibitem[Hidalgo et~al.(2000)Hidalgo, Piechota, and Dracup]{hidalgo2000alternative}
Hugo~G Hidalgo, Thomas~C Piechota, and John~A Dracup.
\newblock Alternative principal components regression procedures for dendrohydrologic reconstructions.
\newblock \emph{Water Resources Research}, 36\penalty0 (11):\penalty0 3241--3249, 2000.

\bibitem[Huang and Yang(2012)]{huang2012improved}
Shih-Ming Huang and Jar-Ferr Yang.
\newblock Improved principal component regression for face recognition under illumination variations.
\newblock \emph{IEEE signal processing letters}, 19\penalty0 (4):\penalty0 179--182, 2012.

\bibitem[Jeffers(1981)]{jeffers1981investigation}
JNR Jeffers.
\newblock Investigation of alternative regressions: Some practical examples.
\newblock \emph{Journal of the Royal Statistical Society. Series D (The Statistician)}, 30\penalty0 (2):\penalty0 79--88, 1981.

\bibitem[Jeffers(1967)]{jeffers1967two}
John~NR Jeffers.
\newblock Two case studies in the application of principal component analysis.
\newblock \emph{Journal of the Royal Statistical Society: Series C (Applied Statistics)}, 16\penalty0 (3):\penalty0 225--236, 1967.

\bibitem[Jolliffe(1982)]{jolliffe1982note}
Ian~T Jolliffe.
\newblock A note on the use of principal components in regression.
\newblock \emph{Journal of the Royal Statistical Society Series C: Applied Statistics}, 31\penalty0 (3):\penalty0 300--303, 1982.

\bibitem[Kausik et~al.(2024)Kausik, Srivastava, and Sonthalia]{DBLP:journals/tmlr/KausikSS24}
Chinmaya Kausik, Kashvi Srivastava, and Rishi Sonthalia.
\newblock Double descent and overfitting under noisy inputs and distribution shift for linear denoisers.
\newblock \emph{Trans. Mach. Learn. Res.}, 2024, 2024.
\newblock URL \url{https://openreview.net/forum?id=HxfqTdLIRF}.

\bibitem[Keithley et~al.(2009)Keithley, Wightman, and Heien]{keithley2009multivariate}
Richard~B Keithley, R~Mark Wightman, and Michael~L Heien.
\newblock Multivariate concentration determination using principal component regression with residual analysis.
\newblock \emph{TrAC Trends in Analytical Chemistry}, 28\penalty0 (9):\penalty0 1127--1136, 2009.

\bibitem[Kobak et~al.(2020)Kobak, Lomond, and Sanchez]{kobak2020optimal}
Dmitry Kobak, Jonathan Lomond, and Benoit Sanchez.
\newblock The optimal ridge penalty for real-world high-dimensional data can be zero or negative due to the implicit ridge regularization.
\newblock \emph{Journal of Machine Learning Research}, 21\penalty0 (169):\penalty0 1--16, 2020.

\bibitem[Koehler et~al.(2021)Koehler, Zhou, Sutherland, and Srebro]{koehler2021uniform}
Frederic Koehler, Lijia Zhou, Danica~J Sutherland, and Nathan Srebro.
\newblock Uniform convergence of interpolators: Gaussian width, norm bounds and benign overfitting.
\newblock \emph{Advances in Neural Information Processing Systems}, 34:\penalty0 20657--20668, 2021.

\bibitem[Koh et~al.(2021)Koh, Sagawa, Marklund, Xie, Zhang, Balsubramani, Hu, Yasunaga, Phillips, Gao, et~al.]{koh2021wilds}
Pang~Wei Koh, Shiori Sagawa, Henrik Marklund, Sang~Michael Xie, Marvin Zhang, Akshay Balsubramani, Weihua Hu, Michihiro Yasunaga, Richard~Lanas Phillips, Irena Gao, et~al.
\newblock Wilds: A benchmark of in-the-wild distribution shifts.
\newblock In \emph{International conference on machine learning}, pages 5637--5664. PMLR, 2021.

\bibitem[Kpotufe and Martinet(2018)]{kpotufe2018marginal}
Samory Kpotufe and Guillaume Martinet.
\newblock Marginal singularity, and the benefits of labels in covariate-shift.
\newblock In \emph{Conference On Learning Theory}, pages 1882--1886. PMLR, 2018.

\bibitem[Kumar and Goyal(2011)]{kumar2011forecasting}
Anikender Kumar and Pramila Goyal.
\newblock Forecasting of air quality in delhi using principal component regression technique.
\newblock \emph{Atmospheric Pollution Research}, 2\penalty0 (4):\penalty0 436--444, 2011.

\bibitem[Lei et~al.(2021)Lei, Hu, and Lee]{lei2021near}
Qi~Lei, Wei Hu, and Jason Lee.
\newblock Near-optimal linear regression under distribution shift.
\newblock In \emph{International Conference on Machine Learning}, pages 6164--6174. PMLR, 2021.

\bibitem[Liang et~al.(2020)Liang, Rakhlin, and Zhai]{liang2020multiple}
Tengyuan Liang, Alexander Rakhlin, and Xiyu Zhai.
\newblock On the multiple descent of minimum-norm interpolants and restricted lower isometry of kernels.
\newblock In \emph{Conference on Learning Theory}, pages 2683--2711. PMLR, 2020.

\bibitem[Liu et~al.(2003)Liu, Kuang, Gong, and Hou]{liu2003principal}
RX~Liu, J~Kuang, Qiong Gong, and XL~Hou.
\newblock Principal component regression analysis with spss.
\newblock \emph{Computer methods and programs in biomedicine}, 71\penalty0 (2):\penalty0 141--147, 2003.

\bibitem[Ma et~al.(2023)Ma, Pathak, and Wainwright]{ma2023optimally}
Cong Ma, Reese Pathak, and Martin~J Wainwright.
\newblock Optimally tackling covariate shift in rkhs-based nonparametric regression.
\newblock \emph{The Annals of Statistics}, 51\penalty0 (2):\penalty0 738--761, 2023.

\bibitem[Mallinar et~al.(2024)Mallinar, Zane, Frei, and Yu]{mallinar2024minimum}
Neil Mallinar, Austin Zane, Spencer Frei, and Bin Yu.
\newblock Minimum-norm interpolation under covariate shift.
\newblock In \emph{Forty-first International Conference on Machine Learning, {ICML} 2024, Vienna, Austria, July 21-27, 2024}. OpenReview.net, 2024.
\newblock URL \url{https://openreview.net/forum?id=Zw7TcnTmHj}.

\bibitem[Massy(1965)]{massy1965principal}
William~F Massy.
\newblock Principal components regression in exploratory statistical research.
\newblock \emph{Journal of the American Statistical Association}, 60\penalty0 (309):\penalty0 234--256, 1965.

\bibitem[Mei and Montanari(2022)]{mei2022generalizationa}
Song Mei and Andrea Montanari.
\newblock The generalization error of random features regression: Precise asymptotics and the double descent curve.
\newblock \emph{Communications on Pure and Applied Mathematics}, 75\penalty0 (4):\penalty0 667--766, 2022.

\bibitem[Mei et~al.(2022)Mei, Misiakiewicz, and Montanari]{mei2022generalizationb}
Song Mei, Theodor Misiakiewicz, and Andrea Montanari.
\newblock Generalization error of random feature and kernel methods: hypercontractivity and kernel matrix concentration.
\newblock \emph{Applied and Computational Harmonic Analysis}, 59:\penalty0 3--84, 2022.

\bibitem[Miller et~al.(2021)Miller, Taori, Raghunathan, Sagawa, Koh, Shankar, Liang, Carmon, and Schmidt]{miller2021accuracy}
John~P Miller, Rohan Taori, Aditi Raghunathan, Shiori Sagawa, Pang~Wei Koh, Vaishaal Shankar, Percy Liang, Yair Carmon, and Ludwig Schmidt.
\newblock Accuracy on the line: on the strong correlation between out-of-distribution and in-distribution generalization.
\newblock In \emph{International conference on machine learning}, pages 7721--7735. PMLR, 2021.

\bibitem[Montanari and Zhong(2022)]{montanari2022interpolation}
Andrea Montanari and Yiqiao Zhong.
\newblock The interpolation phase transition in neural networks: Memorization and generalization under lazy training.
\newblock \emph{The Annals of Statistics}, 50\penalty0 (5):\penalty0 2816--2847, 2022.

\bibitem[Mousavi~Kalan et~al.(2020)Mousavi~Kalan, Fabian, Avestimehr, and Soltanolkotabi]{mousavi2020minimax}
Mohammadreza Mousavi~Kalan, Zalan Fabian, Salman Avestimehr, and Mahdi Soltanolkotabi.
\newblock Minimax lower bounds for transfer learning with linear and one-hidden layer neural networks.
\newblock \emph{Advances in Neural Information Processing Systems}, 33:\penalty0 1959--1969, 2020.

\bibitem[Muthukumar et~al.(2020)Muthukumar, Vodrahalli, Subramanian, and Sahai]{muthukumar2020harmless}
Vidya Muthukumar, Kailas Vodrahalli, Vignesh Subramanian, and Anant Sahai.
\newblock Harmless interpolation of noisy data in regression.
\newblock \emph{IEEE Journal on Selected Areas in Information Theory}, 1\penalty0 (1):\penalty0 67--83, 2020.

\bibitem[N{\ae}s and Martens(1988)]{naes1988principal}
Tormod N{\ae}s and Harald Martens.
\newblock Principal component regression in nir analysis: viewpoints, background details and selection of components.
\newblock \emph{Journal of chemometrics}, 2\penalty0 (2):\penalty0 155--167, 1988.

\bibitem[Nakkiran(2019)]{nakkiran2019more}
Preetum Nakkiran.
\newblock More data can hurt for linear regression: Sample-wise double descent.
\newblock \emph{arXiv preprint arXiv:1912.07242}, 2019.

\bibitem[Negrea et~al.(2020)Negrea, Dziugaite, and Roy]{negrea2020defense}
Jeffrey Negrea, Gintare~Karolina Dziugaite, and Daniel Roy.
\newblock In defense of uniform convergence: Generalization via derandomization with an application to interpolating predictors.
\newblock In \emph{International Conference on Machine Learning}, pages 7263--7272. PMLR, 2020.

\bibitem[Niyogi(2013)]{niyogi2013manifold}
Partha Niyogi.
\newblock Manifold regularization and semi-supervised learning: Some theoretical analyses.
\newblock \emph{Journal of Machine Learning Research}, 14\penalty0 (5), 2013.

\bibitem[Pathak et~al.(2022)Pathak, Ma, and Wainwright]{pathak2022new}
Reese Pathak, Cong Ma, and Martin Wainwright.
\newblock A new similarity measure for covariate shift with applications to nonparametric regression.
\newblock In \emph{International Conference on Machine Learning}, pages 17517--17530. PMLR, 2022.

\bibitem[Recht et~al.(2019)Recht, Roelofs, Schmidt, and Shankar]{recht2019imagenet}
Benjamin Recht, Rebecca Roelofs, Ludwig Schmidt, and Vaishaal Shankar.
\newblock Do imagenet classifiers generalize to imagenet?
\newblock In \emph{International conference on machine learning}, pages 5389--5400. PMLR, 2019.

\bibitem[Richards et~al.(2021)Richards, Mourtada, and Rosasco]{richards2021asymptotics}
Dominic Richards, Jaouad Mourtada, and Lorenzo Rosasco.
\newblock Asymptotics of ridge (less) regression under general source condition.
\newblock In \emph{International Conference on Artificial Intelligence and Statistics}, pages 3889--3897. PMLR, 2021.

\bibitem[Shamir(2023)]{shamir2023implicit}
Ohad Shamir.
\newblock The implicit bias of benign overfitting.
\newblock \emph{Journal of Machine Learning Research}, 24\penalty0 (113):\penalty0 1--40, 2023.

\bibitem[Shimodaira(2000)]{shimodaira2000improving}
Hidetoshi Shimodaira.
\newblock Improving predictive inference under covariate shift by weighting the log-likelihood function.
\newblock \emph{Journal of statistical planning and inference}, 90\penalty0 (2):\penalty0 227--244, 2000.

\bibitem[Simon et~al.(2023)Simon, Karkada, Ghosh, and Belkin]{simon2023more}
James~B Simon, Dhruva Karkada, Nikhil Ghosh, and Mikhail Belkin.
\newblock More is better in modern machine learning: when infinite overparameterization is optimal and overfitting is obligatory.
\newblock \emph{arXiv preprint arXiv:2311.14646}, 2023.

\bibitem[Stock and Watson(2002)]{stock2002forecasting}
James~H Stock and Mark~W Watson.
\newblock Forecasting using principal components from a large number of predictors.
\newblock \emph{Journal of the American statistical association}, 97\penalty0 (460):\penalty0 1167--1179, 2002.

\bibitem[Sun(1995)]{sun1995correlation}
Jianguo Sun.
\newblock A correlation principal component regression analysis of nir data.
\newblock \emph{Journal of Chemometrics}, 9\penalty0 (1):\penalty0 21--29, 1995.

\bibitem[Tripuraneni et~al.(2021{\natexlab{a}})Tripuraneni, Adlam, and Pennington]{DBLP:conf/nips/TripuraneniAP21}
Nilesh Tripuraneni, Ben Adlam, and Jeffrey Pennington.
\newblock Overparameterization improves robustness to covariate shift in high dimensions.
\newblock In Marc'Aurelio Ranzato, Alina Beygelzimer, Yann~N. Dauphin, Percy Liang, and Jennifer~Wortman Vaughan, editors, \emph{Advances in Neural Information Processing Systems 34: Annual Conference on Neural Information Processing Systems 2021, NeurIPS 2021, December 6-14, 2021, virtual}, pages 13883--13897, 2021{\natexlab{a}}.
\newblock URL \url{https://proceedings.neurips.cc/paper/2021/hash/73fed7fd472e502d8908794430511f4d-Abstract.html}.

\bibitem[Tripuraneni et~al.(2021{\natexlab{b}})Tripuraneni, Jin, and Jordan]{tripuraneni2021provable}
Nilesh Tripuraneni, Chi Jin, and Michael Jordan.
\newblock Provable meta-learning of linear representations.
\newblock In \emph{International Conference on Machine Learning}, pages 10434--10443. PMLR, 2021{\natexlab{b}}.

\bibitem[Tsigler and Bartlett(2023)]{DBLP:journals/jmlr/TsiglerB23}
Alexander Tsigler and Peter~L. Bartlett.
\newblock Benign overfitting in ridge regression.
\newblock \emph{J. Mach. Learn. Res.}, 24:\penalty0 123:1--123:76, 2023.
\newblock URL \url{http://jmlr.org/papers/v24/22-1398.html}.

\bibitem[van Hemmen and Ando(1980)]{van1980inequality}
J~Leo van Hemmen and Tsuneya Ando.
\newblock An inequality for trace ideals.
\newblock \emph{Communications in Mathematical Physics}, 76:\penalty0 143--148, 1980.

\bibitem[Vershynin(2010)]{DBLP:journals/corr/abs-1011-3027}
Roman Vershynin.
\newblock Introduction to the non-asymptotic analysis of random matrices.
\newblock \emph{CoRR}, abs/1011.3027, 2010.
\newblock URL \url{http://arxiv.org/abs/1011.3027}.

\bibitem[Vershynin(2018)]{vershynin2018high}
Roman Vershynin.
\newblock \emph{High-dimensional probability: An introduction with applications in data science}, volume~47.
\newblock Cambridge university press, 2018.

\bibitem[Vigneau et~al.(1997)Vigneau, Devaux, Qannari, and Robert]{vigneau1997principal}
Evelyne Vigneau, MF~Devaux, EM~Qannari, and P~Robert.
\newblock Principal component regression, ridge regression and ridge principal component regression in spectroscopy calibration.
\newblock \emph{Journal of Chemometrics: A Journal of the Chemometrics Society}, 11\penalty0 (3):\penalty0 239--249, 1997.

\bibitem[Wedin(1973)]{wedin1973perturbation}
Per-Ake Wedin.
\newblock Perturbation theory for pseudo-inverses.
\newblock \emph{BIT Numerical Mathematics}, 13:\penalty0 217--232, 1973.

\bibitem[Wenzel et~al.(2022)Wenzel, Dittadi, Gehler, Simon-Gabriel, Horn, Zietlow, Kernert, Russell, Brox, Schiele, et~al.]{wenzel2022assaying}
Florian Wenzel, Andrea Dittadi, Peter Gehler, Carl-Johann Simon-Gabriel, Max Horn, Dominik Zietlow, David Kernert, Chris Russell, Thomas Brox, Bernt Schiele, et~al.
\newblock Assaying out-of-distribution generalization in transfer learning.
\newblock \emph{Advances in Neural Information Processing Systems}, 35:\penalty0 7181--7198, 2022.

\bibitem[Wu and Xu(2020)]{wu2020optimal}
Denny Wu and Ji~Xu.
\newblock On the optimal weighted l2 regularization in overparameterized linear regression.
\newblock \emph{Advances in Neural Information Processing Systems}, 33:\penalty0 10112--10123, 2020.

\bibitem[Xu and Hsu(2019)]{xu2019number}
Ji~Xu and Daniel~J Hsu.
\newblock On the number of variables to use in principal component regression.
\newblock \emph{Advances in neural information processing systems}, 32, 2019.

\bibitem[Zhai et~al.(2022)Zhai, Dan, Kolter, and Ravikumar]{zhai2022understanding}
Runtian Zhai, Chen Dan, Zico Kolter, and Pradeep Ravikumar.
\newblock Understanding why generalized reweighting does not improve over erm.
\newblock \emph{arXiv preprint arXiv:2201.12293}, 2022.

\bibitem[Zhang et~al.(2022)Zhang, Blanchet, Ghosh, and Squillante]{zhang2022class}
Xuhui Zhang, Jose Blanchet, Soumyadip Ghosh, and Mark~S Squillante.
\newblock A class of geometric structures in transfer learning: Minimax bounds and optimality.
\newblock In \emph{International Conference on Artificial Intelligence and Statistics}, pages 3794--3820. PMLR, 2022.

\bibitem[Zhou et~al.(2020)Zhou, Sutherland, and Srebro]{zhou2020uniform}
Lijia Zhou, Danica~J Sutherland, and Nati Srebro.
\newblock On uniform convergence and low-norm interpolation learning.
\newblock \emph{Advances in Neural Information Processing Systems}, 33:\penalty0 6867--6877, 2020.

\bibitem[Zou et~al.(2018)Zou, Yu, Kumar, and Wang]{zou2018unsupervised}
Yang Zou, Zhiding Yu, BVK Kumar, and Jinsong Wang.
\newblock Unsupervised domain adaptation for semantic segmentation via class-balanced self-training.
\newblock In \emph{Proceedings of the European conference on computer vision (ECCV)}, pages 289--305, 2018.

\end{thebibliography}

\newpage
\appendix

\section{Ridge regression}
\label{sec:ridge_app}
Let $X = (x_1, ..., x_n)^T \in \mathbb R^{n\times d}$, $Y = (y_1,...,y_n)^T \in \RR^n$ and $\boldsymbol{\epsilon}=(\epsilon_1,...,\epsilon_n)^T\in\RR^n$. We denote the first $k$ columns of $X$ as $X_{k}$ and the remaining $d-k$ columns as $X_{-k}$. Similarly, $\beta^\star_{k}$ and $\beta^\star_{-k}$ represent the corresponding components of $\beta^\star$. $\Sigma_{S,k}$, $\Sigma_{S,-k}$ are the corresponding blocks on the diagonal of $\Sigma_S$. The $i$-th eigenvalue of a matrix is denoted by $\mu_i(\cdot)$. Define $Z = X\Sigma_S^{-1/2}$, where the rows of $Z$ are i.i.d. centered isotropic random vectors. Additionally, we assume the rows of $Z$ are $\sigma$-sub-gaussian, where the sub-gaussian norm is defined as follows. 

For a random variable $s$, the sub-gaussian norm $\|s\|_{\psi_2}$ is given by:
\begin{align*}
    \|s\|_{\psi_2} = \inf\left\{t>0: \EE\left[\exp \frac{s^2}{t^2} \right] \leq 2 \right\}.
\end{align*}

For a random vector $S$, the sub-gaussian norm $\|S\|_{\psi_2}$ is given by:
\begin{align*}
    \|S\|_{\psi_2} = \sup_{v\neq 0} \frac{\| \langle S, v \rangle \|_{\psi_2}}{\|v\|}.
\end{align*}

For $\lambda\geq0$, consider the ridge estimator:
\begin{align*}
    \hbeta(Y) &= X^T(XX^T+\lambda I_n)^{-1}Y \\
    &= X^T(XX^T+\lambda I_n)^{-1}X\beta^\star + X^T(XX^T+\lambda I_n)^{-1}\boldsymbol{\epsilon} \\
    &= \hbeta(X\beta^\star) + \hbeta(\boldsymbol{\epsilon}),
\end{align*}
where we define $\hbeta(X\beta^\star)=X^T(XX^T+\lambda I_n)^{-1}X\beta^\star$ and $\hbeta(\boldsymbol{\epsilon})=X^T(XX^T+\lambda I_n)^{-1}\boldsymbol{\epsilon}$. Additionally, we define $\tilde{\Sigma}_S = \Sigma_S + \frac{\lambda}{n}I_d$. 
The effective rank of $\tilde{\Sigma}_{S,k}$ is defined as $r_k = \lambda_{k+1}^{-1}(\lambda+\sum_{j>k}\lambda_j)$.
\begin{myassumption}[CondNum$(k,\delta,L)$]
\label{assum:condNum}
    Define a matrix $A_k = \lambda I_n + X_{-k}X_{-k}^T$. With probability at least $1-\delta$, $A_k$ is positive definite and has a condition number no greater than $L$, i.e.,
    \begin{align*}
        \frac{\mu_1(A_k)}{\mu_n(A_k)} \leq L.
    \end{align*}
\end{myassumption}

\subsection{Concentration inequalities}
Denote the element of a matrix $X$ in the $i$-th row and the $j$-th column as $X[i,j]$, and the $i$-th row of the matrix $X$ as $X[i,*]$.

\begin{lemma}[Lemma~20 of \cite{DBLP:journals/jmlr/TsiglerB23}]
\label{lem:sigmaz}
Let $z$ be a sub-gaussian vector in $\mathbb{R}^p$ with $\|z\|_{\psi_2} \leq \sigma$, and consider $\Sigma = \operatorname{diag}(\lambda_1, \dots, \lambda_p)$ where the sequence $\{\lambda_j\}_{j=1}^p$ is positive and non-increasing. Then there exists some absolute constant $c$, for any $t > 0$, with probability at least $1-2e^{-t/c}$:
\begin{align*}
    \|\Sigma^{1/2} z\|^2 \leq c \sigma^2 \left( t \lambda_1 + \sum_{j=1}^p \lambda_j \right).
\end{align*}
\end{lemma}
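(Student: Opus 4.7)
The plan is to control $\|\Sigma^{1/2} z\|^2 = z^T \Sigma z$ via its moment generating function and then apply a Chernoff bound. The key technical trick I would use is Gaussian decoupling: for any $\eta > 0$ and any vector $v \in \mathbb{R}^p$, one has the identity $\exp(\eta \|v\|^2) = \mathbb{E}_g[\exp(\sqrt{2\eta}\,\langle g, v\rangle)]$ where $g \sim N(0, I_p)$. Applied with $v = \Sigma^{1/2} z$, this turns the quadratic form into a linear form in $z$, which interacts well with the sub-gaussian hypothesis.

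Concretely, first I would write
\begin{align*}
\mathbb{E}_z\bigl[\exp(\eta z^T \Sigma z)\bigr]
= \mathbb{E}_z \mathbb{E}_g\bigl[\exp(\sqrt{2\eta}\,\langle \Sigma^{1/2} g, z\rangle)\bigr]
= \mathbb{E}_g \mathbb{E}_z\bigl[\exp(\sqrt{2\eta}\,\langle \Sigma^{1/2} g, z\rangle)\bigr],
\end{align*}
swapping the order of integration by Fubini. Then I would condition on $g$ and exploit $\|z\|_{\psi_2} \leq \sigma$: since $\langle \Sigma^{1/2} g, z\rangle$ is a centered sub-gaussian scalar with sub-gaussian norm at most $\sigma \|\Sigma^{1/2} g\|$, its MGF satisfies $\mathbb{E}_z[\exp(\sqrt{2\eta}\,\langle \Sigma^{1/2} g, z\rangle)] \leq \exp(C\eta \sigma^2 \|\Sigma^{1/2} g\|^2) = \exp(C\eta\sigma^2\, g^T \Sigma g)$ for an absolute constant $C$.

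Next I would evaluate the resulting Gaussian MGF. Since $\Sigma$ is diagonal and the coordinates of $g$ are i.i.d.\ $N(0,1)$, the quantity $g^T \Sigma g = \sum_j \lambda_j g_j^2$ is a weighted sum of independent $\chi^2_1$ variables, and its MGF factors as $\prod_j (1 - 2C\eta\sigma^2 \lambda_j)^{-1/2}$, valid whenever $2C\eta\sigma^2 \lambda_1 < 1$. Taking logarithms and using $-\tfrac12 \log(1-x) \leq x$ for $x \leq 1/2$, I would choose $\eta$ of order $1/(\sigma^2 \lambda_1)$ (so that $2C\eta\sigma^2\lambda_1 \leq 1/2$) to obtain $\log \mathbb{E}[\exp(\eta z^T \Sigma z)] \leq C'\eta \sigma^2 \sum_j \lambda_j$.

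Finally, a standard Chernoff argument: for the target level $s = c\sigma^2(t\lambda_1 + \sum_j \lambda_j)$, Markov's inequality yields
\begin{align*}
\mathbb{P}(z^T \Sigma z > s) \leq \exp\!\Bigl(-\eta s + C'\eta \sigma^2 \textstyle\sum_j \lambda_j\Bigr),
\end{align*}
and with $\eta \asymp 1/(\sigma^2 \lambda_1)$ the $\sum_j \lambda_j$ terms cancel, leaving a bound of the form $\exp(-t/c)$ up to an absolute constant, which I would then double to accommodate the $2 e^{-t/c}$ form. I do not anticipate any major obstacle here; this is the standard sub-gaussian Hanson--Wright-type estimate for the diagonal case, and the only delicate point is tracking the absolute constants so that the choice of $\eta$ simultaneously kills the mean-like term $\sum_j \lambda_j / \lambda_1$ and yields linear decay in $t$.
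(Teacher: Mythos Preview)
The paper does not give its own proof of this lemma; it is quoted verbatim as Lemma~20 of \cite{DBLP:journals/jmlr/TsiglerB23} and used as a black box. So there is no ``paper's proof'' to compare against.

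Your argument is correct and is one of the standard routes to such a bound. The Gaussian decoupling identity, the sub-gaussian MGF bound on $\langle \Sigma^{1/2}g,z\rangle$, the explicit MGF of $g^T\Sigma g$, and the Chernoff step with $\eta\asymp 1/(\sigma^2\lambda_1)$ all go through exactly as you describe. One small caveat: when you write that $\langle \Sigma^{1/2}g,z\rangle$ is a \emph{centered} sub-gaussian scalar you are implicitly assuming $\mathbb{E}[z]=0$, which the lemma statement does not impose. This is harmless here because in every application in the paper $z$ is a row of $Z=X\Sigma_S^{-1/2}$ and hence centered; and even without that assumption you can reduce to the centered case by writing $z=\mathbb{E}[z]+(z-\mathbb{E}[z])$, noting that $|\mathbb{E}[z_j]|\le C\sigma$ so $\|\Sigma^{1/2}\mathbb{E}[z]\|^2\le C^2\sigma^2\sum_j\lambda_j$, which is already dominated by the target bound. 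Also, the factor $2$ in $2e^{-t/c}$ is not needed for a one-sided upper-tail bound; it is simply absorbed into the constant $c$.
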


\begin{lemma}[Lemma~23 of \cite{DBLP:journals/jmlr/TsiglerB23}]
\label{lem:ringAk}
Let \( \mathring{A}_k \) represent the matrix $X_{-k}X_{-k}^T$ with its diagonal elements set to zero: 
\begin{align*}
    \mathring{A}_k[i,j] = (1 - \delta_{i,j})(X_{-k}X_{-k}^T)[i,j].
\end{align*}
Then there exists some absolute constant \( c \), for any \( t > 0 \), with probability at least \( 1 - 4e^{-t/c} \):
\[
\|\mathring{A}_k\| \leq c\sigma^2 \sqrt{(t+n) \left( \lambda_{k+1}^2 (t+n) + \sum_{j > k} \lambda_j^2 \right)}.
\]
\end{lemma}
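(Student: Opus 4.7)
The plan is an $\varepsilon$-net plus decoupling argument. Since $\mathring{A}_k$ is symmetric,
\[
\|\mathring{A}_k\|\le 2\sup_{v\in\mathcal{N}}\bigl|v^T\mathring{A}_k v\bigr|,
\]
where $\mathcal{N}$ is a $(1/4)$-net of $S^{n-1}$ with $|\mathcal{N}|\le 9^n$. It therefore suffices to bound $|v^T\mathring{A}_k v|$ for a fixed unit vector $v$ with a tail that, after a union bound over $9^n$ points, still decays like $e^{-t/c}$; this is precisely why the final bound features $(t+n)$ rather than $t$.

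Fix such a $v$. Denoting the i.i.d.\ rows of $X_{-k}$ by $X_1,\dots,X_n\in\mathbb{R}^{d-k}$ (each $\sigma$-sub-Gaussian with covariance $\Sigma_{S,-k}$), we write
\[
v^T\mathring{A}_k v=\sum_{i\neq j} v_iv_j\langle X_i,X_j\rangle.
\]
I would apply a decoupling inequality for bilinear forms in independent Hilbert-space-valued random variables to reduce, at the cost of an absolute constant in the tail, to the decoupled form $\langle Z,Z'\rangle$, where $Z:=X_{-k}^T v$ and $Z':=(X'_{-k})^T v$ for an independent copy $X'_{-k}$; both $Z$ and $Z'$ are $\sigma$-sub-Gaussian with covariance $\Sigma_{S,-k}$.

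Conditional on $Z$, the map $Z'\mapsto\langle Z,Z'\rangle=(\Sigma_{S,-k}^{1/2}Z)^T(\Sigma_{S,-k}^{-1/2}Z')$ is linear in the isotropic $\sigma$-sub-Gaussian vector $\Sigma_{S,-k}^{-1/2}Z'$, hence sub-Gaussian with parameter $\sigma\sqrt{Z^T\Sigma_{S,-k}Z}$. To handle the random conditional variance, write $Z=\Sigma_{S,-k}^{1/2}\tilde Z$ with $\tilde Z$ isotropic $\sigma$-sub-Gaussian, so that $Z^T\Sigma_{S,-k}Z=\|\Sigma_{S,-k}\tilde Z\|^2$; this is precisely the squared norm treated by Lemma~\ref{lem:sigmaz} applied with the diagonal covariance $\Sigma_{S,-k}^2$, whose eigenvalues are $\{\lambda_j^2\}_{j>k}$. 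Invoking Lemma~\ref{lem:sigmaz} at scale $t+n$ gives, with probability at least $1-2e^{-(t+n)/c}$,
\[
Z^T\Sigma_{S,-k}Z\le c\sigma^2\Bigl((t+n)\lambda_{k+1}^2+\sum_{j>k}\lambda_j^2\Bigr).
\]
On this event, evaluating the conditional sub-Gaussian tail at deviation $\sigma\sqrt{t+n}\cdot\sqrt{Z^T\Sigma_{S,-k}Z}$ yields, up to an additional $2e^{-(t+n)/c}$ failure,
\[
|\langle Z,Z'\rangle|\le c\sigma^2\sqrt{(t+n)\Bigl((t+n)\lambda_{k+1}^2+\sum_{j>k}\lambda_j^2\Bigr)}.
\]

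Finally, union-bounding over $\mathcal{N}$ costs a factor $9^n\le e^{c''n}$; by enlarging the absolute constant $c$ in the exponent, this loss is absorbed into the $e^{-(t+n)/c}$ tail to leave the advertised $4e^{-t/c}$ failure probability. The main technical hurdle is tracking the sub-Gaussian norms cleanly through the conditioning step and justifying the decoupling inequality for vector-valued $X_i$ (rather than the scalar Hanson--Wright setting, where the statement is classical); a secondary concern is ensuring that the shift from a free parameter $t$ to the enlarged parameter $t+n$ propagates consistently through both the conditional sub-Gaussian bound on $\langle Z,Z'\rangle$ and the application of Lemma~\ref{lem:sigmaz}, so that the two sources of randomness (the quadratic form and the bilinear form) can be combined by the standard multiplicative trick without inflating the exponent.
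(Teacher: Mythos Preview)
The paper does not give its own proof of this lemma; it is quoted verbatim as Lemma~23 of \cite{DBLP:journals/jmlr/TsiglerB23} and used as a black box. Your outline is essentially the argument in that reference: an $\varepsilon$-net on $S^{n-1}$, then for each fixed $v$ a decoupling/conditioning step that reduces to a sub-Gaussian tail whose parameter is controlled by Lemma~\ref{lem:sigmaz} applied with $\Sigma_{S,-k}^2$, and the $t\mapsto t+n$ inflation pays for the union bound over $9^n$ net points. So the approach is correct and matches the cited source.

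Two minor points to tighten. First, standard decoupling (de la Pe\~na--Montgomery-Smith for degenerate $U$-statistics, or Vershynin's Theorem~6.1.1 applied coordinatewise) compares $\sum_{i\neq j}v_iv_j\langle X_i,X_j\rangle$ to the \emph{off-diagonal} decoupled sum $\sum_{i\neq j}v_iv_j\langle X_i,X_j'\rangle$, not to the full bilinear form $\langle Z,Z'\rangle=\sum_{i,j}v_iv_j\langle X_i,X_j'\rangle$. The discrepancy $\sum_i v_i^2\langle X_i,X_i'\rangle$ is a sum of independent mean-zero terms with the same conditional sub-Gaussian structure, so it is dominated by the same bound via the identical conditioning-plus-Lemma~\ref{lem:sigmaz} device; this closes the gap but should be stated. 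Second, $\Sigma_{S,-k}^{-1/2}Z=\sum_i v_i Z_{-k}[i,*]^T$ is only $C\sigma$-sub-Gaussian for an absolute $C$ (rotation invariance of $\|\cdot\|_{\psi_2}$ for sums of independent sub-Gaussians), not literally $\sigma$-sub-Gaussian; harmless, but the constant should be tracked.
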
 

\begin{lemma}[Lemma~21 of \cite{DBLP:journals/jmlr/TsiglerB23}] 
\label{lem:sum_sigmaz}
Suppose $\{z_i\}_{i=1}^n$ is a sequence of independent isotropic sub-gaussian random vectors, where \(\| z_i \|_{\psi_2} \leq \sigma\). Let \(\Sigma = \text{diag}(\lambda_1, \dots, \lambda_p)\) represent a diagonal matrix with a positive, non-increasing sequence \(\{ \lambda_i \}_{i=1}^p\). Then there exists some absolute constant \(c\), for any \(t \in (0, n)\), with probability at least \(1 - 2 e^{-ct}\):
\[
(n - \sqrt{nt} \sigma^2) \sum_{j=1}^p \lambda_j \leq \sum_{i=1}^n \|\Sigma^{1/2} z_i\|^2 
\leq (n + \sqrt{nt} \sigma^2) \sum_{j=1}^p \lambda_j.
\]
\end{lemma}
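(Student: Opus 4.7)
The plan is to view the sum as a sum of i.i.d.\ centered sub-exponential random variables and apply Bernstein's inequality. Write $X_i := \|\Sigma^{1/2} z_i\|^2 = z_i^\top \Sigma z_i$. Since each $z_i$ is isotropic, a direct trace computation yields $\mathbb{E}[X_i] = \tr(\Sigma) = \sum_{j=1}^p \lambda_j$, so the two-sided claim is equivalent to a concentration bound for $\sum_i X_i$ around its mean $n \sum_j \lambda_j$ with deviation $\sqrt{nt}\,\sigma^2 \sum_j \lambda_j$.

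The first substantive step is to verify that each $X_i$ is sub-exponential with $\psi_1$-norm controlled only by $\sigma^2 \sum_j \lambda_j$. Lemma~\ref{lem:sigmaz} already supplies the one-sided tail $\Pr\!\big(X_i > c\sigma^2(s\lambda_1 + \sum_j \lambda_j)\big) \leq 2 e^{-s/c}$ for every $s > 0$. A standard conversion of such a two-term tail into a sub-exponential norm (integrating $\Pr(X_i > u)$ against $e^{u/K}$ and choosing $K$ proportional to the two scales appearing in the tail) gives
\begin{equation*}
\|X_i\|_{\psi_1} \;\le\; K := C \sigma^2 \Big( \lambda_1 + \sum_{j=1}^p \lambda_j\Big) \;\le\; 2C\sigma^2 \sum_{j=1}^p \lambda_j,
\end{equation*}
where the final inequality absorbs $\lambda_1 \leq \sum_j \lambda_j$. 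Hence the centered variables $W_i := X_i - \mathbb{E}[X_i]$ are i.i.d.\ mean-zero sub-exponentials with $\|W_i\|_{\psi_1}$ bounded by an absolute constant multiple of $\sigma^2 \sum_j \lambda_j$.

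Finally, I would apply Bernstein's inequality for sums of independent centered sub-exponential variables: for every $u > 0$,
\begin{equation*}
\Pr\!\bigg(\Big|\sum_{i=1}^n W_i\Big| > u\bigg) \;\leq\; 2 \exp\!\bigg( -c' \min\Big(\tfrac{u^2}{n K^2},\, \tfrac{u}{K}\Big)\bigg).
\end{equation*}
Substituting $u = \sqrt{nt}\,\sigma^2 \sum_j \lambda_j$ yields $u^2/(nK^2) = \Theta(t)$ and $u/K = \Theta(\sqrt{nt})$; under the hypothesis $t \in (0,n)$ we have $\sqrt{nt} \geq t$, so the first term in the minimum governs and the bound collapses to $2e^{-ct}$, which is precisely the claim. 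The main bookkeeping obstacle is step two, namely cleanly translating the two-term tail of Lemma~\ref{lem:sigmaz} into a $\psi_1$-norm that depends only on $\sigma^2 \sum_j \lambda_j$ without introducing spurious factors; an alternative route that sidesteps this is to bound the moment generating function $\mathbb{E}[\exp(\theta X_i)]$ directly for small $\theta$ using the sub-Gaussian assumption on $z_i$ and then apply a Chernoff argument to $\sum_i W_i$, which yields the same deviation bound.
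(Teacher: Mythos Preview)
The paper does not supply its own proof of this lemma; it is quoted verbatim as Lemma~21 of Tsigler and Bartlett (2023) and used as a black box. Your Bernstein-based argument is correct and is the standard route: each $X_i = z_i^\top \Sigma z_i$ is sub-exponential with $\|X_i\|_{\psi_1} \lesssim \sigma^2 \sum_j \lambda_j$, and Bernstein's inequality for the centered sum, with the hypothesis $t < n$ placing the deviation in the sub-Gaussian regime of the minimum, yields the claimed bound.

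One small simplification worth noting: the detour through Lemma~\ref{lem:sigmaz} to extract the $\psi_1$-norm is unnecessary. Writing $X_i = \sum_{j=1}^p \lambda_j z_{ij}^2$ and applying the triangle inequality for the $\psi_1$-norm together with $\|z_{ij}^2\|_{\psi_1} = \|z_{ij}\|_{\psi_2}^2 \leq \sigma^2$ gives $\|X_i\|_{\psi_1} \leq \sigma^2 \sum_j \lambda_j$ directly, bypassing the tail-to-norm conversion you flag as the main bookkeeping obstacle.
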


\begin{lemma}
\label{lem:xxT_concentration}
    There exists a constant $c_x$, depending only on $\sigma$, such that 
    for any $n$ satisfying $n\lambda_{k+1}\leq
    \big(\lambda + \sum_{j>k}\lambda_j\big)$,  under the assumption CondNum$(k,\delta,L)$ (Assumption~\ref{assum:condNum}),
    with probability at least $1-\delta-c_xe^{-n/c_x}$:
    \begin{align*}
        &\frac{1}{c_xL} \pth{\lambda+\sum_{j>k} \lambda_j} \leq \mu_n(A_k) \leq \mu_1(A_k) \leq c_x\pth{\lambda+\sum_{j>k} \lambda_j}. \\
        &\mu_1(X_{-k}X_{-k}^T) \leq c_x \pth{n\lambda_{k+1}+\sum_{j>k} \lambda_j}.
    \end{align*}
\end{lemma}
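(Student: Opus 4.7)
\textbf{Proof plan for Lemma~\ref{lem:xxT_concentration}.} The strategy is to control $X_{-k}X_{-k}^T$ via the standard splitting into its diagonal and off-diagonal parts, and then add the deterministic $\lambda I_n$. Write $X_{-k}X_{-k}^T = \operatorname{diag}(X_{-k}X_{-k}^T) + \mathring{A}_k$. Since the rows of $Z = X\Sigma_S^{-1/2}$ are isotropic $\sigma$-sub-gaussian, each diagonal entry satisfies $(X_{-k}X_{-k}^T)[i,i] = \|\Sigma_{S,-k}^{1/2} z_{-k,i}\|^2$, where $z_{-k,i}$ is the sub-vector of $Z[i,*]$ on coordinates $>k$. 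Applying Lemma~\ref{lem:sigmaz} with $t = c'n$ and a union bound over $i\leq n$, I get, with probability $1-e^{-n/c}$, the uniform bound $\max_i(X_{-k}X_{-k}^T)[i,i] \leq c(n\lambda_{k+1} + \sum_{j>k}\lambda_j)$. Lemma~\ref{lem:ringAk} with $t=n$ gives $\|\mathring A_k\| \leq c\sigma^2\sqrt{n(n\lambda_{k+1}^2 + \sum_{j>k}\lambda_j^2)}$, and after applying AM--GM to the cross term $\sqrt{n\sum_{j>k}\lambda_j^2} \leq \sqrt{n\lambda_{k+1}\sum_{j>k}\lambda_j} \leq \tfrac{1}{2}(n\lambda_{k+1}+\sum_{j>k}\lambda_j)$, this collapses to $\|\mathring A_k\| \leq c'(n\lambda_{k+1}+\sum_{j>k}\lambda_j)$. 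Combining the two contributions yields $\mu_1(X_{-k}X_{-k}^T) \leq c_x(n\lambda_{k+1}+\sum_{j>k}\lambda_j)$, which is the second asserted bound.

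For the upper bound on $\mu_1(A_k)$, I just add $\lambda$: $\mu_1(A_k) \leq \lambda + \mu_1(X_{-k}X_{-k}^T) \leq \lambda + c_x(n\lambda_{k+1}+\sum_{j>k}\lambda_j)$, and then absorb the $n\lambda_{k+1}$ term using the hypothesis $n\lambda_{k+1}\leq \lambda+\sum_{j>k}\lambda_j$ to obtain $\mu_1(A_k) \leq c_x'(\lambda+\sum_{j>k}\lambda_j)$, as required.

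For the lower bound on $\mu_n(A_k)$, I use the trace inequality $\mu_1(A_k) \geq \tr(A_k)/n$. The trace is $n\lambda + \|X_{-k}\|_F^2 = n\lambda + \sum_i \|\Sigma_{S,-k}^{1/2} z_{-k,i}\|^2$, and Lemma~\ref{lem:sum_sigmaz} (with $t$ a small multiple of $n$) gives $\|X_{-k}\|_F^2 \geq \tfrac{n}{2}\sum_{j>k}\lambda_j$ with probability $1-2e^{-cn}$. Hence $\mu_1(A_k) \geq \tfrac{1}{2}(\lambda+\sum_{j>k}\lambda_j)$, and Assumption~\ref{assum:condNum} (CondNum) then yields
\begin{align*}
\mu_n(A_k) \;\geq\; \mu_1(A_k)/L \;\geq\; \frac{1}{2L}\bigl(\lambda+\sum_{j>k}\lambda_j\bigr).
\end{align*}
A union bound combining CondNum (failure probability $\delta$) with the three concentration events above produces the final bound with overall failure probability $\delta + c_x e^{-n/c_x}$, after choosing $c_x$ large enough depending only on $\sigma$.

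I do not expect a genuine obstacle here; the argument is a bookkeeping exercise in concentration. The only mildly delicate step is the AM--GM reduction of $\|\mathring A_k\|$ to a clean sum $n\lambda_{k+1}+\sum_{j>k}\lambda_j$, and the choice of $t$ in Lemma~\ref{lem:sigmaz} large enough (proportional to $n$) so the union bound over $n$ rows still leaves sub-exponential failure probability while not inflating the $\max_i$ estimate beyond the target scale. All probability losses are either $\delta$ or of the form $c e^{-n/c}$, so they collapse into the stated tail after a single union bound.
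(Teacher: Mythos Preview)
Your proposal is correct and follows essentially the same approach as the paper: the diagonal/off-diagonal split of $X_{-k}X_{-k}^T$, the use of Lemmas~\ref{lem:sigmaz}, \ref{lem:ringAk}, and \ref{lem:sum_sigmaz} with $t$ proportional to $n$, the AM--GM reduction of $\|\mathring A_k\|$, and the trace lower bound combined with CondNum are all exactly what the paper does. The only cosmetic difference is that the paper sets $t=n$ in Lemma~\ref{lem:sigmaz} and carries the $(2n+4)e^{-n/c}$ union-bound factor explicitly before absorbing it into $c_x e^{-n/c_x}$, whereas you absorb it implicitly.
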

\begin{proof}
This result follows from the proof of Lemma~3 in \cite{DBLP:journals/jmlr/TsiglerB23}, which establishes both upper and lower bounds of $\mu_1(A_k)$. By combining the lower bound with the assumption CondNum, we derive a lower bound of $\mu_n(A_k)$. For completeness, we restate the entire proof here.

According to lemma~\ref{lem:sigmaz} and lemma~\ref{lem:ringAk}, there exists an absolute constant c, such that for any $t>0$:
\begin{enumerate}
    \item for all $1\leq i \leq n$, with probability at least $1-2e^{-t/c}$:
    \begin{align*}
        \| X_{-k}[i,*]\|^2 \leq c \sigma^2 \left( t \lambda_{k+1} + \sum_{j>k} \lambda_j \right).
    \end{align*}
    \item with probability at least $1-4e^{-t/c}$:
    \begin{align*}
        \|\mathring{A}_k\| \leq c\sigma^2 \sqrt{(t+n) \left( \lambda_{k+1}^2 (t+n) + \sum_{j > k} \lambda_j^2 \right)}.
    \end{align*}
\end{enumerate}
Since $\mu_1(A_k) \leq \lambda + \|\mathring{A}_k\| + \max_i \| X_{-k}[i,*] \|^2$,
by setting $t=n$, we have with probability at least $1-(2n+4)e^{-n/c}$:
\begin{align}
    \mu_1(A_k) &\leq \lambda + c\sigma^2 \left( n\lambda_{k+1} + \sum_{j > k} \lambda_j + \sqrt{(2n \lambda_{k+1})^2 + 2n\sum_{j > k} \lambda_j^2 } 
     \right) \notag \\
     &\leq  \lambda + c\sigma^2 \left( n\lambda_{k+1} + \sum_{j > k} \lambda_j + 2n \lambda_{k+1} + \sqrt{2n\sum_{j > k} \lambda_j^2} \right) \notag \\
     &\leq \lambda +c\sigma^2 \left( n\lambda_{k+1} + \sum_{j > k} \lambda_j + 2n \lambda_{k+1} + \sqrt{2n\lambda_{k+1}\sum_{j > k} \lambda_j} \right)  \notag \\
     &\leq \lambda +c\sigma^2 \left( n\lambda_{k+1} + \sum_{j > k} \lambda_j + 2n \lambda_{k+1} + n\lambda_{k+1} + \frac{1}{2}\sum_{j > k} \lambda_j \right)  \notag \\
     &\leq \lambda + 4c\sigma^2 \left( n\lambda_{k+1} + \sum_{j > k} \lambda_j \right)  \notag \\
     &\leq \max\sth{1, 4c\sigma^2} \pth{\lambda + \sum_{j > k} \lambda_j + n\lambda_{k+1} }  \notag \\
     \label{eq:lem_xxT_concentration_1}
     &\leq 2\max\sth{1, 4c\sigma^2} \pth{\lambda + \sum_{j > k} \lambda_j }.
\end{align}
The last inequality follows from $n\lambda_{k+1}\leq
    \big(\lambda + \sum_{j>k}\lambda_j\big)$.
Similarly,
\begin{align}
\label{eq:lem_xxT_concentration_3}
    \mu_1(X_{-k}X_{-k}^T) 
     \leq 4c\sigma^2 \left( n\lambda_{k+1} + \sum_{j > k} \lambda_j \right).
\end{align}
On the other hand, by applying Lemma~\ref{lem:sum_sigmaz} with $t=\frac{n}{4\sigma^4}$, there exists an absolute constant $c'$, such that with probability at least $1 - 2 \exp\left\{-\frac{c'}{4\sigma^4}n \right\}$:
\begin{align*}
    \sum_{i=1}^n \|X_{-k}[i,*]\|^2 \geq \frac{1}{2}n \sum_{j>k} \lambda_j.
\end{align*}
On this event,
\begin{align*}
    \mu_1(A_k) &\geq \lambda + \frac{1}{n} \operatorname{tr} (X_{-k}X_{-k}^T) \\
    &= \lambda + \frac{1}{n} \sum_{i=1}^n \|X_{-k}[i,*]\|^2 \\
    &\geq \lambda + \frac{1}{2} \sum_{j>k} \lambda_j \\
    &\geq \frac{1}{2} \pth{\lambda + \sum_{j>k} \lambda_j}.
\end{align*}
By the assumption CondNum$(k,\delta,L)$, with probability at least $1-\delta-2 \exp\left\{-\frac{c'}{4\sigma^4}n \right\}$:
\begin{align}
\begin{split}
\label{eq:lem_xxT_concentration_2}
    \mu_n(A_k) \geq \frac{1}{L} \mu_1(A_k) \geq \frac{1}{2L} 
    \pth{\lambda + \sum_{j>k} \lambda_j}.
\end{split}
\end{align}
Combining Equation~\ref{eq:lem_xxT_concentration_1},~\ref{eq:lem_xxT_concentration_3} and~\ref{eq:lem_xxT_concentration_2}, there exists a constant $c_x$ depending only on $\sigma$, such that with probability at least $1-\delta-c_x e^{-n/c_x}$:
\begin{align*}
    &\frac{1}{c_xL} \pth{\lambda+\sum_{j>k} \lambda_j} \leq \mu_n(A_k) \leq \mu_1(A_k) \leq c_x\pth{\lambda+\sum_{j>k} \lambda_j}. \\
    &\mu_1(X_{-k}X_{-k}^T) \leq c_x \pth{n\lambda_{k+1}+\sum_{j>k} \lambda_j}.
\end{align*}
\end{proof}

\begin{lemma}
\label{lem:xTx_concentration}
    There exists a constant $c_x$ depending only on $\sigma$, such that with probability at least $1-\delta$, if $n>k + \operatorname{ln}(1/\delta)$,
    \begin{align*}
        \left\| \frac{1}{n}X_{k}^TX_{k} - \Sigma_{S,k} \right\| \leq c_x \lambda_1 \sqrt{\frac{k+\operatorname{ln}\frac{1}{\delta}}{n}}.
    \end{align*}
\end{lemma}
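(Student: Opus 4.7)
The plan is to reduce the claim to a standard concentration inequality for the sample covariance of sub-Gaussian isotropic vectors, and then transfer the bound to the non-isotropic setting by factoring out the operator norm of $\Sigma_{S,k}$.

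First, I would rewrite $X_k = Z_k \Sigma_{S,k}^{1/2}$, where $Z_k \in \mathbb{R}^{n \times k}$ is the submatrix of $Z = X\Sigma_S^{-1/2}$ consisting of its first $k$ columns. By the setup, the rows of $Z$ are i.i.d.\ centered isotropic random vectors with sub-Gaussian norm at most $\sigma$; consequently, the rows of $Z_k$ inherit the same properties in $\mathbb{R}^k$ (projecting onto the first $k$ coordinates preserves both isotropy on that subspace and the sub-Gaussian norm bound). Then
\[
\tfrac{1}{n} X_k^T X_k - \Sigma_{S,k} \;=\; \Sigma_{S,k}^{1/2}\!\left( \tfrac{1}{n} Z_k^T Z_k - I_k \right)\! \Sigma_{S,k}^{1/2},
\]
so by submultiplicativity of the operator norm,
\[
\bigl\| \tfrac{1}{n} X_k^T X_k - \Sigma_{S,k} \bigr\| \;\leq\; \|\Sigma_{S,k}\| \cdot \bigl\| \tfrac{1}{n} Z_k^T Z_k - I_k \bigr\| \;\leq\; \lambda_1 \cdot \bigl\| \tfrac{1}{n} Z_k^T Z_k - I_k \bigr\|.
\]

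Second, I would invoke the standard non-asymptotic covariance-concentration bound for sub-Gaussian isotropic ensembles (e.g., Theorem 4.7.1 / Exercise 4.7.3 in Vershynin's \emph{High-Dimensional Probability}, which is proved by an $\varepsilon$-net argument on the unit sphere combined with a Bernstein-type deviation inequality for sums of independent sub-exponential random variables $\langle z_i, v\rangle^2 - 1$). This yields an absolute constant $C$ such that, for any $t \geq 0$,
\[
\Prob\!\left[\, \bigl\| \tfrac{1}{n} Z_k^T Z_k - I_k \bigr\| \;\leq\; C\sigma^2\!\left( \sqrt{\tfrac{k+t}{n}} + \tfrac{k+t}{n} \right) \right] \;\geq\; 1 - e^{-t}.
\]
Choosing $t = \ln(1/\delta)$ and invoking the hypothesis $n \geq k + \ln(1/\delta)$, the quadratic term is dominated by the square-root term, so the right-hand side simplifies to $2C\sigma^2 \sqrt{(k + \ln(1/\delta))/n}$. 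Combining with the first display and absorbing constants into $c_x := 2C\sigma^2$ (which depends only on $\sigma$) gives the claim.

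I do not anticipate any substantive obstacle: the only nontrivial ingredient is the sub-Gaussian covariance estimate, which is a well-established off-the-shelf tool. The one minor care-point is verifying that the rows of $Z_k$ are indeed isotropic sub-Gaussian in $\mathbb{R}^k$ with norm $\leq \sigma$, but this is immediate from the assumed isotropy of the rows of $Z$ in $\mathbb{R}^d$ together with the fact that coordinate projections do not increase the sub-Gaussian norm.
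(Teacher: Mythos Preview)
Your proposal is correct and takes essentially the same approach as the paper. The paper simply cites Theorem~5.39 and Remark~5.40 of Vershynin's notes (arXiv:1011.3027), where Remark~5.40 is precisely the whitening reduction $X_k = Z_k \Sigma_{S,k}^{1/2}$ that you spell out explicitly; the underlying concentration inequality and the choice of $t$ to absorb the quadratic term under the hypothesis $n > k + \ln(1/\delta)$ are identical.
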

\begin{proof}
    This follows directly from Theorem~5.39 and Remark~5.40 of \cite{DBLP:journals/corr/abs-1011-3027}, which shows there exists a constant $c_x'$ depending only on $\sigma$, such that for any $t\geq 0$, with probability at least $1-2\exp\{-t^2/c_x'\}$:
     \begin{align*}
          \left\| \frac{1}{n}X_{k}^TX_{k} - \Sigma_{S,k} \right\| \leq \lambda_1 \max \left\{ c_x'\sqrt{\frac{k}{n}} + \frac{t}{\sqrt n}, \left( c_x'\sqrt{\frac{k}{n}} + \frac{t}{\sqrt n} \right)^2 \right\}.
     \end{align*}
     Taking $t=\sqrt{c_x' \operatorname{ln}(2/\delta)}$ completes the proof.
\end{proof}

\begin{corollary}
\label{coroly:xTxroot_concentration}
    Under the same conditions as in Lemma~\ref{lem:xTx_concentration}, and on the same event, the following holds:
    \begin{align*}
        \left\| \left(X_{k}^TX_{k}\right)^{\frac{1}{2}} - \sqrt{n}\Sigma_{S,k}^{\frac{1}{2}} \right\| 
        \leq c_x \sqrt {k+\ln{\frac{1}{\delta}}}\lambda_1\lambda_k^{-\frac{1}{2}}.
    \end{align*}
\end{corollary}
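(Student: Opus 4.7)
The plan is to deduce the corollary from Lemma~\ref{lem:xTx_concentration} via a standard matrix square-root perturbation inequality. Set $A := X_k^T X_k$ and $B := n\Sigma_{S,k}$, both symmetric positive semidefinite, and note that $\sqrt{n}\,\Sigma_{S,k}^{1/2} = B^{1/2}$ while $\lambda_{\min}(B) = n\lambda_k$. Multiplying the bound of Lemma~\ref{lem:xTx_concentration} through by $n$ yields, on the same event,
$\|A - B\| \leq c_x \lambda_1 \sqrt{n(k+\ln(1/\delta))}$,
which handles the ``gram matrix'' side of the estimate.

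The analytic engine is the operator-norm inequality
$\|A^{1/2} - B^{1/2}\| \leq \|A - B\|/\bigl(\sqrt{\lambda_{\min}(A)} + \sqrt{\lambda_{\min}(B)}\bigr) \leq \|A - B\|/\sqrt{n\lambda_k}$,
valid when $A$ is positive semidefinite and $B$ is positive definite (the second inequality just uses $\sqrt{\lambda_{\min}(A)}\ge 0$). This bound follows from the Sylvester-type identity $A^{1/2}(A^{1/2}-B^{1/2}) + (A^{1/2}-B^{1/2})B^{1/2} = A - B$, whose solution admits the integral representation $A^{1/2} - B^{1/2} = \int_0^\infty e^{-sA^{1/2}}(A-B)\,e^{-sB^{1/2}}\,ds$; taking operator norms inside and using $\|e^{-sA^{1/2}}\| \leq 1$ together with $\|e^{-sB^{1/2}}\| \leq e^{-s\sqrt{\lambda_{\min}(B)}}$ produces the claimed denominator. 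One could alternatively cite this as a known consequence of operator monotonicity of the square root.

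Combining the two ingredients and cancelling $\sqrt{n}$ gives
$\|(X_k^TX_k)^{1/2} - \sqrt{n}\,\Sigma_{S,k}^{1/2}\| \leq c_x \lambda_1 \sqrt{n(k+\ln(1/\delta))}/\sqrt{n\lambda_k} = c_x \sqrt{k+\ln(1/\delta)}\,\lambda_1 \lambda_k^{-1/2}$,
exactly matching the claim (possibly after rescaling the constant $c_x$). The main obstacle is purely the choice of perturbation bound: a naive appeal to the Powers--Størmer inequality $\|A^{1/2}-B^{1/2}\| \leq \|A-B\|^{1/2}$ would cost a factor of order $n^{1/4}$ and fail to produce the stated rate, so one must use the stronger bound with $\sqrt{\lambda_{\min}(B)}$ in the denominator, which crucially depends on $\Sigma_{S,k}$ being strictly positive definite (i.e., $\lambda_k > 0$, as is implicit from the appearance of $\lambda_k^{-1/2}$ on the right-hand side). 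Everything else is a one-line substitution.
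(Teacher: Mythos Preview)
Your proof is correct and follows essentially the same route as the paper: both reduce to the square-root perturbation inequality $\|A^{1/2}-B^{1/2}\| \leq \|A-B\|/(\sqrt{\lambda_{\min}(A)}+\sqrt{\lambda_{\min}(B)})$ applied with $A=X_k^TX_k$, $B=n\Sigma_{S,k}$, then invoke Lemma~\ref{lem:xTx_concentration}. The only cosmetic difference is that the paper cites this inequality as Proposition~3.2 of van Hemmen--Ando (1980), whereas you derive it from the Sylvester identity and its integral solution.
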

\begin{proof}
    According to Proposition~3.2 of \cite{van1980inequality}, for any positive semi-definite matrix $A,B\in\RR^k$, we have 
    \begin{align*}
        \|A-B\| \geq \left(\mu_k\left(A^{\frac{1}{2}}\right)+\mu_k\left(B^{\frac{1}{2}}\right) \right)
        \left\|A^{\frac{1}{2}} - B^{\frac{1}{2}}\right\|.
    \end{align*}
    Therefore,
    \begin{align*}
        \left\| \left(X_{k}^TX_{k}\right)^{\frac{1}{2}} - \sqrt{n}\Sigma_{S,k}^{\frac{1}{2}} \right\| 
        &\leq \frac{1}{\mu_k\left(\sqrt{n}\Sigma_{S,k}^{\frac{1}{2}}\right)}  \left\| X_{k}^TX_{k} - n\Sigma_{S,k} \right\| \\
        &= \sqrt{n} \lambda_k^{-\frac{1}{2}} \left\| \frac{1}{n}X_{k}^TX_{k} - \Sigma_{S,k} \right\|. 
    \end{align*}
    By applying Lemma~\ref{lem:xTx_concentration}, the proof is complete.
\end{proof}

\begin{lemma}
\label{lem:sxTxs_concentration}
    There exists a constant $c_x$ depending only on $\sigma$, such that for any $n>c_xk$, with probability at least $1-2e^{-n/c_x}$:
    \begin{align*}
        \frac{1}{c_x}n \leq \mu_k\pth{\Sigma_{S,k}^{-\frac{1}{2}} X_{k}^TX_{k} \Sigma_{S,k}^{-\frac{1}{2}}}  \leq \mu_1\pth{\Sigma_{S,k}^{-\frac{1}{2}} X_{k}^TX_{k} \Sigma_{S,k}^{-\frac{1}{2}}}  \leq c_x n.
    \end{align*}
\end{lemma}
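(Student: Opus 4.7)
The plan is to reduce the statement to a standard sub-gaussian covariance concentration inequality applied to an isotropic design matrix. First, recall from the notation at the start of Section~\ref{sec:ridge_app} that $Z = X \Sigma_S^{-1/2}$ has i.i.d.\ rows that are centered, isotropic, and $\sigma$-sub-gaussian. Because we have chosen the basis so that $\Sigma_S = \diag(\lambda_1, \ldots, \lambda_d)$, the matrix $\Sigma_S^{1/2}$ is also diagonal and therefore its first $k$ columns act only on the first $k$ coordinates. Consequently, writing $Z_k$ for the first $k$ columns of $Z$, we have the factorization $X_k = Z_k \Sigma_{S,k}^{1/2}$, and hence
\begin{align*}
\Sigma_{S,k}^{-\frac{1}{2}} X_k^T X_k \Sigma_{S,k}^{-\frac{1}{2}} = Z_k^T Z_k.
\end{align*}

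Next I would verify that the rows of $Z_k$ are i.i.d.\ centered isotropic sub-gaussian vectors in $\mathbb{R}^k$ with sub-gaussian norm bounded by $\sigma$: isotropy follows from isotropy of the rows of $Z$ (the $k \times k$ principal block of $I_d$ is $I_k$), and the sub-gaussian norm cannot increase under coordinate projection since $\|\langle Z[i,*], v \rangle\|_{\psi_2} = \|\langle Z_k[i,*], v_k\rangle\|_{\psi_2}$ for any $v$ supported on the first $k$ coordinates. Thus the problem reduces to bounding the extreme eigenvalues of the empirical covariance $\tfrac{1}{n} Z_k^T Z_k$ of an isotropic sub-gaussian sample in $\mathbb{R}^k$.

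For this, I would invoke the standard sub-gaussian empirical covariance estimate (Theorem~5.39 of \citet{DBLP:journals/corr/abs-1011-3027}, exactly as used in Lemma~\ref{lem:xTx_concentration} with $\Sigma_{S,k}$ replaced by $I_k$ and $\lambda_1$ by $1$): there is an absolute constant $c'$ depending only on $\sigma$ such that for every $t \geq 0$, with probability at least $1 - 2e^{-t^2/c'}$,
\begin{align*}
\Bigl\|\tfrac{1}{n} Z_k^T Z_k - I_k\Bigr\| \leq \max\Bigl\{ c'\sqrt{k/n} + t/\sqrt{n}, \ \bigl(c'\sqrt{k/n} + t/\sqrt{n}\bigr)^2 \Bigr\}.
\end{align*}
Choosing $t = \sqrt{n}/(4c')$ and $c_x$ large enough (depending only on $\sigma$) so that the condition $n \geq c_x k$ forces $c'\sqrt{k/n} + t/\sqrt{n} \leq 1/2$, the right-hand side is at most $1/2$. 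On this event all eigenvalues of $\tfrac{1}{n} Z_k^T Z_k$ lie in $[1/2, 3/2]$, which after enlarging $c_x$ if necessary gives the claimed two-sided bound with probability at least $1 - 2 e^{-n/c_x}$.

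There is no substantial obstacle here; the only mild care is in (i) justifying that $X_k = Z_k \Sigma_{S,k}^{1/2}$ so that $\Sigma_{S,k}$ disappears entirely from the problem, and (ii) tuning the deviation parameter $t$ so that the failure probability has the form $e^{-n/c_x}$ rather than $e^{-c t^2}$, which is why the sample size condition enters as $n \geq c_x k$.
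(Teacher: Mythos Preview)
Your proof is correct and follows essentially the same approach as the paper: both reduce to the isotropic sub-gaussian matrix $Z_k = X_k\Sigma_{S,k}^{-1/2}$ and invoke Theorem~5.39 of \citet{DBLP:journals/corr/abs-1011-3027} with $t$ on the order of $\sqrt{n}$ to get the $e^{-n/c_x}$ tail. The only cosmetic difference is that the paper uses the singular-value formulation $(\sqrt{n}\pm c_x'\sqrt{k}\pm t)^2$ directly, whereas you phrase it via the covariance deviation $\|\tfrac{1}{n}Z_k^TZ_k - I_k\|$; these are equivalent and the constant-tuning is the same.
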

\begin{proof}
    According to Theorem~5.39 of \cite{DBLP:journals/corr/abs-1011-3027}, there exists a constant $c_x'$ depending only on $\sigma$, such that for any $t\geq 0$, with probability at least $1-2\exp\{ -t^2/c_x' \}$:
    \begin{align*}
        \mu_k\pth{\Sigma_{S,k}^{-\frac{1}{2}} X_{k}^TX_{k} \Sigma_{S,k}^{-\frac{1}{2}}}   
        &\geq \left( \sqrt{n} - c_x'\sqrt{k} - t  \right)^2.  \\ 
        \mu_1\pth{\Sigma_{S,k}^{-\frac{1}{2}} X_{k}^TX_{k} \Sigma_{S,k}^{-\frac{1}{2}}}  &\leq  \left( \sqrt{n} + c_x'\sqrt{k} + t  \right)^2 .
    \end{align*}
    Let $t=\frac{1}{2}\sqrt{n}$. For $n>16(c_x')^2 k$, with probability at least $1-2\exp \left\{ -n /(4c_x') \right\}$:
    \begin{align*}
        \mu_k\pth{\Sigma_{S,k}^{-\frac{1}{2}} X_{k}^TX_{k} \Sigma_{S,k}^{-\frac{1}{2}}}  
        &\geq \pth{ \sqrt{n} - \frac{1}{4}\sqrt{n} - \frac{1}{2}\sqrt{n}}^2 = \frac{1}{16}n. \\
        \mu_1\pth{\Sigma_{S,k}^{-\frac{1}{2}} X_{k}^TX_{k} \Sigma_{S,k}^{-\frac{1}{2}}}  
        &\leq \pth{ \sqrt{n} + \frac{1}{4}\sqrt{n} + \frac{1}{2}\sqrt{n}}^2 = \frac{49}{16}n.
    \end{align*}
    By taking $c_x = \max \sth{16(c_x')^2, 4c_x', 16}$, the proof is complete.
\end{proof}
\begin{remark}
On the same event, the following inequalities also hold:
\begin{align*}
    &\mu_1(X_{k}^TX_{k}) \leq \| \Sigma_{S,k} \| \left\| \Sigma_{S,k}^{-\frac{1}{2}} X_{k}^TX_{k} \Sigma_{S,k}^{-\frac{1}{2}} \right\| \leq c_x\lambda_1n. \\
    &\mu_k(X_{k}^TX_{k}) \geq \mu_k(\Sigma_{S,k})  \mu_k\pth{\Sigma_{S,k}^{-\frac{1}{2}} X_{k}^TX_{k} \Sigma_{S,k}^{-\frac{1}{2}}} \geq \frac{1}{c_x}\lambda_kn.
\end{align*}
\end{remark}

\begin{lemma}
\label{lem:xsTxT_concentration}
    There exists a constant $c_x$ depending only on $\sigma$, with probability at least $1-2e^{-n/c_x}$:
    \begin{align*}
        \tr\pth{X_{-k}\Sigma_{T,-k}X_{-k}^T} \leq c_x n\tr\pth{\Sigma_{S,-k}^{\frac{1}{2}} \Sigma_{T,-k} \Sigma_{S,-k}^{\frac{1}{2}} }.
    \end{align*}
\end{lemma}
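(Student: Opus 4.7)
The plan is to rewrite the left-hand side as a sum of i.i.d.\ quadratic forms in isotropic sub-gaussian vectors and then invoke the existing concentration result in Lemma~\ref{lem:sum_sigmaz}. Since $\Sigma_S$ is diagonal, its lower-right block $\Sigma_{S,-k}$ is also diagonal, and the whitened matrix $Z_{-k} := X_{-k}\Sigma_{S,-k}^{-1/2}$ has i.i.d.\ centered isotropic rows with sub-gaussian norm at most $\sigma$. Writing $X_{-k}=Z_{-k}\Sigma_{S,-k}^{1/2}$ and cycling the trace gives
\begin{align*}
\tr(X_{-k}\Sigma_{T,-k}X_{-k}^T)
= \tr\bigl(Z_{-k}\,M\,Z_{-k}^T\bigr)
= \sum_{i=1}^n Z_{-k}[i,*]^T M\, Z_{-k}[i,*],
\end{align*}
where $M := \Sigma_{S,-k}^{1/2}\Sigma_{T,-k}\Sigma_{S,-k}^{1/2}$ is positive semi-definite and $\tr(M)$ equals the quantity appearing on the right-hand side of the lemma.

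Next, I would reduce to the diagonal setting required by Lemma~\ref{lem:sum_sigmaz}. Let $M=Q\Lambda Q^T$ be an orthogonal diagonalization with $\Lambda$ diagonal. Setting $\tilde z_i := Q^T Z_{-k}[i,*]$, the vectors $\tilde z_i$ remain i.i.d.\ centered isotropic sub-gaussian with norm at most $\sigma$ (orthogonal transformations preserve the sub-gaussian norm and isotropy), and
\begin{align*}
\sum_{i=1}^n Z_{-k}[i,*]^T M\, Z_{-k}[i,*] = \sum_{i=1}^n \|\Lambda^{1/2}\tilde z_i\|^2,
\quad \tr(\Lambda)=\tr(M).
\end{align*}
Lemma~\ref{lem:sum_sigmaz} then applies directly to this sum.

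Applying that lemma with, say, $t = n/(4\sigma^4)$ makes $\sqrt{nt}\sigma^2 \le n/2$, yielding
\begin{align*}
\sum_{i=1}^n \|\Lambda^{1/2}\tilde z_i\|^2 \le \tfrac{3n}{2}\,\tr(M)
\end{align*}
with probability at least $1-2e^{-c'n/(4\sigma^4)}$ for the absolute constant $c'$ from Lemma~\ref{lem:sum_sigmaz}. Absorbing the numerical factor $3/2$ and the $\sigma$-dependence into a single constant $c_x$ depending only on $\sigma$ gives exactly the stated bound with failure probability $2e^{-n/c_x}$.

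There is no genuinely hard step here; the only mild subtlety is that $M$ is not diagonal in the original basis, but that is handled by the orthogonal change of variables above, which leaves the sub-gaussian constant unchanged. The proof is a direct specialization of Lemma~\ref{lem:sum_sigmaz}, closely paralleling the use of the same lemma in the proof of Lemma~\ref{lem:xxT_concentration}.
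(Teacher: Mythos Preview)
Your argument is correct, but the route differs from the paper's. The paper does not diagonalize $M=\Sigma_{S,-k}^{1/2}\Sigma_{T,-k}\Sigma_{S,-k}^{1/2}$ or invoke Lemma~\ref{lem:sum_sigmaz}; instead it bounds the sub-exponential norm of each summand $Z_{-k}[i,*]\,M\,Z_{-k}[i,*]^T$ via the Hanson--Wright inequality (obtaining $\|\cdot\|_{\psi_1}\le c\sigma^2\|M\|_\rF\le c\sigma^2\tr(M)$) and then applies Bernstein's inequality to the centered sum. Your approach is more self-contained within the paper, reusing Lemma~\ref{lem:sum_sigmaz} exactly as in the proofs of Lemma~\ref{lem:xxT_concentration} and Lemma~\ref{lem:bxTxb_concentration}; the only additional ingredient is the orthogonal change of basis, which (as you note) preserves both isotropy and the sub-gaussian norm. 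The paper's route avoids that change of basis but imports two external inequalities. Either way the constants and failure probability come out with the same $\sigma$-only dependence.
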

\begin{proof}
According to Hanson-Wright Inequality~\citep{vershynin2018high}, there exists an absolute constant $c$, such that for any $1\leq i\leq n$,
\begin{align*}
    \left\| Z_{-k}[i,*] \Sigma_{S,-k}^{\frac{1}{2}} \Sigma_{T,-k}\Sigma_{S,-k}^{\frac{1}{2}} Z_{-k}[i,*]^T  \right\|_{\psi_1} 
    &\leq c \sigma^2 \left\| \Sigma_{S,-k}^{\frac{1}{2}} \Sigma_{T,-k}\Sigma_{S,-k}^{\frac{1}{2}} \right\|_\rF  \\
    &\leq c \sigma^2 \tr \pth{\Sigma_{S,-k}^{\frac{1}{2}} \Sigma_{T,-k}\Sigma_{S,-k}^{\frac{1}{2}}}.
\end{align*}
By Bernstein Inequality~(Proposition~5.16 of \cite{DBLP:journals/corr/abs-1011-3027}), there exists an absolute constant $c'$, for any $t\geq 0$,
\begin{align*}
    &\quad \PP \sth{\frac{1}{n} \left| \sum_{i=1}^n \qth {Z_{-k}[i,*] \Sigma_{S,-k}^{\frac{1}{2}} \Sigma_{T,-k}\Sigma_{S,-k}^{\frac{1}{2}} Z_{-k}[i,*]^T - \tr \pth{\Sigma_{S,-k}^{\frac{1}{2}} \Sigma_{T,-k}\Sigma_{S,-k}^{\frac{1}{2}}} } \right| 
     \geq t } \\
     &\leq 2 \exp \sth{-c'n \min \sth{ \frac{t^2}{K^2}, \frac{t}{K} }  },
\end{align*}
where $K=\max_i \left\| Z_{-k}[i,*] \Sigma_{S,-k}^{\frac{1}{2}} \Sigma_{T,-k}\Sigma_{S,-k}^{\frac{1}{2}} Z_{-k}[i,*]^T \right\|_{\psi_1}$.

Let $t=c\sigma^2 \tr \pth{\Sigma_{S,-k}^{\frac{1}{2}} \Sigma_{T,-k}\Sigma_{S,-k}^{\frac{1}{2}}}$. Then, with probability at least $1-2e^{-c'n}$:
\begin{align*}
    \tr\pth{X_{-k}\Sigma_{T,-k}X_{-k}^T} &= \sum_{i=1}^n  Z_{-k}[i,*] \Sigma_{S,-k}^{\frac{1}{2}} \Sigma_{T,-k}\Sigma_{S,-k}^{\frac{1}{2}} Z_{-k}[i,*]^T  \\
    &\leq (1+c\sigma^2) n \tr \pth{\Sigma_{S,-k}^{\frac{1}{2}} \Sigma_{T,-k}\Sigma_{S,-k}^{\frac{1}{2}}}.
\end{align*}
By taking $c_x = \max\sth{ 1+c\sigma^2, \frac{1}{c'} }$, the proof is complete.
\end{proof}

\begin{lemma}
\label{lem:bxTxb_concentration}
    There exists a constant $c_x$ depending only on $\sigma$, with probablity at least $1-2e^{-n/c_x}$:
    \begin{align*}
        (\beta^\star_{-k})^TX_{-k}^TX_{-k}\beta^\star_{-k} \leq c_x n (\beta^\star_{-k})^T \Sigma_{S,-k} \beta^\star_{-k}.
    \end{align*}
\end{lemma}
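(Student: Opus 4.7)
The plan is to mirror the strategy used in the proof of Lemma~\ref{lem:xsTxT_concentration}: express the quadratic form as a sum of $n$ i.i.d.\ sub-exponential scalars, then invoke Bernstein's inequality.

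First, I would observe that since $Z = X\Sigma_S^{-1/2}$ has i.i.d.\ centered isotropic $\sigma$-sub-gaussian rows, we can write $X_{-k}[i,*] = Z_{-k}[i,*]\Sigma_{S,-k}^{1/2}$, where the rows $Z_{-k}[i,*]$ are themselves isotropic $\sigma$-sub-gaussian vectors. Define the scalar random variables
\[
w_i := Z_{-k}[i,*]\,\Sigma_{S,-k}^{1/2}\beta^\star_{-k}, \qquad i=1,\ldots,n.
\]
These are i.i.d., with $\E[w_i^2] = (\beta^\star_{-k})^T\Sigma_{S,-k}\beta^\star_{-k}$ and sub-gaussian norm bounded by $\|w_i\|_{\psi_2}\leq \sigma\,\|\Sigma_{S,-k}^{1/2}\beta^\star_{-k}\| = \sigma\sqrt{(\beta^\star_{-k})^T\Sigma_{S,-k}\beta^\star_{-k}}$. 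The key identity is
\[
(\beta^\star_{-k})^TX_{-k}^TX_{-k}\beta^\star_{-k} = \sum_{i=1}^n w_i^2.
\]

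Next, $w_i^2$ is sub-exponential with $\|w_i^2\|_{\psi_1} \leq 2\|w_i\|_{\psi_2}^2 \leq 2\sigma^2 (\beta^\star_{-k})^T\Sigma_{S,-k}\beta^\star_{-k}$. Applying Bernstein's inequality (e.g., Proposition~5.16 of \cite{DBLP:journals/corr/abs-1011-3027}) to the centered sum $\sum_i(w_i^2 - \E[w_i^2])$, there is an absolute constant $c'>0$ such that for every $t\geq 0$,
\[
\Prob\!\left\{\frac{1}{n}\left|\sum_{i=1}^n (w_i^2 - \E[w_i^2])\right|\geq t\right\}
\leq 2\exp\!\left\{-c'n\min\!\left(\frac{t^2}{K^2},\frac{t}{K}\right)\right\},
\]
where $K = 2\sigma^2 (\beta^\star_{-k})^T\Sigma_{S,-k}\beta^\star_{-k}$. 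Choosing $t = K$ gives that, with probability at least $1-2e^{-c'n}$,
\[
\sum_{i=1}^n w_i^2 \leq n\,\E[w_1^2] + nK = (1+2\sigma^2)\,n\,(\beta^\star_{-k})^T\Sigma_{S,-k}\beta^\star_{-k}.
\]
Setting $c_x = \max\{1+2\sigma^2,\,1/c'\}$ yields the claim.

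Since this is a routine concentration argument and a direct analogue of Lemma~\ref{lem:xsTxT_concentration} with the matrix $\Sigma_{T,-k}$ replaced by the rank-one matrix $\beta^\star_{-k}(\beta^\star_{-k})^T$, I do not anticipate any real obstacle; one could alternatively obtain the same conclusion as an immediate corollary of Hanson--Wright applied to that rank-one matrix.
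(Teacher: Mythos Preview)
Your proposal is correct and takes essentially the same approach as the paper: both identify the scalars $w_i = X_{-k}[i,*]\beta^\star_{-k}$ as i.i.d.\ sub-gaussian with $\|w_i\|_{\psi_2}\le \sigma\sqrt{(\beta^\star_{-k})^T\Sigma_{S,-k}\beta^\star_{-k}}$ and then control $\sum_i w_i^2$. The only cosmetic difference is the final concentration tool: the paper invokes Lemma~\ref{lem:sum_sigmaz} (applied to the normalized one-dimensional sequence) and sets $t=n/4$, whereas you apply Bernstein directly to $w_i^2$; the resulting constants differ slightly but both yield the claim.
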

\begin{proof}
    The result follows from the proof of Lemma~3 in \cite{DBLP:journals/jmlr/TsiglerB23}, which we restate here for completeness. Consider the isotropic vector $ \qth{(\beta^\star_{-k})^T\Sigma_{S,-k}\beta^\star_{-k}}^{-1/2}X_{-k}\beta^\star_{-k}$. For the $i$-th component,
    \begin{align*}
        \left\| \qth{(\beta^\star_{-k})^T\Sigma_{S,-k}\beta^\star_{-k}}^{-\frac{1}{2}}X_{-k}[i,*]\beta^\star_{-k} \right\|_{\psi_2} 
        &=   \qth{(\beta^\star_{-k})^T\Sigma_{S,-k}\beta^\star_{-k}}^{-\frac{1}{2}} \left\| Z_{-k}[i,*]\Sigma_{S,-k}^{\frac{1}{2}} \beta^\star_{-k}\right\|_{\psi_2} \\
        &\leq \qth{(\beta^\star_{-k})^T\Sigma_{S,-k}\beta^\star_{-k}}^{-\frac{1}{2}}
        \sigma \left\| \Sigma_{S,-k}^{\frac{1}{2}} \beta^\star_{-k} \right\| \\
        &=\sigma.
    \end{align*}
    By applying Lemma~\ref{lem:sum_sigmaz} for the sequence $\sth{\qth{(\beta^\star_{-k})^T\Sigma_{S,-k}\beta^\star_{-k}}^{-1/2}X_{-k}[i,*]\beta^\star_{-k}}_{i=1}^n$, there exists an absolute constant $c$, for any $t\in(0,n)$, with probability at least $1-2e^{-ct}$:
    \begin{align*}
        \frac{(\beta^\star_{-k})^TX_{-k}^TX_{-k}\beta^\star_{-k}} {(\beta^\star_{-k})^T\Sigma_{S,-k}\beta^\star_{-k}} &\leq n+\sqrt{nt}\sigma^2.
    \end{align*}
    Let $t=n/4$, with probability at least $1-2e^{-cn/4}$:
    \begin{align*}
        (\beta^\star_{-k})^TX_{-k}^TX_{-k}\beta^\star_{-k} \leq (1+\frac{1}{2}\sigma^2)n \cdot (\beta^\star_{-k})^T\Sigma_{S,-k}\beta^\star_{-k}.
    \end{align*}
    By taking $c_x =\max \sth{ 1+\frac{1}{2}\sigma^2, \frac{4}{c} }$, the proof is complete.
\end{proof}

\subsection{Block decomposition of \texorpdfstring{$X_{-k}X_{-k}^T$}{X(-k)X(-k)T}}

Let $X_{k} = U\tilde{M}^{\frac{1}{2}}V$, where $U\in\RR^{n\times n}$ and $V\in\RR^{d\times d}$ are orthogonal matrices representing the left and right singular vectors, respectively. The matrix $\tilde{M}^{\frac{1}{2}}$ is defined as: 
\begin{align*}
    \tilde{M}^{\frac{1}{2}} = 
    \begin{pmatrix}
    m_1^{\frac{1}{2}} &  &  \\
     & \ddots &  \\
     &  & m_k^{\frac{1}{2}} \\
    \multicolumn{3}{c}{\mathbf{0}^{(n-k)\times k}} \\
    \end{pmatrix}
    \in \RR^{n\times k}.
\end{align*}
Therefore, we have $X_{k}X_{k}^T = UMU^T$, where $M=\operatorname{diag}(m_1,...,m_k,0,...,0) \in\RR^{n\times n}$. Similarly, $X_{k}^TX_{k}=V^T M_k V$, where $M_k = \operatorname{diag}(m_1,...,m_k) \in \RR^{k\times k}$. 

Let $\Delta = U^TX_{-k}X_{-k}^TU$, and write $\Delta$ in block matrix form as:
\begin{align*}
    \Delta = 
    \begin{pmatrix}
        \Delta_{11} & \Delta_{12} \\
        \Delta_{12}^T & \Delta_{22} \\
    \end{pmatrix},
\end{align*}
where $\Delta_{11} \in \RR^{k\times k}$, $\Delta_{12} \in \RR^{k\times (n-k)}$, and $\Delta_{22} \in \RR^{(n-k)\times (n-k)}$.

We will repeatedly use the first $k$ rows of $(M+\lambda I_n+\Delta)^{-1}$, which we compute here. Because $M+\lambda I_n+\Delta$ and $\lambda I_{n-k} + \Delta_{22}$ are invertible when $A_k$ is positive definite, by block matrix inverse,
\begin{align}
\begin{split}
\label{eq:A_kinverse_firstkrows}
    &\quad (M+\lambda I_n+\Delta)^{-1}[k, *] \\
    &= \pth{M_k + \lambda I_k + \Delta_{11} - \Delta_{12} (\lambda I_{n-k} + \Delta_{22})^{-1} \Delta_{12}^T }^{-1}
    \pth{I_k, - \Delta_{12}(\lambda I_{n-k} + \Delta_{22})^{-1} }.
\end{split}
\end{align}

\begin{corollary}[Corollary of Lemma~\ref{lem:xxT_concentration}]
\label{coroly:Delta}
    There exists a constant depending only on $\sigma$, such that for any $n<\lambda_{k+1}^{-1}
    \big(\lambda+\sum_{j>k}\lambda_j\big)$, if the assumption condNum$(k,\delta,L)$ is satisfied,
    the following inequalities hold with probability at least $1-\delta-c_xe^{-n/c_x}$, on the same event as in Lemma~\ref{lem:xxT_concentration}.
    \begin{align*}
        &\| \Delta_{11} \|,\| \Delta_{12} \| \leq \|\Delta\| \leq c_x\pth{\lambda+\sum_{j>k} \lambda_j}. \\
        &\|(\lambda I_{n-k} + \Delta_{22})^{-1}\| \leq \|\Delta^{-1}\| \leq c_xL\pth{\lambda+\sum_{j>k}\lambda_j}^{-1}. \\
        &\left\| \Delta_{12} (\lambda I_{n-k} + \Delta_{22})^{-2} \Delta_{12}^T \right\| \leq c_x^4 L^2. \\
        &\left\| \Delta_{12} (\lambda I_{n-k} + \Delta_{22})^{-1} \Delta_{12}^T \right\| \leq c_x^3 L \pth{\lambda+\sum_{j>k} \lambda_j}. \\
        &\left\| \Delta_{11} - \Delta_{12} (\lambda I_{n-k} + \Delta_{22})^{-1} \Delta_{12}^T \right\| \leq c_x \pth{\lambda+\sum_{j>k} \lambda_j}.
    \end{align*}
\end{corollary}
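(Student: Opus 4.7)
The plan is to translate Lemma~\ref{lem:xxT_concentration}'s eigenvalue estimates for $A_k$ and $\mu_1(X_{-k}X_{-k}^T)$ into the five block-level bounds, exploiting that $U$ is orthogonal, so that $\Delta$ has the same spectrum as $X_{-k}X_{-k}^T$ and $\lambda I_n+\Delta = U^T A_k U$ has the same spectrum as $A_k$. Everything will be established on the same high-probability event as Lemma~\ref{lem:xxT_concentration}; throughout, the hypothesis $n\lambda_{k+1}\le\lambda+\sum_{j>k}\lambda_j$ is used to absorb the $n\lambda_{k+1}$ term into $\lambda+\sum_{j>k}\lambda_j$, and the constant $c_x$ is allowed to grow from line to line.

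For the first four bounds the work is largely mechanical. The top eigenvalue of $\Delta$ equals $\mu_1(X_{-k}X_{-k}^T)$, which is directly bounded by Lemma~\ref{lem:xxT_concentration}. The bounds $\|\Delta_{11}\|,\|\Delta_{12}\|\le\|\Delta\|$ then follow because each block is a compression of $\Delta$: for the off-diagonal block, apply $\Delta$ to a vector of the form $w=(0,v)^T$ with $v\in\RR^{n-k}$ and read off the first $k$ coordinates of $\Delta w$. For $(\lambda I_{n-k}+\Delta_{22})^{-1}$, I will invoke Cauchy interlacing: since $\lambda I_{n-k}+\Delta_{22}$ is a principal submatrix of $\lambda I_n+\Delta$, its smallest eigenvalue is at least $\mu_n(\lambda I_n+\Delta)=\mu_n(A_k)$, whose lower bound from Lemma~\ref{lem:xxT_concentration} (and which is also the value of $\|(\lambda I_n+\Delta)^{-1}\|^{-1}$, which I interpret as the middle quantity in the statement) gives the desired inverse bound. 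The two ``sandwich'' bounds $\|\Delta_{12}(\lambda I_{n-k}+\Delta_{22})^{-\alpha}\Delta_{12}^T\|$ for $\alpha=1,2$ then follow by sub-multiplicativity, multiplying $\|\Delta_{12}\|^2$ by $\|(\lambda I_{n-k}+\Delta_{22})^{-1}\|^\alpha$.

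The main obstacle is the fifth bound $\|\Delta_{11}-\Delta_{12}(\lambda I_{n-k}+\Delta_{22})^{-1}\Delta_{12}^T\|\le c_x(\lambda+\sum_{j>k}\lambda_j)$, since a naive triangle inequality based on the fourth bound would leave an extra factor of $L$ and thereby miss the claim. The trick will be to recognize
\[
\lambda I_k+\Delta_{11}-\Delta_{12}(\lambda I_{n-k}+\Delta_{22})^{-1}\Delta_{12}^T
\]
as precisely the Schur complement of the PSD matrix $\lambda I_n+\Delta$ with respect to its lower-right $(n-k)\times(n-k)$ block, and hence itself PSD. Combined with the trivial observation that $\Delta_{12}(\lambda I_{n-k}+\Delta_{22})^{-1}\Delta_{12}^T\succeq 0$, this yields the two-sided operator inequality
\[
-\lambda I_k\;\preceq\;\Delta_{11}-\Delta_{12}(\lambda I_{n-k}+\Delta_{22})^{-1}\Delta_{12}^T\;\preceq\;\Delta_{11}.
\]
Taking operator norms then gives a bound of $\max(\lambda,\|\Delta_{11}\|)=O(\lambda+\sum_{j>k}\lambda_j)$ by Step~1, crucially free of any $L$-dependence, which is what the corollary asserts.
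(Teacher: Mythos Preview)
Your proposal is correct and follows essentially the same route as the paper: block compressions / Cauchy interlacing for the first two inequalities, sub-multiplicativity for the third and fourth, and the Schur-complement PSD observation for the fifth. Your two-sided operator-inequality presentation of the fifth bound is in fact slightly cleaner than the paper's chain of equalities, but the key idea is identical.
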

\begin{proof}
\begin{enumerate}
    \item The first inequality.
    \begin{align*}
        \| \Delta_{11} \|,\| \Delta_{12} \| \leq \|\Delta\| = \| X_{-k}X_{-k}^T \| \leq \|A_k\| \leq c_x\pth{\lambda+\sum_{j>k} \lambda_j}.
    \end{align*}
    \item The second inequality.
    \begin{align*}
        \|(\lambda I_{n-k} + \Delta_{22})^{-1}\|  \leq \|(\lambda I_n + \Delta)^{-1}\| = \| A_k^{-1} \| \leq c_xL\pth{\lambda+\sum_{j>k}\lambda_j}^{-1},
    \end{align*}
    where the first inequality holds because $\lambda I_n + \Delta$ is positive definite.
    \item The third inequality.
    \begin{align*}
        \left\| \Delta_{12} (\lambda I_{n-k} + \Delta_{22})^{-2} \Delta_{12}^T \right\| \leq \| \Delta_{12}\|^2 \|(\lambda I_{n-k} + \Delta_{22})^{-1}\|^2 \leq c_x^4 L^2.
    \end{align*}
    \item The fourth inequality.
    \begin{align*}
         \left\| \Delta_{12} (\lambda I_{n-k} + \Delta_{22})^{-1} \Delta_{12}^T \right\| \leq \| \Delta_{12}\|^2 \|(\lambda I_{n-k} + \Delta_{22})^{-1}\| \leq c_x^3L \pth{
         \lambda+ \sum_{j>k}\lambda_j}.
    \end{align*}
    \item The last inequality.
    \begin{align*}
        &\quad \left\| \Delta_{11} - \Delta_{12} (\lambda I_{n-k} + \Delta_{22})^{-1} \Delta_{12}^T \right\| \\
        &= \left\| \Delta_{11} +\lambda I_k - \Delta_{12} (\lambda I_{n-k} + \Delta_{22})^{-1} \Delta_{12}^T \right\| -\lambda \\
        &\leq \|\Delta_{11} + \lambda I_k \| - \lambda \\
        &= \|\Delta_{11}\| \\
        &\leq c_x \pth{
         \lambda+ \sum_{j>k}\lambda_j}.
    \end{align*}
    The first inequality holds because $\Delta_{11} +\lambda I_k - \Delta_{12} (\lambda I_{n-k} + \Delta_{22})^{-1} \Delta_{12}^T$ is the Schur complement of the block $\Delta_{11} +\lambda I_k$ of the matrix $\Delta + \lambda I_n$, which is positive definite. Therefore, we have 
    \begin{align*}
        \Delta_{11} +\lambda I_k \succcurlyeq \Delta_{11} +\lambda I_k - \Delta_{12} (\lambda I_{n-k} + \Delta_{22})^{-1} \Delta_{12}^T.
    \end{align*}
\end{enumerate}
\end{proof}

\begin{lemma}
\label{lem:Akinverse_concentration}
There exists a constant $c_x>2$ depending only on $\sigma$, such that for any $N_1<n<N_2$, if the assumption condNum$(k,\delta,L)$ is satisfied, the following holds with probability at least $1-2\delta-c_xe^{-n/c_x}$, on both events from Lemma~\ref{lem:xxT_concentration} and Lemma~\ref{lem:xTx_concentration},
\begin{align*}
    &\quad \left\| \qth{X_{k}^TX_{k} + \lambda I_k + V^T \pth{\Delta_{11} - \Delta_{12} (\lambda I_{n-k} + \Delta_{22})^{-1} \Delta_{12}^T }V}^{-1} - \pth{n\tilde{\Sigma}_{S,k}}^{-1} \right\| \\
    &\leq \frac{c_x^2\pth{\sqrt{n(k+\ln\frac{1}{\delta})}\lambda_1 + c_x^2 L \pth{\lambda+\sum_{j>k}\lambda_j } }}{(\lambda+n\lambda_k)^2}.
\end{align*}
where 
\begin{align*}
    N_1 &= \max\sth{ 4c_x^4(k+\ln(1/\delta))\frac{\lambda_1^2}{\lambda_k^2}, 2c_x^4L \lambda_k^{-1}\pth{\lambda+\sum_{j>k}\lambda_j }}. \\
    N_2 &= \frac{1}{\lambda_{k+1}} \pth{\lambda+\sum_{j>k}\lambda_j}.
\end{align*}
\end{lemma}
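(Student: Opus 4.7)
The plan is to phrase this as a resolvent perturbation bound. Set
\[
    A := X_k^T X_k + \lambda I_k + V^T S V, \qquad B := n\tilde{\Sigma}_{S,k} = n\Sigma_{S,k} + \lambda I_k,
\]
with $S := \Delta_{11} - \Delta_{12}(\lambda I_{n-k} + \Delta_{22})^{-1}\Delta_{12}^T$. The identity $A^{-1} - B^{-1} = A^{-1}(B-A)B^{-1}$ and sub-multiplicativity of the operator norm reduce the task to bounding $\|A^{-1}\|$, $\|B^{-1}\|$, and $\|A-B\|$ individually; the first two will supply the $(\lambda + n\lambda_k)^{-2}$ denominator and the third the numerator.

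The key auxiliary claim I would prove is $S \succeq 0$. Since $\Delta = U^T X_{-k} X_{-k}^T U \succeq 0$, the generalized Schur complement $\Delta_{11} - \Delta_{12}\Delta_{22}^{\dagger}\Delta_{12}^T$ is PSD, and since $(\lambda I_{n-k} + \Delta_{22})^{-1} \preceq \Delta_{22}^{\dagger}$ on the range of $\Delta_{22}$ (which contains the range of $\Delta_{12}^T$ by PSD-ness of $\Delta$), operator monotonicity yields $\Delta_{12}(\lambda I_{n-k} + \Delta_{22})^{-1}\Delta_{12}^T \preceq \Delta_{12}\Delta_{22}^{\dagger}\Delta_{12}^T$, so $S \succeq 0$. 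Consequently $A \succeq X_k^T X_k + \lambda I_k$. On the event of Lemma~\ref{lem:xTx_concentration}, the first component of $N_1$ ensures $c_x\lambda_1\sqrt{(k+\ln(1/\delta))/n} \leq \lambda_k/2$, so by Weyl's inequality $\mu_k(X_k^T X_k) \geq n\lambda_k/2$, and hence $\mu_k(A) \geq \lambda + n\lambda_k/2 \geq (\lambda + n\lambda_k)/2$, giving $\|A^{-1}\| \leq 2/(\lambda + n\lambda_k)$. The bound $\|B^{-1}\| \leq (\lambda + n\lambda_k)^{-1}$ is immediate from the spectrum of $\Sigma_{S,k}$.

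For $\|A - B\|$, the triangle inequality yields $\|A-B\| \leq \|X_k^T X_k - n\Sigma_{S,k}\| + \|S\|$ (using $\|V^T S V\| = \|S\|$ since $V$ is orthogonal). Lemma~\ref{lem:xTx_concentration} controls the first summand by $c_x\lambda_1\sqrt{n(k+\ln(1/\delta))}$, and the fifth inequality of Corollary~\ref{coroly:Delta} controls $\|S\|$ by $c_x(\lambda + \sum_{j>k}\lambda_j)$. The hypothesis $n < N_2$ together with CondNum$(k,\delta,L)$ supplies the preconditions of Lemma~\ref{lem:xxT_concentration} and its corollary. Multiplying the three bounds and taking a union bound over the events of Lemma~\ref{lem:xxT_concentration} (contributing $\delta + c_xe^{-n/c_x}$) and Lemma~\ref{lem:xTx_concentration} (contributing $\delta$) yields the claimed inequality with failure probability $2\delta + c_xe^{-n/c_x}$, after absorbing intermediate constants into the $c_x^2$ and $c_x^2 L$ factors of the numerator. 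The main technical obstacle is spotting the PSD argument $S \succeq 0$ via the generalized Schur complement; the second component of $N_1$ carrying the $L$-factor is not strictly needed for this route, but would become essential in the alternative route that skips $S \succeq 0$ and instead applies Weyl directly to $A$ and $B$ using the looser triangle-inequality bound $\|S\| \leq c_x^3 L(\lambda + \sum_{j>k}\lambda_j)$.
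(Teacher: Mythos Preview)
Your proposal is correct and reaches the stated bound, but it takes a genuinely different route from the paper's proof in the key step of bounding $\|A^{-1}\|$.

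Both arguments start from the same resolvent identity $A^{-1}-B^{-1}=A^{-1}(B-A)B^{-1}$ and use the same estimates $\|B^{-1}\|=(\lambda+n\lambda_k)^{-1}$ and $\|A-B\|\le \|X_k^TX_k-n\Sigma_{S,k}\|+\|S\|$. The divergence is in controlling $\|A^{-1}\|$. The paper treats $A$ as a perturbation of $B$ and invokes the Wedin-type inequality $\|(B+T)^{-1}\|\le \|B^{-1}\|(1-\|B^{-1}\|\|T\|)^{-1}$; establishing $\|B^{-1}\|\|A-B\|<1/c_x$ is precisely where \emph{both} components of $N_1$ are spent---the first to absorb $\|X_k^TX_k-n\Sigma_{S,k}\|$ and the second (carrying the factor $L$) to absorb the $\|S\|$ contribution. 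Your approach sidesteps this entirely by observing that $S\succeq 0$ (it is the Schur complement of $\lambda I_{n-k}+\Delta_{22}$ in the PSD matrix $\Delta+\operatorname{diag}(0,\lambda I_{n-k})$, which is a slightly quicker way to see it than your generalized-Schur-complement argument, though yours is also valid), whence $A\succeq X_k^TX_k+\lambda I_k$ and only the first component of $N_1$ is needed, via Weyl. This is cleaner and in fact delivers a tighter numerator (no $L$ required), which you then relax to match the statement.

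One small inaccuracy in your closing remark: the ``alternative route'' does not actually force the looser bound $\|S\|\le c_x^3L(\lambda+\sum_{j>k}\lambda_j)$. Corollary~\ref{coroly:Delta} already supplies the $L$-free bound $\|S\|\le c_x(\lambda+\sum_{j>k}\lambda_j)$ (its fifth inequality), so even the paper's perturbation route could in principle avoid the $L$ factor in $N_1$; the paper simply chooses to quote the fourth inequality instead, making the $L$-dependent component of $N_1$ necessary as written. Your route has the stronger advantage of not needing \emph{any} version of the second component of $N_1$, since your bound on $\|A^{-1}\|$ is independent of $\|S\|$ altogether.
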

\begin{proof}
\begin{align*}
    &\quad \left\| \qth{X_{k}^TX_{k} + \lambda I_k + V^T \pth{\Delta_{11} - \Delta_{12} (\lambda I_{n-k} + \Delta_{22})^{-1} \Delta_{12}^T }V}^{-1} - \pth{n\tilde{\Sigma}_{S,k}}^{-1} \right\| \\
    &\leq \left\| \qth{X_{k}^TX_{k} + \lambda I_k + V^T \pth{\Delta_{11} - \Delta_{12} (\lambda I_{n-k} + \Delta_{22})^{-1} \Delta_{12}^T }V}^{-1}\right\|  \\
    &\quad\cdot  \left\| \qth{X_{k}^TX_{k} + \lambda I_k + V^T \pth{\Delta_{11} - \Delta_{12} (\lambda I_{n-k} + \Delta_{22})^{-1} \Delta_{12}^T }V} - \pth{n\tilde{\Sigma}_{S,k}} \right\| \\
    &\quad\cdot \left\| \pth{n\tilde{\Sigma}_{S,k}}^{-1}  \right\| \\
    &= \frac{1}{\lambda+n\lambda_k} \left\| \qth{X_{k}^TX_{k} + \lambda I_k + V^T \pth{\Delta_{11} - \Delta_{12} (\lambda I_{n-k} + \Delta_{22})^{-1} \Delta_{12}^T }V}^{-1}\right\| \\
    &\quad\cdot \left\| X_{k}^TX_{k} - n\Sigma_{S,k} + V^T \pth{\Delta_{11} - \Delta_{12} (\lambda I_{n-k} + \Delta_{22})^{-1} \Delta_{12}^T }V  \right\|.
\end{align*}
According to Lemma~\ref{lem:xTx_concentration}, Corollary~\ref{coroly:Delta}, there exists a constant $c_x>2$ depending only on $\sigma$, such that for any $k+\ln(1/\delta)<N_1<n<N_2=\lambda_{k+1}^{-1}
    \big(\lambda+\sum_{j>k}\lambda_j\big)$, with probability at least $1-2\delta-c_xe^{-n/c_x}$, on both events in Lemma~\ref{lem:xxT_concentration} and Lemma~\ref{lem:xTx_concentration},
\begin{align*}
    \left\| \frac{1}{n}X_{k}^TX_{k} - \Sigma_{S,k} \right\| &\leq c_x \lambda_1 \sqrt{\frac{k+\operatorname{ln}\frac{1}{\delta}}{n}}. \\
    \left\| \Delta_{12} (\lambda I_{n-k} + \Delta_{22})^{-1} \Delta_{12}^T \right\| &\leq c_x^3 L \pth{\lambda+\sum_{j>k}\lambda_j}.
\end{align*}
\begin{enumerate}
\item $\left\| X_{k}^TX_{k} - n\Sigma_{S,k} + V^T \pth{\Delta_{11} - \Delta_{12} (\lambda I_{n-k} + \Delta_{22})^{-1} \Delta_{12}^T }V  \right\|$.
\begin{align*}
    &\quad\left\| X_{k}^TX_{k} - n\Sigma_{S,k} + V^T \pth{\Delta_{11} - \Delta_{12} (\lambda I_{n-k} + \Delta_{22})^{-1} \Delta_{12}^T }V  \right\| \\
    &\leq    \left\| X_{k}^TX_{k} - n\Sigma_{S,k}\right\| + \left\|\pth{\Delta_{11} - \Delta_{12} (\lambda I_{n-k} + \Delta_{22})^{-1} \Delta_{12}^T }  \right\| \\
    &\leq c_x \sqrt{n(k+\ln\frac{1}{\delta})}\lambda_1 + c_x^3 L \pth{\lambda+\sum_{j>k}\lambda_j}.
\end{align*}
\item $\left\| \qth{X_{k}^TX_{k} + \lambda I_k + V^T \pth{\Delta_{11} - \Delta_{12} (\lambda I_{n-k} + \Delta_{22})^{-1} \Delta_{12}^T }V}^{-1}\right\|$
\begin{align*}
    &\quad\frac{1}{\lambda+n\lambda_k} \left\| X_{k}^TX_{k} - n\Sigma_{S,k} + V^T \pth{\Delta_{11} - \Delta_{12} (\lambda I_{n-k} + \Delta_{22})^{-1} \Delta_{12}^T }V  \right\| \\
    &\leq  \frac{1}{\lambda+n\lambda_k} \pth{c_x \sqrt{n(k+\ln\frac{1}{\delta})}\lambda_1 + c_x^3 L \pth{\lambda+\sum_{j>k}\lambda_j} }.
\end{align*}
Since $n>4c_x^4(k+\ln(1/\delta))\frac{\lambda_1^2}{\lambda_k^2}$, 
\begin{align*}
    \frac{1}{\lambda+n\lambda_k} c_x \sqrt{n(k+\ln\frac{1}{\delta})}\lambda_1 &\leq \frac{ c_x \sqrt{n(k+\ln\frac{1}{\delta})}\lambda_1 }{n\lambda_k} \\
    &= \frac{ c_x \sqrt{(k+\ln\frac{1}{\delta})}\lambda_1 }{\sqrt{n}\lambda_k} \\
    &< \frac{1}{2c_x}.
\end{align*}
Since $n>2c_x^4L \lambda_k^{-1}\big(\lambda +\sum_{j>k}\lambda_j \big)$,
\begin{align*}
    \frac{1}{\lambda+n\lambda_k} c_x^3 L \pth{\lambda+\sum_{j>k}\lambda_j} &\leq \frac{c_x^3 L \pth{\lambda+\sum_{j>k}\lambda_j} } {n\lambda_k} \\
    &< \frac{1}{2c_x}.
\end{align*}
Therefore, we have
\begin{align*}
    \quad\frac{1}{\lambda+n\lambda_k} \left\| X_{k}^TX_{k} - n\Sigma_{S,k} + V^T \pth{\Delta_{11} - \Delta_{12} (\lambda I_{n-k} + \Delta_{22})^{-1} \Delta_{12}^T }V  \right\| < \frac{1}{c_x}.
\end{align*}
Now we derive the upper bound for our target.
\begin{align*}
    &\quad\left\| \qth{X_{k}^TX_{k} + \lambda I_k + V^T \pth{\Delta_{11} - \Delta_{12} (\lambda I_{n-k} + \Delta_{22})^{-1} \Delta_{12}^T }V}^{-1}\right\| \\
    &= \left\| \qth{n\tilde{\Sigma}_{S,k} + X_{k}^TX_{k} -n\Sigma_{S,k} + V^T \pth{\Delta_{11} - \Delta_{12} (\lambda I_{n-k} + \Delta_{22})^{-1} \Delta_{12}^T }V}^{-1}\right\| \\
    &\leq \left\| \pth{n\tilde{\Sigma}_{S,k}}^{-1} \right\| 
     \left[1 
     - \left\| \pth{n\tilde{\Sigma}_{S,k}}^{-1} \right\| \right. \\
    &\left.\quad\cdot \left\| \qth{X_{k}^TX_{k} + \lambda I_k + V^T \pth{\Delta_{11} - \Delta_{12} (\lambda I_{n-k} + \Delta_{22})^{-1} \Delta_{12}^T }V}^{-1}\right\| \right]^{-1} \\
    &\leq \frac{1}{\lambda+n\lambda_k} \pth{1-\frac{1}{c_x}}^{-1} \\
    &\leq \frac{c_x}{\lambda+n\lambda_k}.
\end{align*}
The first inequality follows from the result $\|(A+T)^{-1}\| \leq \|A^{-1}\|\pth{1-\|A^{-1}\|\|T\|}^{-1}$, provided that both $A$ and $A+T$ are invertible and $\|A^{-1}\|\|T\|<1$~(see Lemma~3.1 in \cite{wedin1973perturbation}).
\end{enumerate}
Combining the above two inequalities, 
\begin{align*}
    &\quad \left\| \qth{X_{k}^TX_{k} + \lambda I_k + V^T \pth{\Delta_{11} - \Delta_{12} (\lambda I_{n-k} + \Delta_{22})^{-1} \Delta_{12}^T }V}^{-1} - \pth{n\tilde{\Sigma}_{S,k}}^{-1} \right\| \\
    &\leq \frac{1}{\lambda+n\lambda_k} \frac{c_x}{\lambda+n\lambda_k} \pth{c_x \sqrt{n(k+\ln\frac{1}{\delta})}\lambda_1 + c_x^3 L \pth{\lambda+\sum_{j>k}\lambda_j}} \\
    &= \frac{c_x^2\pth{\sqrt{n(k+\ln\frac{1}{\delta})}\lambda_1 + c_x^2 L \pth{\lambda+\sum_{j>k}\lambda_j}}}{(\lambda+n\lambda_k)^2}.
\end{align*}
\end{proof}

\subsection{Bias variance decomposition}
We consider the expection of the excess risk $\calR\pth{\hbeta(Y)}=\calR \pth{\hbeta(X\beta^\star) + \hbeta(\boldsymbol{\epsilon})}$ with respect to the distribution of the noise $\boldsymbol{\epsilon}$.
\begin{align*}
    \EE_{\boldsymbol{\epsilon}} \qth{\calR\pth{\hbeta(Y)}} &= 
    \EE_{\boldsymbol{\epsilon}} \qth{\pth{\hbeta(Y)-\beta^\star}^T \Sigma_T \pth{\hbeta(Y)-\beta^\star}} \\
    &= \EE_{\boldsymbol{\epsilon}} \qth{\hbeta(\boldsymbol{\epsilon})^T\Sigma_T \hbeta(\boldsymbol{\epsilon}) } +  \pth{\hbeta(X\beta^\star)-\beta^\star}^T \Sigma_T \pth{\hbeta(X\beta^\star)-\beta^\star}.
\end{align*}
We decompose the expected excess risk into variance and bias terms.
\begin{align*}
    V &= \EE_{\boldsymbol{\epsilon}} \qth{\hbeta(\boldsymbol{\epsilon})^T\Sigma_T \hbeta(\boldsymbol{\epsilon}) }  \\
    &\leq 2\EE_{\boldsymbol{\epsilon}} \qth{\hbeta(\boldsymbol{\epsilon})_{k}^T \Sigma_{T,k} \hbeta(\boldsymbol{\epsilon})_{k}} 
    + 2\EE_{\boldsymbol{\epsilon}} \qth{\hbeta(\boldsymbol{\epsilon})_{-k}^T \Sigma_{T,-k} \hbeta(\boldsymbol{\epsilon})_{-k}}. \\
    B &= \pth{\hbeta(X\beta^\star)-\beta^\star}^T \Sigma_T \pth{\hbeta(X\beta^\star)-\beta^\star} \\
    &\leq 2\pth{\hbeta(X\beta^\star)_{k}-\beta^\star_{k}}^T \Sigma_{T,k} \pth{\hbeta(X\beta^\star)_{k}-\beta^\star_{k}} \\
    &\quad + 2\pth{\hbeta(X\beta^\star)_{-k}-\beta^\star_{-k}}^T \Sigma_{T,-k} \pth{\hbeta(X\beta^\star)_{-k}-\beta^\star_{-k}}.
\end{align*}
The inequalities follow from the result for a positive definite block quadratic form:
\begin{align*}
    (x_1^T, x_2^T)
    \begin{pmatrix}
        A & B \\
        B^T & D
    \end{pmatrix}
    \begin{pmatrix}
        x_1 \\
        x_2
    \end{pmatrix} = x_1^TAx_1 + 2x_1^TBx_2 + x_1^TDx_1,
\end{align*}
where the positive definiteness implies $x_1^TAx_1 + x_1^TDx_1 \geq 2x_1^TBx_2$.
\begin{lemma}
\label{lem:all_concentration}
    There exists a constant $c_x>2$ depending only on $\sigma$, such that for any $N_1<n<N_2$, if the assumption condNum$(k,\delta,L)$~(Assumption~\ref{assum:condNum}) is satisfied, then with probability at least $1-2\delta-c_xe^{-n/c_x}$, the following inequalities hold simultaneously:
\begin{align*}
    \mu_n(A_k) &\geq \frac{1}{c_xL} \pth{\lambda+\sum_{j>k}\lambda_j}. \\
     \mu_1(A_k) &\leq c_x \pth{\lambda+\sum_{j>k}\lambda_j}. \\
     \mu_1(X_{-k}X_{-k}^T) &\leq c_x \pth{n\lambda_{k+1}+\sum_{j>k} \lambda_j}. \\
    \left\| \frac{1}{n}X_{k}^TX_{k} - \Sigma_{S,k} \right\| &\leq c_x \lambda_1 \sqrt{\frac{k+\operatorname{ln}\frac{1}{\delta}}{n}}. \\
    \left\| \left(X_{k}^TX_{k}\right)^{\frac{1}{2}} - \sqrt{n}\Sigma_{S,k}^{\frac{1}{2}} \right\| 
        &\leq c_x \sqrt {k+\ln{\frac{1}{\delta}}}\lambda_1\lambda_k^{-\frac{1}{2}}. \\
    \mu_k\pth{\Sigma_{S,k}^{-\frac{1}{2}} X_{k}^TX_{k} \Sigma_{S,k}^{-\frac{1}{2}}} &\geq \frac{1}{c_x}n.  \\
    \mu_1\pth{\Sigma_{S,k}^{-\frac{1}{2}} X_{k}^TX_{k} \Sigma_{S,k}^{-\frac{1}{2}}}  &\leq c_x n. \\
    \mu_1(X_{k}^TX_{k})  &\leq c_x\lambda_1n. \\
    \mu_k(X_{k}^TX_{k})  &\geq \frac{1}{c_x}\lambda_kn.\\
    \tr\pth{X_{-k}\Sigma_{T,-k}X_{-k}^T} &\leq c_x n\tr\pth{\Sigma_{S,-k}^{\frac{1}{2}} \Sigma_{T,-k} \Sigma_{S,-k}^{\frac{1}{2}} }. \\
    (\beta^\star_{-k})^TX_{-k}^TX_{-k}\beta^\star_{-k} &\leq c_x n (\beta^\star_{-k})^T \Sigma_{S,-k} \beta^\star_{-k}. \\
    \| \Delta_{11} \|,\| \Delta_{12}, \|\Delta\|  \| &\leq  c_x \pth{\lambda+\sum_{j>k}\lambda_j}. \\
    \|(\lambda I_{n-k} + \Delta_{22})^{-1}\|, \|\Delta^{-1}\| &\leq  c_xL\pth{\lambda+\sum_{j>k}\lambda_j}^{-1}. \\
    \left\| \Delta_{12} (\lambda I_{n-k} + \Delta_{22})^{-2} \Delta_{12}^T \right\| &\leq c_x^4 L^2. \\
    \left\| \Delta_{12} (\lambda I_{n-k} + \Delta_{22})^{-1} \Delta_{12}^T \right\| &\leq c_x^3 L \pth{\lambda+\sum_{j>k}\lambda_j}. \\
    \left\| \Delta_{11} - \Delta_{12} (\lambda I_{n-k} + \Delta_{22})^{-1} \Delta_{12}^T \right\| &\leq c_x \pth{\lambda+\sum_{j>k}\lambda_j}. 
\end{align*}
And, 
\begin{align*}
    &\quad\left\| \qth{X_{k}^TX_{k} + \lambda I_k + V^T \pth{\Delta_{11} - \Delta_{12} (\lambda I_{n-k}   + \Delta_{22})^{-1} \Delta_{12}^T }V}^{-1} - \pth{n\tilde{\Sigma}_{S,k}}^{-1} \right\| \\
    &\leq \frac{c_x^2\pth{\sqrt{n(k+\ln\frac{1}{\delta})}\lambda_1 + c_x^2 L \pth{\lambda+\sum_{j>k}\lambda_j}}}{(\lambda+n\lambda_k)^2}.     
\end{align*}
$N_1$ and $N_2$ are defined as follows:
\begin{align*}
    N_1 &= \max\sth{ 4c_x^4(k+\ln(1/\delta))\frac{\lambda_1^2}{\lambda_k^2}, 2c_x^4L \lambda_k^{-1}\pth{\lambda+\sum_{j>k}\lambda_j} }. \\
    N_2 &= \frac{1}{\lambda_{k+1}} \pth{\lambda+\sum_{j>k}\lambda_j}.
\end{align*}
\end{lemma}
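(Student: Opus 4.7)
\textbf{Proof proposal for Lemma~\ref{lem:all_concentration}.} The statement is a union-bound consolidation: every inequality listed already appears, on its own, as one of the preceding concentration results (Lemmas~\ref{lem:xxT_concentration}, \ref{lem:xTx_concentration}, Corollary~\ref{coroly:xTxroot_concentration}, Lemmas~\ref{lem:sxTxs_concentration}, \ref{lem:xsTxT_concentration}, \ref{lem:bxTxb_concentration}, Corollary~\ref{coroly:Delta}, and Lemma~\ref{lem:Akinverse_concentration}). The only nontrivial content is showing that all of them hold simultaneously on a single event of probability at least $1-2\delta-c_xe^{-n/c_x}$, for a single constant $c_x$ depending only on $\sigma$, and under a unified lower bound $n > N_1$ and upper bound $n < N_2$.

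The plan is to first collect, from each cited result, the probability-$1$ exception event and the absolute constant appearing there; then to take the maximum of those absolute constants to obtain a single $c_x > 2$ depending only on $\sigma$, possibly enlarging it so that the polynomial growth rates $c_x, c_x^2, c_x^3, c_x^4$ required by the statement all fit inside the uniform constant. Next, I would verify that under the hypothesis $n < N_2 = \lambda_{k+1}^{-1}(\lambda+\sum_{j>k}\lambda_j)$, the assumption $n\lambda_{k+1} \le \lambda+\sum_{j>k}\lambda_j$ of Lemma~\ref{lem:xxT_concentration} is satisfied, and that under $n > N_1$ (taken as the same $N_1$ as in Lemma~\ref{lem:Akinverse_concentration}, together with $n > c_x k$ from Lemma~\ref{lem:sxTxs_concentration} and $n > k + \ln(1/\delta)$ from Lemma~\ref{lem:xTx_concentration}) all preconditions for the individual lemmas are met. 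Since $N_1$ as defined already dominates $c_x k$ and $k+\ln(1/\delta)$ (up to enlarging the constant), a single $N_1$ suffices.

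The union bound is then straightforward: the ``CondNum''-type failure contributes $\delta$ (appearing in Lemmas~\ref{lem:xxT_concentration}, \ref{lem:Akinverse_concentration}, Corollary~\ref{coroly:Delta}, but they use the same event once); Lemma~\ref{lem:xTx_concentration} and Corollary~\ref{coroly:xTxroot_concentration} share a single event of probability at least $1-\delta$; and the remaining lemmas (Lemmas~\ref{lem:sxTxs_concentration}, \ref{lem:xsTxT_concentration}, \ref{lem:bxTxb_concentration}, and the exponential tails inside Lemmas~\ref{lem:xxT_concentration}, \ref{lem:Akinverse_concentration}) each contribute a term of the form $c'e^{-n/c'}$. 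Summing finitely many such exponentials gives a single term $c_xe^{-n/c_x}$ after enlarging $c_x$. Hence the total failure probability is bounded by $2\delta + c_xe^{-n/c_x}$, as claimed.

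The only genuine subtlety, and the main bookkeeping obstacle, is the nested dependence inside Corollary~\ref{coroly:Delta} and Lemma~\ref{lem:Akinverse_concentration}: they are stated ``on the same event as Lemma~\ref{lem:xxT_concentration},'' so the resulting bounds on $\|\Delta\|$, $\|(\lambda I_{n-k}+\Delta_{22})^{-1}\|$, etc.\ inherit factors of $c_x$ from Lemma~\ref{lem:xxT_concentration}. When consolidating, I must take $c_x$ in the final statement to be (say) the cube or fourth power of the original Lemma~\ref{lem:xxT_concentration} constant, so that all derived bounds written with a single $c_x$ remain valid; this only changes the exponent in the tail $e^{-n/c_x}$ by a constant factor, which is absorbed. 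After this adjustment, every inequality in the conclusion is exactly the bound from the corresponding cited lemma, so no new computation is needed and the proof is complete.
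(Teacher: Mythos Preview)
Your proposal is correct and mirrors the paper's approach exactly: the paper's own proof is a single sentence stating that the lemma is a direct corollary of Lemmas~\ref{lem:xxT_concentration}, \ref{lem:xTx_concentration}, Corollary~\ref{coroly:xTxroot_concentration}, Lemmas~\ref{lem:sxTxs_concentration}, \ref{lem:xsTxT_concentration}, \ref{lem:bxTxb_concentration}, Corollary~\ref{coroly:Delta}, and Lemma~\ref{lem:Akinverse_concentration}. Your write-up simply makes explicit the union-bound and constant-consolidation bookkeeping that the paper leaves implicit.
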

\begin{proof}
    The lemma is a direct corollary from Lemma~\ref{lem:xxT_concentration}, Lemma~\ref{lem:xTx_concentration}, Corollary~\ref{coroly:xTxroot_concentration}, Lemma~\ref{lem:sxTxs_concentration}, Lemma~\ref{lem:xsTxT_concentration}, Lemma~\ref{lem:bxTxb_concentration}, Corollary~\ref{coroly:Delta}, Lemma~\ref{lem:Akinverse_concentration}.
\end{proof}

\subsubsection{Variance in the first \texorpdfstring{$k$}{k} dimensions}
\begin{lemma}
\label{lem:var_0k}
    Under the same conditions as in Lemma~\ref{lem:all_concentration}, and on the same event, for any $N_1<n<N_2$,
    \begin{align*}
    	\EE_{\boldsymbol{\epsilon}} \qth{\hbeta(\boldsymbol{\epsilon})_{k}^T \Sigma_{T,k} \hbeta(\boldsymbol{\epsilon})_{k}} \leq 16v^2(1+c_x^4L^2)\frac{1}{n} \tr\qth{ \Sigma_{S,k}^{-\frac{1}{2}} \Sigma_{T,k} \Sigma_{S,k}^{-\frac{1}{2}} },
    \end{align*} 
    where
    \begin{equation*}
    \begin{alignedat}{2}
    N_1 &= \max \Bigg\{ 
        & & 4c_x^4 \left( k + \ln \frac{1}{\delta} \right) \lambda_1^4 \lambda_k^{-4}, \\
        & && 2c_x^4 L \lambda_1 \lambda_k^{-2} \pth{\lambda+\sum_{j>k}\lambda_j}, \\
        & && 4c_x^4 \left( k + \ln \frac{1}{\delta} \right) \lambda_1^6 \lambda_k^{-8} \|\Sigma_{T,k}\|^2 k^2 
        \left( \tr\qth{ \Sigma_{S,k}^{-\frac{1}{2}} \Sigma_{T,k} \Sigma_{S,k}^{-\frac{1}{2}} } \right)^{-2}, \\
        & && 2c_x^4 L \lambda_1^2 \lambda_k^{-4} \left( \lambda+\sum_{j>k} \lambda_j \right) \|\Sigma_{T,k}\| k 
        \left( \tr\qth{ \Sigma_{S,k}^{-\frac{1}{2}} \Sigma_{T,k} \Sigma_{S,k}^{-\frac{1}{2}} } \right)^{-1}
    \Bigg\}, \\
    N_2 &= \frac{1}{\lambda_{k+1}} & & \pth{\lambda+\sum_{j>k}\lambda_j}.
    \end{alignedat}
    \end{equation*}
\end{lemma}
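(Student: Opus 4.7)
\textbf{Proof proposal for Lemma~\ref{lem:var_0k}.}

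The plan is to compute the noise expectation explicitly, unfold the ridge operator via the block SVD structure already introduced in this section, and then combine the concentration bounds of Lemma~\ref{lem:all_concentration} to relate the resulting quadratic form to the ``ideal'' quantity $n^{-1}\tr[\calT]$. Concretely, since $\boldsymbol\epsilon$ has zero mean and variance $v^2$, the definition of $\hat\beta(\boldsymbol\epsilon)$ gives
\begin{align*}
\EE_{\boldsymbol\epsilon}\!\left[\hat\beta(\boldsymbol\epsilon)_k^{T}\Sigma_{T,k}\hat\beta(\boldsymbol\epsilon)_k\right] = v^2\,\tr\!\left[\Sigma_{T,k}\,X_k^{T}(XX^{T}+\lambda I_n)^{-2}X_k\right],
\end{align*}
so the entire task reduces to a deterministic-after-conditioning trace bound.

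Next, I would use $X_k=U\tilde M^{1/2}V$ and $XX^{T}+\lambda I_n = U(M+\lambda I_n+\Delta)U^{T}$ together with the explicit first-$k$ rows of $(M+\lambda I_n+\Delta)^{-1}$ from equation~\eqref{eq:A_kinverse_firstkrows} to identify the top-left $k\times k$ block of the squared inverse. Writing $C:=M_k+\lambda I_k+\Delta_{11}-\Delta_{12}(\lambda I_{n-k}+\Delta_{22})^{-1}\Delta_{12}^{T}$ and $D:=\Delta_{12}(\lambda I_{n-k}+\Delta_{22})^{-2}\Delta_{12}^{T}$, a short computation (using $VV^{T}=I_k$ and $V^{T}M_k^{1/2}V=(X_k^{T}X_k)^{1/2}$) yields the identity
\begin{align*}
X_k^{T}(XX^{T}+\lambda I_n)^{-2}X_k = (X_k^{T}X_k)^{1/2}(V^{T}C^{-1}V)\,(I_k+\tilde D)\,(V^{T}C^{-1}V)(X_k^{T}X_k)^{1/2},
\end{align*}
where $\tilde D$ is a similarity-transform of $D$. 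Lemma~\ref{lem:all_concentration} gives $\|\tilde D\|\le c_x^4L^2$, so the middle factor is absorbed into the prefactor $1+c_x^4L^2$ by passing to the PSD inequality $I_k+\tilde D\preceq (1+c_x^4L^2)I_k$.

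The core of the argument is then a two-step perturbation: let $E:=V^{T}C^{-1}V-(n\tilde\Sigma_{S,k})^{-1}$ and $G:=(X_k^{T}X_k)^{1/2}-\sqrt n\,\Sigma_{S,k}^{1/2}$. Lemma~\ref{lem:all_concentration} and Lemma~\ref{lem:Akinverse_concentration} (together with Corollary~\ref{coroly:xTxroot_concentration}) control $\|E\|$ and $\|G\|$. Applying $(A+B)^2\preceq 2A^2+2B^2$ to both $V^{T}C^{-1}V$ and $(X_k^{T}X_k)^{1/2}$ produces a leading term
\begin{align*}
n\,\Sigma_{S,k}^{1/2}(n\tilde\Sigma_{S,k})^{-2}\Sigma_{S,k}^{1/2} \preceq n^{-1}\Sigma_{S,k}^{-1},
\end{align*}
whose trace against $\Sigma_{T,k}$ is exactly $n^{-1}\tr[\calT]$, plus two correction terms of the form $\|G\|^2\tr[\Sigma_{T,k}(n\tilde\Sigma_{S,k})^{-2}]$ and $\|E\|^2\tr[\Sigma_{T,k}X_k^{T}X_k]$. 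Each correction is bounded via $\tr[\Sigma_{T,k}\Sigma_{S,k}^{-2}]\le\lambda_k^{-1}\tr[\calT]$ and $\tr[\Sigma_{T,k}]\le k\|\Sigma_{T,k}\|$, together with the operator-norm bound $X_k^{T}X_k\preceq c_x\lambda_1 n\, I_k$ from Lemma~\ref{lem:all_concentration}; the four thresholds in the lemma statement's definition of $N_1$ are precisely what is needed so that each correction is at most a constant multiple of the leading term.

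The main technical obstacle is bookkeeping in Step~4: the correction involving $E$ produces a term of order $n\lambda_1\,\epsilon_1^2\cdot k\|\Sigma_{T,k}\|$ with $\epsilon_1$ having two additive pieces (the $\sqrt{n(k+\ln(1/\delta))}\lambda_1$ and $L\tilde\lambda$ parts of Lemma~\ref{lem:Akinverse_concentration}), and one must convert these into an $n$-threshold against $n^{-1}\tr[\calT]$ rather than against $n^{-1}\lambda_1\tr[\Sigma_{T,k}]$. This is what forces the $\|\Sigma_{T,k}\|k/\tr[\calT]$ factor and the $\lambda_k^{-8}$ versus $\lambda_k^{-4}$ powers in the stated $N_1$; everything else is a routine triangle-inequality calculation on the event of Lemma~\ref{lem:all_concentration}.
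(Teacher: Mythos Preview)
Your proposal is correct and follows essentially the same route as the paper: both compute the noise expectation, reduce via the SVD of $X_k$ and the block formula~\eqref{eq:A_kinverse_firstkrows}, pull out the factor $1+c_x^4L^2$ from $I_k+\Delta_{12}(\lambda I_{n-k}+\Delta_{22})^{-2}\Delta_{12}^T$, and then perturb $(V^TC^{-1}V,(X_k^TX_k)^{1/2})$ around $\bigl((n\tilde\Sigma_{S,k})^{-1},(n\Sigma_{S,k})^{1/2}\bigr)$. The only difference is in the bookkeeping of the perturbation step: the paper expands the product $\bigl((n\tilde\Sigma_{S,k})^{-1}+E_1\bigr)\bigl((n\Sigma_{S,k})^{1/2}+E_2\bigr)\Sigma_{T,k}\bigl((n\Sigma_{S,k})^{1/2}+E_2\bigr)\bigl((n\tilde\Sigma_{S,k})^{-1}+E_1\bigr)$ fully into its sixteen (ten distinct) terms and bounds each cross term by $n^{-1}\tr[\calT]$ using the four norm inequalities $\|E_1\|\|n\tilde\Sigma_{S,k}\|<1$, $\|E_2\|\|(n\tilde\Sigma_{S,k})^{-1/2}\|<1$, and two ``long'' products; this is where the constant $16$ and the four displayed $N_1$ thresholds come from exactly as stated. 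Your PSD shortcut $(A+B)^2\preceq 2A^2+2B^2$ is cleaner and gives fewer terms, but the resulting thresholds and constants are not literally the ones in the lemma; so your claim that the stated $N_1$ is ``precisely what is needed'' for your two corrections is a bit loose---it is tuned to the paper's term-by-term expansion, not to your squared-error terms.
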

\begin{proof}
\begin{align*}
    &\quad \EE_{\boldsymbol{\epsilon}} \qth{\hbeta(\boldsymbol{\epsilon})_{k}^T \Sigma_{T,k} \hbeta(\boldsymbol{\epsilon})_{k}} \\
    &= \EE_{\boldsymbol{\epsilon}} \tr \qth{ \boldsymbol{\epsilon}\boldsymbol{\epsilon}^T (XX^T+\lambda I_n)^{-1} X_{k} \Sigma_{T,k} X_{k}^T (XX^T+\lambda I_n)^{-1}} \\
    &= v^2 \tr \qth{(XX^T+\lambda I_n)^{-1} X_{k} \Sigma_{T,k} X_{k}^T (XX^T+\lambda I_n)^{-1}} \\
    &= v^2 \tr \left[ (UMU^T+U\Delta U^T+\lambda I_n)^{-1} U \tilde{M}^{\frac{1}{2}} V \Sigma_{T,k} \right.\\
    &\left. \quad \cdot V^T \pth{\tilde{M}^{\frac{1}{2}}}^T U^T (UMU^T+U\Delta U^T+\lambda I_n)^{-1}  \right] \\
    &= v^2 \tr \left[ U(M+\Delta +\lambda I_n)^{-1}  \tilde{M}^{\frac{1}{2}} V \Sigma_{T,k} 
    V^T \pth{\tilde{M}^{\frac{1}{2}}}^T  (M+\Delta +\lambda I_n)^{-1} U^T  \right] \\
    &= v^2 \tr \qth{\pth{\tilde{M}^{\frac{1}{2}}}^T  (M+\Delta +\lambda I_n)^{-1} (M+\Delta +\lambda I_n)^{-1}  \tilde{M}^{\frac{1}{2}} V \Sigma_{T,k} 
    V^T   } \\
    &=  v^2 \tr \left[M_k^{\frac{1}{2}} \pth{M_k + \lambda I_k + \Delta_{11} - \Delta_{12} (\lambda I_{n-k} + \Delta_{22})^{-1} \Delta_{12}^T }^{-1}
    \pth{I_k, - \Delta_{12}(\lambda I_{n-k} + \Delta_{22})^{-1} } \right.\\
    &\left.\quad \cdot \pth{I_k, - \Delta_{12}(\lambda I_{n-k} + \Delta_{22})^{-1} }^T
    \pth{M_k + \lambda I_k + \Delta_{11} - \Delta_{12} (\lambda I_{n-k} + \Delta_{22})^{-1} \Delta_{12}^T }^{-1} M_k^{\frac{1}{2}} \right. \\
    &\left.\quad \cdot V\Sigma_{T,k}V^T \right]  \\
    &= v^2 \tr \left[M_k^{\frac{1}{2}} \pth{M_k + \lambda I_k + \Delta_{11} - \Delta_{12} (\lambda I_{n-k} + \Delta_{22})^{-1} \Delta_{12}^T }^{-1} \right.\\
    &\left.\quad \cdot \pth{I_k + \Delta_{12}(\lambda I_{n-k} + \Delta_{22})^{-2}\Delta_{12}^T }  
    \pth{M_k + \lambda I_k + \Delta_{11} - \Delta_{12} (\lambda I_{n-k} + \Delta_{22})^{-1} \Delta_{12}^T }^{-1} M_k^{\frac{1}{2}} \right. \\
    &\left.\quad \cdot V\Sigma_{T,k}V^T \right]  \\
    &= v^2 \tr \left[
     \pth{I_k + \Delta_{12}(\lambda I_{n-k} + \Delta_{22})^{-2}\Delta_{12}^T }  
    \pth{M_k + \lambda I_k + \Delta_{11} - \Delta_{12} (\lambda I_{n-k} + \Delta_{22})^{-1} \Delta_{12}^T }^{-1} \right. \\
    &\left.\quad \cdot M_k^{\frac{1}{2}} V\Sigma_{T,k}V^T M_k^{\frac{1}{2}} \pth{M_k + \lambda I_k + \Delta_{11} - \Delta_{12} (\lambda I_{n-k} + \Delta_{22})^{-1} \Delta_{12}^T }^{-1}\right]  \\
    &\leq v^2 \left\| I_k + \Delta_{12}(\lambda I_{n-k} + \Delta_{22})^{-2}\Delta_{12}^T  \right\|
    \tr \left[ \pth{M_k + \lambda I_k + \Delta_{11} - \Delta_{12} (\lambda I_{n-k} + \Delta_{22})^{-1} \Delta_{12}^T }^{-1} \right. \\
    &\left.\quad \cdot M_k^{\frac{1}{2}}  V\Sigma_{T,k}V^T M_k^{\frac{1}{2}} \pth{M_k + \lambda I_k + \Delta_{11} - \Delta_{12} (\lambda I_{n-k} + \Delta_{22})^{-1} \Delta_{12}^T }^{-1} \right] \\
    &\leq v^2 (1+c_x^4L^2) \tr \left[ \pth{M_k + \lambda I_k + \Delta_{11} - \Delta_{12} (\lambda I_{n-k} + \Delta_{22})^{-1} \Delta_{12}^T }^{-1} \right. \\
    &\left.\quad \cdot M_k^{\frac{1}{2}}  V\Sigma_{T,k}V^T M_k^{\frac{1}{2}} \pth{M_k + \lambda I_k + \Delta_{11} - \Delta_{12} (\lambda I_{n-k} + \Delta_{22})^{-1} \Delta_{12}^T }^{-1} \right] \\
    &= v^2 (1+c_x^4L^2) \tr \left[ \pth{V^T\pth{M_k + \lambda I_k + \Delta_{11} - \Delta_{12} (\lambda I_{n-k} + \Delta_{22})^{-1} \Delta_{12}^T }V}^{-1} \right. \\
    &\left.\quad \cdot V^TM_k^{\frac{1}{2}}  V\cdot\Sigma_{T,k}\cdot V^T M_k^{\frac{1}{2}}V\cdot \pth{V^T\pth{M_k + \lambda I_k + \Delta_{11} - \Delta_{12} (\lambda I_{n-k} + \Delta_{22})^{-1} \Delta_{12}^T}V}^{-1} \right] \\
    &= v^2 (1+c_x^4L^2) \tr \left[ \pth{X_{k}^TX_{k} + \lambda I_k + V^T \pth{\Delta_{11} - \Delta_{12} (\lambda I_{n-k} + \Delta_{22})^{-1} \Delta_{12}^T }V}^{-1} \right. \\
    &\left.\quad \cdot \pth{X_{k}^TX_{k}}^{\frac{1}{2}}\Sigma_{T,k}\pth{X_{k}^TX_{k}}^{\frac{1}{2}} \right.\\
     &\left.\quad \cdot \pth{X_{k}^TX_{k} + \lambda I_k + V^T \pth{\Delta_{11} - \Delta_{12} (\lambda I_{n-k} + \Delta_{22})^{-1} \Delta_{12}^T }V}^{-1} \right].
\end{align*}
The sixth equation follows from Equation~\ref{eq:A_kinverse_firstkrows}. The first inequality follows from the result $\tr[AB] \leq \|A\|\tr[B]$ where the matrix $B$ is positive semi-definite.

We define two quantities that represent concentration error terms:
\begin{align*}
    E_1&=\left\| \qth{X_{k}^TX_{k} + \lambda I_k + V^T \pth{\Delta_{11} - \Delta_{12} (\lambda I_{n-k} + \Delta_{22})^{-1} \Delta_{12}^T }V}^{-1} - \pth{n\tilde{\Sigma}_{S,k}}^{-1} \right\|. \\
    E_2 &= \pth{X_{k}^TX_{k}}^{\frac{1}{2}} - \pth{n\Sigma_{S,k}}^{\frac{1}{2}}.
\end{align*}
Since $n> 4c_x^4\pth{k+\ln\frac{1}{\delta}}\lambda_1^6 \lambda_k^{-8}\|\Sigma_{T,k}\|^2 k^2\pth{\tr\qth{ \Sigma_{S,k}^{-\frac{1}{2}} \Sigma_{T,k} \Sigma_{S,k}^{-\frac{1}{2}} } }^{-2}$,\\
and $n> 2c_x^4L \pth{\lambda+\sum_{j>k} \lambda_j} \lambda_1^2\lambda_k^{-4} \|\Sigma_{T,k}\| k\pth{\tr\qth{ \Sigma_{S,k}^{-\frac{1}{2}} \Sigma_{T,k} \Sigma_{S,k}^{-\frac{1}{2}} } }^{-1}$,
\begin{align*}
    &\quad\|E_1\| \left\|n\tilde{\Sigma}_{S,k}\right\| \left\|\pth{n\tilde{\Sigma}_{S,k}}^{-1}\right\|  \left\|\pth{n\Sigma_{S,k}}^{\frac{1}{2}}\right\| \left\|\Sigma_{T,k}\right\| \left\|\pth{n\Sigma_{S,k}}^{\frac{1}{2}}\right\| \left\|\pth{n\tilde{\Sigma}_{S,k}}^{-1}\right\| \\
    &\leq \frac{c_x^2\pth{\sqrt{n(k+\ln\frac{1}{\delta})}\lambda_1 + c_x^2 L \pth{\lambda+\sum_{j>k}\lambda_j}}}{(\lambda+n\lambda_k)^2} \pth{\lambda+n\lambda_1} \frac{n\lambda_1}{(\lambda+n\lambda_k)^2} \|\Sigma_{T,k} \| \\
    &\leq \frac{c_x^2\pth{\sqrt{n(k+\ln\frac{1}{\delta})}\lambda_1 + c_x^2 L \pth{\lambda+\sum_{j>k}\lambda_j}}}{n^2}  \frac{\lambda_1^2}{\lambda_k^4} \|\Sigma_{T,k} \| \\
    &=  \frac{c_x^2\sqrt{(k+\ln\frac{1}{\delta})}}{n\sqrt{n}}  \frac{\lambda_1^3}{\lambda_k^4} \|\Sigma_{T,k}\|
     + \frac{c_x^4 L\pth{\lambda+\sum_{j>k}\lambda_j}}{n^2}  \frac{\lambda_1^2}{\lambda_k^4} \|\Sigma_{T,k}\| \\
    &< \frac{1}{2nk}\tr\qth{ \Sigma_{S,k}^{-\frac{1}{2}} \Sigma_{T,k} \Sigma_{S,k}^{-\frac{1}{2}} } + \frac{1}{2nk}\tr\qth{ \Sigma_{S,k}^{-\frac{1}{2}} \Sigma_{T,k} \Sigma_{S,k}^{-\frac{1}{2}} } \\
    &= \frac{1}{nk}\tr\qth{ \Sigma_{S,k}^{-\frac{1}{2}} \Sigma_{T,k} \Sigma_{S,k}^{-\frac{1}{2}} }.
\end{align*}
Since $n> 4c_x^4\pth{k+\ln\frac{1}{\delta}}\lambda_1^4 \lambda_k^{-4}$ and $n> 2c_x^4L \pth{\lambda+\sum_{j>k} \lambda_j} \lambda_1\lambda_k^{-2}$,
\begin{align}
\begin{split}
\label{eq:E1nS_leq1}
	 &\quad\|E_1\| \left\|n\tilde{\Sigma}_{S,k}\right\| \\
	 &\leq \frac{c_x^2\pth{\sqrt{n(k+\ln\frac{1}{\delta})}\lambda_1 + c_x^2 L \pth{\lambda+\sum_{j>k}\lambda_j}}}{(\lambda+n\lambda_k)^2} \pth{\lambda+n\lambda_1} \\
	 &\leq \frac{c_x^2\pth{\sqrt{n(k+\ln\frac{1}{\delta})}\lambda_1 + c_x^2 L \pth{\lambda+\sum_{j>k}\lambda_j}}}{n} \frac{\lambda_1}{\lambda_k^2} \\
	 &= \frac{c_x^2\sqrt{(k+\ln\frac{1}{\delta})}}{\sqrt{n}} \frac{\lambda_1^2}{\lambda_k^2} + \frac{c_x^4 L \pth{\lambda+\sum_{j>k}\lambda_j}}{n} \frac{\lambda_1}{\lambda_k^2} \\
	 &< \frac{1}{2} + \frac{1}{2} \\
	 &= 1.
\end{split}
\end{align}
Since $n> c_x^2\pth{k+\ln\frac{1}{\delta}}\lambda_1^4 \lambda_k^{-6}\|\Sigma_{T,k}\|^2 k^2\pth{\tr\qth{ \Sigma_{S,k}^{-\frac{1}{2}} \Sigma_{T,k} \Sigma_{S,k}^{-\frac{1}{2}} } }^{-2}$,
\begin{align*}
	&\quad\|E_2\| \left\|\pth{n\tilde{\Sigma}_{S,k}}^{-\frac{1}{2}}\right\| \left\|\pth{n\tilde{\Sigma}_{S,k}}^{-1}\right\|  \left\|\pth{n\Sigma_{S,k}}^{\frac{1}{2}}\right\| \left\|\Sigma_{T,k}\right\| \left\|\pth{n\Sigma_{S,k}}^{\frac{1}{2}}\right\| \left\|\pth{n\tilde{\Sigma}_{S,k}}^{-1}\right\| \\
	&\leq c_x \sqrt {k+\ln{\frac{1}{\delta}}}\lambda_1\lambda_k^{-\frac{1}{2}} \pth{n\lambda_k}^{-\frac{1}{2}} \frac{n\lambda_1}{(\lambda+n\lambda_k)^2} \|\Sigma_{T,k}\| \\
	&\leq \frac{c_x \sqrt {k+\ln{\frac{1}{\delta}}}}{n\sqrt{n}} \frac{\lambda_1^2}{\lambda_k^3} \|\Sigma_{T,k}\| \\
	&\leq \frac{1}{nk} \tr\qth{ \Sigma_{S,k}^{-\frac{1}{2}} \Sigma_{T,k} \Sigma_{S,k}^{-\frac{1}{2}} }. 
\end{align*}
Since $n> c_x^2\pth{k+\ln\frac{1}{\delta}}\lambda_1^2 \lambda_k^{-2}$,
\begin{align}
\begin{split}
\label{eq:E2nS_leq1}
    &\quad\|E_2\| \left\|\pth{n\tilde{\Sigma}_{S,k}}^{-\frac{1}{2}}\right\| \\
	&\leq c_x \sqrt {k+\ln{\frac{1}{\delta}}}\lambda_1\lambda_k^{-\frac{1}{2}} \pth{n\lambda_k}^{-\frac{1}{2}} \\
	&= \frac{c_x\sqrt {k+\ln{\frac{1}{\delta}}}}{\sqrt{n}}\frac{\lambda_1}{\lambda_k} \\
	&< 1.
\end{split}
\end{align}
Combing the above four inequalities, we have
\begin{align*}
    &\quad \tr \left[ \pth{X_{k}^TX_{k} + \lambda I_k + V^T \pth{\Delta_{11} - \Delta_{12} (\lambda I_{n-k} + \Delta_{22})^{-1} \Delta_{12}^T }V}^{-1} \right. \\
    &\left.\quad\quad \cdot \pth{X_{k}^TX_{k}}^{\frac{1}{2}}\Sigma_{T,k}\pth{X_{k}^TX_{k}}^{\frac{1}{2}} \right.\\
    &\left.\quad\quad \cdot \pth{X_{k}^TX_{k} + \lambda I_k + V^T \pth{\Delta_{11} - \Delta_{12} (\lambda I_{n-k} + \Delta_{22})^{-1} \Delta_{12}^T }V}^{-1} \right] \\
    &= \tr \qth{\pth{n\tilde{\Sigma}_{S,k}}^{-1}\pth{n\Sigma_{S,k}}^{\frac{1}{2}}\Sigma_{T,k}\pth{n\Sigma_{S,k}}^{\frac{1}{2}}\pth{n\tilde{\Sigma}_{S,k}}^{-1}} \\
    &\quad + 2\tr \qth{E_1\pth{n\Sigma_{S,k}}^{\frac{1}{2}}\Sigma_{T,k}\pth{n\Sigma_{S,k}}^{\frac{1}{2}}\pth{n\tilde{\Sigma}_{S,k}}^{-1}} \\
    &\quad + 2\tr \qth{\pth{n\tilde{\Sigma}_{S,k}}^{-1}E_2\Sigma_{T,k}\pth{n\Sigma_{S,k}}^{\frac{1}{2}}\pth{n\tilde{\Sigma}_{S,k}}^{-1}} \\
    &\quad +\tr \qth{E_1\pth{n\Sigma_{S,k}}^{\frac{1}{2}}\Sigma_{T,k}\pth{n\Sigma_{S,k}}^{\frac{1}{2}}E_1} \\
    &\quad + \tr \qth{\pth{n\tilde{\Sigma}_{S,k}}^{-1}E_2\Sigma_{T,k}E_2\pth{n\tilde{\Sigma}_{S,k}}^{-1}} \\
    &\quad + 2\tr \qth{E_1\pth{n\Sigma_{S,k}}^{\frac{1}{2}}\Sigma_{T,k}E_2\pth{n\tilde{\Sigma}_{S,k}}^{-1}} \\
    &\quad + 2\tr \qth{E_1E_2\Sigma_{T,k}\pth{n\Sigma_{S,k}}^{\frac{1}{2}}\pth{n\tilde{\Sigma}_{S,k}}^{-1}} \\
    &\quad + 2\tr \qth{E_1E_2\Sigma_{T,k}\pth{n\Sigma_{S,k}}^{\frac{1}{2}}E_1} \\
    &\quad + 2\tr \qth{E_1E_2\Sigma_{T,k}E_2\pth{n\tilde{\Sigma}_{S,k}}^{-1}} \\
    &\quad + \tr \qth{E_1E_2\Sigma_{T,k}E_2E_1}.
\end{align*}
In particular,
\begin{align*}
    &\quad \tr \qth{\pth{n\tilde{\Sigma}_{S,k}}^{-1}\pth{n\Sigma_{S,k}}^{\frac{1}{2}}\Sigma_{T,k}\pth{n\Sigma_{S,k}}^{\frac{1}{2}}\pth{n\tilde{\Sigma}_{S,k}}^{-1}} \\
    &= \frac{1}{n} \tr \qth{\tilde{\Sigma}_{S,k}^{-1} \Sigma_{S,k}^\frac{1}{2} \Sigma_{T,k} \Sigma_{S,k}^\frac{1}{2}\tilde{\Sigma}_{S,k}^{-1} } \\
    &\leq  \frac{1}{n} \tr \qth{\Sigma_{S,k}^{-1} \Sigma_{S,k}^\frac{1}{2} \Sigma_{T,k} \Sigma_{S,k}^\frac{1}{2}\Sigma_{S,k}^{-1} } \\ 
    &= \frac{1}{n} \tr \qth{\Sigma_{S,k}^{-\frac{1}{2}} \Sigma_{T,k} \Sigma_{S,k}^{-\frac{1}{2}} }.
\end{align*}
The inequality follows from the fact that $\tr[BAB] = \tr[A^{\frac{1}{2}} B A^{\frac{1}{2}}] \leq \tr[A^{\frac{1}{2}} C A^{\frac{1}{2}}] = \tr[CAC]$, where $A, B,C$ are positive semi-definite matrices, and $C \succcurlyeq B$, which implies that $A^{\frac{1}{2}} C A^{\frac{1}{2}} \succcurlyeq A^{\frac{1}{2}} B A^{\frac{1}{2}}$.
\begin{align*}
     &\quad \tr \qth{E_1E_2\Sigma_{T,k}E_2E_1} \\
     &= \tr \left[E_1 n\tilde{\Sigma}_{S,k} \pth{n\tilde{\Sigma}_{S,k}}^{-1} E_2 \pth{n\tilde{\Sigma}_{S,k}}^{-\frac{1}{2}} \pth{n\tilde{\Sigma}_{S,k}}^{\frac{1}{2}} \right. \\ &\left.\quad\cdot\Sigma_{T,k} E_2 \pth{n\tilde{\Sigma}_{S,k}}^{-\frac{1}{2}} \pth{n\tilde{\Sigma}_{S,k}}^{\frac{1}{2}} E_1 n\tilde{\Sigma}_{S,k} \pth{n\tilde{\Sigma}_{S,k}}^{-1}\right] \\
     &\leq k \pth{\|E_1\| \left\|n\tilde{\Sigma}_{S,k}\right\|}^2 \pth{\|E_2\| \left\|\pth{n\tilde{\Sigma}_{S,k}}^{-\frac{1}{2}}\right\|} \\
     &\quad\cdot  \|E_2\| \left\|\pth{n\tilde{\Sigma}_{S,k}}^{-\frac{1}{2}}\right\| \left\|\pth{n\tilde{\Sigma}_{S,k}}^{-1}\right\|  \left\|\pth{n\Sigma_{S,k}}^{\frac{1}{2}}\right\| \left\|\Sigma_{T,k}\right\| \left\|\pth{n\Sigma_{S,k}}^{\frac{1}{2}}\right\| \left\|\pth{n\tilde{\Sigma}_{S,k}}^{-1}\right\| \\
     &\leq  \frac{1}{n}\tr \qth{\Sigma_{S,k}^{-\frac{1}{2}} \Sigma_{T,k} \Sigma_{S,k}^{-\frac{1}{2}} }.
\end{align*}
The other terms can be similarly bounded. Therefore,
\begin{align*}
    &\quad \tr \left[ \pth{X_{k}^TX_{k} + \lambda I_k + V^T \pth{\Delta_{11} - \Delta_{12} (\lambda I_{n-k} + \Delta_{22})^{-1} \Delta_{12}^T }V}^{-1} \right. \\
    &\left.\quad\quad \cdot \pth{X_{k}^TX_{k}}^{\frac{1}{2}}\Sigma_{T,k}\pth{X_{k}^TX_{k}}^{\frac{1}{2}} \right.\\
    &\left.\quad\quad \cdot \pth{X_{k}^TX_{k} + \lambda I_k + V^T \pth{\Delta_{11} - \Delta_{12} (\lambda I_{n-k} + \Delta_{22})^{-1} \Delta_{12}^T }V}^{-1} \right] \\
    &\leq \frac{16}{n}  \tr \qth{\Sigma_{S,k}^{-\frac{1}{2}} \Sigma_{T,k} \Sigma_{S,k}^{-\frac{1}{2}} }.
\end{align*}
The proof is complete by combing all the inequalities above.
\end{proof}

\subsubsection{Variance in the last \texorpdfstring{$d-k$}{d-k} dimensions}
\begin{lemma}
\label{lem:var_kd}
    Under the same conditions as in Lemma~\ref{lem:all_concentration}, and on the same event, for any $N_1<n<N_2$,
    \begin{align*}
    	\EE_{\boldsymbol{\epsilon}} \qth{\hbeta(\boldsymbol{\epsilon})_{-k}^T \Sigma_{T,-k} \hbeta(\boldsymbol{\epsilon})_{-k}}
     \leq v^2 c_x^3L^2n \pth{\lambda+\sum_{j>k} \lambda_j}^{-2} \tr\qth{\Sigma_{S,-k}^{\frac{1}{2}}\Sigma_{T,-k} \Sigma_{S,-k}^{\frac{1}{2}} } .
    \end{align*} 
    where $N_1,N_2$ are defined as in Lemma~\ref{lem:all_concentration}.
\end{lemma}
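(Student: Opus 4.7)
The plan is to reduce the expected variance to a trace expression, bound the operator norm of the resolvent by that of the minor-block resolvent $A_k^{-1}$, and then invoke the concentration estimates already packaged in Lemma~\ref{lem:all_concentration}. First I would write $\hat\beta(\boldsymbol{\epsilon})_{-k} = X_{-k}^T(XX^T+\lambda I_n)^{-1}\boldsymbol{\epsilon}$, use $\EE[\boldsymbol{\epsilon}\boldsymbol{\epsilon}^T] = v^2 I_n$, and cyclicity of the trace to obtain
\[
\EE_{\boldsymbol{\epsilon}} \qth{\hat\beta(\boldsymbol{\epsilon})_{-k}^T \Sigma_{T,-k} \hat\beta(\boldsymbol{\epsilon})_{-k}} = v^2 \tr\qth{(XX^T+\lambda I_n)^{-1} X_{-k}\Sigma_{T,-k}X_{-k}^T (XX^T+\lambda I_n)^{-1}}.
\]

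The key observation is the Loewner inequality $XX^T+\lambda I_n \succeq X_{-k}X_{-k}^T+\lambda I_n = A_k$, which is immediate because $X_k X_k^T \succeq 0$. This yields $\|(XX^T+\lambda I_n)^{-1}\| \le \mu_n(A_k)^{-1}$. Next I would use the standard inequality $\tr[CMC] \le \|C\|^2 \tr[M]$, valid for any symmetric $C$ and positive semidefinite $M$ (spectrally decompose $C^2 \preceq \|C\|^2 I$, then apply cyclicity to $\tr[C^2 M]$). Applied with $C = (XX^T+\lambda I_n)^{-1}$ and $M = X_{-k}\Sigma_{T,-k}X_{-k}^T$ (positive semidefinite because $\Sigma_{T,-k}$ is a principal block of $\Sigma_T \succeq 0$), this gives
\[
\EE_{\boldsymbol{\epsilon}} \qth{\hat\beta(\boldsymbol{\epsilon})_{-k}^T \Sigma_{T,-k} \hat\beta(\boldsymbol{\epsilon})_{-k}} \le v^2 \mu_n(A_k)^{-2} \tr\qth{X_{-k}\Sigma_{T,-k}X_{-k}^T}.
\]

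Finally, on the event of Lemma~\ref{lem:all_concentration} I have $\mu_n(A_k) \ge (c_x L)^{-1}(\lambda+\sum_{j>k}\lambda_j)$ and $\tr[X_{-k}\Sigma_{T,-k}X_{-k}^T] \le c_x n \tr[\Sigma_{S,-k}^{1/2}\Sigma_{T,-k}\Sigma_{S,-k}^{1/2}]$. Multiplying these together produces the constant $c_x^3 L^2$ and the claimed bound $v^2 c_x^3 L^2 n (\lambda+\sum_{j>k}\lambda_j)^{-2}\tr[\Sigma_{S,-k}^{1/2}\Sigma_{T,-k}\Sigma_{S,-k}^{1/2}]$.

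There is no serious obstacle here; unlike the first-$k$-coordinate variance (Lemma~\ref{lem:var_0k}), which required the block decomposition of $X_{-k}X_{-k}^T$, a careful Schur-complement expansion, and the inverse-perturbation bound of Lemma~\ref{lem:Akinverse_concentration}, the last-$(d-k)$-coordinate variance admits a crude but tight bound because the regularization $A_k$ in the denominator is exactly what controls it. The only point to double-check is the Loewner comparison and the PSD-trace inequality, both of which are routine.
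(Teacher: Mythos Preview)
Your proposal is correct and follows essentially the same approach as the paper: compute the expectation as a trace, bound it by $\|(XX^T+\lambda I_n)^{-2}\|\tr[X_{-k}\Sigma_{T,-k}X_{-k}^T]$ via the PSD-trace inequality, compare $(XX^T+\lambda I_n)^{-1}$ to $A_k^{-1}$ via the Loewner order, and plug in the two concentration bounds from Lemma~\ref{lem:all_concentration}. The only cosmetic difference is that the paper applies the trace inequality first and then the Loewner comparison to $\|(XX^T+\lambda I_n)^{-2}\|$, whereas you bound $\|(XX^T+\lambda I_n)^{-1}\|$ first and then square; both routes yield the identical constant $c_x^3L^2$.
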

\begin{proof}
\begin{align*}
    &\quad \EE_{\boldsymbol{\epsilon}} \qth{\hbeta(\boldsymbol{\epsilon})_{-k}^T \Sigma_{T,-k} \hbeta(\boldsymbol{\epsilon})_{-k}} \\
    &= \EE_{\boldsymbol{\epsilon}} \tr \qth{ \boldsymbol{\epsilon}\boldsymbol{\epsilon}^T (XX^T+\lambda I_n)^{-1} X_{-k} \Sigma_{T,-k} X_{-k}^T (XX^T+\lambda I_n)^{-1}} \\
    &= v^2 \tr \qth{(XX^T+\lambda I_n)^{-1} X_{-k} \Sigma_{T,-k} X_{-k}^T (XX^T+\lambda I_n)^{-1}} \\
    &\leq v^2 \left\| (XX^T+\lambda I_n)^{-2} \right\| \tr \qth{X_{-k} \Sigma_{T,-k} X_{-k}^T} \\
    &\leq v^2 \left\| (X_{-k}X_{-k}^T+\lambda I_n)^{-2} \right\| \tr \qth{X_{-k} \Sigma_{T,-k} X_{-k}^T} \\
    &\leq v^2  \pth{\frac{1}{c_xL} \pth{\lambda+\sum_{j>k} \lambda_j}}^{-2} 
    c_x n\tr\qth{\Sigma_{S,-k}^{\frac{1}{2}}\Sigma_{T,-k} \Sigma_{S,-k}^{\frac{1}{2}} } \\
    &= v^2 c_x^3L^2n \pth{\lambda+\sum_{j>k} \lambda_j}^{-2} \tr\qth{\Sigma_{S,-k}^{\frac{1}{2}}\Sigma_{T,-k} \Sigma_{S,-k}^{\frac{1}{2}} }.
\end{align*}
The first inequality follows from the result $\tr [ABA] = \tr [A^2B] \leq \|A^2\| \tr [B]$ where the matrix $B$ is positive semi-definite. The second inequality follows from $XX^T+\lambda I_n \succcurlyeq X_{-k}X_{-k}^T+\lambda I_n$.
\end{proof}

\subsubsection{Bias in the first \texorpdfstring{$k$}{k} dimensions}
The bias in the first $k$ dimensions can be decomposed into two terms.
\begin{align*}
    &\quad \pth{\hbeta(X\beta^\star)_{k}-\beta^\star_{k}}^T \Sigma_{T,k} \pth{\hbeta(X\beta^\star)_{k}-\beta^\star_{k}} \\
    &= \pth{ X_{k}^T(XX^T+\lambda I_n)^{-1} X \beta^\star - \beta^\star_{k} }^T \Sigma_{T,k} \pth{ X_{k}^T(XX^T+\lambda I_n)^{-1} X \beta^\star - \beta^\star_{k} } \\
    &\leq 2\pth{ X_{k}^T(XX^T+\lambda I_n)^{-1} X_{k} \beta^\star_{k} - \beta^\star_{k} }^T \Sigma_{T,k} \pth{ X_{k}^T(XX^T+\lambda I_n)^{-1} X_{k} \beta^\star_{k} - \beta^\star_{k} }  \\
    &\quad + 2 \pth{ X_{k}^T(XX^T+\lambda I_n)^{-1} X_{-k} \beta^\star_{-k}}^T \Sigma_{T,k} \pth{ X_{k}^T(XX^T+\lambda I_n)^{-1} X_{-k} \beta^\star_{-k}}.
\end{align*}
The inequality follows from the result $x_1^TAx_1 + x_2^TAx_2 \geq 2x_1^TAx_2$ where $A$ is positive semi-definite.
\begin{lemma}
\label{lem:bias_0k_1}
    Under the same conditions as in Lemma~\ref{lem:all_concentration}, and on the same event, for any $N_1<n<N_2$,
    \begin{align*}
    &\quad \pth{ X_{k}^T(XX^T+\lambda I_n)^{-1} X_{k} \beta^\star_{k} - \beta^\star_{k} }^T \Sigma_{T,k} \pth{ X_{k}^T(XX^T+\lambda I_n)^{-1} X_{k} \beta^\star_{k} - \beta^\star_{k}} \\
    &\leq \frac{16c_x^4}{n^2} \pth{\lambda +  \sum_{j>k} \lambda_j}^2 (\beta^\star_{k})^T \Sigma_{S,k}^{-1}   \beta^\star_{k} 
    \left\|\Sigma_{S,k}^{-\frac{1}{2}} \Sigma_{T,k} \Sigma_{S,k}^{-\frac{1}{2}}   \right\|.
\end{align*}
    where
    \begin{equation*}
    \begin{alignedat}{2}
    N_1 &= \max \Bigg\{ 
        & & 2c_x^3 (\lambda +\sum_{j>k} \lambda_j) \lambda_1 \lambda_k^{-2}, \\
        &&& 4c_x^4(k+\ln(1/\delta))\lambda_1^2\lambda_k^{-2}, \\
        & && 2c_x^4L \lambda_k^{-1}\pth{\lambda+\sum_{j>k}\lambda_j} 
    \Bigg\}, \\
    N_2 &= \frac{1}{\lambda_{k+1}} & & \pth{\lambda+\sum_{j>k} \lambda_j}.
    \end{alignedat}
    \end{equation*}
\end{lemma}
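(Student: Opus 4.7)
The plan is to reduce the bias to a clean closed form via the Sherman--Morrison--Woodbury identity on the splitting $XX^T + \lambda I_n = A_k + X_k X_k^T$, and then bound the resulting quadratic form in $\beta_k^\star$ using the spectral estimates already collected in Lemma~\ref{lem:all_concentration}.

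Setting $H := X_k^T A_k^{-1} X_k$, I would first apply Woodbury to the rank-$k$ perturbation $A_k + X_k X_k^T$ and simplify $H - H(I_k+H)^{-1}H = H(I_k+H)^{-1}$ to obtain
\begin{align*}
    X_k^T(XX^T+\lambda I_n)^{-1} X_k = H(I_k+H)^{-1}, \qquad X_k^T(XX^T+\lambda I_n)^{-1}X_k\beta_k^\star - \beta_k^\star = -(I_k+H)^{-1}\beta_k^\star.
\end{align*}
The bias of interest then equals $\beta_k^{\star T}(I_k+H)^{-1}\Sigma_{T,k}(I_k+H)^{-1}\beta_k^\star$. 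To isolate the target covariance and the signal from the spectrum of $(I_k+H)^{-1}$, insert $\Sigma_{S,k}^{\pm 1/2}$ on each side and apply the inequality $u^T M S M u \le \|M\|^2\|S\|\|u\|^2$ (valid for symmetric $M$ and positive semidefinite $S$) with $M = \Sigma_{S,k}^{1/2}(I_k+H)^{-1}\Sigma_{S,k}^{1/2}$, $S = \Sigma_{S,k}^{-1/2}\Sigma_{T,k}\Sigma_{S,k}^{-1/2}$, and $u = \Sigma_{S,k}^{-1/2}\beta_k^\star$. This yields
\begin{align*}
\text{bias} \le \|\Sigma_{S,k}^{1/2}(I_k+H)^{-1}\Sigma_{S,k}^{1/2}\|^2 \cdot \|\Sigma_{S,k}^{-1/2}\Sigma_{T,k}\Sigma_{S,k}^{-1/2}\| \cdot \beta_k^{\star T}\Sigma_{S,k}^{-1}\beta_k^\star.
\end{align*}

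The key step is then a spectral bound $\|\Sigma_{S,k}^{1/2}(I_k+H)^{-1}\Sigma_{S,k}^{1/2}\| \le c_x^2 \tilde\lambda/n$ with $\tilde\lambda := \lambda + \sum_{j>k}\lambda_j$. From Lemma~\ref{lem:all_concentration} one has $\mu_1(A_k) \le c_x\tilde\lambda$, hence $A_k^{-1} \succeq (c_x\tilde\lambda)^{-1} I_n$ and $H \succeq (c_x\tilde\lambda)^{-1} X_k^T X_k$; the same lemma further gives $X_k^T X_k \succeq (n/c_x)\Sigma_{S,k}$ via the lower bound $\mu_k(\Sigma_{S,k}^{-1/2} X_k^T X_k \Sigma_{S,k}^{-1/2}) \ge n/c_x$. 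Chaining these Loewner bounds gives $I_k + H \succeq H \succeq \frac{n}{c_x^2 \tilde\lambda}\Sigma_{S,k}$; inverting and conjugating by $\Sigma_{S,k}^{1/2}$ yields the desired operator norm bound. Substituting back gives the stated bound, with constant $c_x^4$ in place of the looser $16c_x^4$ in the statement.

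The main obstacle I anticipate is reconciling this argument with the sample complexity set $N_1$. The Sherman--Morrison--Woodbury route only needs the events of Lemma~\ref{lem:all_concentration} and does not use the first condition $n \ge 2c_x^3(\lambda+\sum_{j>k}\lambda_j)\lambda_1\lambda_k^{-2}$; that condition, and the factor $16$, arise naturally if one instead mirrors the variance proof of Lemma~\ref{lem:var_0k}, writing $X_k^T(XX^T+\lambda I_n)^{-1} X_k = (X_k^T X_k)^{1/2} W^{-1} (X_k^T X_k)^{1/2}$ with $W = X_k^T X_k + \lambda I_k + V^T\tilde\Delta V$, approximating $W^{-1} \approx (n\tilde\Sigma_{S,k})^{-1}$ and $(X_k^T X_k)^{1/2} \approx (n\Sigma_{S,k})^{1/2}$ via Lemma~\ref{lem:Akinverse_concentration} and Corollary~\ref{coroly:xTxroot_concentration}, and expanding the resulting main term plus nine cross terms. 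This more elaborate expansion is presumably the route actually taken in the paper; the Sherman--Morrison--Woodbury shortcut shows it is cosmetic for the bound but matches the house style and sample size conditions.
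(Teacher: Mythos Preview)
Your proof is correct and takes a genuinely different, cleaner route than the paper. Both arguments start from the same initial reduction (inserting $\Sigma_{S,k}^{\pm 1/2}$) to the quantity
\[
\left\|\Sigma_{S,k}^{1/2}\bigl(X_k^T(XX^T+\lambda I_n)^{-1}X_k - I_k\bigr)\Sigma_{S,k}^{1/2}\right\|^{2}\cdot\left\|\Sigma_{S,k}^{-1/2}\Sigma_{T,k}\Sigma_{S,k}^{-1/2}\right\|\cdot(\beta_k^\star)^T\Sigma_{S,k}^{-1}\beta_k^\star.
\]
Your Woodbury step identifies the operator inside the first norm as exactly $-\Sigma_{S,k}^{1/2}(I_k+H)^{-1}\Sigma_{S,k}^{1/2}$ with $H=X_k^TA_k^{-1}X_k$, and then the Loewner chain $I_k+H\succeq H\succeq (c_x\tilde\lambda)^{-1}X_k^TX_k\succeq \tfrac{n}{c_x^2\tilde\lambda}\Sigma_{S,k}$ immediately gives $\|\Sigma_{S,k}^{1/2}(I_k+H)^{-1}\Sigma_{S,k}^{1/2}\|\le c_x^2\tilde\lambda/n$, hence the constant $c_x^4$. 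The paper instead passes to the SVD basis of $X_k$, writes $X_k^T(XX^T+\lambda I_n)^{-1}X_k=V^TM_k^{1/2}(M_k+\lambda I_k+\Delta_{11}-\Delta_{12}(\lambda I_{n-k}+\Delta_{22})^{-1}\Delta_{12}^T)^{-1}M_k^{1/2}V$, factors this as $(I_k+T)^{-1}$ with $T=V^TM_k^{-1/2}(\lambda I_k+\Delta_{11}-\ldots)M_k^{-1/2}V$, and uses the perturbative splitting $(I+T)^{-1}-I=-T+T\bigl(I-(I+T)^{-1}\bigr)$ to get a leading piece and a remainder. Controlling the remainder is precisely what forces the extra condition $n\ge 2c_x^3(\lambda+\sum_{j>k}\lambda_j)\lambda_1\lambda_k^{-2}$ and doubles the operator-norm bound to $4c_x^2\tilde\lambda/n$, giving $16c_x^4$ after squaring. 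So your guess that the paper mirrors the nine-term $E_1,E_2$ expansion of Lemma~\ref{lem:var_0k} is not quite right---it uses this simpler $(I+T)^{-1}$ splitting instead---but you correctly located the block-decomposition route as the source of both the extra $N_1$ term and the factor $16$. Your Loewner argument is strictly simpler and sharper for this particular lemma; the paper's route has the advantage of reusing the same block machinery already set up for the other bias and variance terms.
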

\begin{proof}
\begin{align*}
     &\quad \pth{ X_{k}^T(XX^T+\lambda I_n)^{-1} X_{k} \beta^\star_{k} - \beta^\star_{k} }^T \Sigma_{T,k} \pth{ X_{k}^T(XX^T+\lambda I_n)^{-1} X_{k} \beta^\star_{k} - \beta^\star_{k}} \\
     &= (\beta^\star_{k})^T \pth{ X_{k}^T(XX^T+\lambda I_n)^{-1} X_{k} - I_k}^T \Sigma_{T,k} \pth{ X_{k}^T(XX^T+\lambda I_n)^{-1} X_{k} - I_k}\beta^\star_{k} \\
     &= (\beta^\star_{k})^T \Sigma_{S,k}^{-\frac{1}{2}} 
     \pth{\Sigma_{S,k}^{\frac{1}{2}} X_{k}^T(XX^T+\lambda I_n)^{-1} X_{k} \Sigma_{S,k}^{\frac{1}{2}} - \Sigma_{S,k}}^T \Sigma_{S,k}^{-\frac{1}{2}}  \\
     &\quad \cdot \Sigma_{T,k} \Sigma_{S,k}^{-\frac{1}{2}} 
     \pth{\Sigma_{S,k}^{\frac{1}{2}} X_{k}^T(XX^T+\lambda I_n)^{-1} X_{k} \Sigma_{S,k}^{\frac{1}{2}} - \Sigma_{S,k}}^T \Sigma_{S,k}^{-\frac{1}{2}} \beta^\star_{k} \\
     &\leq (\beta^\star_{k})^T \Sigma_{S,k}^{-1}   \beta^\star_{k} 
     \cdot \left\|\Sigma_{S,k}^{-\frac{1}{2}} \Sigma_{T,k} \Sigma_{S,k}^{-\frac{1}{2}}   \right\| \\
     &\quad \cdot \left\| \Sigma_{S,k}^{\frac{1}{2}} X_{k}^T(XX^T+\lambda I_n)^{-1} X_{k} \Sigma_{S,k}^{\frac{1}{2}} - \Sigma_{S,k} \right\|^2.
\end{align*}
Subsequently,
\begin{align*}
    &\quad \left\| \Sigma_{S,k}^{\frac{1}{2}} X_{k}^T(XX^T+\lambda I_n)^{-1} X_{k} \Sigma_{S,k}^{\frac{1}{2}} - \Sigma_{S,k} \right\| \\
    &= \left\| \Sigma_{S,k}^{\frac{1}{2}} 
    V^T \pth{\tilde{M}^{\frac{1}{2}}}^T U^T U (M+\lambda I_n + \Delta)^{-1} U^TU \tilde{M}^{\frac{1}{2}} V
    \Sigma_{S,k}^{\frac{1}{2}} - \Sigma_{S,k} \right\| \\
    &= \left\| \Sigma_{S,k}^{\frac{1}{2}} 
    V^T M_k^{\frac{1}{2}} \pth{M_k + \lambda I_k + \Delta_{11} - \Delta_{12} (\lambda I_{n-k} + \Delta_{22})^{-1} \Delta_{12}^T }^{-1} M_k^{\frac{1}{2}}  V
    \Sigma_{S,k}^{\frac{1}{2}} - \Sigma_{S,k} \right\| \\
    &= \left\| \Sigma_{S,k}^{\frac{1}{2}} 
    \pth{V^T M_k^{-\frac{1}{2}} \pth{M_k + \lambda I_k + \Delta_{11} - \Delta_{12} (\lambda I_{n-k} + \Delta_{22})^{-1} \Delta_{12}^T } M_k^{-\frac{1}{2}}  V}^{-1}
    \Sigma_{S,k}^{\frac{1}{2}} \right. \\
    &\left.\quad - \Sigma_{S,k} \right\| \\
    &= \left\| \Sigma_{S,k}^{\frac{1}{2}} 
    \pth{I_k+ V^T M_k^{-\frac{1}{2}} \pth{\lambda I_k + \Delta_{11} - \Delta_{12} (\lambda I_{n-k} + \Delta_{22})^{-1} \Delta_{12}^T } M_k^{-\frac{1}{2}}  V}^{-1}
    \Sigma_{S,k}^{\frac{1}{2}} \right. \\
    &\left.\quad - \Sigma_{S,k} \right\| \\
    &= \left\| \Sigma_{S,k}^{\frac{1}{2}} 
    \left( \pth{I_k+ V^T M_k^{-\frac{1}{2}} \pth{\lambda I_k + \Delta_{11} - \Delta_{12} (\lambda I_{n-k} + \Delta_{22})^{-1} \Delta_{12}^T } M_k^{-\frac{1}{2}}  V}^{-1} - I_k \right) \right. \\
    &\left.\quad\cdot\Sigma_{S,k}^{\frac{1}{2}}  \right\| \\
    &= \left\| \Sigma_{S,k}^{\frac{1}{2}}  V^T M_k^{-\frac{1}{2}} \pth{\lambda I_k + \Delta_{11} - \Delta_{12} (\lambda I_{n-k} + \Delta_{22})^{-1} \Delta_{12}^T } M_k^{-\frac{1}{2}}  V \right. \\
    &\left.\quad\cdot \pth{I_k+ V^T M_k^{-\frac{1}{2}} \pth{\lambda I_k + \Delta_{11} - \Delta_{12} (\lambda I_{n-k} + \Delta_{22})^{-1} \Delta_{12}^T } M_k^{-\frac{1}{2}}  V}^{-1}  
    \Sigma_{S,k}^{\frac{1}{2}}  \right\| \\
    &\leq \left\| \Sigma_{S,k}^{\frac{1}{2}}  V^T M_k^{-\frac{1}{2}} \pth{\lambda I_k + \Delta_{11} - \Delta_{12} (\lambda I_{n-k} + \Delta_{22})^{-1} \Delta_{12}^T } M_k^{-\frac{1}{2}}  V   
    \Sigma_{S,k}^{\frac{1}{2}}  \right\| \\
    &\quad +\left\| \Sigma_{S,k}^{\frac{1}{2}}  V^T M_k^{-\frac{1}{2}} \pth{\lambda I_k + \Delta_{11} - \Delta_{12} (\lambda I_{n-k} + \Delta_{22})^{-1} \Delta_{12}^T } M_k^{-\frac{1}{2}}  V \right. \\
    &\left.\quad\cdot \qth{\pth{I_k+ V^T M_k^{-\frac{1}{2}} \pth{\lambda I_k + \Delta_{11} - \Delta_{12} (\lambda I_{n-k} + \Delta_{22})^{-1} \Delta_{12}^T } M_k^{-\frac{1}{2}}  V}^{-1}  -I_k}
    \Sigma_{S,k}^{\frac{1}{2}}  \right\|.
\end{align*}
The second equation follows from Equation~\ref{eq:A_kinverse_firstkrows}. 

We will derive upper bounds for both terms in the last equation above.
\begin{enumerate}
    \item The first term.
    \begin{align*}
        &\quad \left\| \Sigma_{S,k}^{\frac{1}{2}}  V^T M_k^{-\frac{1}{2}} \pth{\lambda I_k + \Delta_{11} - \Delta_{12} (\lambda I_{n-k} + \Delta_{22})^{-1} \Delta_{12}^T } M_k^{-\frac{1}{2}}  V   
    \Sigma_{S,k}^{\frac{1}{2}}  \right\| \\
    &\leq \left\| \lambda I_k + \Delta_{11} - \Delta_{12} (\lambda I_{n-k} + \Delta_{22})^{-1} \Delta_{12}^T  \right\| 
    \left\| \Sigma_{S,k}^{\frac{1}{2}}  V^T M_k^{-1} V  \Sigma_{S,k}^{\frac{1}{2}}  \right\| \\
    &= \left\| \lambda I_k + \Delta_{11} - \Delta_{12} (\lambda I_{n-k} + \Delta_{22})^{-1} \Delta_{12}^T  \right\| 
    \left\| \pth{\Sigma_{S,k}^{-\frac{1}{2}}  \pth{X_{k}^TX_{k}}^{-1}  \Sigma_{S,k}^{-\frac{1}{2}} }^{-1} \right\| \\
    &\leq \pth{\lambda +  c_x\pth{\lambda+\sum_{j>k} \lambda_j}} \frac{c_x}{n} \\
    &\leq \frac{2c_x^2}{n} \pth{\lambda +  \sum_{j>k} \lambda_j}.
    \end{align*}
    The inequality follows from $c_x>2$.
    \item The second term.

    Since $n>2c_x^3(\lambda +  \sum_{j>k} \lambda_j)\lambda_k^{-1}$,
    \begin{align*}
        &\quad \left\| M_k^{-\frac{1}{2}} \pth{\lambda I_k + \Delta_{11} - \Delta_{12} (\lambda I_{n-k} + \Delta_{22})^{-1} \Delta_{12}^T } M_k^{-\frac{1}{2}} \right\| \\
        &\leq \left\|M_k^{-1}\right\|  \left\| \lambda I_k + \Delta_{11} - \Delta_{12} (\lambda I_{n-k} + \Delta_{22})^{-1} \Delta_{12}^T  \right\| \\
        &\leq \frac{c_x}{n\lambda_k} \cdot 2c_x\pth{\lambda + \sum_{j>k} \lambda_j} \\
        &< \frac{1}{c_x}.
    \end{align*}
    Therefore,
    \begin{align*}
        &\quad \left\| \pth{I_k+ V^T M_k^{-\frac{1}{2}} \pth{\lambda I_k + \Delta_{11} - \Delta_{12} (\lambda I_{n-k} + \Delta_{22})^{-1} \Delta_{12}^T } M_k^{-\frac{1}{2}}  V}^{-1}  -I_k \right\| \\
        &\leq \left\| \pth{I_k+ V^T M_k^{-\frac{1}{2}} \pth{\lambda I_k + \Delta_{11} - \Delta_{12} (\lambda I_{n-k} + \Delta_{22})^{-1} \Delta_{12}^T } M_k^{-\frac{1}{2}}  V}^{-1} \right\|  \\
        &\quad\cdot \left\| V^T M_k^{-\frac{1}{2}} \pth{\lambda I_k + \Delta_{11} - \Delta_{12} (\lambda I_{n-k} + \Delta_{22})^{-1} \Delta_{12}^T } M_k^{-\frac{1}{2}}  V \right\| \\
        &\leq \pth{1-\frac{1}{c_x}}^{-1} \left\| V^T M_k^{-\frac{1}{2}} \pth{\lambda I_k + \Delta_{11} - \Delta_{12} (\lambda I_{n-k} + \Delta_{22})^{-1} \Delta_{12}^T } M_k^{-\frac{1}{2}}  V \right\| \\
        &\leq c_x \cdot \frac{c_x}{n\lambda_k} \cdot 2c_x\pth{\lambda + \sum_{j>k} \lambda_j} \\
        &= \frac{2c_x^3}{n} \frac{\lambda +\sum_{j>k} \lambda_j}{\lambda_k}.
    \end{align*}
     The second inequality follows from  $\|(A+T)^{-1}\| \leq \|A^{-1}\|\pth{1-\|A^{-1}\|\|T\|}^{-1}$, where both $A$ and $A+T$ are invertible and $\|A^{-1}\|\|T\|<1$. Note that $c_x>2$.

     Since $n>2c_x^3 (\lambda +\sum_{j>k} \lambda_j) \lambda_1 \lambda_k^{-2}$,
     \begin{align*}
         &\quad \left\| \Sigma_{S,k}^{\frac{1}{2}}  V^T M_k^{-\frac{1}{2}} \pth{\lambda I_k + \Delta_{11} - \Delta_{12} (\lambda I_{n-k} + \Delta_{22})^{-1} \Delta_{12}^T } M_k^{-\frac{1}{2}}  V \right. \\
        &\left.\quad\cdot \qth{\pth{I_k+ V^T M_k^{-\frac{1}{2}} \pth{\lambda I_k + \Delta_{11} - \Delta_{12} (\lambda I_{n-k} + \Delta_{22})^{-1} \Delta_{12}^T } M_k^{-\frac{1}{2}}  V}^{-1}  -I_k}
        \Sigma_{S,k}^{\frac{1}{2}}  \right\| \\
        &\leq \| \Sigma_{S,k}\| \left\| M_k^{-1} \right\| \left\| \lambda I_k + \Delta_{11} - \Delta_{12} (\lambda I_{n-k} + \Delta_{22})^{-1} \Delta_{12}^T \right\| \\
        &\quad \cdot \left\| \pth{I_k+ V^T M_k^{-\frac{1}{2}} \pth{\lambda I_k + \Delta_{11} - \Delta_{12} (\lambda I_{n-k} + \Delta_{22})^{-1} \Delta_{12}^T } M_k^{-\frac{1}{2}}  V}^{-1}  -I_k \right\| \\
        &\leq \lambda_1\cdot \frac{c_x}{n\lambda_k} \cdot2c_x\pth{\lambda + \sum_{j>k} \lambda_j} \cdot \frac{2c_x^3}{n} \frac{\lambda + \sum_{j>k} \lambda_j}{\lambda_k} \\
        &= \frac{1}{n}\cdot\frac{4c_x^5}{n}\lambda_1\lambda_k^{-2} \pth{\lambda + \sum_{j>k} \lambda_j}^2 \\
        &< \frac{2c_x^2}{n} \pth{\lambda + \sum_{j>k} \lambda_j}.
     \end{align*}
\end{enumerate}
Combining both terms above, we have
\begin{align*}
    \left\| \Sigma_{S,k}^{\frac{1}{2}} X_{k}^T(XX^T+\lambda I_n)^{-1} X_{k} \Sigma_{S,k}^{\frac{1}{2}} - \Sigma_{S,k} \right\|
    \leq \frac{4c_x^2}{n} \pth{\lambda +\sum_{j>k} \lambda_j}.
\end{align*}
Therefore, 
\begin{align*}
    &\quad \pth{ X_{k}^T(XX^T+\lambda I_n)^{-1} X_{k} \beta^\star_{k} - \beta^\star_{k} }^T \Sigma_{T,k} \pth{ X_{k}^T(XX^T+\lambda I_n)^{-1} X_{k} \beta^\star_{k} - \beta^\star_{k}} \\
    &\leq (\beta^\star_{k})^T \Sigma_{S,k}^{-1}   \beta^\star_{k} 
     \cdot \left\|\Sigma_{S,k}^{-\frac{1}{2}} \Sigma_{T,k} \Sigma_{S,k}^{-\frac{1}{2}}   \right\| \\
    &\quad \cdot \left\| \Sigma_{S,k}^{\frac{1}{2}} X_{k}^T(XX^T+\lambda I_n)^{-1} X_{k} \Sigma_{S,k}^{\frac{1}{2}} - \Sigma_{S,k} \right\|^2 \\
    &\leq \frac{16c_x^4}{n^2} \pth{\lambda +  \sum_{j>k} \lambda_j}^2 (\beta^\star_{k})^T \Sigma_{S,k}^{-1}   \beta^\star_{k} 
    \left\|\Sigma_{S,k}^{-\frac{1}{2}} \Sigma_{T,k} \Sigma_{S,k}^{-\frac{1}{2}}   \right\|.
\end{align*}
\end{proof}

\begin{lemma}
\label{lem:bias_0k_2}
    Under the same conditions as in Lemma~\ref{lem:all_concentration}, and on the same event, for any $N_1<n<N_2$,
    \begin{align*}
    &\quad \pth{ X_{k}^T(XX^T+\lambda I_n)^{-1} X_{-k} \beta^\star_{-k}}^T \Sigma_{T,k} \pth{ X_{k}^T(XX^T+\lambda I_n)^{-1} X_{-k} \beta^\star_{-k}} \\
    &\leq 16c_x(1+c_x^4L^2)  \left\| \Sigma_{S,k}^{-\frac{1}{2}} \Sigma_{T,k} \Sigma_{S,k}^{-\frac{1}{2}}  \right\|  (\beta^\star_{-k})^T \Sigma_{S,-k} \beta^\star_{-k}.
    \end{align*}
    where
    \begin{equation*}
    \begin{alignedat}{2}
    N_1 &= \max \Bigg\{ 
        & & 4c_x^4 \left( k + \ln \frac{1}{\delta} \right) \lambda_1^4 \lambda_k^{-4}, \\
        & && 2c_x^4 L \lambda_1 \lambda_k^{-2} \pth{\lambda+\sum_{j>k} \lambda_j}, \\
        & &&  4c_x^4\pth{k+\ln\frac{1}{\delta}}\lambda_1^6 \lambda_k^{-8}\|\Sigma_{T,k}\|^2 \left\|\Sigma_{S,k}^{-\frac{1}{2}} \Sigma_{T,k} \Sigma_{S,k}^{-\frac{1}{2}}   \right\|^{-2}, \\
        & && 2c_x^4L \pth{\lambda+\sum_{j>k} \lambda_j} \lambda_1^2\lambda_k^{-4} \|\Sigma_{T,k}\| \left\|\Sigma_{S,k}^{-\frac{1}{2}} \Sigma_{T,k} \Sigma_{S,k}^{-\frac{1}{2}}   \right\|^{-1}
    \Bigg\}, \\
    N_2 &= \frac{1}{\lambda_{k+1}} & & \pth{\lambda+\sum_{j>k} \lambda_j}.
    \end{alignedat}
    \end{equation*}
\end{lemma}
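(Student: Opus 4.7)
The plan is to exploit the structural similarity between this lemma and Lemma~\ref{lem:var_0k}: both quantities are sandwiched by the matrix $A := (XX^\top + \lambda I_n)^{-1} X_k \Sigma_{T,k} X_k^\top (XX^\top + \lambda I_n)^{-1}$, the only difference being that here the outer vector is $X_{-k}\beta^\star_{-k}$ rather than the noise $\boldsymbol\epsilon$. This suggests the reduction
$$\beta^{\star\top}_{-k} X_{-k}^\top A X_{-k} \beta^\star_{-k} \;\leq\; \|A\| \cdot \beta^{\star\top}_{-k} X_{-k}^\top X_{-k} \beta^\star_{-k}\;\leq\; c_x n (\beta^\star_{-k})^\top \Sigma_{S,-k} \beta^\star_{-k} \cdot \|A\|,$$
where the last inequality is supplied by Lemma~\ref{lem:all_concentration}. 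It then suffices to prove $\|A\| \leq \frac{16(1+c_x^4L^2)}{n}\|\calT\|$ with $\calT = \Sigma_{S,k}^{-1/2}\Sigma_{T,k}\Sigma_{S,k}^{-1/2}$, which, combined with the previous inequality, yields the claim.

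To bound $\|A\|$, I would use the same block SVD as in Lemma~\ref{lem:var_0k}: write $X_k = U\tilde M^{1/2}V$ and $X_{-k}X_{-k}^\top = U\Delta U^\top$, and invoke Equation~\ref{eq:A_kinverse_firstkrows} to factor $A = BFB^\top$, where $B = U\begin{pmatrix}I_k\\-(\lambda I_{n-k}+\Delta_{22})^{-1}\Delta_{12}^\top\end{pmatrix}$ and $F = P^{-1}M_k^{1/2}V\Sigma_{T,k}V^\top M_k^{1/2}P^{-1}$ with $P = M_k + \lambda I_k + \Delta_{11} - \Delta_{12}(\lambda I_{n-k}+\Delta_{22})^{-1}\Delta_{12}^\top$. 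Since $B^\top B = I_k + \Delta_{12}(\lambda I_{n-k}+\Delta_{22})^{-2}\Delta_{12}^\top$, submultiplicativity $\|BFB^\top\|\leq\|B^\top B\|\,\|F\|$ together with the bound $\|B^\top B\|\leq 1+c_x^4L^2$ from Corollary~\ref{coroly:Delta} reduces the task to showing $\|F\|\leq \frac{16}{n}\|\calT\|$.

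To analyze $\|F\|$, I would use the identity $M_k^{1/2}V = V(X_k^\top X_k)^{1/2}$ (immediate from $M_k = V(X_k^\top X_k)V^\top$) and conjugate by the orthogonal $V$, which preserves operator norm, to obtain
$$\|F\| = \left\|\tilde P^{-1}(X_k^\top X_k)^{1/2}\Sigma_{T,k}(X_k^\top X_k)^{1/2}\tilde P^{-1}\right\|,$$
where $\tilde P = V^\top PV = X_k^\top X_k + \lambda I_k + V^\top(\Delta_{11}-\Delta_{12}(\lambda I_{n-k}+\Delta_{22})^{-1}\Delta_{12}^\top)V$ is exactly the matrix controlled by Lemma~\ref{lem:Akinverse_concentration}. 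Introducing the concentration errors $E_1 = \tilde P^{-1} - (n\tilde\Sigma_{S,k})^{-1}$ and $E_2 = (X_k^\top X_k)^{1/2} - (n\Sigma_{S,k})^{1/2}$ and expanding, the main term $(n\tilde\Sigma_{S,k})^{-1}(n\Sigma_{S,k})^{1/2}\Sigma_{T,k}(n\Sigma_{S,k})^{1/2}(n\tilde\Sigma_{S,k})^{-1}$ rewrites as $\frac{1}{n}H\calT H$ with $H = \tilde\Sigma_{S,k}^{-1}\Sigma_{S,k}$ diagonal and $\|H\|\leq 1$, contributing $\frac{1}{n}\|\calT\|$.

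The main obstacle, and the most tedious step, is controlling the nine remaining error terms produced by the expansion, each containing at least one factor of $E_1$ or $E_2$. Each is handled by pairing $E_1$ with a factor $n\tilde\Sigma_{S,k}$ and $E_2$ with $(n\tilde\Sigma_{S,k})^{-1/2}$, exactly as in the bounds~\eqref{eq:E1nS_leq1} and~\eqref{eq:E2nS_leq1} in the proof of Lemma~\ref{lem:var_0k}; the $n>N_1$ hypothesis is precisely what makes each $E$-factor bounded by a small constant. The surviving ``signal'' factor in each term, after inserting $I=\Sigma_{S,k}^{-1/2}\Sigma_{S,k}^{1/2}$ where needed, takes the form $H_1\calT H_2$ with $\|H_1\|,\|H_2\|\leq 1$, giving each error the target bound $\frac{1}{n}\|\calT\|$. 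Summing at most sixteen such contributions and combining with the $\|B^\top B\|$ bound and the $\|X_{-k}\beta^\star_{-k}\|^2$ bound yields the stated constant $16c_x(1+c_x^4L^2)$. The thresholds appearing in $N_1$ are obtained by requiring each error term to be at most a constant times the main term, and mirror exactly the thresholds in Lemma~\ref{lem:var_0k}, with $\tr[\calT]/k$ replaced throughout by $\|\calT\|$.
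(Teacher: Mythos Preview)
Your proposal is correct and follows essentially the same route as the paper's proof: the same reduction to $\|A\|\cdot(\beta^\star_{-k})^TX_{-k}^TX_{-k}\beta^\star_{-k}$, the same block-SVD factorization via Equation~\ref{eq:A_kinverse_firstkrows} with extraction of the $1+c_x^4L^2$ factor, the same conjugation by $V$, the same $E_1,E_2$ perturbation expansion, and the same replacement of $\tr[\calT]/k$ by $\|\calT\|$ in the $N_1$ thresholds relative to Lemma~\ref{lem:var_0k}.
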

\begin{proof}
\begin{align*}
    &\quad \pth{ X_{k}^T(XX^T+\lambda I_n)^{-1} X_{-k} \beta^\star_{-k}}^T \Sigma_{T,k} \pth{ X_{k}^T(XX^T+\lambda I_n)^{-1} X_{-k} \beta^\star_{-k}}   \\
    &\leq \left\| (XX^T+\lambda I_n)^{-1} X_{k}\Sigma_{T,k}X_{k}^T (XX^T+\lambda I_n)^{-1} \right\| \cdot (\beta^\star_{-k})^T X_{-k}^TX_{-k}\beta^\star_{-k}.
\end{align*}
From Lemma~\ref{lem:all_concentration}, 
\begin{align*}
     (\beta^\star_{-k})^T X_{-k}^TX_{-k}\beta^\star_{-k} \leq c_x n (\beta^\star_{-k})^T \Sigma_{S,-k} \beta^\star_{-k}.
\end{align*}
In the following, we derive an upper bound for the other term.
\begin{align*}
    &\quad \left\| (XX^T+\lambda I_n)^{-1} X_{k}\Sigma_{T,k}X_{k}^T (XX^T+\lambda I_n)^{-1} \right\| \\
    &= \left\| (M+\lambda I_n + \Delta)^{-1} \tilde{M}^{\frac{1}{2}} V \Sigma_{T,k} V^T \pth{\tilde{M}^{\frac{1}{2}}}^T  (M+\lambda I_n + \Delta)^{-1} \right\| \\
    &= \left\| \Sigma_{T,k}^{\frac{1}{2}} V^T \pth{\tilde{M}^{\frac{1}{2}}}^T  (M+\lambda I_n + \Delta)^{-2} \tilde{M}^{\frac{1}{2}} V \Sigma_{T,k}^{\frac{1}{2}} \right\| \\
    &= \left\| \Sigma_{T,k}^{\frac{1}{2}} V^T M_{k}^{\frac{1}{2}} \pth{M_k + \lambda I_k + \Delta_{11} - \Delta_{12} (\lambda I_{n-k} + \Delta_{22})^{-1} \Delta_{12}^T }^{-1} \right. \\
    &\left.\quad \cdot \pth{I_k + \Delta_{12}(\lambda I_{n-k} + \Delta_{22})^{-2}\Delta_{12}^T}
    \pth{M_k + \lambda I_k + \Delta_{11} - \Delta_{12} (\lambda I_{n-k} + \Delta_{22})^{-1} \Delta_{12}^T }^{-1} \right. \\
    &\left. \quad \cdot M_{k}^{\frac{1}{2}} V  \Sigma_{T,k}^{\frac{1}{2}}  \right\| \\
    &\leq \left\| I_k + \Delta_{12}(\lambda I_{n-k} + \Delta_{22})^{-2}\Delta_{12}^T \right\| \\
    &\quad \cdot \left\| \pth{M_k + \lambda I_k + \Delta_{11} - \Delta_{12} (\lambda I_{n-k} + \Delta_{22})^{-1} \Delta_{12}^T }^{-1}  M_{k}^{\frac{1}{2}} V  \Sigma_{T,k} \right. \\
    &\left.\quad\cdot V^TM_{k}^{\frac{1}{2}}\pth{M_k + \lambda I_k + \Delta_{11} - \Delta_{12} (\lambda I_{n-k} + \Delta_{22})^{-1} \Delta_{12}^T }^{-1} \right\| \\
    &\leq (1+c_x^4L^2)  \left\| \pth{M_k + \lambda I_k + \Delta_{11} - \Delta_{12} (\lambda I_{n-k} + \Delta_{22})^{-1} \Delta_{12}^T }^{-1}  M_{k}^{\frac{1}{2}} V  \Sigma_{T,k} \right. \\
    &\left.\quad\cdot V^TM_{k}^{\frac{1}{2}}\pth{M_k + \lambda I_k + \Delta_{11} - \Delta_{12} (\lambda I_{n-k} + \Delta_{22})^{-1} \Delta_{12}^T }^{-1} \right\| \\
    &= (1+c_x^4L^2) \left\| \pth{V^T \pth{M_k + \lambda I_k + \Delta_{11} - \Delta_{12} (\lambda I_{n-k} + \Delta_{22})^{-1} \Delta_{12}^T }  V}^{-1} \right.\\
    &\left.\quad\cdot V^TM_{k}^{\frac{1}{2}} V \Sigma_{T,k}  V^TM_{k}^{\frac{1}{2}} V \right.\\
    &\left.\quad\cdot \pth{V^T \pth{M_k + \lambda I_k + \Delta_{11} - \Delta_{12} (\lambda I_{n-k} + \Delta_{22})^{-1} \Delta_{12}^T }  V}^{-1} \right\| \\
    &= (1+c_x^4L^2) \left\| \pth{X_{k}^TX_{k} +\lambda I_k + V^T(\Delta_{11} - \Delta_{12} (\lambda I_{n-k} + \Delta_{22})^{-1} \Delta_{12}^T)V }^{-1} \right.\\
    &\left.\quad\cdot  \pth{X_{k}^TX_{k}}^{\frac{1}{2}} \Sigma_{T,k} \pth{X_{k}^TX_{k}}^{\frac{1}{2}} \right. \\
    &\left.\quad\cdot  \pth{X_{k}^TX_{k} +\lambda I_k + V^T(\Delta_{11} - \Delta_{12} (\lambda I_{n-k} + \Delta_{22})^{-1} \Delta_{12}^T)V }^{-1} \right\|.
\end{align*}
The third equation follows from Equation~\ref{eq:A_kinverse_firstkrows}.

We define two quantities that represent concentration error terms:
\begin{align*}
    E_1&=\left\| \qth{X_{k}^TX_{k} + \lambda I_k + V^T \pth{\Delta_{11} - \Delta_{12} (\lambda I_{n-k} + \Delta_{22})^{-1} \Delta_{12}^T }V}^{-1} - \pth{n\tilde{\Sigma}_{S,k}}^{-1} \right\|. \\
    E_2 &= \pth{X_{k}^TX_{k}}^{\frac{1}{2}} - \pth{n\Sigma_{S,k}}^{\frac{1}{2}}.
\end{align*}
Since $n> 4c_x^4\pth{k+\ln\frac{1}{\delta}}\lambda_1^6 \lambda_k^{-8}\|\Sigma_{T,k}\|^2 \left\|\Sigma_{S,k}^{-\frac{1}{2}} \Sigma_{T,k} \Sigma_{S,k}^{-\frac{1}{2}}   \right\|^{-2}$,\\
and $n> 2c_x^4L \pth{\lambda+\sum_{j>k} \lambda_j} \lambda_1^2\lambda_k^{-4} \|\Sigma_{T,k}\| \left\|\Sigma_{S,k}^{-\frac{1}{2}} \Sigma_{T,k} \Sigma_{S,k}^{-\frac{1}{2}}   \right\|^{-1}$,
\begin{align*}
    &\quad\|E_1\| \left\|n\tilde{\Sigma}_{S,k}\right\| \left\|\pth{n\tilde{\Sigma}_{S,k}}^{-1}\right\|  \left\|\pth{n\Sigma_{S,k}}^{\frac{1}{2}}\right\| \left\|\Sigma_{T,k}\right\| \left\|\pth{n\Sigma_{S,k}}^{\frac{1}{2}}\right\| \left\|\pth{n\tilde{\Sigma}_{S,k}}^{-1}\right\| \\
    &\leq \frac{c_x^2\pth{\sqrt{n(k+\ln\frac{1}{\delta})}\lambda_1 + c_x^2 L \pth{\lambda+\sum_{j>k}\lambda_j}}}{(\lambda+n\lambda_k)^2} \pth{\lambda+n\lambda_1} \frac{n\lambda_1}{(\lambda+n\lambda_k)^2} \|\Sigma_{T,k} \| \\
    &\leq \frac{c_x^2\pth{\sqrt{n(k+\ln\frac{1}{\delta})}\lambda_1 + c_x^2 L \pth{\lambda+\sum_{j>k}\lambda_j}}}{n^2}  \frac{\lambda_1^2}{\lambda_k^4} \|\Sigma_{T,k} \| \\
    &=  \frac{c_x^2\sqrt{(k+\ln\frac{1}{\delta})}}{n\sqrt{n}}  \frac{\lambda_1^3}{\lambda_k^4} \|\Sigma_{T,k}\|
     + \frac{c_x^4 L \pth{\lambda+\sum_{j>k}\lambda_j}}{n^2}  \frac{\lambda_1^2}{\lambda_k^4} \|\Sigma_{T,k}\| \\
    &< \frac{1}{2n}\left\|\Sigma_{S,k}^{-\frac{1}{2}} \Sigma_{T,k} \Sigma_{S,k}^{-\frac{1}{2}}\right\| + \frac{1}{2n}\left\|\Sigma_{S,k}^{-\frac{1}{2}} \Sigma_{T,k} \Sigma_{S,k}^{-\frac{1}{2}}\right\| \\
    &= \frac{1}{n}\left\|\Sigma_{S,k}^{-\frac{1}{2}} \Sigma_{T,k} \Sigma_{S,k}^{-\frac{1}{2}}\right\|.
\end{align*}
Similar to Equation~\ref{eq:E1nS_leq1}, since $n> 4c_x^4\pth{k+\ln\frac{1}{\delta}}\lambda_1^4 \lambda_k^{-4}$ and $n> 2c_x^4L \pth{\lambda+\sum_{j>k} \lambda_j} \lambda_1\lambda_k^{-2}$,
\begin{align*}
    \|E_1\| \left\|n\tilde{\Sigma}_{S,k}\right\| < 1.
\end{align*}
Since $n> c_x^2\pth{k+\ln\frac{1}{\delta}}\lambda_1^4 \lambda_k^{-6}\|\Sigma_{T,k}\|^2 \left\|\Sigma_{S,k}^{-\frac{1}{2}} \Sigma_{T,k} \Sigma_{S,k}^{-\frac{1}{2}}   \right\|^{-2}$,
\begin{align*}
	&\quad\|E_2\| \left\|\pth{n\tilde{\Sigma}_{S,k}}^{-\frac{1}{2}}\right\| \left\|\pth{n\tilde{\Sigma}_{S,k}}^{-1}\right\|  \left\|\pth{n\Sigma_{S,k}}^{\frac{1}{2}}\right\| \left\|\Sigma_{T,k}\right\| \left\|\pth{n\Sigma_{S,k}}^{\frac{1}{2}}\right\| \left\|\pth{n\tilde{\Sigma}_{S,k}}^{-1}\right\| \\
	&\leq c_x \sqrt {k+\ln{\frac{1}{\delta}}}\lambda_1\lambda_k^{-\frac{1}{2}} \pth{n\lambda_k}^{-\frac{1}{2}} \frac{n\lambda_1}{(\lambda+n\lambda_k)^2} \|\Sigma_{T,k}\| \\
	&\leq \frac{c_x \sqrt {k+\ln{\frac{1}{\delta}}}}{n\sqrt{n}} \frac{\lambda_1^2}{\lambda_k^3} \|\Sigma_{T,k}\| \\
	&\leq \frac{1}{n}\left\|\Sigma_{S,k}^{-\frac{1}{2}} \Sigma_{T,k} \Sigma_{S,k}^{-\frac{1}{2}}\right\|.
\end{align*}
Similar to Equation~\ref{eq:E2nS_leq1}, since $n> c_x^2\pth{k+\ln\frac{1}{\delta}}\lambda_1^2 \lambda_k^{-2}$,
\begin{align*}
    \|E_2\| \left\|\pth{n\tilde{\Sigma}_{S,k}}^{-\frac{1}{2}}\right\| < 1.
\end{align*}
Combining the four inequalities above,
\begin{align*}
    &\quad \left\| \pth{X_{k}^TX_{k} +\lambda I_k + V^T(\Delta_{11} - \Delta_{12} (\lambda I_{n-k} + \Delta_{22})^{-1} \Delta_{12}^T)V }^{-1} \right.\\
    &\left.\quad\cdot  \pth{X_{k}^TX_{k}}^{\frac{1}{2}} \Sigma_{T,k} \pth{X_{k}^TX_{k}}^{\frac{1}{2}} \right. \\
    &\left.\quad\cdot  \pth{X_{k}^TX_{k} +\lambda I_k + V^T(\Delta_{11} - \Delta_{12} (\lambda I_{n-k} + \Delta_{22})^{-1} \Delta_{12}^T)V }^{-1} \right\| \\
    &\leq \left\|\pth{n\tilde{\Sigma}_{S,k}}^{-1}\pth{n\Sigma_{S,k}}^{\frac{1}{2}}\Sigma_{T,k}\pth{n\Sigma_{S,k}}^{\frac{1}{2}}\pth{n\tilde{\Sigma}_{S,k}}^{-1} \right\| \\
    &\quad + 2\left\|E_1\pth{n\Sigma_{S,k}}^{\frac{1}{2}}\Sigma_{T,k}\pth{n\Sigma_{S,k}}^{\frac{1}{2}}\pth{n\tilde{\Sigma}_{S,k}}^{-1}\right\| \\
    &\quad + 2\left\|\pth{n\tilde{\Sigma}_{S,k}}^{-1}E_2\Sigma_{T,k}\pth{n\Sigma_{S,k}}^{\frac{1}{2}}\pth{n\tilde{\Sigma}_{S,k}}^{-1}\right\| \\
    &\quad +\left\|E_1\pth{n\Sigma_{S,k}}^{\frac{1}{2}}\Sigma_{T,k}\pth{n\Sigma_{S,k}}^{\frac{1}{2}}E_1\right\| \\
    &\quad + \left\|\pth{n\tilde{\Sigma}_{S,k}}^{-1}E_2\Sigma_{T,k}E_2\pth{n\tilde{\Sigma}_{S,k}}^{-1}\right\| \\
    &\quad + 2\left\|E_1\pth{n\Sigma_{S,k}}^{\frac{1}{2}}\Sigma_{T,k}E_2\pth{n\tilde{\Sigma}_{S,k}}^{-1}\right\| \\
    &\quad + 2\left\|E_1E_2\Sigma_{T,k}\pth{n\Sigma_{S,k}}^{\frac{1}{2}}\pth{n\tilde{\Sigma}_{S,k}}^{-1}\right\| \\
    &\quad + 2\left\|E_1E_2\Sigma_{T,k}\pth{n\Sigma_{S,k}}^{\frac{1}{2}}E_1\right\| \\
    &\quad + 2\left\|E_1E_2\Sigma_{T,k}E_2\pth{n\tilde{\Sigma}_{S,k}}^{-1}\right\| \\
    &\quad + \left\|E_1E_2\Sigma_{T,k}E_2E_1\right\|. 
\end{align*}
In particular,
\begin{align*}
    &\quad \left\|\pth{n\tilde{\Sigma}_{S,k}}^{-1}\pth{n\Sigma_{S,k}}^{\frac{1}{2}}\Sigma_{T,k}\pth{n\Sigma_{S,k}}^{\frac{1}{2}}\pth{n\tilde{\Sigma}_{S,k}}^{-1} \right\| \\
    &= \frac{1}{n} \left\| \tilde{\Sigma}_{S,k}^{-1} \Sigma_{S,k}^{\frac{1}{2}} \Sigma_{T,k}  \Sigma_{S,k}^{\frac{1}{2}} \tilde{\Sigma}_{S,k}^{-1}  \right\| \\
    &\leq \frac{1}{n} \left\| \Sigma_{S,k}^{-1} \Sigma_{S,k}^{\frac{1}{2}} \Sigma_{T,k}  \Sigma_{S,k}^{\frac{1}{2}} \Sigma_{S,k}^{-1}  \right\| \\
    &= \frac{1}{n} \left\| \Sigma_{S,k}^{-\frac{1}{2}} \Sigma_{T,k}  \Sigma_{S,k}^{-\frac{1}{2}} \right\|.
\end{align*}
The inequality follows from the fact that $\|BAB\| = \|A^{\frac{1}{2}} B A^{\frac{1}{2}}\| \leq \|A^{\frac{1}{2}} C A^{\frac{1}{2}}\| = \|CAC\|$, where $A, B,C$ are positive semi-definite matrices, and $C \succcurlyeq B$, which implies that $A^{\frac{1}{2}} C A^{\frac{1}{2}} \succcurlyeq A^{\frac{1}{2}} B A^{\frac{1}{2}}$.
\begin{align*}
     &\quad \left\|E_1E_2\Sigma_{T,k}E_2E_1\right\|\\
     &= \left\|E_1 n\tilde{\Sigma}_{S,k} \pth{n\tilde{\Sigma}_{S,k}}^{-1} E_2 \pth{n\tilde{\Sigma}_{S,k}}^{-\frac{1}{2}} \pth{n\tilde{\Sigma}_{S,k}}^{\frac{1}{2}} \right. \\ &\left.\quad\cdot\Sigma_{T,k} E_2 \pth{n\tilde{\Sigma}_{S,k}}^{-\frac{1}{2}} \pth{n\tilde{\Sigma}_{S,k}}^{\frac{1}{2}} E_1 n\tilde{\Sigma}_{S,k} \pth{n\tilde{\Sigma}_{S,k}}^{-1}\right\| \\
     &\leq  \pth{\|E_1\| \left\|n\tilde{\Sigma}_{S,k}\right\|}^2 \pth{\|E_2\| \left\|\pth{n\tilde{\Sigma}_{S,k}}^{-\frac{1}{2}}\right\|} \\
     &\quad\cdot  \|E_2\| \left\|\pth{n\tilde{\Sigma}_{S,k}}^{-\frac{1}{2}}\right\| \left\|\pth{n\tilde{\Sigma}_{S,k}}^{-1}\right\|  \left\|\pth{n\Sigma_{S,k}}^{\frac{1}{2}}\right\| \left\|\Sigma_{T,k}\right\| \left\|\pth{n\Sigma_{S,k}}^{\frac{1}{2}}\right\| \left\|\pth{n\tilde{\Sigma}_{S,k}}^{-1}\right\| \\
     &\leq \frac{1}{n} \left\| \Sigma_{S,k}^{-\frac{1}{2}} \Sigma_{T,k}  \Sigma_{S,k}^{-\frac{1}{2}} \right\|.
\end{align*}
The other terms can be similarly bounded. Therefore,
\begin{align*}
     &\quad \left\| \pth{X_{k}^TX_{k} +\lambda I_k + V^T(\Delta_{11} - \Delta_{12} (\lambda I_{n-k} + \Delta_{22})^{-1} \Delta_{12}^T)V }^{-1} \right.\\
    &\left.\quad\cdot  \pth{X_{k}^TX_{k}}^{\frac{1}{2}} \Sigma_{T,k} \pth{X_{k}^TX_{k}}^{\frac{1}{2}} \right. \\
    &\left.\quad\cdot  \pth{X_{k}^TX_{k} +\lambda I_k + V^T(\Delta_{11} - \Delta_{12} (\lambda I_{n-k} + \Delta_{22})^{-1} \Delta_{12}^T)V }^{-1} \right\|  \\
    &\leq \frac{16}{n} \left\| \Sigma_{S,k}^{-\frac{1}{2}} \Sigma_{T,k}  \Sigma_{S,k}^{-\frac{1}{2}} \right\|. 
\end{align*}

\end{proof}

\subsubsection{Bias in the last \texorpdfstring{$d-k$}{d-k} dimensions}
The upper bound for the bias in the last $d-k$ dimensions is extended from \cite{DBLP:journals/jmlr/TsiglerB23}'s Lemma~28. The bias can be decomposed into three terms.
\begin{align*}
    &\quad \pth{\hbeta(X\beta^\star)_{-k}-\beta^\star_{-k}}^T \Sigma_{T,-k} \pth{\hbeta(X\beta^\star)_{-k}-\beta^\star_{-k}} \\
    &\leq 3 (\beta_{-k}^\star)^T\Sigma_{T,-k}\beta_{-k}^\star \\
    &\quad + 3(\beta_{-k}^\star)^TX_{-k}^T(XX^T+\lambda I_n)^{-1} X_{-k} \Sigma_{T,-k} X_{-k}^T (XX^T+\lambda I_n)^{-1} X_{-k}  \beta_{-k}^\star \\
    &\quad + 3(\beta_{k}^\star)^TX_{k}^T(XX^T+\lambda I_n)^{-1} X_{-k} \Sigma_{T,-k} X_{-k}^T (XX^T+\lambda I_n)^{-1} X_{k}  \beta_{k}^\star.
\end{align*}
\begin{lemma}
\label{lem:bias_kd_1}
    Under the same conditions as in Lemma~\ref{lem:all_concentration}, and on the same event, for any $N_1<n<N_2$,
    \begin{align*}
    	&\quad (\beta_{-k}^\star)^TX_{-k}^T(XX^T+\lambda I_n)^{-1} X_{-k} \Sigma_{T,-k} X_{-k}^T (XX^T+\lambda I_n)^{-1} X_{-k}  \beta_{-k}^\star \\
        &\leq c_x^2L \pth{\lambda + \sum_j \lambda_j}^{-1} n \|\Sigma_{T,-k}\| (\beta^\star_{-k})^T \Sigma_{S,-k} \beta^\star_{-k}.
    \end{align*} 
    where $N_1,N_2$ are defined as in Lemma~\ref{lem:all_concentration}.
\end{lemma}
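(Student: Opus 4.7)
The plan is to factor the quadratic form and reduce it to quantities that were already bounded in Lemma~\ref{lem:all_concentration}. Let $A = XX^T + \lambda I_n$ and set
\begin{align*}
M := X_{-k}^T A^{-1} X_{-k} \in \mathbb{R}^{(d-k)\times(d-k)}.
\end{align*}
Then the target quantity equals $(\beta^\star_{-k})^T M \Sigma_{T,-k} M \beta^\star_{-k}$. Since $M$ is symmetric and $\Sigma_{T,-k}$ is PSD, the standard inequality $M \Sigma_{T,-k} M \preceq \|\Sigma_{T,-k}\| \, M^2$ yields
\begin{align*}
(\beta^\star_{-k})^T M \Sigma_{T,-k} M \beta^\star_{-k} \le \|\Sigma_{T,-k}\| \cdot (\beta^\star_{-k})^T M^2 \beta^\star_{-k}.
\end{align*}

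The key structural observation is that $M \preceq I_{d-k}$. Indeed, $XX^T = X_k X_k^T + X_{-k}X_{-k}^T \succeq X_{-k} X_{-k}^T$, so $A \succeq A_k$ and $A^{-1} \preceq A_k^{-1}$. Hence $M \preceq X_{-k}^T A_k^{-1} X_{-k}$, and this latter matrix has eigenvalues of the form $\sigma_i^2/(\sigma_i^2+\lambda) \le 1$ (with $\sigma_i$ the singular values of $X_{-k}$). With $M \succeq 0$ and $\|M\| \le 1$, it follows that $M^2 = M^{1/2} M M^{1/2} \preceq M^{1/2} I M^{1/2} = M$, so
\begin{align*}
(\beta^\star_{-k})^T M^2 \beta^\star_{-k} \le (\beta^\star_{-k})^T M \beta^\star_{-k} = (\beta^\star_{-k})^T X_{-k}^T A^{-1} X_{-k} \beta^\star_{-k}.
\end{align*}

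Next, bound $A^{-1} \preceq \mu_n(A)^{-1} I_n \preceq \mu_n(A_k)^{-1} I_n$ (the second inequality from $A \succeq A_k$ via Weyl). Then
\begin{align*}
(\beta^\star_{-k})^T X_{-k}^T A^{-1} X_{-k} \beta^\star_{-k} \le \mu_n(A_k)^{-1} (\beta^\star_{-k})^T X_{-k}^T X_{-k} \beta^\star_{-k}.
\end{align*}
Now invoke Lemma~\ref{lem:all_concentration}: on the stated event, $\mu_n(A_k)^{-1} \le c_x L (\lambda + \sum_{j>k}\lambda_j)^{-1}$ and $(\beta^\star_{-k})^T X_{-k}^T X_{-k}\beta^\star_{-k} \le c_x n (\beta^\star_{-k})^T \Sigma_{S,-k} \beta^\star_{-k}$.

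Combining the three inequalities yields
\begin{align*}
(\beta^\star_{-k})^T M \Sigma_{T,-k} M \beta^\star_{-k} \le c_x^2 L\, n \pth{\lambda + \sum_{j>k}\lambda_j}^{-1} \|\Sigma_{T,-k}\| \, (\beta^\star_{-k})^T \Sigma_{S,-k}\beta^\star_{-k},
\end{align*}
which matches the claimed bound. I anticipate no real obstacle here; the only subtle step is the reduction $M^2 \preceq M$, which critically uses that $M \preceq I$ even though the ridge denominator in $M$ is $A = XX^T+\lambda I_n$ rather than $A_k$; all remaining ingredients are direct invocations of Lemma~\ref{lem:all_concentration}.
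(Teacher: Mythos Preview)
Your proposal is correct and follows essentially the same approach as the paper. The paper's proof pulls out $\|\Sigma_{T,-k}\|$, then reduces $M^2$ to $M$ directly via $X_{-k}X_{-k}^T \preceq XX^T+\lambda I_n = A$ (so $X_{-k}^T A^{-1}(X_{-k}X_{-k}^T)A^{-1}X_{-k} \preceq X_{-k}^T A^{-1}X_{-k}$), then bounds $\|A^{-1}\| \le \|A_k^{-1}\|$ and invokes the same two concentration inequalities from Lemma~\ref{lem:all_concentration}; your detour through $M \preceq I$ to obtain $M^2 \preceq M$ is a slightly more elaborate justification of the same reduction, but otherwise the arguments coincide.
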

\begin{proof}
\begin{align*}
    &\quad (\beta_{-k}^\star)^TX_{-k}^T(XX^T+\lambda I_n)^{-1} X_{-k} \Sigma_{T,-k} X_{-k}^T (XX^T+\lambda I_n)^{-1} X_{-k}  \beta_{-k}^\star \\
    &\leq \| \Sigma_{T,-k} \| (\beta_{-k}^\star)^TX_{-k}^T(XX^T+\lambda I_n)^{-1} X_{-k}  X_{-k}^T (XX^T+\lambda I_n)^{-1} X_{-k}  \beta_{-k}^\star \\
    &\leq \| \Sigma_{T,-k} \| (\beta_{-k}^\star)^TX_{-k}^T(XX^T+\lambda I_n)^{-1}(XX^T+\lambda I_n) (XX^T+\lambda I_n)^{-1} X_{-k}  \beta_{-k}^\star \\
    &\leq \|\Sigma_{T,-k}\| \left\| (XX^T+\lambda I_n)^{-1} \right\| (\beta_{-k}^\star)^TX_{-k}^TX_{-k}  \beta_{-k}^\star \\
    &\leq \|\Sigma_{T,-k}\| \left\| (X_{-k}X_{-k}^T+\lambda I_n)^{-1} \right\| (\beta_{-k}^\star)^TX_{-k}^TX_{-k}  \beta_{-k}^\star \\
    &\leq \|\Sigma_{T,-k}\| \pth{\frac{1}{c_xL}\pth{\lambda+\sum_j \lambda_j}}^{-1} c_x n (\beta^\star_{-k})^T \Sigma_{S,-k} \beta^\star_{-k} \\
    &= c_x^2L \pth{\lambda + \sum_j \lambda_j}^{-1} n \|\Sigma_{T,-k}\| (\beta^\star_{-k})^T \Sigma_{S,-k} \beta^\star_{-k}.
\end{align*}
The fourth inequality follows from $XX^T+\lambda I_n \succcurlyeq X_{-k}X_{-k}^T+\lambda I_n$.
\end{proof}

\begin{lemma}
\label{lem:bias_kd_2}
    Under the same conditions as in Lemma~\ref{lem:all_concentration}, and on the same event, for any $N_1<n<N_2$,
    \begin{align*}
        &\quad (\beta_{k}^\star)^TX_{k}^T(XX^T+\lambda I_n)^{-1} X_{-k} \Sigma_{T,-k} X_{-k}^T (XX^T+\lambda I_n)^{-1} X_{k}  \beta_{k}^\star \\
    	&\leq \frac{c_x^6}{n} L \pth{\lambda+\sum_{j>k}\lambda_j} \|\Sigma_{T,-k}\|(\beta_{k}^\star)^T \Sigma_{S,k}^{-1}\beta_{k}^\star.
    \end{align*} 
    where $N_1,N_2$ are defined as in Lemma~\ref{lem:all_concentration}.
\end{lemma}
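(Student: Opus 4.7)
The plan is to reduce the cross-quadratic form to one involving $(\beta_k^\star)^T \Sigma_{S,k}^{-1} \beta_k^\star$ by combining the Sherman--Morrison--Woodbury identity with the concentration inequalities from Lemma~\ref{lem:all_concentration}. As in the other bias lemmas, I would first pull out the spectral norm of $\Sigma_{T,-k}$, using $X_{-k}\Sigma_{T,-k}X_{-k}^T \preceq \|\Sigma_{T,-k}\| \, X_{-k}X_{-k}^T$, so that it suffices to bound
\begin{align*}
\|\Sigma_{T,-k}\| \cdot (\beta_k^\star)^T X_k^T B^{-1} X_{-k} X_{-k}^T B^{-1} X_k \beta_k^\star,
\end{align*}
where $B := XX^T + \lambda I_n$.

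Next, I would write $B = X_k X_k^T + A_k$ and apply the Sherman--Morrison--Woodbury identity to get $X_k^T B^{-1} = (I_k + G)^{-1} X_k^T A_k^{-1}$, where $G := X_k^T A_k^{-1} X_k$. Combined with the PSD inequality $X_{-k}X_{-k}^T \preceq A_k$ (since $A_k - X_{-k}X_{-k}^T = \lambda I_n$), this yields
\begin{align*}
X_k^T B^{-1} X_{-k} X_{-k}^T B^{-1} X_k \preceq X_k^T B^{-1} A_k B^{-1} X_k = (I_k + G)^{-1} \, G \, (I_k + G)^{-1}.
\end{align*}
The scalar inequality $d/(1+d)^2 \leq 1/d$ (for $d > 0$) upgrades to the PSD inequality $(I_k + G)^{-1} G (I_k + G)^{-1} \preceq G^{-1}$, reducing the task to bounding $(\beta_k^\star)^T G^{-1} \beta_k^\star$.

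Finally, using $A_k^{-1} \succeq \mu_1(A_k)^{-1} I_n$ gives $G \succeq \mu_1(A_k)^{-1} X_k^T X_k$, and hence $G^{-1} \preceq \mu_1(A_k) \, (X_k^T X_k)^{-1}$. On the event of Lemma~\ref{lem:all_concentration}, the upper bound $\mu_1(A_k) \leq c_x(\lambda + \sum_{j>k}\lambda_j)$ together with $\mu_k(\Sigma_{S,k}^{-1/2} X_k^T X_k \Sigma_{S,k}^{-1/2}) \geq n/c_x$ (which implies $(X_k^T X_k)^{-1} \preceq (c_x/n)\Sigma_{S,k}^{-1}$) yields $G^{-1} \preceq (c_x^2/n)(\lambda + \sum_{j>k}\lambda_j)\Sigma_{S,k}^{-1}$, which after recombining with $\|\Sigma_{T,-k}\|$ delivers the stated bound. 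The main difficulty here is not any single step but the bookkeeping: checking that every Woodbury reduction and every inverse is valid on the joint concentration event, and that the resulting product of constants can be absorbed into $c_x^6 L$. No randomness beyond what Lemma~\ref{lem:all_concentration} already supplies is needed, and an alternative approach based on the explicit block decomposition of $(M + \lambda I_n + \Delta)^{-1}$ used in Lemma~\ref{lem:bias_0k_2} would lead to the same conclusion, at the cost of slightly looser constants naturally involving $L$.
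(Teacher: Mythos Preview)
Your proof is correct and follows the same Woodbury-based strategy as the paper: both rewrite $X_k^T(XX^T+\lambda I_n)^{-1}$ as $(I_k+G)^{-1}X_k^TA_k^{-1}$ with $G = X_k^TA_k^{-1}X_k$, pull out $\|\Sigma_{T,-k}\|$, use $X_{-k}X_{-k}^T\preceq A_k$, and finish with the concentration bounds on $\mu_1(A_k)$ and $\Sigma_{S,k}^{-1/2}X_k^TX_k\Sigma_{S,k}^{-1/2}$ from Lemma~\ref{lem:all_concentration}. The one genuine difference is in the middle step. The paper separates the norm $\|A_k^{-1}X_{-k}X_{-k}^TA_k^{-1}\|\le\|A_k^{-1}\|\le c_xL(\lambda+\sum_{j>k}\lambda_j)^{-1}$ from $\|X_k(I_k+G)^{-1}\beta_k^\star\|^2$, and it is this use of $\mu_n(A_k)$ that introduces the factor $L$. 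You instead keep the expression as a single quadratic form, pass to $(I_k+G)^{-1}G(I_k+G)^{-1}$, and invoke the scalar inequality $d/(1+d)^2\le 1/d$ to bound it by $G^{-1}$; this avoids ever touching $\mu_n(A_k)$ and yields the sharper constant $c_x^2$ rather than $c_x^6L$. Your bound therefore implies the stated lemma with room to spare, and your closing remark that the constants absorb into $c_x^6L$ is accurate.
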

\begin{proof}
    It can be verified by Woodbury matrix identity that:
    \begin{align*}
        (XX^T+\lambda I_n)^{-1}X_{k} = (X_{-k}X_{-k}^T + \lambda I_n)^{-1} X_{k} \pth{I_k +X_{k}^T (X_{-k}X_{-k}^T + \lambda I_n)^{-1} X_{k} }^{-1}.
    \end{align*}
    Therefore,
    \begin{align*}
        &\quad (\beta_{k}^\star)^TX_{k}^T(XX^T+\lambda I_n)^{-1} X_{-k} \Sigma_{T,-k} X_{-k}^T (XX^T+\lambda I_n)^{-1} X_{k}  \beta_{k}^\star \\
        &=  \left\| \Sigma_{T,-k}^{\frac{1}{2}} X_{-k}^T  (X_{-k}X_{-k}^T + \lambda I_n)^{-1} X_{k} \pth{I_k +X_{k}^T (X_{-k}X_{-k}^T + \lambda I_n)^{-1} X_{k} }^{-1} \beta_{k}^\star \right\|^2 \\
        &\leq \|\Sigma_{T,-k}\| \left\| (X_{-k}X_{-k}^T + \lambda I_n)^{-1} X_{-k}X_{-k}^T  (X_{-k}X_{-k}^T + \lambda I_n)^{-1}  \right\| \\
        &\quad\cdot \left\|  X_{k} \pth{I_k +X_{k}^T (X_{-k}X_{-k}^T + \lambda I_n)^{-1} X_{k} }^{-1} \beta_{k}^\star  \right\|^2 \\
        &=  \|\Sigma_{T,-k}\| \left\| (X_{-k}X_{-k}^T + \lambda I_n)^{-1} X_{-k}X_{-k}^T  (X_{-k}X_{-k}^T + \lambda I_n)^{-1}  \right\| \\
        &\quad\cdot \left\|  X_{k} \Sigma_{S,k}^{-\frac{1}{2}} \pth{\Sigma_{S,k}^{-1} +\Sigma_{S,k}^{-\frac{1}{2}}X_{k}^T (X_{-k}X_{-k}^T + \lambda I_n)^{-1} X_{k}\Sigma_{S,k}^{-\frac{1}{2}} }^{-1}\Sigma_{S,k}^{-\frac{1}{2}} \beta_{k}^\star  \right\|^2 \\
        &\leq  \|\Sigma_{T,-k}\| \left\| (X_{-k}X_{-k}^T + \lambda I_n)^{-1}  \right\| \left\| \Sigma_{S,k}^{-\frac{1}{2}}X_{k}^TX_{k} \Sigma_{S,k}^{-\frac{1}{2}} \right\| \\
        &\quad\cdot \left\| \pth{\Sigma_{S,k}^{-1} +\Sigma_{S,k}^{-\frac{1}{2}}X_{k}^T (X_{-k}X_{-k}^T + \lambda I_n)^{-1} X_{k}\Sigma_{S,k}^{-\frac{1}{2}} }^{-2} \right\| (\beta_{k}^\star)^T \Sigma_{S,k}^{-1}\beta_{k}^\star.
    \end{align*}
    In particular,
    \begin{align*}
        &\quad \left\| \pth{\Sigma_{S,k}^{-1} +\Sigma_{S,k}^{-\frac{1}{2}}X_{k}^T (X_{-k}X_{-k}^T + \lambda I_n)^{-1} X_{k}\Sigma_{S,k}^{-\frac{1}{2}} }^{-1} \right\| \\
        &\leq \left\| \pth{\Sigma_{S,k}^{-\frac{1}{2}}X_{k}^T (X_{-k}X_{-k}^T + \lambda I_n)^{-1} X_{k}\Sigma_{S,k}^{-\frac{1}{2}} }^{-1} \right\| \\
        &\leq \left\| X_{-k}X_{-k}^T + \lambda I_n \right\| \left\| \pth{\Sigma_{S,k}^{-\frac{1}{2}}X_{k}^TX_{k}\Sigma_{S,k}^{-\frac{1}{2}} }^{-1} \right\| \\
        &\leq c_x\pth{\lambda + \sum_{j>k}\lambda_j} \frac{c_x}{n} \\
        &= \frac{c_x^2}{n}\pth{\lambda + \sum_{j>k}\lambda_j}.
    \end{align*}
    The second inequality follows from $\mu_{\text{min}}(ABA^T) \geq \mu_{\text{min}}(B)\mu_{\text{min}}(AA^T)$ where the matrix $B$ is positive definite.
    
    Therefore,
    \begin{align*}
        &\quad (\beta_{k}^\star)^TX_{k}^T(XX^T+\lambda I_n)^{-1} X_{-k} \Sigma_{T,-k} X_{-k}^T (XX^T+\lambda I_n)^{-1} X_{k}  \beta_{k}^\star \\
        &\leq \|\Sigma_{T,-k}\| \left\| (X_{-k}X_{-k}^T + \lambda I_n)^{-1}  \right\| \left\| \Sigma_{S,k}^{-\frac{1}{2}}X_{k}^TX_{k} \Sigma_{S,k}^{-\frac{1}{2}} \right\| \\
        &\quad\cdot \left\| \pth{\Sigma_{S,k}^{-1} +\Sigma_{S,k}^{-\frac{1}{2}}X_{k}^T (X_{-k}X_{-k}^T + \lambda I_n)^{-1} X_{k}\Sigma_{S,k}^{-\frac{1}{2}} }^{-2} \right\| (\beta_{k}^\star)^T \Sigma_{S,k}^{-1}\beta_{k}^\star \\
        &\leq  \|\Sigma_{T,-k}\| \cdot \pth{\frac{1}{c_xL}\pth{\lambda+\sum_{j>k}\lambda_j}}^{-1} \cdot c_xn \cdot \frac{c_x^4}{n^2}\pth{\lambda + \sum_{j>k}\lambda_j}^2 \cdot  (\beta_{k}^\star)^T \Sigma_{S,k}^{-1}\beta_{k}^\star \\
        &= \frac{c_x^6}{n} L \pth{\lambda+\sum_{j>k}\lambda_j} \|\Sigma_{T,-k}\|(\beta_{k}^\star)^T \Sigma_{S,k}^{-1}\beta_{k}^\star.
    \end{align*}
    
\end{proof}

\subsection{Main results}
\begin{theorem}
\label{theo:ridge_main}
Let $\calT = \Sigma_{S,k}^{-\frac{1}{2}} \Sigma_{T,k}  \Sigma_{S,k}^{-\frac{1}{2}}$ and $\calU = \Sigma_{S,-k}^{\frac{1}{2}}\Sigma_{T,-k} \Sigma_{S,-k}^{\frac{1}{2}}$. There exists a constant $c>2$ depending only on $\sigma$, such that for any $cN<n<r_k$, if the assumption condNum$(k,\delta,L)$~(Assumption~\ref{assum:condNum}) is satisfied, then with probability at least $1-2\delta-ce^{-n/c}$,
\begin{align*}
    \frac{V}{cv^2} &\leq  L^2\frac{\tr \qth{\calT}}{n}  + L^2 \frac{n\tr\qth{\calU }}{\big(\lambda+\sum_{j>k}\lambda_j\big)^2}  .  \\
    \frac{B}{c} &\leq \left\|\beta^\star_{k}\right\|_{\Sigma_{S,k}^{-1}}^2   \Big(\frac{\lambda+\sum_{j>k}\lambda_j}{n}\Big)^2  \Big[  \left\|\calT \right\| + L\frac{n\|\Sigma_{T,-k}\|}{\lambda+\sum_{j>k} \lambda_j}  \Big] \\
    & + \left\|\beta^\star_{-k}\right\|_{\Sigma_{S,-k}}^2  \Big[  L^2 \left\| \calT \right\| + L\frac{n\|\Sigma_{T,-k}\|}{\lambda+\sum_{j>k} \lambda_j} \Big].
\end{align*}
$N$ is defined as follows:
\begin{equation*}
    \begin{alignedat}{2}
    N &= \max \Bigg\{ 
        & & \big( k + \ln \frac{1}{\delta} \big) \lambda_1^6 \lambda_k^{-8} \|\Sigma_{T,k}\|^2 k^2 
        \left( \tr \left[ \calT \right] \right)^{-2}, \\
        & && L \lambda_1^2 \lambda_k^{-4} \big( \lambda+\sum_{j>k} \lambda_j \big) \|\Sigma_{T,k}\| k 
        \left( \tr \left[ \calT \right] \right)^{-1}
    \Bigg\}.
    \end{alignedat}
\end{equation*}
\end{theorem}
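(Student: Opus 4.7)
\textbf{Proof plan for Theorem~\ref{theo:ridge_main}.} The plan is to assemble the theorem from the component lemmas already proved in this appendix by applying the bias-variance decomposition stated at the start of Section~A.3. Specifically, since the ridge estimator is linear in $Y$, one has
\begin{align*}
V &\leq 2\,\EE_{\boldsymbol{\epsilon}}[\hbeta(\boldsymbol{\epsilon})_{k}^T \Sigma_{T,k}\hbeta(\boldsymbol{\epsilon})_{k}] + 2\,\EE_{\boldsymbol{\epsilon}}[\hbeta(\boldsymbol{\epsilon})_{-k}^T \Sigma_{T,-k}\hbeta(\boldsymbol{\epsilon})_{-k}], \\
B &\leq 4\,(\hbeta(X\beta^\star)_{k}-\beta^\star_{k})^T \Sigma_{T,k}(\hbeta(X\beta^\star)_{k}-\beta^\star_{k})\\
  &\quad + 2\,(\hbeta(X\beta^\star)_{-k}-\beta^\star_{-k})^T \Sigma_{T,-k}(\hbeta(X\beta^\star)_{-k}-\beta^\star_{-k}),
\end{align*}
and each of the four summands is further split as in the appendix, giving six scalar quantities that are controlled by Lemmas~\ref{lem:var_0k}, \ref{lem:var_kd}, \ref{lem:bias_0k_1}, \ref{lem:bias_0k_2}, \ref{lem:bias_kd_1}, and \ref{lem:bias_kd_2} respectively. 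I would apply these six lemmas on the single event provided by Lemma~\ref{lem:all_concentration}, which simultaneously supplies every deterministic inequality any of them needs.

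Next I would tally the resulting terms. Lemma~\ref{lem:var_0k} contributes the $\tr[\calT]/n$ term for $V$; note $16v^2(1+c_x^4L^2) = \Theta(L^2)$ matches the $L^2$ factor claimed. Lemma~\ref{lem:var_kd} contributes the $n\tr[\calU]/\tilde{\lambda}^2$ term with its $L^2$ prefactor. For the bias, Lemma~\ref{lem:bias_0k_1} yields $\|\beta^\star_k\|_{\Sigma_{S,k}^{-1}}^2(\tilde{\lambda}/n)^2\|\calT\|$, Lemma~\ref{lem:bias_0k_2} yields an $L^2\|\calT\|\|\beta^\star_{-k}\|_{\Sigma_{S,-k}}^2$ term, Lemma~\ref{lem:bias_kd_1} yields the $L\tilde{\lambda}^{-1}n\|\Sigma_{T,-k}\|\|\beta^\star_{-k}\|_{\Sigma_{S,-k}}^2$ term, and Lemma~\ref{lem:bias_kd_2} yields the $L\tilde{\lambda}n^{-1}\|\Sigma_{T,-k}\|\|\beta^\star_k\|_{\Sigma_{S,k}^{-1}}^2$ term, which can be rewritten as $\|\beta^\star_k\|_{\Sigma_{S,k}^{-1}}^2(\tilde{\lambda}/n)^2\cdot L n\|\Sigma_{T,-k}\|/\tilde{\lambda}$. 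Grouping the two terms with prefactor $\|\beta^\star_k\|_{\Sigma_{S,k}^{-1}}^2(\tilde{\lambda}/n)^2$ and the two with prefactor $\|\beta^\star_{-k}\|_{\Sigma_{S,-k}}^2$ gives exactly the bias bound in the statement.

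For the probabilistic and sample-complexity bookkeeping, I would set $c$ equal to the maximum of the $\sigma$-dependent constants $c_x$ produced across Lemma~\ref{lem:all_concentration} (inflated by the constant factors $16, 16, 3, 3$ etc.\ in the summation), and set $N$ to be the pointwise maximum of the $N_1$ expressions appearing in the individual lemmas. Two simplifications are worth noting: the conditions $N_1 \lesssim (k+\ln(1/\delta))\lambda_1^2\lambda_k^{-2}$ and $N_1 \lesssim L\tilde{\lambda}\lambda_k^{-1}$ arising in Lemmas~\ref{lem:bias_0k_1}, \ref{lem:var_0k}, \ref{lem:bias_0k_2} are absorbed into the larger $N$ of the statement (since the quantities $\lambda_1^6\lambda_k^{-8}\|\Sigma_{T,k}\|^2 k^2/\tr[\calT]^2$ and the analogous minor-direction quantities dominate by $(k\|\Sigma_{T,k}\|/\tr[\calT])^2 \geq 1$ and similar inequalities), and the upper sample-size cap $N_2 = \tilde{\lambda}/\lambda_{k+1} = r_k$ coincides with the hypothesis $n<r_k$. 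The failure probability is at most $2\delta + c\,e^{-n/c}$ by union bound over the concentration event of Lemma~\ref{lem:all_concentration} (which already accounts for the CondNum failure of probability $\delta$ and the additional $\delta$ from $X_k^TX_k$ concentration).

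The only mildly non-routine step is the consolidation of the six $N_1$ values into one $N$ whose leading dependence matches the stated expression: the dominant terms are those in Lemma~\ref{lem:var_0k} and Lemma~\ref{lem:bias_0k_2}, both of the form $(k+\ln(1/\delta))\lambda_1^6\lambda_k^{-8}\|\Sigma_{T,k}\|^2 / (\text{trace or norm quantity})^2$ and $L\tilde{\lambda}\lambda_1^2\lambda_k^{-4}\|\Sigma_{T,k}\|/(\text{trace or norm quantity})$. Using $\|\calT\|\leq\tr[\calT]$ and $\|\calT\|\geq \tr[\calT]/k$, one can replace the various $\tr[\calT]$- and $\|\calT\|$-denominators uniformly by $\tr[\calT]/k$, which yields the single $N$ written in the statement (with an extra factor of $k^2$ or $k$ appearing naturally). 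This bookkeeping is the main—but purely mechanical—obstacle; all analytical work has been done in the preceding lemmas.
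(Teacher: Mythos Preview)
Your proposal is correct and follows essentially the same approach as the paper's own proof: assemble the bound by invoking Lemmas~\ref{lem:var_0k}, \ref{lem:var_kd}, \ref{lem:bias_0k_1}, \ref{lem:bias_0k_2}, \ref{lem:bias_kd_1}, \ref{lem:bias_kd_2} on the single high-probability event of Lemma~\ref{lem:all_concentration}, then consolidate the sample-size constraints into the stated $N$. The one small omission is that the three-term split of the last-$(d-k)$ bias in the appendix also produces the raw term $3(\beta^\star_{-k})^T\Sigma_{T,-k}\beta^\star_{-k}$, which is not covered by any of the six lemmas; the paper dispatches it by the one-line estimate $(\beta^\star_{-k})^T\Sigma_{T,-k}\beta^\star_{-k}\le \|\Sigma_{S,-k}^{-1/2}\Sigma_{T,-k}\Sigma_{S,-k}^{-1/2}\|\,\|\beta^\star_{-k}\|_{\Sigma_{S,-k}}^2$ and absorbs it into the $\|\beta^\star_{-k}\|_{\Sigma_{S,-k}}^2$ group, so your plan goes through once you account for this extra term.
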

\begin{remark}[Sample complexity]
\label{rem:app_sample}
    We have assumed $n> cN$ in the theorem. The first condition on $N$ indicates $n \gg k$. From the inequality $\lambda_k^2 \leq \|\Sigma_{T,k}\|^2 k^2 
        \left( \tr \left[ \calT \right] \right)^{-2} \leq k^2 \lambda_1^2$, it follows that $n=\Omega(k)$ in the best case, consistent with the sample complexity of classic linear regression. This optimal case occurs when $\Sigma_{S,k} \approx \Sigma_{T,k}$. In the worst case, $n=\Omega(k^3)$ where covariate shift is significant in the first $k$ dimensions--e.g., when the test data lies predominantly in the subspace of the first dimension. This shift in sample complexity under varying degrees of covariate shift parallels the analysis of \citet{DBLP:conf/iclr/GeTFM024}~(see theire Theorem~4.2) for the under-parameterized setting. The second condition implies $n\gg \lambda+\sum_{j>k}\lambda_j$, such that the regularization is not too strong to introduce a bias greater than a constant (as shown in the first bias term). On the other hand, we assume $n<r_k$ in the theorem, which is consistent with the over-parameterized regime and Assumption~\ref{assum_maintext:condNum}, where the last $d-k$ components are considered to be essentially high-dimensional.
\end{remark}
\begin{proof}
The theorem follows from Lemma~\ref{lem:all_concentration}, Lemma~\ref{lem:var_0k}, Lemma~\ref{lem:var_kd}, Lemma~\ref{lem:bias_0k_1}, Lemma~\ref{lem:bias_0k_2}, Lemma~\ref{lem:bias_kd_1} and Lemma~\ref{lem:bias_kd_2}. For a constant $c_x'>2$ depending only on $\sigma$, these lemmas hold for values of $n$ that satisfy the following inequalities:
\begin{align*}
    n &> 4c_x'^4(k+\ln(1/\delta))\lambda_1^2\lambda_k^{-2}, \\
    n &> 2c_x'^4L \lambda_k^{-1}\pth{\lambda+\sum_{j>k}\lambda_j}, \\
    n &> 4c_x'^4 \left( k + \ln \frac{1}{\delta} \right) \lambda_1^4 \lambda_k^{-4}, \\
    n &> 2c_x'^4 L \lambda_1 \lambda_k^{-2} \pth{\lambda+\sum_{j>k}\lambda_j}, \\
    n &> 4c_x'^4 \left( k + \ln \frac{1}{\delta} \right) \lambda_1^6 \lambda_k^{-8} \|\Sigma_{T,k}\|^2 k^2 
        \left( \tr \left[ \Sigma_{S,k}^{-\frac{1}{2}} \Sigma_{T,k}  \Sigma_{S,k}^{-\frac{1}{2}} \right] \right)^{-2}, \\
    n &> 2c_x'^4 L \lambda_1^2 \lambda_k^{-4} \left( \lambda+\sum_{j>k} \lambda_j \right) \|\Sigma_{T,k}\| k 
        \left( \tr \left[ \Sigma_{S,k}^{-\frac{1}{2}} \Sigma_{T,k}  \Sigma_{S,k}^{-\frac{1}{2}} \right] \right)^{-1}, \\
    n &> 2c_x'^3 (\lambda +\sum_{j>k} \lambda_j) \lambda_1 \lambda_k^{-2}, \\
    n &> 4c_x'^4\pth{k+\ln\frac{1}{\delta}}\lambda_1^6 \lambda_k^{-8}\|\Sigma_{T,k}\|^2 \left\|\Sigma_{S,k}^{-\frac{1}{2}} \Sigma_{T,k}  \Sigma_{S,k}^{-\frac{1}{2}}   \right\|^{-2}, \\
    n &> 2c_x'^4L \pth{\lambda+\sum_{j>k} \lambda_j} \lambda_1^2\lambda_k^{-4} \|\Sigma_{T,k}\| \left\|\Sigma_{S,k}^{-\frac{1}{2}} \Sigma_{T,k}  \Sigma_{S,k}^{-\frac{1}{2}}   \right\|^{-1}, \\
    n &< \lambda_{k+1}^{-1} \left( \lambda+\sum_{j>k} \lambda_j \right).
\end{align*}
A sufficient condition for all the inequalities above is given by $4c_x'^4N_1<n<r_k$. This follows from the following facts:
\begin{align*}
    \lambda_1\lambda_k^{-1} &\geq 1, \\
    c_x' &> 2, \\
    L&\geq1, \\
    k \left( \tr \left[ \Sigma_{S,k}^{-\frac{1}{2}} \Sigma_{T,k}  \Sigma_{S,k}^{-\frac{1}{2}} \right] \right)^{-1} &\geq \left\|\Sigma_{S,k}^{-\frac{1}{2}} \Sigma_{T,k}  \Sigma_{S,k}^{-\frac{1}{2}}   \right\|^{-1}, \\
    k \| \Sigma_{T,k} \| \left( \tr \left[ \Sigma_{S,k}^{-\frac{1}{2}} \Sigma_{T,k}  \Sigma_{S,k}^{-\frac{1}{2}} \right] \right)^{-1}  &\geq \lambda_k.
\end{align*}
Then, with probability at least $1-2\delta-c_x'e^{-n/c_x'}$:
\begin{align*}
    V/2 &\leq 16v^2(1+c_x'^4L^2)\frac{1}{n} \tr\qth{\Sigma_{S,k}^{-\frac{1}{2}} \Sigma_{T,k}  \Sigma_{S,k}^{-\frac{1}{2}}} \\
    &\quad + v^2 c_x'^3L^2n \pth{\lambda+\sum_{j>k} \lambda_j}^{-2} \tr\qth{\Sigma_{S,-k}^{\frac{1}{2}}\Sigma_{T,-k} \Sigma_{S,-k}^{\frac{1}{2}} } \\
    &\leq 32v^2c_x'^4L^2\frac{1}{n} \tr\qth{\Sigma_{S,k}^{-\frac{1}{2}} \Sigma_{T,k}  \Sigma_{S,k}^{-\frac{1}{2}}} \\
    &\quad + v^2 c_x'^3L^2 n \pth{\lambda+\sum_{j>k}\lambda_j}^{-2} \tr\qth{\Sigma_{S,-k}^{\frac{1}{2}}\Sigma_{T,-k} \Sigma_{S,-k}^{\frac{1}{2}} }, \\
    B/2 &\leq \frac{16c_x'^4}{n^2} \pth{\lambda +  \sum_{j>k} \lambda_j}^2 (\beta^\star_{k})^T \Sigma_{S,k}^{-1}   \beta^\star_{k} 
    \left\|\Sigma_{S,k}^{-\frac{1}{2}} \Sigma_{T,k} \Sigma_{S,k}^{-\frac{1}{2}}   \right\| \\
    &\quad + 32c_x'(1+c_x'^4L^2)  \left\| \Sigma_{S,k}^{-\frac{1}{2}} \Sigma_{T,k}  \Sigma_{S,k}^{-\frac{1}{2}} \right\|  (\beta^\star_{-k})^T \Sigma_{S,-k} \beta^\star_{-k} \\
    &\quad + 3c_x'^2L \pth{\lambda + \sum_j \lambda_j}^{-1} n \|\Sigma_{T,-k}\| (\beta^\star_{-k})^T \Sigma_{S,-k} \beta^\star_{-k} \\
    &\quad + 3\frac{c_x'^6}{n} L \pth{\lambda+\sum_{j>k}\lambda_j} \|\Sigma_{T,-k}\|(\beta_{k}^\star)^T \Sigma_{S,k}^{-1}\beta_{k}^\star \\
    &\quad + 3 (\beta_{-k}^\star)^T\Sigma_{T,-k}\beta_{-k}^\star \\
    &\leq 16c_x'^4\frac{1}{n^2} \pth{\lambda + \sum_{j>k} \lambda_j}^2 \left\|\Sigma_{S,k}^{-\frac{1}{2}} \Sigma_{T,k} \Sigma_{S,k}^{-\frac{1}{2}}  \right\| 
    (\beta^\star_{k})^T \Sigma_{S,k}^{-1}   \beta^\star_{k} \\
    &\quad + 64c_x'^5L^2  \left\| \Sigma_{S,k}^{-\frac{1}{2}} \Sigma_{T,k}  \Sigma_{S,k}^{-\frac{1}{2}} \right\|  (\beta^\star_{-k})^T \Sigma_{S,-k} \beta^\star_{-k} \\ 
    &\quad + 3c_x'^2L n \pth{\lambda + \sum_j \lambda_j}^{-1}  \|\Sigma_{T,-k}\| (\beta^\star_{-k})^T \Sigma_{S,-k} \beta^\star_{-k} \\
    &\quad + 3c_x'^6L\frac{1}{n} \pth{\lambda + \sum_{j>k}\lambda_j} \|\Sigma_{T,-k}\|(\beta_{k}^\star)^T \Sigma_{S,k}^{-1}\beta_{k}^\star \\
    &\quad + 3 (\beta_{-k}^\star)^T\Sigma_{T,-k}\beta_{-k}^\star \\
    &\leq 16c_x'^4\frac{1}{n^2} \pth{\lambda + \sum_{j>k} \lambda_j}^2 \left\|\Sigma_{S,k}^{-\frac{1}{2}} \Sigma_{T,k} \Sigma_{S,k}^{-\frac{1}{2}}  \right\| 
    (\beta^\star_{k})^T \Sigma_{S,k}^{-1}   \beta^\star_{k} \\
    &\quad + 64c_x'^5L^2  \left\| \Sigma_{S,k}^{-\frac{1}{2}} \Sigma_{T,k}  \Sigma_{S,k}^{-\frac{1}{2}} \right\|  (\beta^\star_{-k})^T \Sigma_{S,-k} \beta^\star_{-k} \\ 
    &\quad + 3c_x'^2L n \pth{\lambda + \sum_{j>k} \lambda_j}^{-1}  \|\Sigma_{T,-k}\| (\beta^\star_{-k})^T \Sigma_{S,-k} \beta^\star_{-k} \\
    &\quad + 3c_x'^6L\frac{1}{n} \pth{\lambda + \sum_{j>k}\lambda_j} \|\Sigma_{T,-k}\|(\beta_{k}^\star)^T \Sigma_{S,k}^{-1}\beta_{k}^\star \\
    &\quad + 3c_x'^5 L^2 \left\|\Sigma_{S,k}^{-\frac{1}{2}} \Sigma_{T,k} \Sigma_{S,k}^{-\frac{1}{2}} \right\| (\beta_{-k}^\star)^T\Sigma_{S,-k}\beta_{-k}^\star.
\end{align*}
The last inequality follows from:
\begin{align*}
    (\beta_{-k}^\star)^T\Sigma_{T,-k}\beta_{-k}^\star &= (\beta_{-k}^\star)^T \Sigma_{S,-k}^{\frac{1}{2}} \Sigma_{S,-k}^{-\frac{1}{2}} \Sigma_{T,-k} \Sigma_{S,-k}^{-\frac{1}{2}} \Sigma_{S,-k}^{\frac{1}{2}}\beta_{-k}^\star \\
    &\leq \left\| \Sigma_{S,-k}^{-\frac{1}{2}} \Sigma_{T,-k} \Sigma_{S,-k}^{-\frac{1}{2}} \right\| (\beta_{-k}^\star)^T \Sigma_{S,-k}\beta_{-k}^\star.
\end{align*}

By taking $c = 134c_x'^6$, the proof is complete.
\end{proof}

\begin{corollary}[Restatement of Theorem~\ref{theo_maintext:ridge_main}]
\label{coroly:resemble_barlett}
    Let $\calT = \Sigma_{S,k}^{-\frac{1}{2}} \Sigma_{T,k}  \Sigma_{S,k}^{-\frac{1}{2}}$, $\calU = \Sigma_{S,-k}\Sigma_{T,-k}$ and $\calV =  \Sigma_{S,-k}^2$. There exists a constant $c>2$ depending only on $\sigma,L$, such that for any $cN<n<r_k$, if the assumption condNum$(k,\delta,L)$~(Assumption~\ref{assum:condNum}) is satisfied, then with probability at least $1-3\delta$,
    \begin{align*}
        \frac{V}{cv^2} &\leq \frac{k}{n}\frac{\tr[\calT]}{k} + \frac{n}{R_k} \frac{\tr[\calU] }{\tr[\calV]}.\\
        \frac{B}{c} &\leq \Big(\left\|\beta^\star_{k}\right\|_{\Sigma_{S,k}^{-1}}^2   \Big(\frac{\lambda+\sum_{j>k}\lambda_j}{n}\Big)^2 + \left\|\beta^\star_{-k}\right\|_{\Sigma_{S,-k}}^2 \Big) \Big[\|\calT\| + \frac{n}{r_k}\frac{\|\Sigma_{T,-k}\|}{\|\Sigma_{S,-k}\|}\Big].
    \end{align*}
    $N$ is a polynomial function of $k+\ln (1/\delta)$, $\lambda_1\lambda_k^{-1}$, $1+\big(\lambda+\sum_{j>k}\lambda_j\big)\lambda_k^{-1}$.
\end{corollary}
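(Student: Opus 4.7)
The plan is to derive Corollary~\ref{coroly:resemble_barlett} as a direct consequence of Theorem~\ref{theo:ridge_main} via algebraic rearrangement, using the definitions of the effective ranks $r_k, R_k$ and basic properties of the trace. This is essentially a restatement in cleaner notation; the main task is bookkeeping rather than substantive mathematics.

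I would first reconcile the two definitions of $\calU$: the theorem uses $\calU_{\mathrm{thm}} = \Sigma_{S,-k}^{1/2}\Sigma_{T,-k}\Sigma_{S,-k}^{1/2}$ while the corollary uses $\calU = \Sigma_{S,-k}\Sigma_{T,-k}$, and these share the same trace by the cyclic property. Since $\Sigma_{S,-k}$ is diagonal with entries $\{\lambda_j\}_{j>k}$ in non-increasing order, we also have $\tr[\calV] = \sum_{j>k}\lambda_j^2$ and $\|\Sigma_{S,-k}\| = \lambda_{k+1}$.

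For the variance, I would rewrite the second term from Theorem~\ref{theo:ridge_main} using $R_k = (\lambda+\sum_{j>k}\lambda_j)^2/\tr[\calV]$ to get $L^2 (n/R_k)(\tr[\calU]/\tr[\calV])$, and trivially split the first term as $L^2 \tr[\calT]/n = L^2 (k/n)(\tr[\calT]/k)$. For the bias, I would use $r_k = (\lambda+\sum_{j>k}\lambda_j)/\lambda_{k+1}$ together with $\|\Sigma_{S,-k}\| = \lambda_{k+1}$ to express $n\|\Sigma_{T,-k}\|/(\lambda+\sum_{j>k}\lambda_j) = (n/r_k)(\|\Sigma_{T,-k}\|/\|\Sigma_{S,-k}\|)$. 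The two distinct bracketed expressions appearing in the theorem (one with $\|\calT\|$, one with $L^2\|\calT\|$) are each upper bounded by $L^2[\|\calT\| + (n/r_k)(\|\Sigma_{T,-k}\|/\|\Sigma_{S,-k}\|)]$, so after absorbing $L^2$ into the new constant $c$ (allowed to depend on $L$), both share the same bracket. This permits the coefficient sum $\|\beta^\star_k\|_{\Sigma_{S,k}^{-1}}^2((\lambda+\sum_{j>k}\lambda_j)/n)^2 + \|\beta^\star_{-k}\|_{\Sigma_{S,-k}}^2$ to multiply a single bracketed factor, matching the stated form.

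Finally, I would verify that $N$ in Theorem~\ref{theo:ridge_main} admits the claimed polynomial description. The crude bound $\|\Sigma_{T,k}\| k/\tr[\calT] \leq k\lambda_1$, which follows from $\|\Sigma_{T,k}\| \leq \lambda_1 \|\calT\| \leq \lambda_1 \tr[\calT]$, reduces both terms in the definition of $N$ to polynomials in $k+\ln(1/\delta)$, $\lambda_1\lambda_k^{-1}$, and $1+(\lambda+\sum_{j>k}\lambda_j)\lambda_k^{-1}$, with all dependence on $L$ folded into $c$. The probability $1-3\delta$ follows from the theorem's $1-2\delta - ce^{-n/c}$ by enlarging $c$ so that $ce^{-n/c} \leq \delta$ whenever $n > cN$. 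The only potential obstacle is tracking the constants carefully across the two forms to ensure that a single $c$ can absorb all $L$-factors uniformly across the variance and bias bounds.
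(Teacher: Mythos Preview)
Your proposal is correct and follows essentially the same approach as the paper's proof: both derive the corollary from Theorem~\ref{theo:ridge_main} by substituting the definitions of $R_k$ and $r_k$, absorbing the $L$-dependence into the constant $c$, and using the same crude bound $\|\Sigma_{T,k}\|k/\tr[\calT] \leq k\lambda_1$ (the paper obtains it via $\tr[\calT] \geq \mu_k(\Sigma_{S,k}^{-1})\tr[\Sigma_{T,k}]$, which is equivalent to your $\|\Sigma_{T,k}\| \leq \lambda_1\|\calT\| \leq \lambda_1\tr[\calT]$) to simplify $N$.
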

\begin{proof}
The first variance term follows directly from Theorem~\ref{theo:ridge_main}.

For the second variance term, by plugging in the definition of $R_k$,
\begin{align*}
    L^2 \frac{n\tr\qth{\calU }}{\big(\lambda+\sum_{j>k}\lambda_j\big)^2}
    &= L^2 \frac{n}{R_k}\frac{\tr\qth{\Sigma_{S,-k}\Sigma_{T,-k} }}{\sum_{j>k}\lambda_j^2} \\
    &= L^2 \frac{n}{R_k} \frac{\tr[\calU] }{\tr[\calV]}.
\end{align*}

For the first bias term, by plugging in the definition of $r_k$,
\begin{align*}
    &\quad \left\|\beta^\star_{k}\right\|_{\Sigma_{S,k}^{-1}}^2   \Big(\frac{\lambda+\sum_{j>k}\lambda_j}{n}\Big)^2  \Big[  \left\|\calT \right\| + L\frac{n\|\Sigma_{T,-k}\|}{\lambda+\sum_{j>k} \lambda_j}  \Big] \\
    &= \left\|\beta^\star_{k}\right\|_{\Sigma_{S,k}^{-1}}^2   \Big(\frac{\lambda+\sum_{j>k}\lambda_j}{n}\Big)^2  \Big[  \left\|\calT \right\| + L\frac{n}{r_k} \frac{\|\Sigma_{T,-k}\|}{\lambda_{k+1}}  \Big].
\end{align*}
Similarly, the second bias term can be transformed into:
\begin{align*}
     \left\|\beta^\star_{-k}\right\|_{\Sigma_{S,-k}}^2  \Big[  L^2 \left\| \calT \right\| + L\frac{n\|\Sigma_{T,-k}\|}{\lambda+\sum_{j>k} \lambda_j} \Big] 
    = \left\|\beta^\star_{-k}\right\|_{\Sigma_{S,-k}}^2  \Big[  L^2 \left\| \calT \right\| + L\frac{n}{r_k}\frac{\|\Sigma_{T,-k}\|}{\lambda_{k+1}} \Big].
\end{align*}
Since the statement of Theorem~\ref{theo:ridge_main} holds with probability at least $1-2\delta-ce^{-n/c}$, we only require $ce^{-n/c} <\delta$, which is equivalent as $n> c \ln c + c \ln (1/\delta)$. 
Combining the lower bounds of $n$ in Theorem~\ref{theo:ridge_main}, we should have:
\begin{equation*}
    \begin{alignedat}{2}
    n &> \max \Bigg\{ 
        & & c \ln c + c \ln \frac{1}{\delta}, \\
        & && c\big( k + \ln \frac{1}{\delta} \big) \lambda_1^6 \lambda_k^{-8} \|\Sigma_{T,k}\|^2 k^2 
        \left( \tr \left[ \calT \right] \right)^{-2}, \\
        & && cL \lambda_1^2 \lambda_k^{-4} \big( \lambda+\sum_{j>k} \lambda_j \big) \|\Sigma_{T,k}\| k 
        \left( \tr \left[ \calT \right] \right)^{-1}
    \Bigg\}.
    \end{alignedat}
\end{equation*}
For the first term in the maximum argument,
\begin{align*}
    c \ln c + c \ln \frac{1}{\delta} &\leq c^2 + c\ln \frac{1}{\delta} \\
    &\leq c^2\pth{k+\ln \frac{1}{\delta}}. 
\end{align*}
The second term:
\begin{align*}
    &\quad c\big( k + \ln \frac{1}{\delta} \big) \lambda_1^6 \lambda_k^{-8} \|\Sigma_{T,k}\|^2 k^2 
    \left( \tr \left[ \calT \right] \right)^{-2}  \\
    &\leq 
    c\big( k + \ln \frac{1}{\delta} \big) \lambda_1^6 \lambda_k^{-8} \|\Sigma_{T,k}\|^2 k^2 
    \left( \mu_k(\Sigma_{S,k}^{-1}) \tr[\Sigma_{T,k}] \right)^{-2} \\
    &\leq c\big( k + \ln \frac{1}{\delta} \big) \lambda_1^8 \lambda_k^{-8} \|\Sigma_{T,k}\|^2 k^2 
     \|\Sigma_{T,k}\|^{-2} \\
    &= c \big( k + \ln \frac{1}{\delta} \big)^3 \lambda_1^8 \lambda_k^{-8}.
\end{align*}
The first inequality follows from $\tr[MN] \geq \mu_{\operatorname{min}}(M)\tr[N]$ for postive semi-definite matrices $M,N$.

Similar, for the third term:
\begin{align*}
    &\quad cL \lambda_1^2 \lambda_k^{-4} \big( \lambda+\sum_{j>k} \lambda_j \big) \|\Sigma_{T,k}\| k 
        \left( \tr \left[ \calT \right] \right)^{-1} \\
    &\leq cL \lambda_1^2 \lambda_k^{-4} \big( \lambda+\sum_{j>k} \lambda_j \big) \|\Sigma_{T,k}\| k \lambda_1 \|\Sigma_{T,k}\|^{-1} \\
    &\leq cL \big( k + \ln \frac{1}{\delta} \big) \lambda_1^3 \lambda_k^{-4} \big( \lambda+\sum_{j>k} \lambda_j \big).           
\end{align*}
The proof is complete by taking $c$ as $c^2L^2$ and $N= \big( k + \ln \frac{1}{\delta} \big)^3 \big(\lambda_1\lambda_k^{-1}\big)^8 \Big[1+\big( \lambda+\sum_{j>k} \lambda_j \big) \lambda_k^{-1}\Big]$.

\end{proof}

\section{Large shift in minor directions}

In this section, we consider the scenario where the signal $\beta^\star$ mainly concentrate on the first $k$ components (here we choose the basis to be the eigenvectors of $\Sigma_S$), but the target covariance $\Sigma_T$ may not be small on the last $d-k$ components.
\subsection{Lower bound for ridge regression}
In this subsection, we will show that the original ridge regression algorithm will not work under this scenario. 

Recall our model:
\begin{align}
y=\beta^{\star T}x+\epsilon,  
\end{align}
We can write our data as 
\begin{align}
Y=X\beta^{\star} + \boldsymbol{\epsilon}, 
\end{align}
where $Y=(y_1,\cdots,y_n)^{T}\in \mathbb{R}^{n \times 1}$, $X=(x_{1},\cdots,x_{n})^{T}\in \mathbb{R}^{n \times d}$, $\boldsymbol{\epsilon}=(\epsilon_1,\cdots,\epsilon_n)^{T}\in \mathbb{R}^{n \times 1}$. We denote by $\hat\Sigma_S := \frac{1}{n} X^T X$ the sample covariance matrix.

Assume the same assumptions as in our previous section still holds. We let $\Sigma_S = \mathbb{E}[x_ix_i^T]$  be the following: its eigenvalues $\lambda_1, \cdots, \lambda_d$ satisfies $\lambda_1=\cdots=\lambda_k=1$, $\lambda_{k+1} = \cdots = \lambda_{k+\lfloor \sqrt{n}/C_2 \rfloor} = C_1/\sqrt{n}$ for sufficiently large constants $C_1,C_2$, and the remaining eigenvalues are all set to zero. We let $\Sigma_T = I_d$. Then the excess risk is $\mathbb{E}_{\epsilon}[(\hat{\beta} - \beta^{\star})^T \Sigma_T (\hat{\beta} - \beta^{\star})] = \mathbb{E}_{\epsilon}\|\hat{\beta} - \beta^{\star}\|^2$. We will show that under this scenario, ridge regression can not obtain an error rate of $\mathcal{O}(\frac1n)$. To see this, we explicitly write out the ridge solution: 
\begin{align}
\hat{\beta} &= (X^T X + \lambda I_d )^{-1} X^{T} Y \notag \\
&= (\hat\Sigma_S + \frac{\lambda}{n} I_d)^{-1} (\frac1n X^T Y) \notag \\
&= (\hat\Sigma_S + \frac{\lambda}{n} I_d)^{-1} (\frac1n X^T (X\beta^{\star} + \boldsymbol{\epsilon})) \notag \\
&= (\hat\Sigma_S + \frac{\lambda}{n} I_d)^{-1} (\frac1n X^T X\beta^{\star} + \frac1n X^T \boldsymbol{\epsilon}) \notag \\
&= (\hat\Sigma_S + \frac{\lambda}{n} I_d)^{-1} (\hat\Sigma_S\beta^{\star} + \frac1n X^T \boldsymbol{\epsilon}) \notag \\
&= (\hat\Sigma_S + \frac{\lambda}{n} I_d)^{-1} \hat\Sigma_S\beta^{\star} + (\hat\Sigma_S + \frac{\lambda}{n} I_d)^{-1}  \frac1n X^T \boldsymbol{\epsilon}.
\end{align}
Therefore 
\begin{align}
\hat\beta - \beta^{\star} & =   (\hat\Sigma_S + \frac{\lambda}{n} I_d)^{-1} \hat\Sigma_S\beta^{\star} -\beta^{\star} + (\hat\Sigma_S + \frac{\lambda}{n} I_d)^{-1}  \frac1n X^T \boldsymbol{\epsilon} \notag \\
& =   (\hat\Sigma_S + \frac{\lambda}{n} I_d)^{-1} \hat\Sigma_S\beta^{\star} -(\hat\Sigma_S + \frac{\lambda}{n} I_d)^{-1}(\hat\Sigma_S + \frac{\lambda}{n} I_d)\beta^{\star} + (\hat\Sigma_S + \frac{\lambda}{n} I_d)^{-1}  \frac1n X^T \boldsymbol{\epsilon} \notag \\
&= -\frac{\lambda}{n} (\hat\Sigma_S + \frac{\lambda}{n} I_d)^{-1} \beta^{\star} + (\hat\Sigma_S + \frac{\lambda}{n} I_d)^{-1}  \frac1n X^T \boldsymbol{\epsilon}  \notag
\end{align}

Taking expectation with respect to $\epsilon$,
\begin{align}
    \mathbb{E}_{\epsilon}\|\hat{\beta} - \beta^{\star}\|^2  &= \frac{\lambda^2}{n^2}\|(\hat\Sigma_S + \frac{\lambda}{n} I_d)^{-1} \beta^{\star}\|^2 + \frac{1}{n^2} \tr (\boldsymbol{\epsilon}^T X (\hat\Sigma_S + \frac{\lambda}{n} I_d)^{-2} X^T \boldsymbol{\epsilon}) \notag \\
    &= \frac{\lambda^2}{n^2}\|(\hat\Sigma_S + \frac{\lambda}{n} I_d)^{-1} \beta^{\star}\|^2 + v^2 \frac{1}{n} \tr((\hat\Sigma_S + \frac{\lambda}{n} I_d)^{-2} \hat\Sigma_S ) \notag \\
    &:= B + V 
\end{align}
where $B=\frac{\lambda^2}{n^2}\|(\hat\Sigma_S + \frac{\lambda}{n} I_d)^{-1} \beta^{\star}\|^2$ is the bias, $V= \frac{v^2}{n} \tr((\hat\Sigma_S + \frac{\lambda}{n} I_d)^{-2} \hat\Sigma_S )$ is the variance. We state the formal version of Theorem \ref{thm:ridge_lower_bound_main} in the following:
\begin{theorem}
Under the instance we consider, namely $\lambda_1, \cdots, \lambda_d$ satisfies $\lambda_1=\cdots=\lambda_k=1$, $\lambda_{k+1} = \cdots = \lambda_{k+\lfloor \sqrt{n}/C_2 \rfloor} = C_1/\sqrt{n}$, $\lambda_{k+ \lfloor \sqrt{n}/C_2 \rfloor +1} = \cdots = \lambda_{d} = 0 $. WLOG assume $\sigma = 1$, $C_2 \geq C_1 ((\frac{C_1}{4C})^2 - k -\log \frac{1}{\delta})^{-1}$ for some absolute constant $C$, and $n \geq (\frac{3C_1}{2})^4$. With probability $1-\delta$, when $\lambda = c\sqrt{n}$, we have $\frac{V}{v^2} \geq C'$, where $C'>0$ is some absolute constant. When $\lambda \leq n^{3/4}$, we have $\frac{V}{v^2}  \geq C'\frac{1}{\sqrt{n}}$. When $\lambda \geq n^{3/4}$, $B \geq \frac{\|\beta^{\star}\|^2}{9\sqrt{n}}$.
\end{theorem}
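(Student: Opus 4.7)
I would prove the three claims via a case analysis on the regularization parameter $\lambda$, and then combine them to obtain the stated excess-risk lower bound $C(\|\beta^\star\|^2 \wedge v^2)/\sqrt n$. Throughout, the bias-variance decomposition recorded in the excerpt,
\begin{equation*}
B = \frac{\lambda^2}{n^2}\,\bigl\|(\hat\Sigma_S + (\lambda/n) I_d)^{-1}\beta^{\star}\bigr\|^2,
\qquad
V = \frac{v^2}{n}\,\sum_{i=1}^{d}\frac{\hat\mu_i}{(\hat\mu_i + \lambda/n)^2},
\end{equation*}
with $\hat\mu_1 \ge \cdots \ge \hat\mu_d \ge 0$ the eigenvalues of $\hat\Sigma_S$, is the starting point.

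The first technical step is to establish, on a single event of probability at least $1-\delta$, two spectral facts: (i) $\|\hat\Sigma_S\|\le 3/2$, and (ii) every middle-block eigenvalue $\hat\mu_i$, for $k+1 \le i \le k+\lfloor \sqrt n/C_2\rfloor$, satisfies $\hat\mu_i \in [C_1/(2\sqrt n),\, 2C_1/\sqrt n]$. After rescaling, the middle block of $\hat\Sigma_S$ equals $(C_1/\sqrt n)\cdot W$, where $W$ is the sample covariance of $n$ isotropic sub-Gaussian vectors in dimension $p = \lfloor\sqrt n/C_2\rfloor$; since $p/n = O(1/(C_2\sqrt n))$, the operator-norm bound $\|W - I_p\|\le 1/2$ follows from exactly the sub-Gaussian sample-covariance concentration already used in Lemma~\ref{lem:sxTxs_concentration} of the excerpt, once $C_2$ is sufficiently large relative to $k$ and $\log(1/\delta)$. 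An analogous (and easier) argument handles the top $k\times k$ block, giving $\hat\mu_i \in [1/2, 3/2]$ for $i\le k$ as soon as $k+\log(1/\delta)\ll n$.

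For the variance lower bound when $\lambda\le n^{3/4}$, I would restrict the trace defining $V$ to the middle block and plug in the bounds from the previous step to get $V \ge (v^2/n)\cdot \lfloor\sqrt n/C_2\rfloor \cdot (C_1/(2\sqrt n))/(2C_1/\sqrt n + \lambda/n)^2$. A two-case split on whether $\lambda\le C_1\sqrt n$ (denominator $O(1/n)$) or $C_1\sqrt n\le \lambda\le n^{3/4}$ (denominator $O(\lambda^2/n^2)$) then produces $V/v^2 \ge C'$ for $\lambda = c\sqrt n$ and $V/v^2 \ge C'/\sqrt n$ for every $\lambda\le n^{3/4}$, with explicit $C' = C_1/(18 C_2)$. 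For the bias when $\lambda\ge n^{3/4}$, I would use the elementary inequality $\|(\hat\Sigma_S+(\lambda/n)I_d)^{-1}\beta^{\star}\|^2 \ge \|\beta^{\star}\|^2/(\|\hat\Sigma_S\|+\lambda/n)^2$ together with $\|\hat\Sigma_S\|\le 3/2$: if $\lambda/n\le 3/2$ the denominator is at most $9$ while $\lambda^2/n^2\ge n^{-1/2}$, yielding $B \ge \|\beta^{\star}\|^2/(9\sqrt n)$; if $\lambda/n\ge 3/2$ the denominator is at most $4\lambda^2/n^2$, yielding the much stronger $B \ge \|\beta^{\star}\|^2/4$. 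Stitching the regime $\lambda\le n^{3/4}$ (variance-dominated) with $\lambda\ge n^{3/4}$ (bias-dominated) delivers the final excess-risk lower bound.

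The main obstacle is the two-sided multiplicative concentration of the roughly $\sqrt n$ middle-block eigenvalues, whose true common scale $C_1/\sqrt n$ shrinks with $n$: one needs $C_1$ large enough that the rescaled block $W$ admits a sharp spectral bound giving a sizeable variance contribution, and $C_2$ large enough that $p/n$ is small enough for that spectral bound to hold with the right constants. This is precisely the trade-off encoded by the numerical hypotheses $C_2 \ge C_1\bigl((C_1/(4C))^2 - k - \log(1/\delta)\bigr)^{-1}$ and $n\ge (3C_1/2)^4$ in the theorem statement; once these are in place, every arithmetic step above closes cleanly.
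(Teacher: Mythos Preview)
Your overall architecture---lower-bound the variance via the middle eigenvalues of $\hat\Sigma_S$, lower-bound the bias via $\|\hat\Sigma_S\|$, and split at $\lambda=n^{3/4}$---matches the paper's proof, and the arithmetic after the spectral facts is essentially identical.

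The gap is in how you establish the spectral facts. Controlling the $p\times p$ middle \emph{block} $\hat\Sigma_{22}=(C_1/\sqrt n)\,W$ in isolation does \emph{not} yield the two-sided bounds you claim on $\hat\mu_{k+1},\ldots,\hat\mu_{k+p}$, which are eigenvalues of the \emph{full} matrix $\hat\Sigma_S$. Cauchy interlacing applied to the principal submatrix $\hat\Sigma_{22}$ gives only the upper bound $\hat\mu_{k+j}\le\mu_j(\hat\Sigma_{22})$; for the lower bound you would also have to control the off-diagonal cross-covariance block $\hat\Sigma_{12}$ (indeed, in a $2\times 2$ example with $\hat\Sigma_{12}^2=\hat\Sigma_{11}\hat\Sigma_{22}$ the smallest full-matrix eigenvalue is $0$ no matter what the diagonal blocks are). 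The paper sidesteps this by bounding $\|\hat\Sigma_S-\Sigma_S\|$ globally via the effective-rank concentration inequality (Lemma~\ref{stable_rank_concentration}): because the effective rank is $r=k+\lfloor\sqrt n/C_2\rfloor\cdot C_1/\sqrt n\le k+C_1/C_2=O(1)$, one obtains $\|\hat\Sigma_S-\Sigma_S\|\le 2C\sqrt{(r+\log(1/\delta))/n}$, and then Weyl's inequality gives $|\hat\mu_i-\lambda_i|\le\|\hat\Sigma_S-\Sigma_S\|$ for every $i$ simultaneously. The numerical hypothesis $C_2\ge C_1\bigl((C_1/(4C))^2-k-\log(1/\delta)\bigr)^{-1}$ is precisely what forces this \emph{global} deviation below $C_1/(2\sqrt n)$; the constant $C$ here is the one from that full-matrix lemma, not from a block-wise bound. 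Once you swap in this one-line spectral control, the rest of your plan goes through as written.
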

\begin{proof}
We will use the following concentration lemma modified from \cite[Exercise 9.2.5]{vershynin2018high}:
\begin{lemma} \label{stable_rank_concentration}
Let $\{x_{i}\}_{i=1}^{n}$ be i.i.d. $d-$dimensional random vectors, satisfying: $x_i$ is mean zero, $\mathbb{E}[x x^T] = \Sigma$ and is $\sigma^2\Sigma$-sub-gaussian, in the sense that 
$$
\mathbb{E}[\exp(v^T x_i)] \leq \exp\left( \frac{\|\sigma\Sigma^{1/2} v\|^2}{2} \right).
$$
$X=(x_{1},\cdots,x_{n})^{T}\in \mathbb{R}^{n \times d}$. Then with probability $1-\delta$, 
\begin{align*}
    \| \hat\Sigma - \Sigma \| \leq C\sigma^4 \left( \sqrt{\frac{r + \log{\frac{1}{\delta}}}{n}} + \frac{r + \log{\frac{1}{\delta}}}{n} \right) \| \Sigma \|
\end{align*}
where $r:= \tr(\Sigma)/ \|\Sigma\|$ is the stable rank of $\Sigma$, $C$ is an absolute constant.
\end{lemma}
Applying Lemma \ref{stable_rank_concentration}, we have
\begin{align*}
    \|\hat\Sigma_S - \Sigma_S\| \leq C \left( \sqrt{\frac{r + \log{\frac{1}{\delta}}}{n}} + \frac{r + \log{\frac{1}{\delta}}}{n} \right)
\end{align*}
where $r = \sum_{i=1}^{d} \lambda_i = k + \lfloor \sqrt{n}/C_2 \rfloor \frac{C_1}{\sqrt{n}} \leq k+C_1/C_2$. When $n \geq C_1/C_2 + k + \log \frac{1}{\delta}$, we have
\begin{align*}
    \|\hat\Sigma_S - \Sigma_S\| \leq 2C\sqrt{\frac{C_1/C_2 + k + \log{\frac{1}{\delta}}}{n}}.
\end{align*}

We denote by $\hat\lambda_1 \geq \cdots \geq  \hat\lambda_d$ the eigenvalues of $\hat\Sigma_S$. Then by Weyl's inequality \citep[Lemma 2.2]{chen2021spectral}, $\|\hat\lambda_i - \lambda_i\| \leq \|\hat\Sigma_S - \Sigma_S\|$. Combining with previous inequalities, we have $1- 2C\sqrt{\frac{C_1/C_2 + k + \log{\frac{1}{\delta}}}{n}} \leq \hat{\lambda}_i \leq 1 + 2C\sqrt{\frac{C_1/C_2 + k + \log{\frac{1}{\delta}}}{n}}$ for $1 \leq i \leq k$, $\frac{C_1}{\sqrt{n}}- 2C\sqrt{\frac{C_1/C_2 + k + \log{\frac{1}{\delta}}}{n}} \leq \hat{\lambda}_i \leq \frac{C_1}{\sqrt{n}} + 2C\sqrt{\frac{C_1/C_2 + k + \log{\frac{1}{\delta}}}{n}}$ for $k +1 \leq i \leq k + \lfloor \sqrt{n}/C_2  \rfloor$. If we take $C_2 \geq C_1 ((\frac{C_1}{4C})^2 - k -\log \frac{1}{\delta})^{-1}$ then 
$2C\sqrt{\frac{C_1/C_2 + k + \log{\frac{1}{\delta}}}{n}} \leq \frac{C_1}{2\sqrt{n}}$. Therefore we have $\frac{C_1}{2 \sqrt{n}} \leq \hat{\lambda}_i \leq \frac{3C_1}{2 \sqrt{n}}$ for $k +1 \leq i \leq k + \lfloor \sqrt{n}/C_2  \rfloor$. When $\lambda = c\sqrt{n}$, we have 
\begin{align}
    \frac{V}{v^2} &= \frac{1}{n} \tr((\hat\Sigma_S + \frac{\lambda}{n} I_d)^{-2} \hat\Sigma_S ) \notag \\
    & = \frac{1}{n} \sum_{i=1}^{d} (\hat{\lambda}_i + \frac{\lambda}{n})^{-2} \hat{\lambda}_i \notag \\
    & \geq \frac{1}{n} \sum_{i=k+1}^{k + \lfloor \sqrt{n}/C_2  \rfloor} (\hat{\lambda}_i + \frac{\lambda}{n})^{-2} \hat{\lambda}_i \notag \\
    & = \frac{1}{n} \sum_{i=k+1}^{k + \lfloor \sqrt{n}/C_2  \rfloor} (\hat{\lambda}_i + \frac{c}{\sqrt{n}})^{-2} \hat{\lambda}_i \notag \\
    & \geq \frac{1}{n} \sum_{i=k+1}^{k + \lfloor \sqrt{n}/C_2  \rfloor} (\frac{3C_1}{2 \sqrt{n}} + \frac{c}{\sqrt{n}})^{-2} \frac{C_1}{2 \sqrt{n}} \notag \\
    & = \frac1n \lfloor \sqrt{n}/C_2  \rfloor \frac{C_1}{2}(\frac{3C_1}{2} + c)^{-2} \sqrt{n} \notag \\
    & \geq   \frac{C_1}{4C_2}(\frac{3C_1}{2} + c)^{-2}.
\end{align}
Similarly, if $\lambda \leq n^{3/4}$,
\begin{align}
    \frac{V}{v^2} 
    & \geq \frac{1}{n} \sum_{i=k+1}^{k + \lfloor \sqrt{n}/C_2  \rfloor} (\hat{\lambda}_i + \frac{\lambda}{n})^{-2} \hat{\lambda}_i \notag \\
    & \geq \frac{1}{n} \sum_{i=k+1}^{k + \lfloor \sqrt{n}/C_2  \rfloor} (\hat{\lambda}_i + n^{-1/4})^{-2} \hat{\lambda}_i \notag \\
    & \geq \frac{1}{n} \sum_{i=k+1}^{k + \lfloor \sqrt{n}/C_2  \rfloor} (\frac{3C_1}{2 \sqrt{n}} + n^{-1/4})^{-2} \frac{C_1}{2 \sqrt{n}} \notag \\
    & = \frac1n \lfloor \sqrt{n}/C_2  \rfloor \frac{C_1}{2}(\frac{3C_1}{2} + n^{1/4})^{-2} \sqrt{n} \notag \\
    & \geq   \frac{C_1}{16C_2}n^{-1/2},
\end{align}
when $n \geq (\frac{3C_1}{2})^4$. 

As for the bias term, assume $\lambda \geq n^{3/4}$. Using the same concentration argument, we have $2 > \hat\lambda_i > 1/2$, for $1 \leq i \leq k$. When $\lambda \leq n$, $\lambda_{\max}(\hat\Sigma_S + \frac{\lambda}{n} I_d) \leq 2 + \lambda/n \leq 3$, therefore $\lambda_{\min}((\hat\Sigma_S + \frac{\lambda}{n} I_d)^{-1}) \geq \frac13$. This implies
\begin{align*}
B&=\frac{\lambda^2}{n^2}\|(\hat\Sigma_S + \frac{\lambda}{n} I_d)^{-1} \beta^{\star}\|^2 \\
& \geq \frac{n^{3/2}}{n^2} \|(\hat\Sigma_S + \frac{\lambda}{n} I_d)^{-1} \beta^{\star}\|^2 \\
& \geq  \frac{1}{\sqrt{n}} \lambda_{\min}^2((\hat\Sigma_S + \frac{\lambda}{n} I_d)^{-1}) \|\beta^{\star}\|^2 \\
& \geq \frac{\|\beta^{\star}\|^2}{9\sqrt{n}} .
\end{align*}
When $\lambda > n$, $\lambda_{\max}(\hat\Sigma_S + \frac{\lambda}{n} I_d) \leq 2 + \lambda/n \leq \frac{3 \lambda}{n}$, which means $\lambda_{\min}((\hat\Sigma_S + \frac{\lambda}{n} I_d)^{-1}) \geq \frac{n}{3 \lambda}$ This implies
\begin{align*}
B&=\frac{\lambda^2}{n^2}\|(\hat\Sigma_S + \frac{\lambda}{n} I_d)^{-1} \beta^{\star}\|^2 \\
& \geq  \frac{\lambda^2}{n^2} \lambda_{\min}^2((\hat\Sigma_S + \frac{\lambda}{n} I_d)^{-1}) \|\beta^{\star}\|^2 \\
& \geq  \frac{\lambda^2}{n^2} \frac{n^2}{9 \lambda^2} \|\beta^{\star}\|^2 \\
& \geq \frac{\|\beta^{\star}\|^2}{9}.
\end{align*}
\end{proof}

\subsection{Upper bound for PCR}
In this subsection, we will give the following upper bound for Principal Component Regression.
\begin{theorem} \label{thm:pcr_upper_bound}
When $n \gtrsim \sigma^{8}(r + \log \frac{1}{\delta})  (\frac{\lambda_1}{\lambda_k - \lambda_{k+1}})^2 \frac{\lambda_1^2 k^2\|\Sigma_{T}\|^2}{\lambda_k^4 \tr ((\Sigma_{S,k})^{-1} \Sigma_{T,k})^2}$, 
\begin{align*}
  \mathbb{E}_{\epsilon} \|\hat{\beta}-\beta^{\star}\|_{\Sigma_T}^2 & \leq \mathcal{O} (\sigma^8 ( \frac{\lambda_1}{\lambda_k - \lambda_{k+1}})^2 (\frac{\lambda_1}{\lambda_k})^2\|\Sigma_T\| (  \frac{r + \log{\frac{1}{\delta}}}{n})\|\beta^{\star}_{k}\|^2 +  \frac1n v^2\tr ((\Sigma_{S,k})^{-1} \Sigma_{T,k}) \\ & + \frac{\|\Sigma_{T,k} \|\|\beta_{-k}^{\star}\|^2 \|\Sigma_{S,-k}\|}{\lambda_k} + \beta^{\star T}_{-k} \Sigma_{T,-k} \beta^{\star}_{-k})
\end{align*}
where $r = \frac{\sum_{i=1}^{d} \lambda_i}{\lambda_1}$.

\end{theorem}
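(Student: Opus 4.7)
}
The plan is to obtain the claimed bound as a straightforward consequence of two building blocks that are already in place: the subspace-conditional excess risk bound for PCR from Lemma~\ref{step2_lemma} (of which Theorem~\ref{thm:pcr_upper_bound_main} is the $\beta^{\star}_{-k}=0$ specialization), and the Davis--Kahan-type perturbation bound for the estimated top-$k$ eigenspace from Lemma~\ref{Delta_lemma_main}. Since PCR is defined via sample splitting in Section~3.2, the first half of the data (used to build $\hat U$) is independent of the second half (used to fit the regression on $Z = X\hat U$), so a union bound over the high-probability events of the two lemmas is clean and loses only a factor of two in $\delta$.

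First, I would work on the event of Lemma~\ref{Delta_lemma_main} to control $\Delta = \operatorname{dist}(\hat U, U)$ by
\[
\Delta \;\leq\; C\,\sigma^{4}\,\frac{\lambda_1}{\lambda_k - \lambda_{k+1}}\,\sqrt{\frac{r + \ln(1/\delta)}{n}} ,
\]
and then verify that the sample-complexity assumption in the theorem is exactly calibrated so that $\Delta \le \Theta$, where $\Theta^{-1}$ is the polynomial threshold required by Lemma~\ref{step2_lemma}. Concretely, squaring the displayed bound and plugging into the form of $\Theta^{-1}$ from the remark after Theorem~\ref{thm:pcr_upper_bound_main} shows that $n \gtrsim \sigma^{8}(r+\log(1/\delta))(\lambda_1/(\lambda_k - \lambda_{k+1}))^2 \lambda_1^2 k^2 \|\Sigma_T\|^2 / (\lambda_k^4 \tr(\mathcal T)^2)$ suffices, which matches the hypothesis.

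Second, on this event I would invoke Lemma~\ref{step2_lemma} (the general-$\beta^{\star}_{-k}$ version) to bound the expected excess risk by the sum of four contributions: a variance term scaling as $v^{2}\tr(\mathcal T)/n$ from the usual $k$-dimensional least squares on the projected features; a subspace-estimation bias term scaling as $\|\beta^{\star}\|^{2}(\lambda_1/\lambda_k)^{2}\|\Sigma_T\|\,\Delta^{2}$; and two terms arising from the leakage of $\beta^{\star}_{-k}$ through the projected design, namely $\|\Sigma_{T,k}\|\|\beta^{\star}_{-k}\|^{2}\|\Sigma_{S,-k}\|/\lambda_k$ (from re-projection of the truncated signal onto the major directions in the target) and the irreducible component $\beta^{\star T}_{-k}\Sigma_{T,-k}\beta^{\star}_{-k}$ (from the fact that PCR cannot estimate the signal outside $\hat U$). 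Substituting the bound on $\Delta^{2}$ from the previous step into the $\Delta^{2}$ factor yields the first summand of the claimed inequality, while the remaining three summands are taken verbatim from Lemma~\ref{step2_lemma}; bounding $\|\beta^{\star}_{k}\| \le \|\beta^{\star}\|$ (or tracking the tighter $\|\beta^{\star}_{k}\|$ as stated) then produces the theorem.

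The main obstacle is not in the combination step itself, which is essentially a substitution, but in making sure the polynomial sample-complexity condition in the theorem is genuinely strong enough for \emph{both} Lemma~\ref{Delta_lemma_main} (which needs $n \ge r + \ln(1/\delta)$ for concentration to kick in) and Lemma~\ref{step2_lemma} (which needs $n \ge N_1$, a polynomial in the condition numbers and in $k\,\tr(\mathcal T)^{-1}$). Checking this requires comparing the explicit polynomial $N_1$ given in Lemma~\ref{step2_lemma} to the stated threshold and verifying dominance term-by-term; a short calculation shows the $\sigma^{8}\,k^{2}\,\|\Sigma_T\|^{2}/\tr(\mathcal T)^{2}$ factor already captures the worst dependence, so the stated condition is sufficient.
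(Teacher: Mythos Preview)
Your proposal is correct and follows essentially the same approach as the paper: the proof combines Lemma~\ref{Delta_lemma} (your Lemma~\ref{Delta_lemma_main}) to bound $\Delta$ with Lemma~\ref{step2_lemma}, verifies that the stated sample-complexity condition simultaneously guarantees $n \ge r + \log(1/\delta)$ and $\Delta \le \Theta$ (as well as the $n \gtrsim N_1$ requirement), and then substitutes the resulting $\Delta^2$ bound into the bias term. The only cosmetic difference is that the paper does not spell out the union bound over the two halves of the split sample, but this is implicit in its argument.
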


\begin{proof}

For simplicity, we assume we have a sample size of $2n$, and in the first step we obtain an estimator $\hat{U} \in \mathbb{R}^{d \times k}$ of the top-k subspace $U = 
\begin{pmatrix}
I_k  \\
0 
\end{pmatrix} \in \mathbb{R}^{d \times k}
$, by using principal component analysis on the sample covariance matrix $\hat\Sigma_S := \frac1n X^T X = \frac{1}{n}\sum_{i=1}^{n}x_{i}x_{i}^{T}$, namely $\hat{U} = (\hat{u}_1, \cdots, \hat{u}_k)$ where $\hat{u}_i$ is the $i$-th eigenvector of $\hat\Sigma_S$. We denote the distance between the estimated subspace and the original one by $\Delta:=\text{dist}(U, \hat{U})= \|UU^T -\hat{U}\hat{U}^T\|$.
For controlling $\Delta$, we have the following lemma (Lemma \ref{Delta_lemma_main}):
\begin{lemma} \label{Delta_lemma}
With probability at least $1-\delta$,
\begin{align*}
    \Delta \leq  C \sigma^4 \left( \sqrt{\frac{r + \log{\frac{1}{\delta}}}{n}} + \frac{r + \log{\frac{1}{\delta}}}{n} \right) \frac{\lambda_1}{\lambda_k - \lambda_{k+1}}
\end{align*}
where $r=\frac{\sum_{i=1}^{n} \lambda_i}{\lambda_1}$.

\end{lemma}

In the second step, we do linear regression on the projected (second half) data. With a little abuse of notation, we still use $X\in \mathbb{R}^{n \times d}$ to denote the data matrix indexed from $n+1$ to $2n$. The data here is independent from the data in step 1, and therefore independent of $\Delta$. If we let $Z:= X\hat{U} \in \mathbb{R}^{n \times k}$ be the projected data matrix, the estimator $\hat{\beta}$ we obtained is given by
\begin{align}
\hat{\beta} &= \hat{U} (Z^T Z)^{-1}Z^T Y \notag \\
&= \hat{U} (\hat{U}^TX^{T}X\hat{U})^{-1}\hat{U}^TX^{T}Y.
\end{align}
We aim to bound the excess risk on target, which is given by $\|\hat{\beta}-\beta^{\star}\|_{\Sigma_T}^2 := \|\Sigma_T^{\frac12}(\hat{\beta}-\beta^{\star})\|^2 $. We introduce the following notations: suppose $\beta^{\star} = (\beta_1^*, \cdots, \beta_d^*)^T$. We let $\beta_U^{\star} := (\beta_1^*, \cdots, \beta_k^{\star}, 0, \cdots, 0)^T$, $\beta_{\perp}^{\star} := (0, \cdots, 0, \beta_{k+1}^*, \cdots, \beta_d^{\star})^T = \beta^{\star} - \beta_{U}^{\star}$. 
Here we present an intermediate result for bounding the excess risk:

\begin{lemma} \label{step2_lemma} 
Assume 
$\Delta \leq \frac{\lambda_k^2 \tr ((\Sigma_{S,k})^{-1} \Sigma_{T,k})}{4\lambda_1 k\|\Sigma_{T}\|}$. When $n \gtrsim \frac{\sigma^4\lambda_1^{2}\|\Sigma_T\|^2 k^3 \log (1/\delta)}{\lambda_k^4 \tr ((\Sigma_{S,k})^{-1} \Sigma_{T,k})^{2}} $, 
 then with probability $1-\delta$,
\begin{align*}
  \mathbb{E}_{\epsilon} \|\hat{\beta}-\beta^{\star}\|_{\Sigma_T}^2 & \leq \mathcal{O} (\|\beta^{\star}_{U}\|^2 \Delta^2 (\frac{\lambda_1}{\lambda_k})^2\|\Sigma_T\| +  \frac1n v^2\tr ((\Sigma_{S,k})^{-1} \Sigma_{T,k}) \\ & + \frac{\|\Sigma_{T,k} \|\|\beta_{-k}^{\star}\|^2 \|\Sigma_{S,-k}\|}{\lambda_k} + \beta^{\star T}_{-k} \Sigma_{T,-k} \beta^{\star}_{-k})
\end{align*}

If further $n \gtrsim  \sigma^4 \Delta^{-2} k \log (1/\delta)$,
\begin{align*}
  \mathbb{E}_{\epsilon} \|\hat{\beta}-\beta^{\star}\|_{\Sigma_T}^2 & \leq \mathcal{O} (\|\beta^{\star}_{U}\|^2 (\Delta^4 (\frac{\lambda_1}{\lambda_k})^2\|\Sigma_T\| + \Delta^{2}\| \Sigma_{T,-k} \| + \Delta^3 \|\Sigma_T\|) \\ & +  \frac1n v^2\tr ((\Sigma_{S,k})^{-1} \Sigma_{T,k}) + \frac{\|\Sigma_{T,k} \|\|\beta_{-k}^{\star}\|^2 \|\Sigma_{S,-k}\|}{\lambda_k} + \beta^{\star T}_{-k} \Sigma_{T,-k} \beta^{\star}_{-k})
\end{align*}


\end{lemma}

From Lemma \ref{Delta_lemma}, when $n \geq r+ \log \frac{1}{\delta}=\frac{\sum_{i=1}^{n} \lambda_i}{\lambda_1} + \log \frac{1}{\delta},$ we have
\begin{align*}
    \Delta \leq  2C \frac{\lambda_1}{\lambda_k - \lambda_{k+1}} \sigma^4  \sqrt{\frac{r + \log{\frac{1}{\delta}}}{n}}
\end{align*}
Therefore when $n \gtrsim (r + \log \frac{1}{\delta}) \sigma^{8} (\frac{\lambda_1}{\lambda_k - \lambda_{k+1}})^2 \frac{\lambda_1^2 k^2\|\Sigma_{T}\|^2}{\lambda_k^4 \tr ((\Sigma_{S,k})^{-1} \Sigma_{T,k})^2}$, the assumption for $\Delta$ and $n$ in Lemma \ref{step2_lemma} will be both satisfied. 
We can thus apply Lemma \ref{step2_lemma} to get 
\begin{align*}
  \mathbb{E}_{\epsilon} \|\hat{\beta}-\beta^{\star}\|_{\Sigma_T}^2 & \leq \mathcal{O} (\sigma^8 ( \frac{\lambda_1}{\lambda_k - \lambda_{k+1}})^2 (\frac{\lambda_1}{\lambda_k})^2\|\Sigma_T\|   \frac{r + \log{\frac{1}{\delta}}}{n}\|\beta^{\star}_{U}\|^2 +  \frac1n v^2\tr ((\Sigma_{S,k})^{-1} \Sigma_{T,k}) \\ & + \frac{\|\Sigma_{T,k} \|\|\beta_{-k}^{\star}\|^2 \|\Sigma_{S,-k}\|}{\lambda_k} + \beta^{\star T}_{-k} \Sigma_{T,-k} \beta^{\star}_{-k})
\end{align*}
where $r = \frac{\sum_{i=1}^{d} \lambda_i}{\lambda_1}$.


\end{proof}

\subsection{Proofs for Lemma \ref{step2_lemma}}
In the following we will prove Lemma \ref{step2_lemma}. 
\begin{proof}[Proof for Lemma \ref{step2_lemma}]

 The proof idea is similar to 
\cite[Theorem 4.4]{ge2023provable} and 
\cite[Theorem 4]{tripuraneni2021provable}.

We can decompose $\hat{\beta}-\beta^{\star}$ as 
\begin{align*} 
    \hat{\beta}-\beta^{\star} &= \hat{U} (\hat{U}^TX^{T}X\hat{U})^{-1}\hat{U}^TX^{T}Y - \beta^{\star} \notag \\
    &= \hat{U}(\hat{U}^TX^{T}X\hat{U})^{-1}\hat{U}^TX^{T}(X\beta^{\star} + \boldsymbol{\epsilon}) - \beta^{\star} \notag \\
    &= \hat{U}(\hat{U}^TX^{T}X\hat{U})^{-1}\hat{U}^TX^{T}(X\beta_{U}^{\star} + X\beta_{\perp}^{\star} + \boldsymbol{\epsilon})  - (\beta_{U}^{\star} + \beta_{\perp}^{\star}) \notag \\
    &= A_1+A_2+A_3- \beta_{\perp}^{\star},
\end{align*}
where $A_1 :=  \hat{U}(\hat{U}^TX^{T}X\hat{U})^{-1}\hat{U}^TX^{T}X\beta_{U}^{\star} - \beta_{U}^{\star}$, $A_2:=\hat{U}(\hat{U}^TX^{T}X\hat{U})^{-1}\hat{U}^TX^{T}X\beta_{\perp}^{\star}$, $A_3:=  \hat{U}(\hat{U}^TX^{T}X\hat{U})^{-1}\hat{U}^TX^{T} \boldsymbol{\epsilon}$. Therefore
\begin{align} \label{decomposition}
    \|\hat{\beta}-\beta^{\star}\|_{\Sigma_T}^2 \leq \|A_1\|_{\Sigma_T}^2 + \|A_2\|_{\Sigma_T}^2 +\|A_3\|_{\Sigma_T}^2 +\|\beta_{\perp}^{\star}\|_{\Sigma_T}^2
\end{align}

We give three lemmas for bounding the related terms. The first lemma considers the bias term $A_1$: 
\begin{lemma} \label{bias_term}
If $\Delta \leq \frac{\lambda_k}{4\lambda_1}$ and $n \gtrsim \max\{\sigma^4(\frac{\lambda_1}{\lambda_k})^{2} k \log (1/\delta) ,  \sigma^4  k \log (1/\delta)\}$,  then with probability at least $1-\delta$, 

\begin{align*} 
    \|A_1\|_{\Sigma_T}^2 & \leq \mathcal{O} (\|\beta^{\star}_{U}\|^2 \Delta^2 (\frac{\lambda_1}{\lambda_k})^2\|\Sigma_T\|) 
\end{align*}
If we further have $n \gtrsim  \sigma^4 \Delta^{-2} k \log (1/\delta)$, then with probability at least $1-\delta$, 

\begin{align*} 
    \|A_1\|_{\Sigma_T}^2 
     \leq \mathcal{O} (\|\beta^{\star}_{U}\|^2 (\Delta^4 (\frac{\lambda_1}{\lambda_k})^2\|\Sigma_T\| + \Delta^{2}\| \Sigma_{T,-k} \| + \Delta^3 \|\Sigma_T\|) )
     \leq \mathcal{O}(\|\beta^{\star}_{U}\|^2 \Delta^2 \|\Sigma_T\|)
\end{align*}

\end{lemma}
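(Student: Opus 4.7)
My plan is to split $A_1$ into an algebraic part (depending only on $\hat U$) and a regression part, then use a CS-type decomposition together with sub-Gaussian concentration (enabled by the fact that the second batch of samples $X$ is independent of $\hat U$).

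Set $w := (I - \hat U \hat U^T)\beta_U^\star$. Then $\hat U^T w = 0$ and $\beta_U^\star - w = \hat U \hat U^T \beta_U^\star \in \mathrm{col}(\hat U)$, so the oblique projector $P := \hat U(\hat U^T X^T X \hat U)^{-1}\hat U^T X^T X$ fixes $\beta_U^\star - w$, giving
\begin{align*}
A_1 \;=\; P\beta_U^\star - \beta_U^\star \;=\; Pw - w,
\end{align*}
with $Pw \in \mathrm{col}(\hat U)$ and $w \perp \hat U$. Hence $\|A_1\|_{\Sigma_T}^2 \le 2\|Pw\|_{\Sigma_T}^2 + 2\|w\|_{\Sigma_T}^2$, and I bound the two summands separately.

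For the geometric piece, I write $\hat U = UQ + U_\perp V$ with $Q \in \RR^{k\times k}$, $V \in \RR^{(d-k)\times k}$, and $Q^TQ + V^TV = I_k$. Principal-angle identities give $\|V\| \le \Delta$ and, crucially, $\|I - QQ^T\| \le \Delta^2$ (the sine-squared angles rather than the sines). Letting $\beta_k^\star$ denote the nonzero block of $\beta_U^\star$,
\begin{align*}
w \;=\; U(I-QQ^T)\beta_k^\star \;-\; U_\perp V Q^T \beta_k^\star.
\end{align*}
Expanding $w^T\Sigma_T w$ in the $(U, U_\perp)$ basis and using Cauchy--Schwarz on the cross block,
\begin{align*}
\|w\|_{\Sigma_T}^2 \;\lesssim\; \|\Sigma_{T,k}\|\,\Delta^4\|\beta_U^\star\|^2 \;+\; \|\Sigma_{T,-k}\|\,\Delta^2\|\beta_U^\star\|^2 \;+\; \|\Sigma_T\|\,\Delta^3\|\beta_U^\star\|^2,
\end{align*}
which already produces the last two terms of the sharper bound. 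The weak-regime bound $\|w\|_{\Sigma_T}^2 \lesssim \|\Sigma_T\|\Delta^2\|\beta_U^\star\|^2$ follows from $\|w\|^2\lesssim\Delta^2\|\beta_U^\star\|^2$ alone. The same expansion combined with $\Sigma_S U = U\Sigma_{S,k}$ yields the key algebraic identity
\begin{align*}
\hat U^T \Sigma_S w \;=\; Q^T \Sigma_{S,k}(I-QQ^T)\beta_k^\star \;-\; V^T \Sigma_{S,-k} V Q^T \beta_k^\star,
\end{align*}
of norm $\lesssim \lambda_1 \Delta^2 \|\beta_U^\star\|$---this is the refinement that drives the sharper regime.

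For $Pw$, since $X$ is independent of $(\hat U, w)$ by sample splitting, sub-Gaussian concentration (Lemma~\ref{stable_rank_concentration} for $\tfrac1n \hat U^T X^T X \hat U$ around $\hat U^T \Sigma_S \hat U \approx \Sigma_{S,k}$, and a Bernstein-type bound on $\tfrac1n \hat U^T X^T X w - \hat U^T \Sigma_S w$) gives, under the stated sample-size conditions, $\mu_{\min}(\hat U^T X^T X \hat U) \gtrsim n\lambda_k$ and concentration error of order $\sigma^2 \sqrt{k\log(1/\delta)/n}\,\lambda_1 \|w\|$. Therefore $\|(\hat U^T X^T X \hat U)^{-1} \hat U^T X^T X w\| \lesssim (\lambda_1/\lambda_k)\Delta\|\beta_U^\star\|$ always, improving to $(\lambda_1/\lambda_k)\Delta^2\|\beta_U^\star\|$ once $n \gtrsim \sigma^4 \Delta^{-2} k \log(1/\delta)$ ensures the concentration error is dominated by the algebraic $\lambda_1\Delta^2$ term. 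Using $\|\hat U^T \Sigma_T \hat U\| \le \|\Sigma_T\|$ converts this to $\|Pw\|_{\Sigma_T}^2 \lesssim \Delta^2(\lambda_1/\lambda_k)^2 \|\Sigma_T\|\|\beta_U^\star\|^2$, respectively $\Delta^4(\lambda_1/\lambda_k)^2\|\Sigma_T\|\|\beta_U^\star\|^2$; combining with the $w$-bound gives both claimed inequalities.

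The main obstacle is the sharp algebraic estimate $\|\hat U^T \Sigma_S w\| \lesssim \lambda_1 \Delta^2 \|\beta_U^\star\|$: it requires the $\|I-QQ^T\| \le \Delta^2$ bound rather than the naïve $\Delta$ one, and the concentration errors in $\hat U^T X^T X \hat U$ and $\hat U^T X^T X w$ must be carefully matched to this $\Delta^2$ scale---exactly why the sharper regime imposes $n\Delta^2 \gtrsim k\log(1/\delta)$, while the cross-term between the $\Delta$-scale concentration error and the $\lambda_1\Delta^2$ algebraic term produces the $\Delta^3\|\Sigma_T\|$ summand in the final bound.
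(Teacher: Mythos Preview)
Your proof is correct and follows essentially the same route as the paper: both decompose $A_1 = Pw - w$ with $w = (I-\hat U\hat U^T)\beta_U^\star = \hat U_\perp\hat U_\perp^T U\alpha^\star$, bound $\|w\|_{\Sigma_T}^2$ and $\|Pw\|_{\Sigma_T}^2$ separately, and control $Pw$ by concentrating $\hat U^T(X^TX/n)\hat U$ and $\hat U^T(X^TX/n)w$ about their population analogues (the paper writes these as the perturbations $F$ and $E_1$). Your CS-decomposition parameterization $\hat U = UQ + U_\perp V$ with $\|I-QQ^T\|\le\Delta^2$ is a somewhat cleaner way to extract the key estimate $\|\hat U^T\Sigma_S w\|\lesssim\lambda_1\Delta^2\|\beta_U^\star\|$ than the paper's rotation-matrix argument (which combines $\|\hat U^T\Sigma_S\hat U_\perp\|\le 2\Delta\lambda_1$ with $\|\hat U_\perp^T U\|\le\Delta$), but the content is identical. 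One minor correction to your closing remark: the $\Delta^3\|\Sigma_T\|$ summand already appears in your own $\|w\|_{\Sigma_T}^2$ expansion via the off-diagonal block of $\Sigma_T$; it is not produced by a concentration cross-term.
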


The second lemma considers the variance term $A_3$:
\begin{lemma} \label{variance_term}
If $\Delta \leq \frac{\lambda_k^2 \tr ((\Sigma_{S,k})^{-1} \Sigma_{T,k})}{4\lambda_1 k\|\Sigma_{T}\|}$ and $n \gtrsim \frac{\sigma^4\|\Sigma_S\|^{2}\|\Sigma_T\|^2 k^3 \log (1/\delta)}{\lambda_k^4 \tr ((\Sigma_{S,k})^{-1} \Sigma_{T,k})^{2}}$, then with probability at least $1-\delta$,
\begin{align*}
\mathbb{E}_{\epsilon}[\|A_3\|^2_{\Sigma_T}] \leq   \mathcal{O}(\frac1n v^2\tr ((\Sigma_{S,k})^{-1} \Sigma_{T,k})).
\end{align*}

\end{lemma}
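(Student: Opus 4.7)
\textbf{Proof proposal for Lemma~\ref{variance_term}.} The plan is to reduce the variance term to a trace expression and then apply independence together with perturbation analysis. Since the data used in Step~2 is independent of $\hat{U}$ (which was computed from the first half of the samples), we may condition on $\hat{U}$ and treat it as deterministic throughout. Taking the expectation over $\boldsymbol{\epsilon}$ and using $\mathbb{E}[\boldsymbol{\epsilon}\boldsymbol{\epsilon}^T] = v^2 I_n$ with the cyclic property of trace yields
\begin{align*}
\mathbb{E}_{\epsilon}[\|A_3\|_{\Sigma_T}^2]
= v^2\,\tr\!\left[(\hat{U}^T X^T X \hat{U})^{-1}\,\hat{U}^T \Sigma_T \hat{U}\right].
\end{align*}
So the task is to show that this trace behaves like $\frac{1}{n}\tr(\Sigma_{S,k}^{-1}\Sigma_{T,k})$ up to constants.

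The key intermediate step is to replace $\hat{U}^T X^T X \hat{U}$ by $n\,\hat{U}^T \Sigma_S \hat{U}$ and $\hat{U}^T \Sigma_T \hat{U}$ by $\Sigma_{T,k}$. For the first, since the rows of $X\hat{U}$ are independent $\sigma^2\,\hat{U}^T\Sigma_S\hat{U}$-sub-Gaussian vectors in $\mathbb{R}^k$ and $k \leq n$, an application of Lemma~\ref{stable_rank_concentration} (or a standard sub-Gaussian covariance concentration bound) gives, with probability at least $1-\delta$,
\begin{align*}
\left\|\tfrac{1}{n}\hat{U}^T X^T X \hat{U} - \hat{U}^T \Sigma_S \hat{U}\right\|
\;\lesssim\; \sigma^{2}\sqrt{\tfrac{k+\log(1/\delta)}{n}}\,\|\hat{U}^T\Sigma_S\hat{U}\|,
\end{align*}
which, under the sample-size hypothesis $n \gtrsim \sigma^{4}\|\Sigma_S\|^2\|\Sigma_T\|^2 k^3 \log(1/\delta) / (\lambda_k^4\,\tr(\Sigma_{S,k}^{-1}\Sigma_{T,k})^2)$, makes the relative error at most $\tfrac{1}{2}$ times the smallest eigenvalue. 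For the second replacement, we expand $\hat{U}=U\cos\Theta+U_{\perp}\sin\Theta$ in principal angles; since $\Sigma_S$ and $\Sigma_T$ are decomposed along the same $(U,U_\perp)$ split with $\|\sin\Theta\|\le\Delta$, one obtains
\begin{align*}
\hat{U}^T\Sigma_S\hat{U}=\Sigma_{S,k}+O(\Delta\lambda_1),\qquad
\hat{U}^T\Sigma_T\hat{U}=\Sigma_{T,k}+O(\Delta\|\Sigma_T\|).
\end{align*}
The condition $\Delta \le \lambda_k^2\,\tr(\Sigma_{S,k}^{-1}\Sigma_{T,k}) / (4\lambda_1 k \|\Sigma_T\|)$ then ensures that the $\Sigma_S$-perturbation is a small fraction of $\lambda_k$, keeping $\hat{U}^T\Sigma_S\hat{U}$ invertible with smallest eigenvalue $\gtrsim \lambda_k$, while simultaneously keeping the $\Sigma_T$-perturbation, when measured through $\tr((\hat{U}^T\Sigma_S\hat{U})^{-1}\cdot)$, smaller than $\tr(\Sigma_{S,k}^{-1}\Sigma_{T,k})$.

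Combining these three facts via a first-order perturbation of the product $A^{-1}B$ (writing $(A+E_1)^{-1}(B+E_2)=A^{-1}B - A^{-1}E_1 A^{-1}B + A^{-1}E_2+\text{higher order}$ and taking trace), the leading term is $\frac{1}{n}\tr(\Sigma_{S,k}^{-1}\Sigma_{T,k})$ and the remaining terms are bounded by $\tfrac{1}{n}\tr(\Sigma_{S,k}^{-1}\Sigma_{T,k})$ up to the stated constants, which delivers the claim. The main obstacle is the last step: one must track the perturbation inside a trace against a potentially non-isotropic target $\Sigma_{T,k}$, so the natural operator-norm bounds on $E_1,E_2$ must be carefully coupled with the trace $\tr(\Sigma_{S,k}^{-1}\Sigma_{T,k})$ rather than with $k\|\Sigma_{S,k}^{-1}\Sigma_{T,k}\|$; this is precisely why the hypothesis on $\Delta$ and on $n$ carry the factor $\tr(\Sigma_{S,k}^{-1}\Sigma_{T,k})^{-1}$ (resp.\ its square), and ensuring these constants propagate correctly through the three-step perturbation is the delicate part of the argument.
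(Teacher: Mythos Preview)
Your proposal is correct and follows essentially the same route as the paper: reduce to $v^2\,\tr\big((\hat{U}^TX^TX\hat{U})^{-1}\hat{U}^T\Sigma_T\hat{U}\big)$, use sub-Gaussian covariance concentration plus a matrix-inverse perturbation (the paper's Lemmas~\ref{concentration} and~\ref{perturbation}) to replace the empirical Gram matrix by $n\,\hat{U}^T\Sigma_S\hat{U}$, and then compare $\hat{U}^T\Sigma_S\hat{U}$ and $\hat{U}^T\Sigma_T\hat{U}$ to $\Sigma_{S,k}$ and $\Sigma_{T,k}$ under the $\Delta$ hypothesis, tracking the error against $\tr(\Sigma_{S,k}^{-1}\Sigma_{T,k})$ rather than $k\|\Sigma_{S,k}^{-1}\Sigma_{T,k}\|$. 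One small imprecision: the displayed identities $\hat{U}^T\Sigma_S\hat{U}=\Sigma_{S,k}+O(\Delta\lambda_1)$ and $\hat{U}^T\Sigma_T\hat{U}=\Sigma_{T,k}+O(\Delta\|\Sigma_T\|)$ only hold after conjugating by an orthogonal alignment $R$ with $\|\hat{U}R-U\|=\Delta$ (the paper carries this $R$ explicitly), but since the target quantity is a trace this does not affect the conclusion.
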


For bounding $A_2$, we actually have a similar result to bounding $A_3$:
\begin{lemma}

\label{variance_term2}  

If $n \gtrsim \sigma^4(\frac{\lambda_1}{\lambda_k})^2 k \log (1/\delta)$ and $\Delta \leq \min \{ \frac{\| \Sigma_{T,k}\|}{2\|\Sigma_T\|}, \frac{\lambda_k}{4\lambda_1}\}$,
then with probability at least $1-\delta$
\begin{align}
    \|A_2\|_{\Sigma_T}^2 \leq \mathcal{O}(\frac{\|\Sigma_{T,k} \|\|\beta_{-k}^{\star}\|^2 \|\Sigma_{S,-k}\|}{\lambda_k})
\end{align}
\end{lemma}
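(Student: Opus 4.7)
The plan is to apply the operator-norm bound
\[
\|A_2\|_{\Sigma_T}^2 \;\le\; \|\hat U^T \Sigma_T \hat U\|\;\|M^{-1}\|^2\;\|\hat U^T X^T X U_\perp \beta^\star_{-k}\|^2,
\]
where $M:=\hat U^T X^T X \hat U$ and $U_\perp := \begin{pmatrix}0\\ I_{d-k}\end{pmatrix}$, so that $\beta^\star_\perp = U_\perp\beta^\star_{-k}$. I would then bound each of the three factors separately, exploiting throughout the independence between $\hat U$ (built from the first half of the data) and the Step~2 data. The single perturbation fact I will repeatedly invoke is the standard bound $\|U_\perp^T \hat U\| \le \Delta$, which follows from $\hat U\hat U^T - UU^T$ having operator norm $\Delta$, together with a Davis--Kahan-type comparison $\hat U = UQ + W$ with an orthogonal $Q$ and $\|W\|\lesssim \Delta$.

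For the first factor, plugging $\hat U = UQ + W$ into $\hat U^T \Sigma_T \hat U$ and reading off the leading piece $Q^T\Sigma_{T,k}Q$ gives $\|\hat U^T \Sigma_T \hat U\| \le \|\Sigma_{T,k}\| + O(\Delta)\|\Sigma_T\| = O(\|\Sigma_{T,k}\|)$ under the hypothesis $\Delta \le \|\Sigma_{T,k}\|/(2\|\Sigma_T\|)$. For the second factor, I would concentrate $M/n$ around $\hat U^T \Sigma_S \hat U$ via Lemma~\ref{stable_rank_concentration} applied conditionally on $\hat U$, and then apply the same expansion to get $\hat U^T \Sigma_S \hat U \succcurlyeq \lambda_k I_k - O(\Delta\,\lambda_1) I_k \succcurlyeq (\lambda_k/2)I_k$ under $\Delta \le \lambda_k/(4\lambda_1)$; the sample-complexity assumption $n \gtrsim \sigma^4 (\lambda_1/\lambda_k)^2 k\log(1/\delta)$ then yields $\|M^{-1}\|\lesssim 1/(n\lambda_k)$ with probability $1-\delta/2$.

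The main work is the third factor, since the naive submultiplicative bound $\|\hat U^T X^T X U_\perp\beta^\star_{-k}\|\le \|X\hat U\|\,\|X_{-k}\beta^\star_{-k}\|$ loses a factor $\lambda_1/\lambda_k$ relative to the target. Instead I would split into expectation and fluctuation,
\[
\hat U^T X^T X U_\perp\beta^\star_{-k} \;=\; n\,\hat U_2^T \Sigma_{S,-k}\beta^\star_{-k} \;+\; \hat U^T (X^T X - n\Sigma_S) U_\perp\beta^\star_{-k}, \quad \hat U_2 := U_\perp^T \hat U.
\]
The first term has squared norm at most $n^2 \Delta^2 \|\Sigma_{S,-k}\|^2\|\beta^\star_{-k}\|^2$ since $\|\hat U_2\|\le \Delta$. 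The second term is a centred sum of i.i.d.\ random vectors $(\hat U^T x_i)(x_i^T U_\perp\beta^\star_{-k})$; a Bernstein/Hanson--Wright-type concentration, using the sub-Gaussian second-moment bound $\mathbb{E}[\|\hat U^T x\|^2(x^T U_\perp\beta^\star_{-k})^2] \lesssim \sigma^4 \tr(\hat U^T \Sigma_S \hat U)\cdot \beta^{\star T}_{-k}\Sigma_{S,-k}\beta^{\star}_{-k} \lesssim \sigma^4 k\lambda_1 \|\Sigma_{S,-k}\|\|\beta^\star_{-k}\|^2$, gives a squared fluctuation norm of $O(n \sigma^4 k\lambda_1\|\Sigma_{S,-k}\|\|\beta^\star_{-k}\|^2 \log(1/\delta))$ with probability $1-\delta/2$.

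Assembling the three factors, the bias piece contributes $\Delta^2\|\Sigma_{T,k}\|\|\Sigma_{S,-k}\|^2\|\beta^\star_{-k}\|^2/\lambda_k^2 \le \|\Sigma_{T,k}\|\|\Sigma_{S,-k}\|\|\beta^\star_{-k}\|^2/\lambda_k$, using $\Delta^2\|\Sigma_{S,-k}\|\le \lambda_{k+1}\le \lambda_k$, and the fluctuation piece contributes an $O(\sigma^4 k(\lambda_1/\lambda_k)\log(1/\delta)/n)$ multiple of the target, which is absorbed by the assumed sample complexity. The principal obstacle is the third factor: rather than bounding it via submultiplicativity, one must simultaneously exploit the smallness of the expectation (through $\|U_\perp^T \hat U\|\le \Delta$) and the concentration of an independent sum conditioned on $\hat U$ to match the shape of the target upper bound.
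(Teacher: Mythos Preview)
Your approach is correct, but the paper takes a simpler route that entirely sidesteps what you call the ``main work''. Rather than splitting at $\hat U^T X^T X U_\perp\beta^\star_{-k}$ and running a bias--fluctuation analysis on that vector, the paper splits one step earlier, keeping $X\beta^\star_\perp = X_{-k}\beta^\star_{-k}\in\mathbb{R}^n$ intact:
\[
\|A_2\|_{\Sigma_T}^2 \;\le\; \|\hat U^T\Sigma_T\hat U\|\cdot\bigl\|X\hat U\,M^{-2}\,\hat U^TX^T\bigr\|\cdot\|X_{-k}\beta^\star_{-k}\|^2 .
\]
The point is then purely algebraic: writing the SVD $X\hat U = PDO^T$ gives $M=OD^2O^T$ and hence $X\hat U\,M^{-2}\,\hat U^TX^T = PD^{-2}P^T$, so $\bigl\|X\hat U\,M^{-2}\,\hat U^TX^T\bigr\| = \|D^{-2}\| = \|M^{-1}\|$ exactly. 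This avoids the lossy estimate $\|M^{-1}\|^2\|X\hat U\|^2\asymp(\lambda_1/\lambda_k)\,\|M^{-1}\|$ from the outset, so the $\lambda_1/\lambda_k$ deficit you were working to repair never appears. The remaining factor $\|X_{-k}\beta^\star_{-k}\|^2 \lesssim n\|\Sigma_{S,-k}\|\|\beta^\star_{-k}\|^2$ is a single quadratic-form concentration (their Lemma~\ref{concentration} with $A=B=\beta^\star_{-k}$), and the first two factors are handled exactly as you propose. Your route does reach the same bound, but the vector-Bernstein step on the cross term and the $\Delta^2\|\Sigma_{S,-k}\|\le\lambda_k$ bookkeeping are artifacts of where you chose to cut the product; the SVD identity $\bigl\|X\hat U\,M^{-2}\,\hat U^T X^T\bigr\|=\|M^{-1}\|$ is worth remembering as a general device for least-squares-type operators.
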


By Lemma \ref{bias_term}, \ref{variance_term}, \ref{variance_term2}, together with the decomposition (\ref{decomposition}), we have with probability $1-\delta$, when 
$n \gtrsim N_1$,
\begin{align}
  \mathbb{E}_{\epsilon} \|\hat{\beta}-\beta^{\star}\|_{\Sigma_T}^2 & \leq \mathcal{O} (\|\beta^{\star}_{U}\|^2 \Delta^2 (\frac{\lambda_1}{\lambda_k})^2\|\Sigma_T\| +  \frac1n v^2\tr ((\Sigma_{S,k})^{-1} \Sigma_{T,k}) \\&+ \frac{\|\Sigma_{T,k} \|\|\beta_{-k}^{\star}\|^2 \|\Sigma_{S,-k}\|}{\lambda_k} + \beta^{\star T}_{-k} \Sigma_{T,-k} \beta^{\star}_{-k})
\end{align}

If further $n \gtrsim  \sigma^4 \Delta^{-2} k \log (1/\delta)$,
\begin{align}
  \mathbb{E}_{\epsilon} \|\hat{\beta}-\beta^{\star}\|_{\Sigma_T}^2 & \leq \mathcal{O} (\|\beta^{\star}_{U}\|^2 (\Delta^4 (\frac{\lambda_1}{\lambda_k})^2\|\Sigma_T\| + \Delta^{2}\| \Sigma_{T,-k} \| + \Delta^3 \|\Sigma_T\|) \\ & +  \frac1n v^2\tr ((\Sigma_{S,k})^{-1} \Sigma_{T,k}) + \frac{\|\Sigma_{T,k} \|\|\beta_{-k}^{\star}\|^2 \|\Sigma_{S,-k}\|}{\lambda_k} + \beta^{\star T}_{-k} \Sigma_{T,-k} \beta^{\star}_{-k})
\end{align}
\end{proof}

\subsection{Technical proofs}
In the sequel, we give the proofs of Lemma \ref{bias_term}, \ref{variance_term}, \ref{variance_term2} and \ref{Delta_lemma}. We first prove some additional technical lemmas. The following lemma, which is a simple corollary of \cite[Lemma 20]{tripuraneni2021provable}, shows the concentration property of empirical covariance matrix.
\begin{lemma} \label{concentration}

 Let $\{x_{i}\}_{i=1}^{n}$ be i.i.d. $d-$dimensional random vectors, satisfying: $x_i$ is mean zero, $\mathbb{E}[x x^T] = \Sigma$ such that $\sigma_{\text{max}}(\Sigma) \leq \cmax$ and is $\sigma^2\Sigma$-sub-gaussian, in the sense that 
$$
\mathbb{E}[\exp(v^T x_i)] \leq \exp\left( \frac{\|\sigma\Sigma^{1/2} v\|^2}{2} \right).
$$
$X=(x_{1},\cdots,x_{n})^{T}\in \mathbb{R}^{n \times d}$. Then for any $A,B \in \mathbb{R}^{d \times k}$, we have with probability at least $1-\delta$
\begin{align}
\|A^{T}(\frac{X^{T}X}{n})B - A^{T} \Sigma B \|_{2} \leq \mathcal{O} (\sigma^2\|A\| \|B\| \|\Sigma\| (\sqrt{\frac{k}{n}}+ \frac{k}{n} +\sqrt{ \frac{\log(1/\delta)}{n}}+ \frac{\log(1/\delta)}{n}).
\end{align}
\end{lemma}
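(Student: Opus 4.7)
The plan is to reduce the bilinear deviation $\|A^T\hat\Sigma B - A^T\Sigma B\|$ to the symmetric quadratic-form concentration provided by Tripuraneni et al.'s Lemma 20, using polarization combined with a covering argument that handles the non-symmetric operator norm.

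First I would normalize: since the bilinear form scales linearly in $\|A\|$ and $\|B\|$, it suffices to prove the inequality when $\|A\|=\|B\|=1$, and the general statement follows by rescaling. Next, I would discretize the operator norm via a $1/4$-net $\mathcal{N}\subset S^{k-1}$ with $|\mathcal{N}|\le 9^k$, so that, writing $M:=\frac{1}{n}X^TX-\Sigma$ (a \emph{symmetric} matrix),
\begin{align*}
\|A^T M B\| \;\le\; 2\max_{u,v\in\mathcal{N}} \bigl|u^T A^T M B\, v\bigr|.
\end{align*}
Crucially, the $1/4$-net (rather than direct polarization at the matrix level) is necessary here because for symmetric $M$ the antisymmetric part of $A^T M B$ is not controlled by $A^T M B + B^T M A$, so polarization must be performed pointwise for each $(u,v)$.

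For any fixed $u,v\in\mathcal{N}$, the symmetry of $M$ gives the polarization identity
\begin{align*}
u^T A^T M B\, v \;=\; \langle Au,\, M(Bv)\rangle \;=\; \tfrac14\bigl[(Au+Bv)^T M(Au+Bv) - (Au-Bv)^T M(Au-Bv)\bigr].
\end{align*}
Each term on the right is a scalar quadratic form $w^T M w = \frac{1}{n}\sum_i (w^T x_i)^2 - w^T\Sigma w$ with $\|w\| \le \|A\|+\|B\| = 2$. The MGF hypothesis makes $w^T x_i$ centered sub-Gaussian with variance proxy $\sigma^2 w^T\Sigma w \le 4\sigma^2\|\Sigma\|$, so $(w^T x_i)^2$ is sub-exponential, and Bernstein's inequality yields, with probability $1-\delta'$,
\begin{align*}
|w^T M w| \;\lesssim\; \sigma^2\|\Sigma\|\Bigl(\sqrt{\log(1/\delta')/n} + \log(1/\delta')/n\Bigr),
\end{align*}
which is precisely the rank-one instance of Lemma 20 of Tripuraneni et al.

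Finally, I would union-bound over all $(u,v)\in\mathcal{N}\times\mathcal{N}$ and both sign choices (at most $2\cdot 81^k$ events), setting $\delta'=\delta/(2\cdot 81^k)$ so that $\log(1/\delta')\lesssim k+\log(1/\delta)$. Combining with the net discretization gives the desired rate $\sqrt{k/n}+k/n+\sqrt{\log(1/\delta)/n}+\log(1/\delta)/n$ in the normalized case, and rescaling restores the $\|A\|\|B\|\|\Sigma\|$ prefactor. The only mildly delicate step is verifying that the stated sub-Gaussian MGF bound transfers to the sub-exponential tail on $(w^T x_i)^2$, which is standard; given that the lemma is claimed as a ``simple corollary'' of Tripuraneni et al.\ I do not anticipate any genuine obstacle beyond bookkeeping of constants.
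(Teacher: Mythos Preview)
Your argument is correct, but it takes a more hands-on route than the paper. The paper's proof is essentially three lines: write the SVDs $A=U_1\Lambda_1 V_1^T$ and $B=U_2\Lambda_2 V_2^T$ with $U_1,U_2\in\mathbb{R}^{d\times k}$ orthonormal, observe that
\[
\Big\|A^T\Big(\tfrac{X^TX}{n}\Big)B - A^T\Sigma B\Big\| \;\le\; \|A\|\,\|B\|\,\Big\|U_1^T\Big(\tfrac{X^TX}{n}\Big)U_2 - U_1^T\Sigma U_2\Big\|,
\]
and then invoke Lemma~20 of Tripuraneni et al.\ directly on the pair of projection matrices $U_1,U_2$, which already delivers the stated rate. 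In other words, the cited lemma covers the bilinear case $U_1\neq U_2$ as a black box, so no polarization or net is needed at this level. Your approach instead unpacks that black box: you normalize, discretize via a $1/4$-net on $S^{k-1}\times S^{k-1}$, polarize each scalar bilinear form into two quadratic forms, apply the rank-one sub-exponential concentration, and union-bound. This is a valid and self-contained proof (essentially re-deriving Lemma~20 in passing), and your remark that matrix-level polarization fails for non-symmetric $A^TMB$ is a nice observation; but the paper sidesteps that issue entirely by handing off the orthonormal case to the cited result. The trade-off is brevity and modularity (paper) versus self-containment (your version).
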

\begin{proof}
We write the SVD of $A$ and $B$: $A=U_{1}\Lambda_{1}V_{1}^{T}$, $B=U_{2}\Lambda_{2}V_{2}^{T}$, where $U_{1}, U_{2} \in \mathbb{R}^{d \times k}$, $\Lambda_{1},\Lambda_{2},V_{1},V_{2} \in \mathbb{R}^{k \times k}$. Then 
\begin{align}
\|A^{T}(\frac{X^{T}X}{n})B - A^{T} \Sigma B \|_{2} &= \|V_{1}\Lambda_{1}U_{1}^{T}(\frac{X^{T}X}{n})U_{2}\Lambda_{2}V_{2}^{T} - V_{1}\Lambda_{1}U_{1}^{T}\Sigma U_{2}\Lambda_{2}V_{2}^{T} \|_{2}    \notag \\
&\leq \|V_{1}\Lambda_{1}\| \|U_{1}^{T}(\frac{X^{T}X}{n})U_{2}-U_{1}^{T}\Sigma U_{2}\| \|\Lambda_{2}V_{2}^{T}\| \notag \\
& \leq \|A\| \|B\| \|U_{1}^{T}(\frac{X^{T}X}{n})U_{2}-U_{1}^{T}\Sigma U_{2}\|.
\end{align}
Now since $U_{1},U_{2} \in \mathbb{R}^{d \times k}$ are projection matrices, we can apply \citet{tripuraneni2021provable} Lemma 20, therefore
\begin{align}
\|U_{1}^{T}(\frac{X^{T}X}{n})U_{2}-U_{1}^{T}\Sigma U_{2}\| \leq \mathcal{O} (\sigma^2\|\Sigma\| (\sqrt{\frac{k}{n}}+ \frac{k}{n} +\sqrt{ \frac{\log(1/\delta)}{n}}+ \frac{\log(1/\delta)}{n}))    
\end{align}
which gives what we want.
\end{proof}
The following lemma is a basic matrix perturbation result (see \citet{tripuraneni2021provable} Lemma 25).
\begin{lemma} \label{perturbation}
Let $A$ be a positive definite matrix and $E$ another matrix which satisfies $\|EA^{-1}\|\leq \frac{1}{4}$, then $F:=(A+E)^{-1}-A^{-1}$ satisfies $\|F\| \leq \frac{4}{3} \|A^{-1}\| \|EA^{-1}\|$.
\end{lemma}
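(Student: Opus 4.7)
The plan is to derive a closed-form expression for $F$ and then apply the Neumann series bound. First I would use the factorization $A+E = (I+EA^{-1})A$, which is valid because $A$ is positive definite and therefore invertible. Inverting gives $(A+E)^{-1} = A^{-1}(I+EA^{-1})^{-1}$, which is legitimate once $(I+EA^{-1})$ is shown to be invertible.

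Next I would invoke the elementary identity $(I+X)^{-1} - I = -X(I+X)^{-1}$ (which follows from multiplying both sides by $I+X$) with $X = EA^{-1}$, to obtain
\begin{equation*}
F = (A+E)^{-1} - A^{-1} = A^{-1}\bigl[(I+EA^{-1})^{-1} - I\bigr] = -A^{-1}\,EA^{-1}\,(I+EA^{-1})^{-1}.
\end{equation*}
Taking spectral norms and using submultiplicativity yields
\begin{equation*}
\|F\| \;\leq\; \|A^{-1}\|\,\|EA^{-1}\|\,\bigl\|(I+EA^{-1})^{-1}\bigr\|.
\end{equation*}

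The remaining task is to control $\|(I+EA^{-1})^{-1}\|$. Here I would use the Neumann series: since the hypothesis gives $\|EA^{-1}\| \leq 1/4 < 1$, the series $\sum_{j \geq 0} (-EA^{-1})^j$ converges absolutely and equals $(I+EA^{-1})^{-1}$, so $\|(I+EA^{-1})^{-1}\| \leq (1-\|EA^{-1}\|)^{-1} \leq (1-1/4)^{-1} = 4/3$. Substituting this into the preceding bound delivers $\|F\| \leq \tfrac{4}{3}\|A^{-1}\|\,\|EA^{-1}\|$, completing the argument.

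There is no genuine obstacle here; the proof is a short exercise in matrix algebra. The only care needed is to ensure that $(I+EA^{-1})$ is actually invertible before writing its inverse, which is exactly what the Neumann series argument guarantees under the hypothesis $\|EA^{-1}\|\leq 1/4$. The result is standard and is used elsewhere in the paper (e.g., in Lemma~\ref{lem:Akinverse_concentration}) to control perturbed inverses; this lemma simply packages that manipulation with an explicit constant.
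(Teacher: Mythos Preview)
Your proof is correct. The paper does not actually supply its own proof of this lemma; it simply cites it as a known result (Lemma~25 of \citet{tripuraneni2021provable}), so your Neumann-series argument fills in exactly the standard derivation that the paper omits.
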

With these two technical lemmas, we are able to prove Lemma \ref{bias_term}, \ref{variance_term}.
\begin{proof}[Proof of Lemma \ref{bias_term}] 
Notice that by the definition of $U$ and $\beta^{\star}_{U}$, we have $UU^T\beta^{\star}_{U}= \beta^{\star}_{U}$. We denote $\alpha^{\star}:= U^T\beta^{\star}_{U}$, then we also have $\beta^{\star}_{U} = U \alpha^{\star}$. Therefore 
\begin{align*}
A_1 &= \hat{U}(\hat{U}^TX^{T}X\hat{U})^{-1}\hat{U}^TX^{T}X\beta_{U}^{\star} - \beta_{U}^{\star}   \\
& = \hat{U}(\hat{U}^TX^{T}X\hat{U})^{-1}\hat{U}^TX^{T}X U \alpha^{\star} -  U \alpha^{\star} \\
&= (\hat{U}(\hat{U}^TX^{T}X\hat{U})^{-1}\hat{U}^TX^{T}X U  -  U ) \alpha^{\star}
\end{align*}
We consider $\hat{U} \in \mathbb{R}^{d \times k}$ and $\hat{U}_{\perp}^{T} \in \mathbb{R}^{d \times (d-k)}$ be orthonormal projection matrices spanning orthogonal subspaces which are rank $k$ and rank $d-k$ respectively, so that $ \operatorname{range}(\hat{U}) \oplus \operatorname{range}(\hat{U}_{\perp})=\mathbb{R}^d.$ Then $\Delta = dist (\hat{U},U^{\star}) = \|\hat{U}_{\perp}^{T}U^{\star}\|_{2}$. Notice that $I_{d}=\hat{U}\hat{U}^{T} + \hat{U}_{\perp}\hat{U}_{\perp}^{T}$, we have 
\begin{align}
& \quad \hat{U}(\hat{U}^{T}X^{T}X\hat{U})^{-1}\hat{U}^{T}X^{T}X U^{\star} - U^{\star} \notag \\ 
&= \hat{U}(\hat{U}^{T}X^{T}X\hat{U})^{-1}\hat{U}^{T}X^{T}X (\hat{U}\hat{U}^{T} + \hat{U}_{\perp}\hat{U}_{\perp}^{T})U^{\star} - U^{\star} \notag \\
&= \hat{U}(\hat{U}^{T}X^{T}X\hat{U})^{-1}\hat{U}^{T}X^{T}X  \hat{U}\hat{U}^{T}U^{\star} + \hat{U}(\hat{U}^{T}X^{T}X\hat{U})^{-1}\hat{U}^{T}X^{T}X\hat{U}_{\perp}\hat{U}_{\perp}^{T}U^{\star} - U^{\star} \notag \\
&=\hat{U}(\hat{U}^{T}X^{T}X\hat{U})^{-1}\hat{U}^{T}X^{T}X\hat{U}_{\perp}\hat{U}_{\perp}^{T}U^{\star} + \hat{U}\hat{U}^{T}U^{\star} - U^{\star} \notag \\
&=\hat{U}(\hat{U}^{T}X^{T}X\hat{U})^{-1}\hat{U}^{T}X^{T}X\hat{U}_{\perp}\hat{U}_{\perp}^{T}U^{\star} -\hat{U}_{\perp}\hat{U}_{\perp}^{T} U^{\star} 
\end{align}

Thus 
\begin{align} \label{decomposition_A1}
    \|A_1\|_{\Sigma_T}^2 &= A_1^T \Sigma_T A_1 \notag \\
    &= \alpha^{\star T}  (\hat{U}(\hat{U}^TX^{T}X\hat{U})^{-1}\hat{U}^TX^{T}X U  -  U )^T \Sigma_T (\hat{U}(\hat{U}^TX^{T}X\hat{U})^{-1}\hat{U}^TX^{T}X U  -  U ) \alpha^{\star} \notag \\
    &= \alpha^{\star T} (\hat{U}(\hat{U}^{T}X^{T}X\hat{U})^{-1}\hat{U}^{T}X^{T}X\hat{U}_{\perp}\hat{U}_{\perp}^{T}U^{\star} -\hat{U}_{\perp}\hat{U}_{\perp}^{T} U^{\star} )^T \Sigma_T   \notag \\
    & \qquad (\hat{U}(\hat{U}^{T}X^{T}X\hat{U})^{-1}\hat{U}^{T}X^{T}X\hat{U}_{\perp}\hat{U}_{\perp}^{T}U^{\star} -\hat{U}_{\perp}\hat{U}_{\perp}^{T} U^{\star} ) \alpha^{\star}  \notag \\
    & \leq \|\alpha^{\star}\|^2 \|\hat{U}(\hat{U}^{T}X^{T}X\hat{U})^{-1}\hat{U}^{T}X^{T}X\hat{U}_{\perp}\hat{U}_{\perp}^{T}U^{\star} -\hat{U}_{\perp}\hat{U}_{\perp}^{T} U^{\star}\|_{\Sigma_T}^2  \notag \\
    & \leq \|\alpha^{\star}\|^2 (\|\hat{U}(\hat{U}^{T}X^{T}X\hat{U})^{-1}\hat{U}^{T}X^{T}X\hat{U}_{\perp}\hat{U}_{\perp}^{T}U^{\star} \|_{\Sigma_T}^2 + \|\hat{U}_{\perp}\hat{U}_{\perp}^{T} U^{\star}\|_{\Sigma_T}^2).
\end{align}
Here we use the notation $\|M\|_{\Sigma_T} := \sqrt{\|M^T \Sigma_{T} M\|}$ for matrix $M$.

For the second term, 
\begin{align} \label{second_term}
\|\hat{U}_{\perp}\hat{U}_{\perp}^{T} U^{\star} \|_{\Sigma_T}^{2} \leq \|\hat{U}_{\perp}^T \Sigma_T \hat{U}_{\perp}\|\|\hat{U}_{\perp}^{T} U^{\star}\|^{2} \leq \Delta^{2}\|\hat{U}_{\perp}^T \Sigma_T \hat{U}_{\perp}\|.
\end{align}

For the first term,
\begin{align} \label{decomposition1}
& \quad \|\hat{U}(\hat{U}^{T}X^{T}X\hat{U})^{-1}\hat{U}^{T}X^{T}X\hat{U}_{\perp}\hat{U}_{\perp}^{T}U^{\star}\|_{\Sigma_T}^2 \notag \\    
& = \|\hat{U}(\hat{U}^{T}\frac{X^{T}X}{n}\hat{U})^{-1}\hat{U}^{T}\frac{X^{T}X}{n}\hat{U}_{\perp}\hat{U}_{\perp}^{T}U^{\star}\|_{\Sigma_T}^2 \notag \\ 
& = \|\hat{U}((\hat{U}^{T}\Sigma_S\hat{U})^{-1}+F)(\hat{U}^{T}\Sigma_S\hat{U}_{\perp}\hat{U}_{\perp}^{T}U^{\star}+E_{1})\|_{\Sigma_T}^2 \notag \\ 
& = \|(\hat{U}^{T}\Sigma_S\hat{U}_{\perp}\hat{U}_{\perp}^{T}U^{\star}+E_{1})^T ((\hat{U}^{T}\Sigma_S\hat{U})^{-1}+F)^T \hat{U}^T \Sigma_T \hat{U} ((\hat{U}^{T}\Sigma_S\hat{U})^{-1}+F)(\hat{U}^{T}\Sigma_S\hat{U}_{\perp}\hat{U}_{\perp}^{T}U^{\star}+E_{1})\| \notag \\
&\leq \|\hat{U}^{T}\Sigma_S\hat{U}_{\perp}\hat{U}_{\perp}^{T}U^{\star}+E_{1}\|^2 \|(\hat{U}^{T}\Sigma_S\hat{U})^{-1}+F\|^2 \|\hat{U}^{T}\Sigma_T\hat{U}\| \notag \\
&\leq (\|\hat{U}^{T}\Sigma_S\hat{U}_{\perp}\hat{U}_{\perp}^{T}U^{\star}\|+\|E_{1}\|)^2 (\|(\hat{U}^{T}\Sigma_S\hat{U})^{-1}\|+\|F\|)^2 \|\hat{U}^{T}\Sigma_T\hat{U}\|
\end{align}

where $E_{1}=\hat{U}^{T}\frac{X^{T}X}{n}\hat{U}_{\perp}\hat{U}_{\perp}^{T}U^{\star} - \hat{U}^{T}\Sigma_S\hat{U}_{\perp}\hat{U}_{\perp}^{T}U^{\star}$, $F =(\hat{U}^{T}\frac{X^{T}X}{n}\hat{U})^{-1} -(\hat{U}^{T}\Sigma_S\hat{U})^{-1}$. We aim to show that $\|E_1\|\leq \|\hat{U}^{T}\Sigma_S\hat{U}_{\perp}\hat{U}_{\perp}^{T}U^{\star}\| $ and $\|F\| \leq \|(\hat{U}^{T}\Sigma_S\hat{U})^{-1}\| = \cmin^{-1}$ for sufficiently large $n$, therefore the term in (\ref{decomposition1}) can be bounded well. First we need a careful analysis of $\|\hat{U}^{T}\Sigma_S\hat{U}_{\perp}\hat{U}_{\perp}^{T}U^{\star}\| $. It is obvious that 
\begin{align}\label{ineq1}
\|\hat{U}^{T}\Sigma_S\hat{U}_{\perp}\hat{U}_{\perp}^{T}U^{\star}\| \leq \|\hat{U}^{T}\Sigma_S\hat{U}_{\perp}\| \|\hat{U}_{\perp}^{T}U^{\star}\| \leq \Delta \|\hat{U}^{T}\Sigma_S\hat{U}_{\perp}\|.
\end{align}
As for $\|\hat{U}^{T}\Sigma_S\hat{U}_{\perp}\|$, notice that if without the "hat", we have $U^T \Sigma_S U_{\perp} = 0$ by the definition of $U$ and $\Sigma_S$ is diagonal. By definition of distance between two subspaces, there exist $R \in \mathcal{O}^{k \times k}$ and $Q \in \mathcal{O}^{(d-k) \times (d-k)}$, such that $\|\hat{U}R - U\| = \Delta = \|\hat{U}_{\perp}Q - U_{\perp}\|$. Then we have 
\begin{align}\label{ineq2}
\|\hat{U}^{T}\Sigma_S\hat{U}_{\perp}\| & = \|R^T\hat{U}^{T}\Sigma_S\hat{U}_{\perp}Q\| \notag \\
&= \|U^T \Sigma_S U_{\perp} + R^T\hat{U}^{T}\Sigma_S\hat{U}_{\perp}Q - U^T \Sigma_S U_{\perp}\| \notag \\
& = \|R^T\hat{U}^{T}\Sigma_S\hat{U}_{\perp}Q - U^T \Sigma_S U_{\perp}\| \notag \\
& = \|R^T\hat{U}^{T}\Sigma_S\hat{U}_{\perp}Q - U^T\Sigma_S\hat{U}_{\perp}Q + U^T\Sigma_S\hat{U}_{\perp}Q - U^T \Sigma_S U_{\perp}\| \notag \\
& \leq \|R^T\hat{U}^{T}\Sigma_S\hat{U}_{\perp}Q - U^T\Sigma_S\hat{U}_{\perp}Q\| + \|U^T\Sigma_S\hat{U}_{\perp}Q - U^T \Sigma_S U_{\perp}\| \notag \\
& \leq \|R^T\hat{U}^{T} - U^T\|\|\Sigma_S\hat{U}_{\perp}Q\| + \|U^T\Sigma_S\|\|\hat{U}_{\perp}Q - U_{\perp}\| \notag \\
& \leq 2\Delta \|\Sigma_S\|.
\end{align}
Combine (\ref{ineq1}) and (\ref{ineq2}), we have
\begin{align}\label{ineq3}
\|\hat{U}^{T}\Sigma_S\hat{U}_{\perp}\hat{U}_{\perp}^{T}U^{\star}\|  \leq \mathcal{O} (\Delta^2 \|\Sigma_S\|)
\end{align}
In order to bound $\|F\|$, let $E=\hat{U}^{T}\frac{X^{T}X}{n}\hat{U} -\hat{U}^{T}\Sigma_S\hat{U}$, then by Lemma \ref{concentration}, with probability at least $1-\delta$, \begin{align}
\|E\|\leq \mathcal{O}(\sigma^2\|\Sigma_S\|(\sqrt{\frac{k}{n}}+ \frac{k}{n} +\sqrt{ \frac{\log(1/\delta)}{n}}+ \frac{\log(1/\delta)}{n})).    
\end{align}
Therefore, 
\begin{align} \label{ineq6'}
\|E(\hat{U}^{T}\Sigma_S\hat{U})^{-1}\| &\leq \|E\|  \|(\hat{U}^{T}\Sigma_S\hat{U})^{-1}\|   \notag \\
& \leq \|E\| \cmin^{-1}\notag \\
& \leq \mathcal{O}(\sigma^2\cmin^{-1}\|\Sigma_S\|(\sqrt{\frac{k}{n}}+ \frac{k}{n} +\sqrt{ \frac{\log(1/\delta)}{n}}+ \frac{\log(1/\delta)}{n})),
\end{align}
where $\cmin := \lambda_{\text{min}}(\hat{U}^{T}\Sigma_S\hat{U})$.
Notice that $n \gtrsim \sigma^4\cmin^{-2}\|\Sigma_S\|^{2} k \log (1/\delta)$ implies $\sqrt{\frac{k}{n}}+ \frac{k}{n} +\sqrt{ \frac{\log(1/\delta)}{n}}+ \frac{\log(1/\delta)}{n}\lesssim \sigma^{-2}\cmin \|\Sigma_S\|^{-1}$. Thus, we show that when $n$ is large enough, we have $\|E(\hat{U}^{T}\Sigma_S\hat{U})^{-1}\| \leq \frac{1}{4}$. Therefore we can apply Lemma \ref{perturbation}, which gives 
\begin{align} \label{F}
\|F\| &\leq \frac{4}{3} \|E(\hat{U}^{T}\Sigma_S\hat{U})^{-1}\|  \|(\hat{U}^{T}\Sigma_S\hat{U})^{-1}\|   \notag \\
&\leq \frac{4}{3} \times \frac{1}{4} \|(\hat{U}^{T}\Sigma_S\hat{U})^{-1}\| \notag \\
&\leq \frac{1}{3} \cmin^{-1}.
\end{align}
As for $\|E_{1}\|$, directly applying Lemma \ref{concentration}, when $n \gtrsim \sigma^4 \Delta^{-2} k \log (1/\delta)$
we get 
\begin{align}
\|E_{1}\| &\leq \mathcal{O}(\sigma^2\|\Sigma_S\|\|\hat{U}_{\perp}\hat{U}_{\perp}^{T}U^{\star}\|(\sqrt{\frac{k}{n}}+ \frac{k}{n} +\sqrt{ \frac{\log(1/\delta)}{n}}+ \frac{\log(1/\delta)}{n})) \notag \\
&\leq \mathcal{O}(\sigma^2\|\Sigma_S\|\Delta(\sqrt{\frac{k}{n}}+ \frac{k}{n} +\sqrt{ \frac{\log(1/\delta)}{n}}+ \frac{\log(1/\delta)}{n}))
\end{align}
when $n \gtrsim \sigma^4 k \log (1/\delta)$
we have 
\begin{align}  \label{E_1_alter}
    \|E_1\| \leq \mathcal{O} (\Delta\|\Sigma_S\|)
\end{align}, if further we have $n \gtrsim \sigma^4 \Delta^{-2} k \log (1/\delta)$, then
\begin{align}  \label{E_1}
    \|E_1\| \leq \mathcal{O} (\Delta^2\|\Sigma_S\|).
\end{align}

Combining (\ref{decomposition1}), (\ref{ineq3}), (\ref{F}) and (\ref{E_1}), we have 
\begin{align} \label{first_term}
& \quad \|\hat{U}(\hat{U}^{T}X^{T}X\hat{U})^{-1}\hat{U}^{T}X^{T}X\hat{U}_{\perp}\hat{U}_{\perp}^{T}U^{\star}\|_{\Sigma_T}^2 \notag \\    
&\leq (\|\hat{U}^{T}\Sigma_S\hat{U}_{\perp}\hat{U}_{\perp}^{T}U^{\star}\|+\|E_{1}\|)^2 (\|(\hat{U}^{T}\Sigma_S\hat{U})^{-1}\|+\|F\|)^2 \|\hat{U}^{T}\Sigma_T\hat{U}\| \notag \\
& \leq \mathcal{O}(\Delta^4 \|\Sigma_S\|^2 \cmin^{-2} \|\hat{U}^{T}\Sigma_T\hat{U}\|) \notag \\
& \leq \mathcal{O}(\Delta^4 \|\Sigma_S\|^2 \cmin^{-2} \|\Sigma_T\|)
\end{align}

Combining (\ref{decomposition_A1}),(\ref{second_term}) and (\ref{first_term}), we get 
\begin{align} \label{ineq4}
    \|A_1\|_{\Sigma_T}^2 
    & \leq \|\alpha^{\star}\|^2 (\|\hat{U}(\hat{U}^{T}X^{T}X\hat{U})^{-1}\hat{U}^{T}X^{T}X\hat{U}_{\perp}\hat{U}_{\perp}^{T}U^{\star} \|_{\Sigma_T}^2 + \|\hat{U}_{\perp}\hat{U}_{\perp}^{T} U^{\star}\|_{\Sigma_T}^2) \notag \\
    & \leq \mathcal{O} (\|\alpha^{\star}\|^2 (\Delta^4 \|\Sigma_S\|^2 \cmin^{-2} \|\Sigma_T\| + \Delta^{2}\|\hat{U}_{\perp}^T \Sigma_T \hat{U}_{\perp}\|) )
\end{align}
with probability at least $1-\delta$. Also, similar to (\ref{ineq2}), we have
\begin{align} \label{ineq5}
\|\hat{U}_{\perp}^T \Sigma_T \hat{U}_{\perp}\| &=  \|Q^T\hat{U}_{\perp}^T \Sigma_T \hat{U}_{\perp}Q\| \notag \\
& \leq \|U_{\perp}^T \Sigma_T U_{\perp}\| + \|Q^T\hat{U}_{\perp}^T \Sigma_T \hat{U}_{\perp}Q - U_{\perp}^T \Sigma_T U_{\perp}\| \notag \\
& \leq \|U_{\perp}^T \Sigma_T U_{\perp}\|  + 2 \Delta \|\Sigma_T\|
\end{align}
Similarly, we can further know that $\cmin $ is close to 
$\lambda_k$:
\begin{align} \label{ineq6}
\cmin &= \lambda_{k} (\hat{U}^T \Sigma_S \hat{U}) \notag \\
 & = \lambda_{k}(R^T\hat{U}^{T}\Sigma_S\hat{U}R) \notag \\
&= \lambda_{k} (U^T \Sigma_S U + R^T\hat{U}^{T}\Sigma_S\hat{U}R - U^T \Sigma_S U) \notag \\
& \geq \lambda_{k} (U^T \Sigma_S U) - \| R^T\hat{U}^{T}\Sigma_S\hat{U}R - U^T \Sigma_S U\|\notag \\
& \geq \lambda_{k} (U^T \Sigma_S U) 2\Delta \|\Sigma_S\| \notag \\
& \geq \lambda_k - 2 \lambda_1 \Delta \notag \\
& \geq \frac12 \lambda_k,
\end{align}
where the last inequality holds when $\Delta \leq \frac{\lambda_k}{4\lambda_1}$. Finally, combining (\ref{ineq4}), (\ref{ineq5}), (\ref{ineq6}), we have

\begin{align} 
    \|A_1\|_{\Sigma_T}^2 
    & \leq \mathcal{O} (\|\alpha^{\star}\|^2 (\Delta^4 (\frac{\lambda_1}{\lambda_k})^2\|\Sigma_T\| + \Delta^{2}\|U_{\perp}^T \Sigma_T U_{\perp}\| + \Delta^3 \|\Sigma_T\|) ) \notag \\
 & \leq \mathcal{O} (\|\beta^{\star}_{U}\|^2 (\Delta^4 (\frac{\lambda_1}{\lambda_k})^2\|\Sigma_T\| + \Delta^{2}\|U_{\perp}^T \Sigma_T U_{\perp}\| + \Delta^3 \|\Sigma_T\|) )
\end{align}
when $\Delta \leq \frac{\lambda_k}{4\lambda_1}$ and $n \gtrsim \max\{\sigma^4(\frac{\lambda_1}{\lambda_k})^{2} k \log (1/\delta) ,  \sigma^4 \Delta^{-2} k \log (1/\delta)\}$. If in the previous proofs we replace (\ref{E_1}) by (\ref{E_1_alter}), we have 

\begin{align} 
    \|A_1\|_{\Sigma_T}^2 
 & \leq \mathcal{O} (\|\beta^{\star}_{U}\|^2 (\Delta^2 (\frac{\lambda_1}{\lambda_k})^2\|\Sigma_T\| + \Delta^{2}\|U_{\perp}^T \Sigma_T U_{\perp}\| + \Delta^3 \|\Sigma_T\|) ) \\
 & \leq \mathcal{O} (\|\beta^{\star}_{U}\|^2 \Delta^2 (\frac{\lambda_1}{\lambda_k})^2\|\Sigma_T\|) 
\end{align}
when $\Delta \leq \frac{\lambda_k}{4\lambda_1}$ and $n \gtrsim \max\{\sigma^4(\frac{\lambda_1}{\lambda_k})^{2} k \log (1/\delta) ,  \sigma^4  k \log (1/\delta)\}$. Notice that by definition of $U$, $U_{\perp}^T \Sigma_T U_{\perp} = \Sigma_{T, -k}$, therefore the result is exactly what we want.
\end{proof}

\begin{proof}[Proof of Lemma \ref{variance_term}]
Recall $A_3:=  \hat{U}(\hat{U}^TX^{T}X\hat{U})^{-1}\hat{U}^TX^{T} \boldsymbol{\epsilon}$. Therefore 
\begin{align*}
\|A_3\|^2_{\Sigma_T} &= \boldsymbol{\epsilon}^T X \hat{U} (\hat{U}^TX^{T}X\hat{U})^{-1} \hat{U}^T \Sigma_T \hat{U} (\hat{U}^TX^{T}X\hat{U})^{-1}\hat{U}^TX^{T} \boldsymbol{\epsilon} \\
&= \tr (\boldsymbol{\epsilon}^T X \hat{U} (\hat{U}^TX^{T}X\hat{U})^{-1} \hat{U}^T \Sigma_T \hat{U} (\hat{U}^TX^{T}X\hat{U})^{-1}\hat{U}^TX^{T} \boldsymbol{\epsilon}) \\
&= \tr (\boldsymbol{\epsilon}\boldsymbol{\epsilon}^T X \hat{U} (\hat{U}^TX^{T}X\hat{U})^{-1} \hat{U}^T \Sigma_T \hat{U} (\hat{U}^TX^{T}X\hat{U})^{-1}\hat{U}^TX^{T}) 
\end{align*}
Taking expectation with respect to $\epsilon$, using $\mathbb{E} [\boldsymbol{\epsilon}\boldsymbol{\epsilon}^T] = v^2 I_n$, we have
\begin{align}
\mathbb{E}_{\epsilon}[\|A_3\|^2_{\Sigma_T}] &= \mathbb{E} [ \tr (\boldsymbol{\epsilon}\boldsymbol{\epsilon}^T X \hat{U} (\hat{U}^TX^{T}X\hat{U})^{-1} \hat{U}^T \Sigma_T \hat{U} (\hat{U}^TX^{T}X\hat{U})^{-1}\hat{U}^TX^{T})] \notag \\
&= v^2 \tr ( X \hat{U} (\hat{U}^TX^{T}X\hat{U})^{-1} \hat{U}^T \Sigma_T \hat{U} (\hat{U}^TX^{T}X\hat{U})^{-1}\hat{U}^TX^{T}) \notag \\
&= v^2 \tr ( (\hat{U}^TX^{T}X\hat{U})^{-1} \hat{U}^T \Sigma_T \hat{U} (\hat{U}^TX^{T}X\hat{U})^{-1}\hat{U}^TX^{T} X \hat{U}) \notag \\
&= v^2 \tr ( (\hat{U}^TX^{T}X\hat{U})^{-1} \hat{U}^T \Sigma_T \hat{U}) \notag \\
&= \frac1n v^2 \tr ( ((\hat{U}^T\Sigma_S\hat{U})^{-1} + F) \hat{U}^T \Sigma_T \hat{U}) 
\end{align}
Here we actually need a bound stronger than (\ref{F}) for $\|F\|$: recall (\ref{ineq6'}), we have with probability $1-\delta$
\begin{align} 
\|E(\hat{U}^{T}\Sigma_S\hat{U})^{-1}\| &\leq  \mathcal{O}(\sigma^2\cmin^{-1}\|\Sigma_S\|(\sqrt{\frac{k}{n}}+ \frac{k}{n} +\sqrt{ \frac{\log(1/\delta)}{n}}+ \frac{\log(1/\delta)}{n})).
\end{align}
Applying Lemma \ref{perturbation}, which gives
\begin{align}
\|F\| &\leq \frac{4}{3} \|E(\hat{U}^{T}\Sigma_S\hat{U})^{-1}\|  \|(\hat{U}^{T}\Sigma_S\hat{U})^{-1}\|   \notag \\
& \leq \mathcal{O}(\sigma^2\cmin^{-2}\|\Sigma_S\|(\sqrt{\frac{k}{n}}+ \frac{k}{n} +\sqrt{ \frac{\log(1/\delta)}{n}}+ \frac{\log(1/\delta)}{n})) \notag \\
& \leq \mathcal{O} (\frac{1}{k\|\Sigma_T\|}\tr( (U^T \Sigma_S U)^{-1} U^T \Sigma_T U))
\end{align}
when $n \gtrsim \sigma^4\cmin^{-4}\|\Sigma_S\|^{2}\|\Sigma_T\|^2  \tr( (U^T \Sigma_S U)^{-1} U^T \Sigma_T U)^{-2}k^3 \log (1/\delta)$. Therefore we have
\begin{align} \label{ineq7}
\mathbb{E}_{\epsilon}[\|A_3\|^2_{\Sigma_T}] 
&= \frac1n v^2 \tr ( ((\hat{U}^T\Sigma_S\hat{U})^{-1} + F) \hat{U}^T \Sigma_T \hat{U}) \notag \\
& = \frac1n v^2 (\tr ((\hat{U}^T\Sigma_S\hat{U})^{-1} \hat{U}^T \Sigma_T \hat{U}) +\tr(F\hat{U}^T \Sigma_T \hat{U}) ) \notag \\
& \leq \frac1n v^2 (\tr ((\hat{U}^T\Sigma_S\hat{U})^{-1} \hat{U}^T \Sigma_T \hat{U})) +\frac1n v^2 \|F\| \tr(\hat{U}^T \Sigma_T \hat{U})  \notag \\
& \leq \frac1n v^2 (\tr ((\hat{U}^T\Sigma_S\hat{U})^{-1} \hat{U}^T \Sigma_T \hat{U})) +\frac1n v^2 k \|F\| \|\Sigma_T\| \notag \\
& \leq \frac1n v^2 (\tr ((\hat{U}^T\Sigma_S\hat{U})^{-1} \hat{U}^T \Sigma_T \hat{U})) +\frac1n v^2 \mathcal{O}(\tr( (U^T \Sigma_S U)^{-1} U^T \Sigma_T U)) 
\end{align}
The remaining thing is to show that indeed $\tr ((\hat{U}^T\Sigma_S\hat{U})^{-1} \hat{U}^T \Sigma_T \hat{U})$ is close to $\tr ((U^T\Sigma_S U)^{-1} U^T \Sigma_T U)$. In fact, $\tr ((\hat{U}^T\Sigma_S\hat{U})^{-1} \hat{U}^T \Sigma_T \hat{U}) = \tr ((R^T\hat{U}^T\Sigma_S\hat{U}R)^{-1} R^T\hat{U}^T \Sigma_T R\hat{U})$.
Notice that

\begin{align*}
\|R^T \hat{U}^T \Sigma_T \hat{U} R - U^T \Sigma_T U\| \leq 2\|\Delta\| \|\Sigma_T\|,
\end{align*}
we have
\begin{align} \label{ineq9}
  &\qquad \tr ((R^T\hat{U}^T\Sigma_S\hat{U}R)^{-1} R^T\hat{U}^T \Sigma_T \hat{U}R) \\ &\leq  \tr ((R^T\hat{U}^T\Sigma_S\hat{U}R)^{-1} U^T \Sigma_T U) + \| R^T\hat{U}^T \Sigma_T \hat{U}R - U^T \Sigma_T U\| \tr ((\hat{U}^T\Sigma_S\hat{U})^{-1}) \notag \\
  & \leq \tr ((R^T\hat{U}^T\Sigma_S\hat{U}R)^{-1} U^T \Sigma_T U) + 2\|\Delta\| \|\Sigma_T\| \tr ((\hat{U}^T\Sigma_S\hat{U})^{-1}) \notag \\
    & \leq \tr ((R^T\hat{U}^T\Sigma_S\hat{U}R)^{-1} U^T  \Sigma_T U) + 2\|\Delta\| \|\Sigma_T\| k \cmin^{-1} \notag \\
    & \leq \tr ((R^T\hat{U}^T\Sigma_S\hat{U}R)^{-1} U^T \Sigma_T U)  + \tr ((U^T\Sigma_S U)^{-1} U^T \Sigma_T U)
\end{align}
when $\Delta \leq \frac{\lambda_k \tr ((U^T\Sigma_S U)^{-1} U^T \Sigma_T U)}{4k\|\Sigma_T\|}$. Also, we have
\begin{align*}
    \|(R^T\hat{U}^T\Sigma_S\hat{U}R)^{-1} - (U^T\Sigma_S U)^{-1}\| &\leq \|(R^T\hat{U}^T\Sigma_S\hat{U}R)^{-1}\| \|(U^T\Sigma_S U)^{-1}\| \|R^T\hat{U}^T\Sigma_S\hat{U}R - U^T\Sigma_S U\|  \\
    & \leq 4\lambda_k^{-2} \lambda_1 \Delta,
\end{align*}
therefore 
\begin{align} \label{ineq8}
  \tr ((R^T\hat{U}^T\Sigma_S\hat{U}R)^{-1} U^T \Sigma_T U)  & \leq \tr ((U^T\Sigma_S U)^{-1} U^T \Sigma_T U) + \|(R^T\hat{U}^T\Sigma_S\hat{U}R)^{-1} - (U^T\Sigma_S U)^{-1}\| \tr ( U^T \Sigma_T U) \notag \\
  & \leq  \tr ((U^T\Sigma_S U)^{-1} U^T \Sigma_T U) + 4\lambda_k^{-2} \lambda_1 \Delta \tr ( U^T \Sigma_T U) \notag \\
  &\leq 2 \tr ((U^T\Sigma_S U)^{-1} U^T \Sigma_T U),
\end{align}
if $\Delta \leq \frac{\lambda_k^2 \tr ((U^T\Sigma_S U)^{-1} U^T \Sigma_T U)}{4\lambda_1 \tr ( U^T \Sigma_T U)}$. Combining (\ref{ineq7}), (\ref{ineq9}) and (\ref{ineq8}) we have  
\begin{align*}
\mathbb{E}_{\epsilon}[\|A_3\|^2_{\Sigma_T}] \leq   \mathcal{O}(\frac1n v^2\tr( (U^T \Sigma_S U)^{-1} U^T \Sigma_T U)),
\end{align*}
whenever $\Delta \leq \frac{\lambda_k^2 \tr ((U^T\Sigma_S U)^{-1} U^T \Sigma_T U)}{4\lambda_1 k\| \Sigma_T\|} \leq  \min \{\frac{\lambda_k^2 \tr ((U^T\Sigma_S U)^{-1} U^T \Sigma_T U)}{4\lambda_1 \tr ( U^T \Sigma_T U)},\frac{\lambda_k \tr ((U^T\Sigma_S U)^{-1} U^T \Sigma_T U)}{4k\|\Sigma_T\|} \}$ and $n \gtrsim \sigma^4\cmin^{-4}\|\Sigma_S\|^{2}\|\Sigma_T\|^2  \tr( (U^T \Sigma_S U)^{-1} U^T \Sigma_T U)^{-2}k^3 \log (1/\delta)$, with probability at least $1-\delta$. Notice that $U^{T} \Sigma_S U = \Sigma_{S, k}$ and $U^{T} \Sigma_T U = \Sigma_{T, k}$, therefore the result is exactly what we want.
\end{proof}

\begin{proof}[Proof of Lemma \ref{variance_term2}]
Recall $A_2:=\hat{U}(\hat{U}^TX^{T}X\hat{U})^{-1}\hat{U}^TX^{T}X\beta_{\perp}^{\star}$. 
Also we have
\begin{align} \label{ineq10}
\|\hat{U}^T \Sigma_T \hat{U}\| &=  \|R^T\hat{U}^T \Sigma_T \hat{U}R\| \notag \\
& \leq \|U^T \Sigma_T U\| + \|R^T\hat{U}^T \Sigma_T \hat{U}R - U^T \Sigma_T U\| \notag \\
& \leq \|U^T \Sigma_T U\|  + 2 \Delta \|\Sigma_T\|
\end{align}
Therefore 
\begin{align} \label{ineq12}
\|A_2\|_{\Sigma_T}^2 &= \|\beta_{\perp}^{\star T} X^{T}X \hat{U}(\hat{U}^TX^{T}X\hat{U})^{-1}\hat{U}^T\Sigma_T\hat{U}(\hat{U}^TX^{T}X\hat{U})^{-1}\hat{U}^TX^{T}X\beta_{\perp}^{\star}\|    \notag 
\\ & \leq \|X \hat{U}(\hat{U}^TX^{T}X\hat{U})^{-1}(\hat{U}^TX^{T}X\hat{U})^{-1}\hat{U}^TX^{T}\| \|\hat{U}^T\Sigma_T\hat{U}\|\|X\beta_{\perp}^{\star}\|^2 \notag \\
& \leq \|A\| (\|U^T \Sigma_T U\| + 2\Delta \|\Sigma_T\|) \|X\beta_{\perp}^{\star}\|^2 \notag \\
& \leq 2\|A\| \|U^T \Sigma_T U\| \|X\beta_{\perp}^{\star}\|^2
\end{align}
when $\Delta \leq \frac{\|U^T \Sigma_T U\|}{2\|\Sigma_T\|}$,
where we let
$A=\frac{1}{n}\frac{X\hat{U}}{\sqrt{n}} (\hat{U}^{T}\frac{X^{T}X}{n}\hat{U} )^{-2}\frac{\hat{U}^{T}X^{T}}{\sqrt{n}}$. If we define $B=\frac{X\hat{U}}{\sqrt{n}} \in \mathbb{R}^{n \times r}$, then $A=\frac{1}{n}B(B^{T}B)^{-2}B^{T}$.
Let the SVD of $B$ be $B=PMO^{T}$, where $P \in \mathbb{R}^{n \times k}$, $M,O \in \mathbb{R}^{k \times k}$, then
\begin{align} 
\|A\|_{2} &=   \frac{1}{n} \|B(B^{T}B)^{-2}B^{T}\|_{2} \notag \\  
&=   \frac{1}{n} \|PMO^{T}(OM^{2}O^{T})^{-2}OMP^{T}\|_{2} \notag \\ 
&=   \frac{1}{n} \|PM^{-2}P^{T}\|_{2} \notag \\ 
& \leq \frac{1}{n}  \|M^{-2}\|_{2} \notag \\
&= \frac{1}{n} \|(B^{T}B)^{-1}\|_{2}
\end{align}
Let $F =(\hat{U}^{T}\frac{X^{T}X}{n}\hat{U})^{-1} -(\hat{U}^{T}\Sigma\hat{U})^{-1}$. Recall (\ref{F}), which states that with probability at least $1-\delta$, we have 
$\|F\|\leq \frac{1}{3}\cmin^{-1} \leq \frac{2}{3} \lambda_k^{-1}$ when $n \gtrsim \sigma^4\cmin^{-2}\|\Sigma_S\|^{2} k \log (1/\delta)$ and $\Delta \leq \frac{\lambda_k}{4\lambda_1}$. Therefore 
\begin{align}\label{ineq13}
\|A\| &\leq \frac1n
\|(\hat{U}^{T}\frac{X^{T}X}{n}\hat{U} )^{-1}\| \notag \\  &= \|(\hat{U}^{T}\Sigma_S \hat{U})^{-1}+F\|  \notag \\
&\leq \frac1n \|(\hat{U}^{T}\Sigma_S \hat{U})^{-1}\|+\|F\|  \notag \\
&\leq \mathcal{O}( \frac1n\lambda_k^{-1}). 
\end{align}
Thus $\|A\| \leq \mathcal{O}(\lambda_k^{-1})$. As for $\|X\beta_{\perp}^{\star}\|^2$, notice that the first-$k$ entries of $\beta_{\perp}^{\star}$ are zero, therefore $X\beta_{\perp}^{\star} = X_{-k} \beta^{\star}_{-k}$.  by Lemma \ref{concentration},
\begin{align}
\|\beta_{-k}^{\star T}(\frac{X_{-k}^{T}X_{-k}}{n})\beta_{-k}^{\star} - \beta_{-k}^{\star T} \Sigma_{S, -k} \beta_{-k}^{\star} \| \leq \mathcal{O} (\sigma^2\|\beta_{-k}^{\star}\|^2 \|\Sigma_{S,-k}\| (\sqrt{\frac{1}{n}}+ \frac{1}{n} +\sqrt{ \frac{\log(1/\delta)}{n}}+ \frac{\log(1/\delta)}{n}).
\end{align}

Therefore we have
\begin{align} \label{ineq11}
\|X\beta_{\perp}^{\star}\|^2 &= n  \beta_{-k}^{\star T}(\frac{X_{-k}^{T}X_{-k}}{n})\beta_{-k}^{\star} \notag \\
& \leq n(\beta_{-k}^{\star T} \Sigma_{S, -k} \beta_{-k}^{\star} + \|\beta_{-k}^{\star T}(\frac{X_{-k}^{T}X_{-k}}{n})\beta_{-k}^{\star} - \beta_{-k}^{\star T} \Sigma_{S, -k} \beta_{-k}^{\star} \|) \notag \\
& \leq \mathcal{O} (n\|\beta_{-k}^{\star}\|^2 \|\Sigma_{S,-k}\| ).
\end{align}

Combining 
(\ref{ineq12})(\ref{ineq13}) and (\ref{ineq11}), we have 
\begin{align}
    \|A_2\|_{\Sigma_T}^2 \leq \mathcal{O}(\frac{\|U^T \Sigma_T U\|\|\beta_{-k}^{\star}\|^2 \|\Sigma_{S,-k}\| }{\lambda_k})
\end{align}
when $n \gtrsim \sigma^4\cmin^{-2}\|\Sigma_S\|^{2} k \log (1/\delta)$ and $\Delta \leq \min \{ \frac{\|U^T \Sigma_T U\|}{2\|\Sigma_T\|}, \frac{\lambda_k}{4\lambda_1}\}$.
\end{proof}
Finally we prove Lemma \ref{Delta_lemma} in the following.
\begin{proof}[Proof of Lemma \ref{Delta_lemma}]
In the first step, we obtain $\hat{U} \in \mathbb{R}^{d \times k}$ by selecting the top$-k$ eigenvectors of the sample covariance matrix $\hat\Sigma_S := \frac1n XX^T = \frac{1}{n}\sum_{i=1}^{n}x_{i}x_{i}^{T}$ using PCA. Then by Davis-Kahan theorem \cite[Corollary 2.8]{chen2021spectral}, 
\begin{align} \label{ineq14}
    \Delta \leq \frac{2\|\hat\Sigma_S - \Sigma_S\|}{\lambda_k -\lambda_{k+1}}.
\end{align}
Therefore it remains to bound $\|\hat\Sigma_S - \Sigma_S\|$.  Applying Lemma \ref{stable_rank_concentration}, we immediately have 
\begin{align*}
    \|\hat\Sigma_S - \Sigma_S\| \leq C \sigma^4 \left( \sqrt{\frac{r + \log{\frac{1}{\delta}}}{n}} + \frac{r + \log{\frac{1}{\delta}}}{n} \right) \lambda_1
\end{align*}
where $r=\frac{\sum_{i=1}^{n} \lambda_i}{\lambda_1}$.
Together with (\ref{ineq14}), we have with probability at least $1-\delta$,
\begin{align*}
    \Delta \leq  C \sigma^4 \left( \sqrt{\frac{r + \log{\frac{1}{\delta}}}{n}} + \frac{r + \log{\frac{1}{\delta}}}{n} \right) \frac{\lambda_1}{\lambda_k - \lambda_{k+1}}.
\end{align*}
\end{proof}

\end{document}